\def\vv{{\bm{v}}}
\DeclareMathAlphabet{\mathsfit}{\encodingdefault}{\sfdefault}{m}{sl}
\SetMathAlphabet{\mathsfit}{bold}{\encodingdefault}{\sfdefault}{bx}{n}
\DeclareMathOperator*{\argmin}{arg\,min}
\providecommand{\Bar}[1]{\overline{#1}}
\newtheorem{assumption}{Assumption}
\newtheorem{lemma}{Lemma}
\definecolor{rowgray}{gray}{0.90}   
\definecolor{headerblue}{RGB}{70, 130, 180}  
\definecolor{LightGray}{gray}{0.92} 
\definecolor{HighlightCol}{RGB}{220, 230, 245} 
\definecolor{HighlightCell}{RGB}{235, 245, 230}
\title{Harnessing Optimization Dynamics for Curvature-Informed Model Merging}
\author{%
  Pouria Mahdavinia \\
  The Pennsylvania State University \\
  \texttt{pmahdavi@psu.edu}
  \And
  Hamed Mahdavi \\
  The Pennsylvania State University \\
  \texttt{hmm5834@psu.edu} 
  \AND
  Niloofar Mireshghallah \\
  Carnegie Mellon University \\
  \texttt{niloofar@cmu.edu}
  \And
  Mehrdad Mahdavi \\
  The Pennsylvania State University \\
  \texttt{mzm616@psu.edu}
}
\begin{document}

\maketitle

\begin{abstract}
Model merging is an effective post--training strategy for composing capabilities in large language models without joint retraining. We study this in the supervised fine-tuning (SFT) stage, where multiple capability-based SFT checkpoints---spanning math, code, precise instruction following, general instruction following, and knowledge recall---must be consolidated into a single model. We introduce Optimization Trajectory Aware (OTA) Merging, a curvature-aware aggregation that leverages optimizer second-moment statistics as a diagonal curvature proxy to reweight parameter edits and mitigate interference. Complementing OTA, we propose Fast Fisher Grafting (FFG), a curvature-driven task-localization step that sparsifies conflicting or low-importance edits. FFG induces extremely low-rank masks concentrated in early attention query/key projections and token embeddings, exploiting shared curvature across capabilities. We further develop a memory-light compression of the second moments that preserves OTA's effect. Across diverse capability-based SFT checkpoints, OTA+FFG improves merged-model quality over strong weight-space baselines, reduces negative transfer, and remains robust across sparsity levels. Analyses reveal substantial curvature overlap between checkpoints, offering a novel lens on why simple linear merging can be effective in practice. Ablations confirm that FFG is critical for reducing task interference and that the compressed second moments retain the gains of the full formulation. To facilitate reproducibility, we open-source all code, training and evaluation scripts, visualization artifacts, and capability-specific SFT checkpoints at \url{https://github.com/pmahdavi/ota-merge}.
\end{abstract}

\section{Introduction}
\label{sec:introduction}

The advent of large language models (LLMs) has marked a significant milestone in artificial intelligence, providing powerful, generalist foundations for a wide array of human tasks. Fine-tuning these models on specialized data yields expert models with superior performance on targeted domains~\cite{brown2020language}. However, the operational and computational costs of deploying a large, ever-growing suite of these specialized experts are prohibitive. This challenge has catalyzed research into model merging, a promising paradigm for consolidating the capabilities of multiple expert models into a single, multitask entity. An ideal merging process would capture the proficiencies of each expert without the immense cost of retraining or the latency of ensembling, thereby creating a more versatile and efficient model.

Despite the empirical success of various merging techniques, from simple weight averaging to more complex, geometry-aware methods, a foundational question remains largely unanswered: why does model merging work so effectively? A prevailing hypothesis suggests that successful merging is possible because fine-tuned models co-inhabit a single, wide, flat loss basin, facilitating linear mode connectivity~\cite{frankle2020linear}. Yet, this theory does not fully account for the surprising robustness of simple linear averaging, which often remains a competitive baseline against far more sophisticated methods~\cite{yadav2024mattersmerging}. This suggests a critical gap in our understanding of the underlying loss landscape curvature. Without a clear map of this terrain, merging strategies are developed with limited theoretical guidance, relying more on heuristics than on first principles.

This paper introduces a novel, empirically-grounded perspective on the curvature of SFT fine-tuned LLMs. Our central insight is that the second-moment estimates (\texttt{exp\_avg\_sq}) accumulated by adaptive optimizers like Adam~\cite{kingma2014adam} are not merely a training artifact but a powerful, readily available proxy for the diagonal of the Fisher information matrix, and by extension, the curvature of the loss landscape. By analyzing this wealth of information---information that is "free" as a by-product of training---we investigate the curvature structure of various SFT fine-tuned checkpoints and uncover a striking phenomenon: \textbf{independently fine-tuned models converge to basins that exhibit remarkably similar curvature structures}. This structural alignment holds across both attention and feed-forward (FFN) layers, suggesting a shared geometric foundation that explains the surprising efficacy of simple model merging techniques. In essence, linear averaging performs well because the models' underlying curvature are already highly compatible.

This finding reframes the primary challenge in model merging. The problem is not necessarily one of correcting for gross geometric misalignment between models, but rather one of \textbf{mitigating the negative interference caused by noisy, low-saliency parameter updates} introduced during task-specific fine-tuning. These updates, while beneficial for a single task, can conflict and degrade performance when naively combined---a phenomenon explicitly addressed by recent work that seeks to resolve such parameter conflicts~\cite{yadav2023ties}.

Building on this insight, we propose the \textbf{Optimization Trajectory Aware (OTA)} framework. OTA operationalizes our findings through a two-stage process. First, it employs Fast Fisher Grafting, a principled mask selection method that leverages the optimizer states to identify and revert noisy, non-essential parameter updates from each expert model to the base model checkpoint, a process termed originally as \textbf{grafting}~\cite{panigrahi2023task}. As shown in Figure~\ref{fig:intro_visuals} (left), this process reveals that task-specific knowledge is highly localized within the network, creating structured sparsity. Second, OTA aggregates these denoised experts using a curvature-aware merging strategy, which uses the same optimizer states as a preconditioner to intelligently average the remaining parameters. Our results, summarized in Figure~\ref{fig:intro_visuals} (right), demonstrate that this principled, denoising-first approach allows our merged model to consistently outperform established baselines across a diverse set of capabilities.

The primary contributions of this work are:
\begin{enumerate}
   
    \item We reframe the central challenge of model merging from geometric misalignment to \textbf{interference mitigation}, proposing the highly effective OTA framework which uses optimizer states for both grafting and merging.
    \item We introduce \textbf{Fast Fisher Grafting (FFG)}, a computationally efficient and principled method for identifying and reverting low-saliency fine-tuning deltas, which localizes task-specific knowledge within the model. An extensive suite of experiments are conducted to analyze localization patterns of FFG compared to magnitude pruning, revealing emergence of layer-aware sparsity patterns, ranging from aggressive structured column/row sparsity in early, and late query, and key layers, to formation of specialized attention heads in output projection of late layers. 
    \item Furthermore, storing and accessing large second-moment matrices for all model weights requires an amount of storage comparable to the model itself, which hinders the practicality of our methodology. To address this, we propose compressing these matrices using a rank-one, AdaFactor-style approach\cite{shazeer2018adafactor}. We then evaluated this compression's effectiveness, showing that it maintains comparable performance on our model-merging benchmarks. Moreover, our analysis of the second-moment matrices across transformer layers revealed a very low stable rank, which further validates our compression technique.
     \item We provide \textbf{compelling empirical evidence for a new theory of model merging}. Our work demonstrates that models task-independently fine-tuned with SFT develop a shared curvature geometry, which explains the success of linear averaging. Moreover, we show that these models exhibit almost identical curvature structures \textbf{when trained on the same data but with different learning rate schedules}.
\end{enumerate}

Our experiments, which involve merging specialist Llama 3.1 8B models, confirm that the most significant performance gains stem from the principled, saliency-aware denoising enabled by FFG, validating our core hypothesis.

\begin{figure}[!ht]
    \centering
    \begin{subfigure}[t]{0.44\textwidth}
        \centering
        \includegraphics[width=\textwidth]{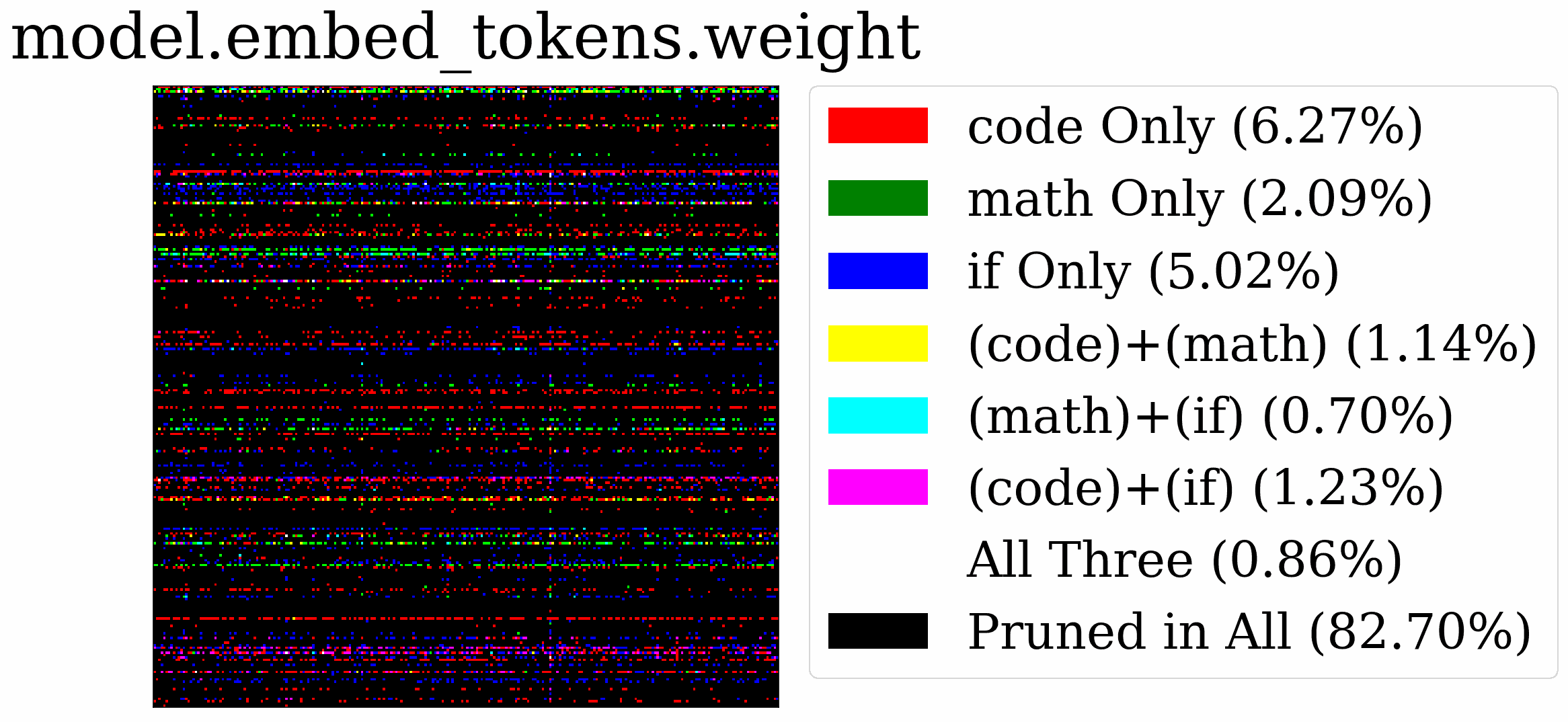}
        \caption{Task localization via FFG}
    \label{subfig:embed}
    \end{subfigure}
    \hfill
    \begin{subfigure}[t]{0.50\textwidth}
        \centering
        \includegraphics[width=\textwidth]{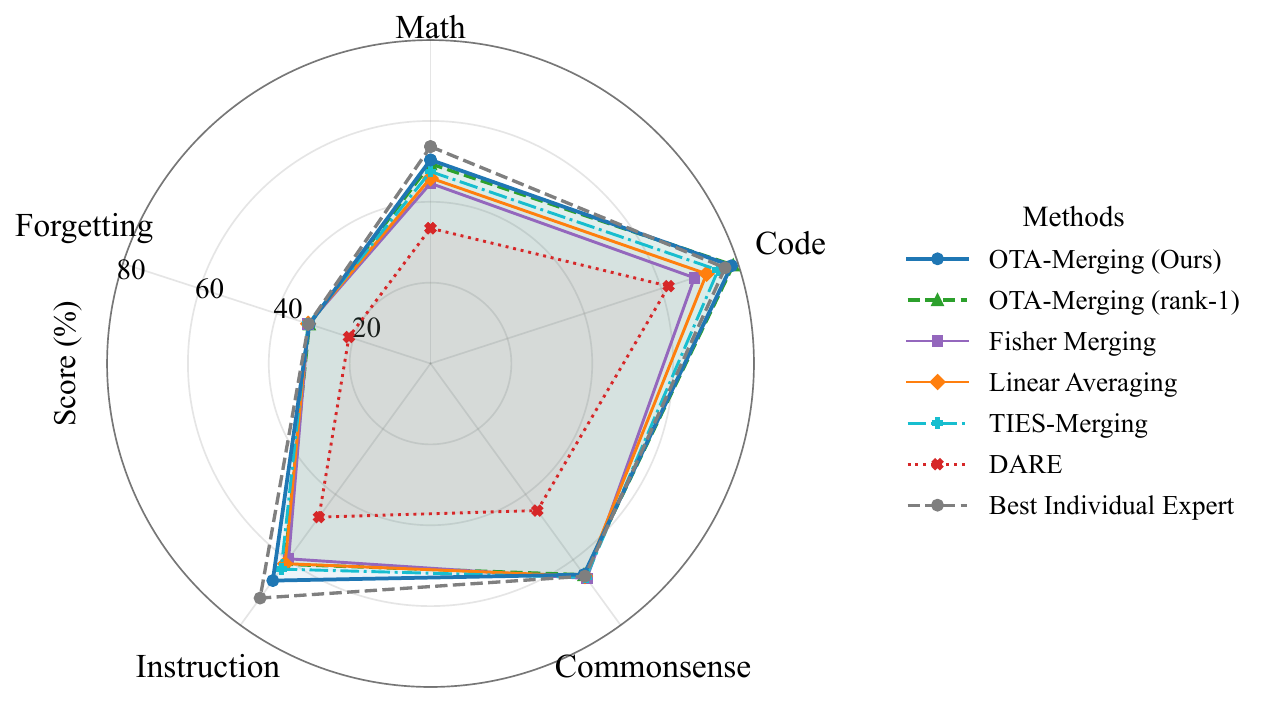}
        \caption{Merging performance comparison}
    \end{subfigure}
    \caption{\textbf{Left:} FFG reveals that task-specific knowledge is highly localized. This heatmap shows the FGG mask regions for three expert models (math, code, instructions) in the token embedding layer, demonstrating clear,  low-rank structured sparsity induced by FFG at $40\%$ global density . \textbf{Right:} A capability-based comparison shows that our full OTA method, which combines FFG-based denoising with curvature-aware aggregation, is the top-performing merging technique. The dashed line represents the performance ceiling of the best individual expert for each capability.}
    \label{fig:intro_visuals}
\end{figure}
\section{Related Work}
\label{sec:related}

\paragraph{Weight-Space Model Merging and Composition.}
A rapidly growing literature studies how to combine separately fine-tuned models directly in weight space.
Early work showed that simple weight averaging often improves accuracy and robustness when fine-tuned solutions lie in a shared basin \citep{wortsman2022model}.
\citet{matena2021merging} formalize merging as approximate posterior combination via Fisher-weighted averaging, where (diagonal) Fisher information acts as parameter-wise preconditioner.
Task arithmetic composes behaviors by adding/subtracting task vectors \citep{ilharco2022editing}; its theory and practice were strengthened by \citet{ortizjimenez2023tangent}, who advocate editing in the model’s tangent space.
Permutation alignment methods such as Git Re-Basin expose linear connectivity by matching hidden units before interpolation \citep{ainsworth2023gitrebasin}.
To curb interference, TIES-Merging trims small edits and resolves sign conflicts \citep{yadav2023ties}.
More recently, \citet{tam2024mats} cast merging as solving a linear system in a task-parameter subspace (MaTS), while \citet{huang2024emr} propose a tuning-free, high-performance recipe (EMR) that works across modalities.
Practitioner tooling such as \textsc{MergeKit} has standardized many of these strategies for LLMs \citep{goddard2024mergekit}.
Our approach complements these directions with a curvature-aware, two-stage pipeline: (i) FFG selects/denoises per-parameter edits using the second moments combined with Optimal Brain Surgeon methodology~\cite{hassibi1992obs} , and (ii) curvature-preconditioned aggregation reweights surviving edits during merging.

\paragraph{Parameter-Efficient Composition (Adapters/LoRA).}
A parallel line composes parameter-efficient modules rather than full models.
\citet{pfeiffer2020adapterfusion} fuse multiple adapters at inference time.
For LoRA modules, recent work investigates merging multiple skills into a single model: LoRA Soups advocate learnable concatenation/averaging to compose skills in NLP \citep{prabhakar2024merging}, and \citet{kesim2024merge_loras} study multi-adapter merging for vision.
These methods are orthogonal to ours and can benefit from the same curvature signals for selection or reweighting of adapter deltas.

\paragraph{Pruning and Grafting with Second-Order Signals.}
Classical pruning measured parameter saliency via second-order criteria: Optimal Brain Damage (OBD) uses a diagonal Hessian approximation \citep{lecun1990obd}, and Optimal Brain Surgeon (OBS) leverages full curvature \citep{hassibi1992obs}.
\citet{singh2020woodfisher} provide scalable inverse-Fisher approximations.
For LLMs, one-shot/zero-shot methods such as SparseGPT \citep{frantar2023sparsegpt} and Wanda \citep{sun2023wanda} enable accurate pruning without retraining; movement pruning adapts masks during fine-tuning \citep{sanh2020movement}.
In contrast, our objective is not generic compression: FFG computes a curvature-weighted edit saliency \(s_i = H_{ii}\,\Delta w_i^2\) and grafts by resetting low-saliency coordinates to base weights.
This simultaneously reduces cross-task interference and reveals interpretable task localization.

\paragraph{Mode Connectivity and Alignment.}
Mode-connectivity work shows that the independently trained checkpoints on same data  are often connected by low-loss paths \citep{garipov2018loss}.
After permutation alignment, independently trained networks lie in an approximately convex basin, which explains why linear interpolation/merging can work when models are geometrically aligned \citep{ainsworth2023gitrebasin}.
Our curvature-aware view complements these results: if diagonal curvature morphology is shared across specialists, then linear aggregation with curvature reweighting is particularly effective.

\paragraph{Curvature Proxies from Optimization Dynamics.}
Our work repurposes the readily available second-moment statistics from adaptive optimizers as a proxy for the diagonal Fisher information. Concurrent and independent work by \citet{li2025fishers} compellingly validates this core idea, introducing the "Squisher" and demonstrating its effectiveness as a "for free" replacement for a calculated Fisher across a broad set of applications, including model pruning, continual learning, and a form of Fisher-merging. While our work shares this foundational insight, it diverges significantly in its methodology, application focus, and conceptual contributions. 

First, while \citet{li2025fishers} apply Fisher pruning to the final model weights ($\mathbf{w}^*$) by setting parameters to zero, FFG instead operates on the task vector ($\Delta\mathbf{w} = \mathbf{w}^* - \mathbf{w}_0$) to revert low-saliency updates. This mitigates interference between non-IID experts---a critical step for our SFT merging setting. This denoising role is the cornerstone of our OTA-Merging framework and is a key differentiator from other merging methods. Second, we uniquely employ the second moment proxy as an analytical and interpretability lens. We use it to propose and provide strong empirical evidence for a shared curvature hypothesis, offering a new explanation for the effectiveness of model merging. Furthermore, we leverage FFG as a tool for task localization to understand SFT training regimes, revealing how skills are encoded via structured, role-aware sparsity patterns in the network, a line of analysis not pursued in the concurrent work.

Finally, to address the significant practical issue of storage, we propose and validate an AdaFactor-style rank-1 compression of the second-moment tensor. This reduces the storage overhead significantly, making our approach highly scalable for large models. In summary, while \citet{li2025fishers} establish the broad utility of the optimizer-as-Fisher proxy, our work presents a specialized, end-to-end framework for the challenging SFT merging problem, complete with a novel denoising mechanism, new interpretability insights, and a practical, scalable implementation.

\section{Preliminaries}
\label{sec:prelim}

This section establishes the notation and foundational concepts that underpin our work. We begin by formalizing the SFT setup and the associated notation. We then introduce the Fisher Information Matrix (FIM) as a key tool for understanding the curvature of the loss landscape. Finally, we review how second-order information, approximated by the FIM, is leveraged in established methods for model merging and parameter grafting, setting the stage for our proposed contributions.

\subsection{SFT Setup}

\noindent\textbf{Notation.} We denote matrices with bold capital letters ($\mathbf{A}$), vectors with bold lowercase letters ($\mathbf{v}$), and scalars with regular lowercase letters ($s$). A vector-valued function's $j^\text{th}$ output is denoted as $f^j$. The $i$-th standard basis vector is $\mathbf{e}_i$, and an $n$-dimensional vector of ones is $\mathbf{1}_n$. For any Positive Semi-Definite (PSD) matrix $\mathbf{P} \in \mathbb{R}^{d \times d}$, we define its induced norm on a vector $\mathbf{x} \in \mathbb{R}^{d}$ as $\|\mathbf{x}\|_{\mathbf{P}} = \sqrt{\mathbf{x}^\top \mathbf{P} \mathbf{x}}$.

\noindent\textbf{Learning Setup.} We consider a supervised fine-tuning (SFT) scenario with $T$ distinct tasks from the same base model. For each task $\tau \in \{1, \dots, T\}$, we have a dataset $\mathcal{S}_{\tau} = \{(\mathbf{x}^{\tau}_i , y^{\tau}_i)\}_{i=1}^{|\mathcal{S}_{\tau}|}$, with samples drawn from a true data distribution $D_{\tau}$. We begin with a common pre-trained model architecture, parameterized by $\bm{w}_0 \in \mathbb{R}^d$, which is then fine-tuned for each specific task. The model, $f(\cdot; \bm{w})$, maps an input $\mathbf{x} \in \mathcal{X}$ to a logit vector $\mathbf{z} \in \mathbb{R}^{|V|}$, where $|V|$ is the vocabulary size. These logits parameterize a conditional probability distribution $P(y|\mathbf{x}; \bm{w})$ via the softmax function: $\log p(y|\mathbf{x}; \bm{w}) = \mathbf{e}_y^\top \mathbf{z} - \log \left(\sum_{j=1}^{|V|} \exp(z_j)\right)$.

The objective for each task $\tau$ is to minimize the empirical cross-entropy loss, which approximates the true expected risk over the data distribution $D_{\tau}$:
\[
\mathcal{L}_{\mathcal{S}_\tau}(\bm{w}) = - \mathbb{E}_{(\mathbf{x},y) \sim \mathcal{S}_{\tau}} \left[ \log p(y|\mathbf{x}; \bm{w}) \right].
\]
For notational simplicity, we will drop the task subscript $\tau$ when the context is clear.

\subsection{The Fisher Information Matrix}
A central concept for analyzing the loss landscape is the Fisher Information Matrix (FIM), which measures the sensitivity of the model's output distribution to changes in its parameters $\bm{w}$. It provides a powerful approximation of the loss curvature~\cite{amari1998natural,martens2020new} and is equivalent to the negative Hessian of the log-likelihood, under expectation over model's predictive distributions, $\mathbf{F}(\bm{w})=-\mathbb{E}_{y \sim P(y | \bm{x};\bm{w})}[\nabla^2_{\bm{w}} \log p(y\mid \mathbf{x};\bm{w})]$. In practice, several variants of the FIM are used:

\textbf{True FIM}, $\mathbf{F}(\bm{w})$, is defined over the true data distribution and the model's predictive distribution, making it intractable for deep neural networks:
    \begin{equation}
    \label{eq:true_fim}
    \mathbf{F}(\bm{w}) = \mathbb{E}_{\mathbf{x} \sim D(\mathbf{x}), y \sim P(y|\mathbf{x} ; \bm{w})} \left [ \nabla_{\bm{w}} \log p(y|\mathbf{x} ; \bm{w}) \nabla_{\bm{w}} \log p(y|\mathbf{x} ; \bm{w})^\top \right].
    \end{equation}

\textbf{Expected Empirical FIM}, $\hat{\mathbf{F}}(\bm{w})$, approximates the true FIM by using a finite dataset $\mathcal{S}$ but still requires an expectation over the model's predictions:
    \begin{equation}
    \label{eq:empirical_fim}
    \hat{\mathbf{F}}(\bm{w}) = \frac{1}{|\mathcal{S}|}\sum_{(\mathbf{x}_i, y_i) \in \mathcal{S}} \mathbb{E}_{y \sim P(y|\mathbf{x}_i ; \bm{w})} \left [ \nabla_{\bm{w}} \log p(y|\mathbf{x}_i ; \bm{w}) \nabla_{\bm{w}} \log p(y|\mathbf{x}_i ; \bm{w})^\top \right].
    \end{equation}

\textbf{Observed Empirical FIM}, $\Bar{\mathbf{F}}(\bm{w})$, simplifies this further by replacing the expectation with the observed ground-truth labels from the dataset. This variant, often called the "empirical Fisher," is the most commonly used in practice \cite{martens2015optimizing, matena2022merging} due to its computational advantage by avoiding the need for costly sampling from model's distribution:
    \begin{equation}
    \label{eq:empirical_fim_observed}
    \Bar{\mathbf{F}}(\bm{w}) = \frac{1}{|\mathcal{S}|}\sum_{(\mathbf{x}_i, y_i) \in \mathcal{S}} \left[ \nabla_{\bm{w}} \log p(y_i|\mathbf{x}_i ; \bm{w}) \nabla_{\bm{w}} \log p(y_i|\mathbf{x}_i ; \bm{w})^\top \right].
    \end{equation}

\subsection{Applications of Second-Order Information}

The FIM's ability to capture loss curvature makes it invaluable for a range of model manipulation techniques, from merging diverse experts to compressing a single model.

\noindent\textbf{Preconditioned Model Merging.} The goal of model merging is to combine a set of fine-tuned expert models, $\{\bm{w}_{\tau}^{*}\}_{\tau=1}^T$, into a single, multi-tasked model $\bm{w}_{\text{merged}}$. A generalized approach involves a weighted average in parameter space \cite{tam2023merging}:
\begin{equation}
\label{eq:precond-merge-general}
\bm{w}_{\text{merged}} = \left(\sum_{\tau=1}^T \mathbf{C}_{\tau} \right)^{-1} \left(\sum_{\tau=1}^T \mathbf{C}_{\tau} \bm{w}^*_{\tau}\right),
\end{equation}
where $\mathbf{C}_{\tau}$ are PSD weighting matrices. Different choices for $\mathbf{C}_{\tau}$ yield different merging strategies. For instance, Fisher-weighted averaging \cite{matena2022merging} uses the empirical FIM of each task as $\mathbf{C}_{\tau}$, leveraging the loss landscape geometry to guide the combination process, and~\citet{tam2023merging} leveraged Kronecker factored approximation of empirical fisher~\citep{martens2015kfac} for the choice of $\mathbf{C}_{\tau}$ .

\noindent\textbf{Optimal Brain Damage for Pruning and Grafting.} Second-order information is also fundamental to classic model compression techniques like Optimal Brain Damage (OBD) \cite{lecun1989optimal, hassibi1992second}. OBD identifies and removes parameters with the smallest impact on the loss function. This impact, or "saliency," is estimated via a second-order Taylor expansion. For a model at a local minimum $\bm{w}^*$, a small parameter perturbation $\delta\bm{w}$ changes the loss by $\Delta\mathcal{L}_{\mathcal{S}} \approx \frac{1}{2} (\delta\bm{w})^\top \mathbf{H} (\delta\bm{w})$, where $\mathbf{H}$ is the Hessian, approximated by the FIM. To make this tractable, OBD typically uses only the diagonal of the Hessian. Pruning a parameter $w_i^*$ to zero corresponds to a saliency score of $s_i = \frac{1}{2} \mathbf{H}_{ii} (w_i^*)^2$.

This framework can be repurposed from pruning to \textbf{grafting}. Instead of nullifying parameters, we can selectively revert fine-tuned parameters $\bm{w}^*$ back to their pre-trained state $\bm{w}^0$. The perturbation becomes $\delta w_i = w_i^* - w_i^0$, and the saliency of keeping the fine-tuned update is calculated as $s_i = \frac{1}{2} \mathbf{H}_{ii} (w_i^* - w_i^0)^2$. This score quantifies the importance of the change acquired during fine-tuning, providing a direct link to merging by deciding on a parameter-wise basis whether to retain a specialized update or revert to the base model.

\subsection{Efficiently Estimating Second-Order Information}
A major challenge in using second-order methods is the formidable memory cost of storing the full FIM or Hessian. While generic low-rank approximations like SVD exist, they do not guarantee the preservation of non-negativity, a defining property of these matrices.

\noindent\textbf{Factored Estimators (AdaFactor).} The AdaFactor optimizer \cite{shazeer2018adafactor} introduces a memory-efficient factorization that guarantees non-negativity. For a matrix of squared-gradient Exponential Moving Averages (EMAs) $\mathbf{V} \in \mathbb{R}^{m \times n}$, AdaFactor avoids storing the full $mn$ elements. Instead, it maintains only the moving averages of its row and column sums: $\mathbf{r} = \mathbf{V}\mathbf{1}_n \in \mathbb{R}^m$ and $\mathbf{c}^\top = \mathbf{1}_m^\top \mathbf{V} \in \mathbb{R}^{1 \times n}$. A rank-1, non-negative approximation of the full matrix is then reconstructed as $\hat{\mathbf{V}} = \mathbf{r}\mathbf{c}^\top / (\mathbf{1}_m^\top \mathbf{r})$. This reduces storage from $O(mn)$ to $O(m+n)$ per parameter matrix. The effectiveness of such low-rank approximations is often justified by the concept of stable rank, $r_s(\mathbf{V}) = \|\mathbf{V}\|_F^2 / \|\mathbf{V}\|_2^2$, which measures how well a matrix can be approximated by a low-rank counterpart. While AdaFactor was designed to save memory during training, we propose leveraging its factorization after training to create a highly compressed snapshot of the second-moment matrix, providing nearly storage-free access to valuable curvature information.
\section{The OTA-Merging Framework}
\label{sec:methodology}

We propose OTA Merging, a unified framework designed to merge fine-tuned experts by addressing parameter interference and curvature misalignment in a principled, storage-efficient manner. Our approach is built on a key insight: the second-moment estimates tracked by adaptive optimizers like Adam~\cite{kingma2014adam} can serve as a computationally cheap yet effective proxy for the local curvature of the loss landscape. By leveraging this curvature information, OTA-Merging executes a three-stage process: (1) it identifies and isolates the critical parameters for each task using a novel pruning strategy, FFG; (2) it aggregates these task-specific subnetworks using a curvature-aware weighting scheme; and (3) it employs a compression technique to store the required second moment information with minimal memory overhead.

\subsection{Adam's Second Moment as a Proxy for the Empirical Fisher}
\label{subsec:theory}

Preconditioning-based optimizers, such as Adam~\cite{kingma2014adam} and AdaGrad~\cite{duchi2011adaptive}, scale gradients by a preconditioner matrix that approximates the Fisher Information Matrix (FIM). For a model with parameters $\mathbf{w}$, the update at step $k$ is given by $\mathbf{w}_{k+1} = \mathbf{w}_k - \eta \mathbf{P}_k^{-1} \mathbf{m}_k$, where $\eta$ is the learning rate, $\mathbf{m}_k$ is the first momentum of the gradients, and $\mathbf{P}_k = \operatorname{Diag}(\mathbf{v}_k)$ is a diagonal preconditioner derived from the second moment, $\mathbf{v}_k$.

The second moment, $\mathbf{v}_k$, is typically an exponential moving average (EMA) of element-wise squared gradients: $\mathbf{v}_k = \beta_2 \mathbf{v}_{k-1} + (1-\beta_2) (\nabla \mathcal{L}_{B_k}(\mathbf{w}_k))^{\odot2}$, where $B_k$ is the mini-batch at step $k$. This formulation means that $\mathbf{P}_k$ accumulates information about the diagonal of the empirical FIM over the optimization trajectory. A comprehensive study on the connection between the empirical FIM and the Hessian is provided by~\cite{martens2020new}. Moreover,~\cite{morwani2024new} study the connection of the outer product of mini-batch gradients to the empirical FIM. By leveraging these works, we formalize the connection between the second moment and the Hessian under two mild assumptions, with detailed proofs deferred to Appendix~\ref{sec:appendix_proofs} for the sake of completeness.

\noindent\textbf{Theoretical Justification.}
The core argument rests on the equivalence between the Hessian of the loss function ($\nabla^2 \mathcal{L}_D$) and the Observed Empirical FIM ($\Bar{\mathbf{F}}$) near a local minimum $\mathbf{w}_*$.
\begin{enumerate}
    \item We first assume the network's output is locally linear with respect to its parameters near the end of training (the \textbf{Late NTK Regime}). This allows us to approximate the Hessian with the Generalized Gauss-Newton (GGN) matrix. While neural network training has been shown not to be well-approximated by the NTK at initialization, meaning an aggressive kernel change is necessary for feature learning~\citep{vyas2022limitations}, the kernel has been shown to stabilize near the end of training~\citep{fort2020deep}.
    \item We then assume the model is well-calibrated at convergence (\textbf{Perfect Calibration}), meaning its predictive distribution matches the true data distribution.
\end{enumerate}
Under these assumptions, one can argue that the Hessian is approximately equal to the Observed Empirical FIM: $\nabla^2 \mathcal{L}_{D} (\mathbf{w}_*) \approx \Bar{\mathbf{F}}(\mathbf{w}_*)$. Furthermore, the expectation of the outer product of mini-batch gradients is a scaled version of the FIM: $\mathbb{E}_{B_k \sim D^{|B|}} [ \nabla \mathcal{L}_{B_k} \nabla \mathcal{L}_{B_k}^\top ] = \frac{1}{|B|} \Bar{\mathbf{F}}(\mathbf{w}_*)$. We would like to highlight that the equivalence of the Empirical Fisher information with the Hessian under perfect calibration is a well-known property~\cite{amari1998natural}, and the expectation of the mini-batch gradient outer product is a corollary of Lemma~$8$ in~\citet{morwani2024new}.  

Thus, we provide a strong theoretical argument for our method: the Adam second-moment accumulator, $\mathbf{v}$, on expectation, is a scaled EMA of the diagonal of the FIM. It is therefore a valid and computationally free proxy for the diagonal curvature of the loss landscape, which we can harness for both parameter selection and model merging.

\subsection{Component 1: Parameter Selection with FFG}
\label{subsec:ffg}

To mitigate destructive interference when merging, we first identify a subnetwork within each expert that maintains the fine-tuning performance of the full model. Inspired by Optimal Brain Damage~\cite{lecun1989optimal}, we score the saliency of each parameter's change from its pre-trained state $\mathbf{w}_0$. The saliency of a parameter change $\Delta w_{\tau,i} = w_{\tau,i}^* - w_{0,i}$ for task $\tau$ is defined by its contribution to the loss, approximated by a second-order Taylor expansion: $s_{\tau,i} = \frac{1}{2} \mathbf{H}_{ii} (\Delta w_{\tau,i})^2$.

Calculating the Hessian $\mathbf{H}$ is infeasible for large models. However, based on the theory in Section~\ref{subsec:theory}, the second-moment estimate $\mathbf{v}_\tau$ from the Adam optimizer serves as an effective, training-free proxy for the diagonal of the FIM, which in turn approximates the Hessian. This insight leads to our FFG saliency score:
\begin{equation}
s_{\tau,i} = (\Delta w_{\tau,i})^2 \cdot v_{\tau,i}.
\end{equation}
For each expert $\tau$, we compute this score for every parameter in its task vector $\Delta\mathbf{w}_\tau = \mathbf{w}^*_\tau - \mathbf{w}_0$. We then generate a binary mask $\mathbf{m}_\tau$ by preserving only the top-$k$ parameters with the highest saliency scores, where $k$ is set by a sparsity ratio $\rho$. Instead of pruning parameters to zero, we graft by reverting the non-selected parameters back to their $\mathbf{w}_0$ values. The resulting pruned task vector is thus $\Delta\mathbf{w}'_\tau = \mathbf{m}_\tau \circ \Delta\mathbf{w}_\tau$.

\subsection{Component 2: Curvature-Aware Aggregation}
\label{subsec:ota_aggregation}

After identifying the essential subnetwork for each expert, we must aggregate them in a way that respects the curvature of the loss landscape. Inspired by preconditioned model merging methods discussed in Section~\ref{sec:prelim}, we can achieve this by solving for a merged parameter vector that is minimally distant from each of the pruned task vectors, where distance is measured in a space warped by the curvature. Let $\mathbf{P}^*_{\tau, \text{Adam}} = \operatorname{Diag}(\sqrt{\mathbf{v}_{\tau}^{\ast}} + \epsilon)$ be the diagonal preconditioning matrix derived from Adam's second-moment estimates for expert $\tau$. The merged model is the solution to the following optimization problem:
\begin{equation}
\mathbf{w}_{\text{merged}} = \mathbf{w}_0 + \argmin_{\Delta\mathbf{w}} \sum_{\tau=1}^T \| \Delta\mathbf{w} - \Delta\mathbf{w}'_{\tau} \|^2_{\mathbf{P}^{*}_{\tau, \text{Adam}}},
\end{equation}
This objective has a closed-form solution, yielding a pre-conditioned average of the pruned task vectors:
\begin{equation}
\label{eq:ota_ffg_base}
\mathbf{w}^{\text{OTA}}_{\text{merged}} = \mathbf{w}_0 + \left( \sum_{\tau=1}^T \mathbf{P}^{*}_{\tau, \text{Adam}} \right)^{-1} \left(\sum_{\tau=1}^T \mathbf{P}^{*}_{\tau, \text{Adam}} (\mathbf{m}_\tau \circ \Delta\mathbf{w}_\tau) \right).
\end{equation}

\subsection{Component 3: Memory-Efficient Preconditioner Compression}
\label{subsec:compression}

A practical challenge is that storing the full second-moment tensor $\mathbf{v}_\tau$ for each expert doubles the storage cost. To overcome this, we adopt a compression strategy inspired by AdaFactor~\cite{shazeer2018adafactor}. For any large weight matrix, instead of storing the full $\mathbf{v}_\tau$, we only store the moving averages of its row-wise and column-wise sums. We can then reconstruct a non-negative, rank-1 approximation of the second-moment tensor, $\hat{\mathbf{v}}_\tau$, from these compressed statistics at runtime.

This low-rank approximation is then used to form a compressed preconditioner, $\hat{\mathbf{P}}^{*}_{\tau} = \operatorname{Diag}(\sqrt{\hat{\mathbf{v}}_{\tau}} + \epsilon)$, which replaces its full-rank counterpart in both the FFG saliency calculation (Section~\ref{subsec:ffg}) and the OTA aggregation formula (Eq.~\ref{eq:ota_ffg_base}). The final, memory-efficient OTA-Merging update is:
\begin{equation}
\label{eq:ota_ffg_compressed}
\mathbf{w}^{\text{OTA}}_{\text{merged}} = \mathbf{w}_0 + \left( \sum_{\tau=1}^T \hat{\mathbf{P}}^{*}_{\tau} \right)^{-1} \left(\sum_{\tau=1}^T \hat{\mathbf{P}}^{*}_{\tau} (\mathbf{m}_\tau \circ \Delta\mathbf{w}_\tau) \right).
\end{equation}
This unified equation elegantly demonstrates how OTA first determines \textit{what} to merge via the FFG mask $\mathbf{m}_\tau$ and then decides \textit{how} to merge using the compressed, curvature-aware preconditioner $\hat{\mathbf{P}}^{*}_{\tau}$, forming a complete and scalable framework.
\section{Experiments}
\label{sec:experiments}

First, we conduct a series of experiments to validate our framework, demonstrating that combining FFG masks with OTA for aggregation yields a superior merging solution. Our central hypothesis is that this two-stage process—first identifying task-critical parameters via FFG, then aggregating them using our curvature-aware method—outperforms established baselines and each of its constituent components.

We structure our investigation to answer the following questions:
\begin{enumerate}
    \item How does our full OTA-FFG method compare to state-of-the-art merging techniques and key ablations on a diverse suite of benchmarks?
    \item How does FFG's mask selection mechanism compare to standard baselines like magnitude pruning, and what is the performance impact across different sparsity ratios?
    \item What kind of sparsity patterns does FFG induce, and do they reveal interpretable, task-specific structures within the network?
    \item What is the empirical justification for our compression strategy?
\end{enumerate}

Second, we use the second-moment connection to curvature as a lens to study the loss landscape properties of SFT fine-tuned models at the end of training. The comprehensive comparison between curvature proxies of different models shows that models fine-tuned on completely different capabilities surprisingly share an already similar curvature. This is demonstrated through extensive visualization of side-by-side comparisons and element-wise max-min ratio curvature heatmaps for all transformer layers. Furthermore, models trained on the same data but with different hyperparameter settings (e.g., a different LR scheduler) appear to have an almost identical curvature structure. This observation provides strong evidence for the effectiveness of the simple linear averaging method, as it shows that the task vectors are implicitly curvature-aligned for most of the parameters.

\subsection{Experimental Setup}
\label{subsec:exp_setup}

\paragraph{Models, Tasks, and Training.}
Our experiments use \texttt{meta-llama/Meta-Llama-3.1-8B} as the base model. To create a realistic merging scenario, we fine-tune five SFT models on distinct, capability-aligned subsets of the \texttt{allenai/tulu-3-sft-mixture} dataset \cite{lambert2024t}. These capabilities, summarized in Table~\ref{tab:specialist_datasets}, include mathematics, coding, general instruction following, knowledge recall, and precise instruction following. This setup creates a well-posed aggregation problem where each expert localizes a complementary skill.

All models are fine-tuned using full-parameter SFT via the \texttt{LLaMA-Factory} library \cite{zheng2024llamafactory}. Crucially for our method, we use the AdamW optimizer and save the complete optimizer state, including the exponential moving average of squared gradients (\texttt{exp\_avg\_sq}), which serves as our preconditioning tensor and curvature proxy.

\begin{table}[htbp]
\centering
\small
\setlength{\tabcolsep}{4pt}
\rowcolors{2}{rowgray}{white}
\caption{SFT model capabilities and representative fine-tuning datasets from the \texttt{tulu-3-sft-mixture}.}
\vspace{0.1em} 
\label{tab:specialist_datasets}
\begin{tabular}{@{}>{\raggedright\arraybackslash}p{3.0cm} >{\raggedright\arraybackslash}p{8.0cm}@{}} 
\rowcolor{headerblue} 
\textcolor{white}{\textbf{Capability}} & \textcolor{white}{\textbf{Representative Datasets}} \\
\toprule
Mathematics Reasoning & \texttt{Tülu 3 Persona MATH}, \texttt{OpenMathInstruct 2}, \texttt{NuminaMath-TIR} \\
Coding & \texttt{Tülu 3 Persona Python}, \texttt{Evol CodeAlpaca} \\
General Instruction Following & \texttt{WildChat (GPT-4 subset)}, \texttt{OpenAssistant}, \texttt{No Robots} \\
Knowledge Recall & \texttt{FLAN v2}, \texttt{SciRIFF}, \texttt{TableGPT} \\
Precise Instruction Following & \texttt{Tülu 3 Persona IF} \\
\bottomrule
\end{tabular}
\end{table}

\paragraph{Methods Under Comparison.}
We evaluate our proposed method and its ablations against a suite of strong baselines implemented in \texttt{MergeKit}~\cite{goddard2024arcee}:
\begin{itemize}
    \item \textbf{OTA with FFG (Ours)}: The full two-stage method. Our initial step is to sparsify each SFT model's task vector using FFG. We tune the sparsity ratio to ensure that the fine-tuning performance of each expert model is preserved for its corresponding capabilities, while making the task vector as sparse as possible. We then merge the resulting sparsified task vectors using OTA's curvature-aware aggregation.
    \item \textbf{OTA (Aggregation Only)}: An ablation that omits the FFG selection stage and performs curvature-aware aggregation on the full task vectors.
    \item \textbf{FFG-TA (Selection Only)}: An ablation study applying FFG to task vectors and then merging them with simple Task Arithmetic (linear averaging).
    \item \textbf{Baselines}: We compare against widely-used methods, including simple \textbf{Linear Averaging}~\cite{wortsman2022model}, \textbf{TIES-Merging}~\cite{yadav2023ties}, \textbf{DARE}~\cite{language_leyu_2023}, \textbf{Breadcrumbs}~\cite{Davari2024model}, and preconditioned \textbf{Fisher Merging}~\cite{matena2022merging}.
\end{itemize}
Notably, for all methods under comparison, the sparsity ratio is tuned on a per-expert basis, whether using FFG or magnitude pruning on task vectors. We observed that tuning a fixed sparsity ratio for all experts made the performance of both OTA and TIES no better than that of linear merging.
\paragraph{Evaluation Suite.}
We evaluate all merged models on a diverse set of benchmarks using the T\"{u}lu-3 evaluation suite via the OLMES toolkit \cite{lambert2024t}, ensuring a rigorous and reproducible assessment. The suite includes: \textbf{HumanEval(+)}~\cite{chen2021evaluating, liu2024correct} for coding, \textbf{GSM8K}~\cite{cobbe2021gsm8k} and \textbf{MATH}~\cite{hendrycks2021math} for mathematical reasoning, \textbf{IFEval}~\cite{zhou2023ifeval} for instruction following, \textbf{BBH (CoT)}~\cite{suzgun2022challenging} for general reasoning, \textbf{DROP}~\cite{dua2019drop} for reading comprehension, and \textbf{PopQA}~\cite{mallen2023trust} for knowledge recall.

\subsection{Main Results: Merging Performance}

\paragraph{SFT Models Localize Distinct Capabilities.}
First, we establish a baseline by analyzing the performance of individual SFT models (Table~\ref{tab:specialist_baselines}). As expected, each expert excels on benchmarks aligned with its training data: the Math specialist outperforms all others on \texttt{MATH} (0.316), and the Coding specialist dominates \texttt{HumanEval} (0.788). Conversely, the near-zero scores of non-coding specialists on \texttt{HumanEval} underscore the non-IID nature of the training data and highlight the core challenge of merging: combining these complementary but isolated skills without destructive interference. Moreover, the SFT experts appear to either outperform or closely match the performance of the Tulu-3 SFT checkpoint (the multi-task SFT tuned model), with the exception of coding capability. For coding, the Tulu models have considerably higher performance on HumanEval and HumanEval+ compared to the SFT coding model. This suggests that this task benefits the most from multi-task learning, as code-only filtered subset of the Tulu SFT mixture was unable to retain the performance of the multi-instructed models.

\begin{table}[ht]
\centering
\caption{Performance of individual SFT experts and the multi-task Tulu-3-8B SFT model on their respective benchmarks. Best performance per row is shown in bold. Scores are reported in $[0,1]$.}
\label{tab:specialist_baselines}
\small
\sisetup{detect-weight=true,detect-family=true,round-mode=places,round-precision=3}
\setlength{\tabcolsep}{4pt}
\begin{tabular}{l *{5}{S[table-format=1.3]} S[table-format=1.3]}
\toprule
\rowcolor{LightGray}
\multicolumn{1}{l}{Benchmark} & \multicolumn{5}{c}{Specialists} & \multicolumn{1}{c}{Tulu-3 SFT} \\
\cmidrule(lr){2-6}\cmidrule(lr){7-7}
& {General} & {Knowledge} & {Math} & {Precise IF} & {Coding} & { } \\
\midrule
BBH-CoT & 0.671 & 0.634 & 0.650 & 0.628 & 0.635 & {\bfseries 0.688} \\
HumanEval & 0.000 & 0.000 & 0.700 & 0.688 & 0.788 & {\bfseries 0.866} \\
HumanEval+ & 0.000 & 0.000 & 0.659 & 0.632 & 0.744 & {\bfseries 0.805} \\
DROP & 0.571 & {\bfseries 0.629} & 0.583 & 0.586 & 0.552 & 0.616 \\
GSM8K & 0.575 & 0.589 & 0.757 & 0.594 & 0.561 & {\bfseries 0.767} \\
IFEval & 0.516 & 0.538 & 0.257 & {\bfseries 0.717} & 0.425 & 0.715 \\
MATH & 0.170 & 0.171 & {\bfseries 0.316} & 0.199 & 0.181 & 0.290 \\
POPQA & {\bfseries 0.317} & 0.301 & 0.296 & 0.307 & 0.301 & 0.295 \\
\bottomrule
\end{tabular}
\end{table}

\paragraph{OTA with FFG Achieves State-of-the-Art Merging Performance.}
The main results in Table~\ref{tab:benchmark_results} confirm our core hypothesis. We compared the performance of several methods for merging the five SFT checkpoints discussed previously. The table shows the performance of each merging method (rows) on a specific capability (columns). A capability's performance is measured by averaging the scores from the benchmarks assigned to it in our evaluation suite. Specifically: Math performance is the average of the GSM8K and MATH benchmarks; Code is the average of HumanEval and HumanEval+; Commonsense is the average of BBH and Drop; Instruction-Following is measured by IFEval; and Forgetting is measured by PopQA. Our full method, OTA, achieves the highest average score (0.582) across all merging techniques, outperforming strong baselines like TIES (0.565). The ablation studies clearly show that the most significant gains come from FFG's saliency-based task vector sparsification. The FFG-TA (Selection Only) ablation, which simply averages FFG-pruned task vectors, already achieves a strong 0.560 average. This is substantially better than OTA (Aggregation Only) (0.536), which uses our curvature preconditioning on unpruned task vectors. This result strongly supports our thesis that the primary obstacle in merging non-IID experts is parameter interference, which FFG effectively mitigates by acting as a denoiser. The poor performance of DARE (0.417) further reinforces that naive, random pruning is detrimental; a saliency-aware method is essential. 

\begin{table}[ht]
\caption{Performance comparison of merging methods. The best-performing merge method in each column is highlighted. The "Average" score is the unweighted mean across the five capability metrics.}
\label{tab:benchmark_results}
\centering
\small
\sisetup{detect-weight=true,detect-family=true,round-mode=places,round-precision=3}
\setlength{\tabcolsep}{5pt}
\begin{tabular}{l S[table-format=1.3] S[table-format=1.3] S[table-format=1.3] S[table-format=1.3] S[table-format=1.3] S[table-format=1.3]}
\toprule
\rowcolor{LightGray}
Model & {Math} & {Code} & {Commonsense} & {Instruction} & {Forgetting} & {Average} \\
\midrule
DARE & 0.335 & 0.619 & 0.450 & 0.470 & 0.212 & 0.417 \\
Breadcrumbs & 0.453 & 0.722 & 0.547 & 0.529 & 0.260 & 0.502 \\
Fisher & 0.446 & 0.686 & 0.657 & 0.597 & 0.318 & 0.541 \\
Linear & 0.459 & 0.718 & 0.650 & 0.612 & 0.318 & 0.551 \\
TIES & 0.475 & 0.748 & \cellcolor{HighlightCell}\bfseries 0.654 & 0.629 & \cellcolor{HighlightCell}\bfseries 0.318 & 0.565 \\
OTA (w Linear) & 0.458 & 0.771 & 0.650 & 0.601 & 0.318 & 0.560 \\
OTA (wo FFG) & 0.458 & 0.660 & 0.654 & 0.590 & 0.318 & 0.536 \\
OTA (rank1) & 0.494 & \cellcolor{HighlightCell}\bfseries 0.787 & 0.646 & 0.614 & 0.315 & 0.571 \\
OTA & \cellcolor{HighlightCell}\bfseries 0.504 & 0.783 & 0.645 & \cellcolor{HighlightCell}\bfseries 0.664 & 0.315 & \cellcolor{HighlightCell}\bfseries 0.582 \\
\midrule
Tulu3-SFT & 0.528 & 0.835 & 0.650 & 0.715 & 0.295 & 0.605 \\
\bottomrule
\end{tabular}
\end{table}

\subsection{Deep Dive: Analysis of the FFG Stage}
\label{sec:ffg-analysis}
\paragraph{FFG Consistently Outperforms Magnitude Pruning.}
To validate the FFG mask selection mechanism, we compare it directly against magnitude pruning across a range of density ratios (Figure~\ref{fig:ffg_sparsity_performance}). We apply FFG and magnitude pruning on task vectors of math, code, and precise-if SFT models, and evaluate each expert on its corresponding benchmark. FFG consistently matches or outperforms magnitude pruning, with the largest gains observed in high-sparsity regimes (1--10\% density). For instance, on IFEval, FFG yields a +0.10 to +0.16 absolute improvement at 1--5\% density. On the Code benchmark (HumanEval), FFG at 20\% density (0.834) even surpasses the dense SFT model (0.788), suggesting that FFG has a regularizing effect by removing noisy, low-saliency updates and thereby improving generalization. A similar pattern is observed for the math SFT model on the MATH benchmark, where at 40\% density FFG achieves 32.52\%, compared to the full math SFT performance of 31.6\%.

The ability of FFG to compress task vectors to much higher sparsity levels while still maintaining, or even improving, fine-tuning performance further motivates an analysis of its underlying subnetwork selection mechanism. Hence, in the subsequent section, we provide a comprehensive empirical study to better understand this mechanism.

\begin{figure}[ht]
    \centering
    \begin{subfigure}[b]{0.4\textwidth}
        \centering
        \includegraphics[width=\textwidth]{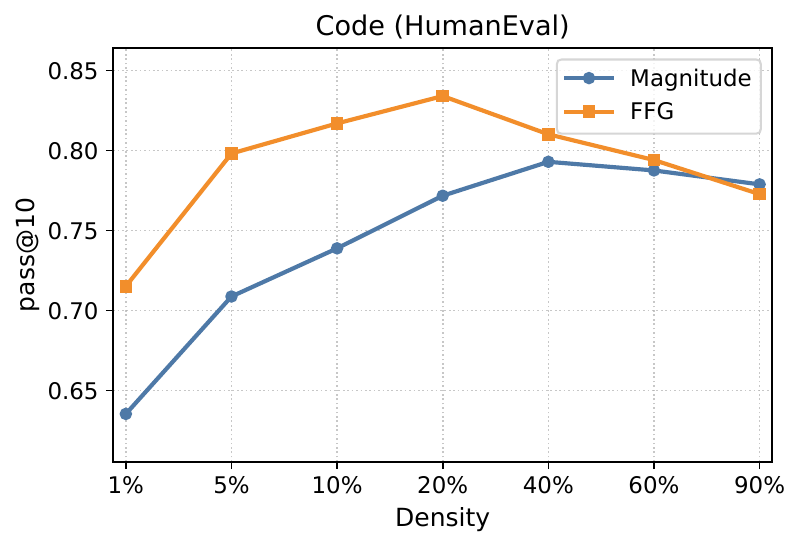}
    \end{subfigure}
    \quad  
    \begin{subfigure}[b]{0.4\textwidth}
        \centering
        \includegraphics[width=\textwidth]{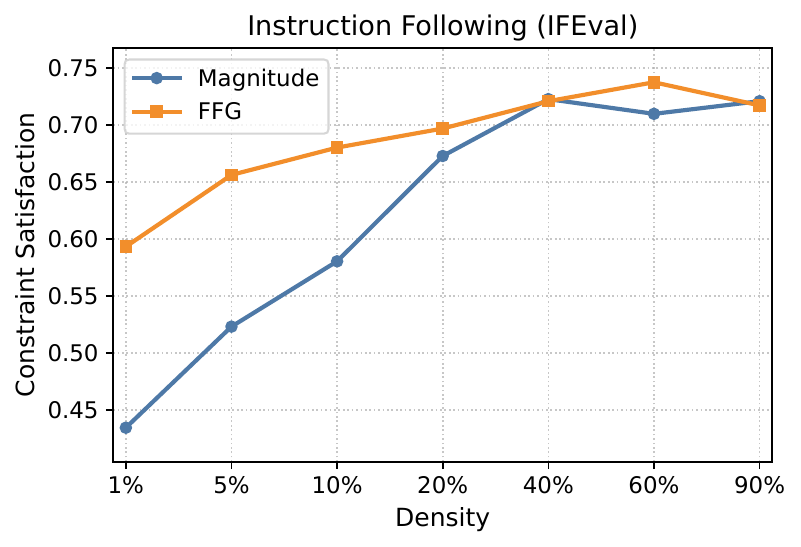}
    \end{subfigure}

    \begin{subfigure}[b]{0.4\textwidth}
        \centering
        \includegraphics[width=\textwidth]{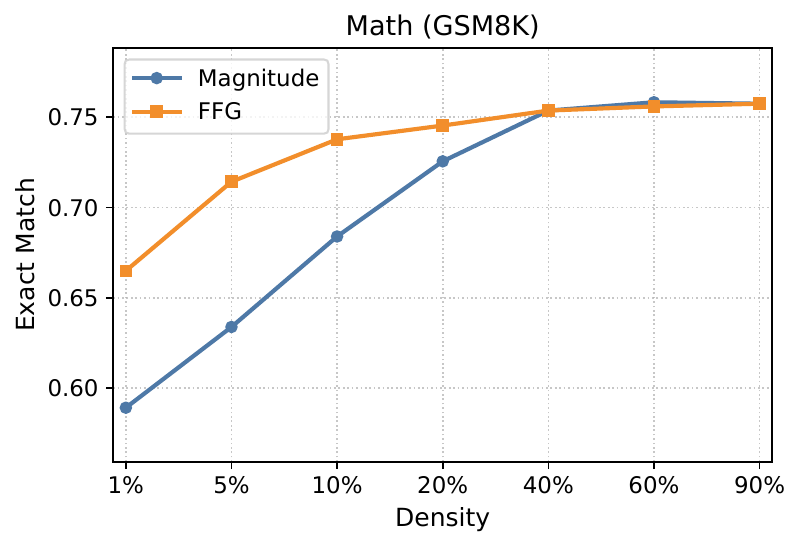}
    \end{subfigure}
    \quad  
    \begin{subfigure}[b]{0.4\textwidth}
        \centering
        \includegraphics[width=\textwidth]{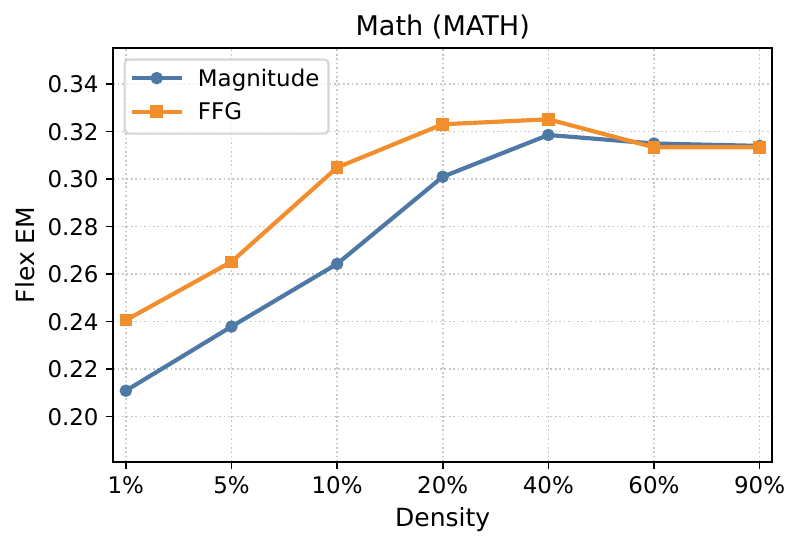}
    \end{subfigure}
    \caption{FFG vs. magnitude pruning across varying density ratios. FFG consistently outperforms, especially at lower densities (1-10\%), highlighting its superior ability to identify salient parameters.}
    \label{fig:ffg_sparsity_performance}
\end{figure}

\begin{figure}[!ht]
    \centering
    \begin{subfigure}[b]{0.49\textwidth}
        \includegraphics[width=\textwidth]{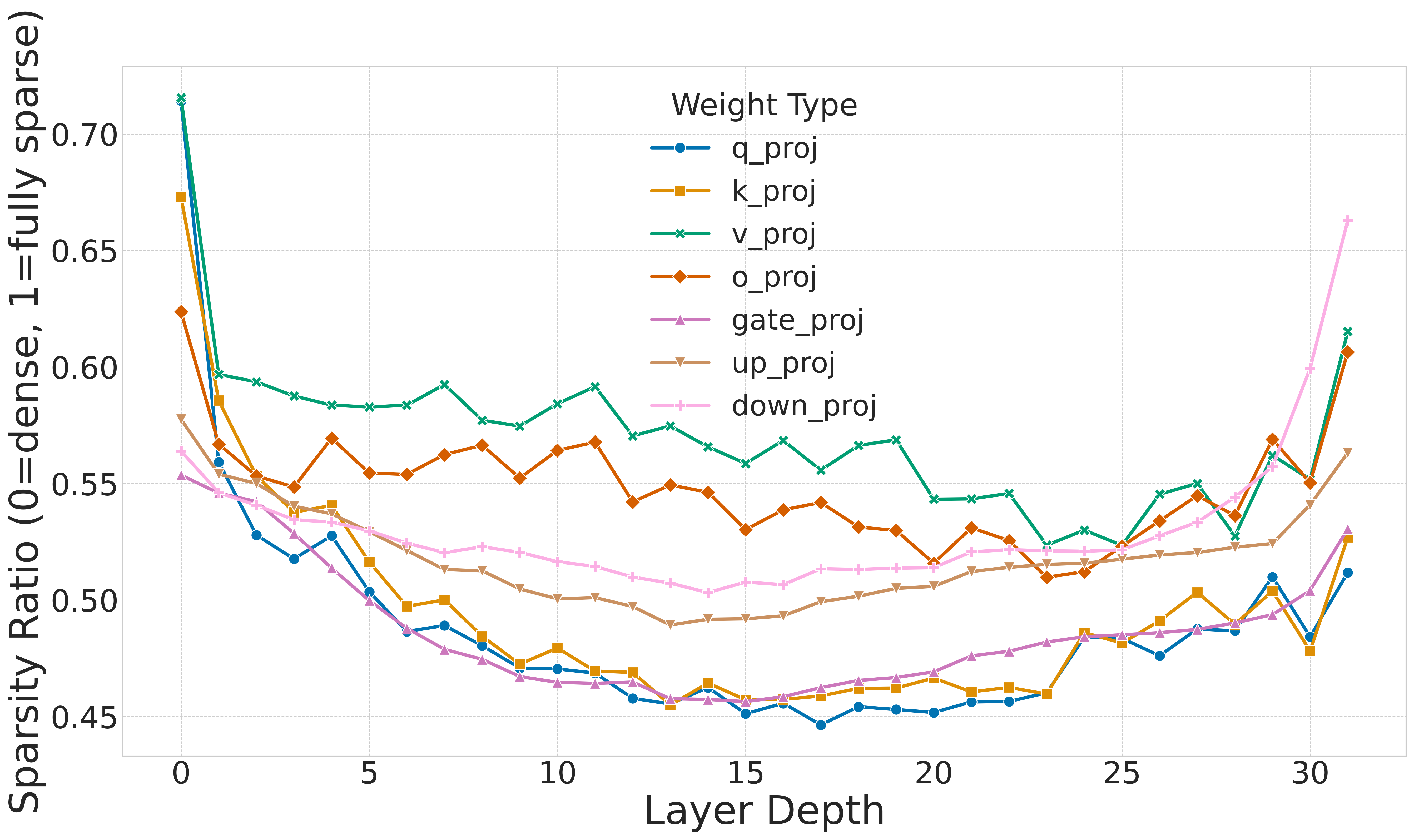}
        \caption{Magnitude Pruning (Math Expert)}
    \end{subfigure}
    \hfill
    \begin{subfigure}[b]{0.49\textwidth}
        \includegraphics[width=\textwidth]{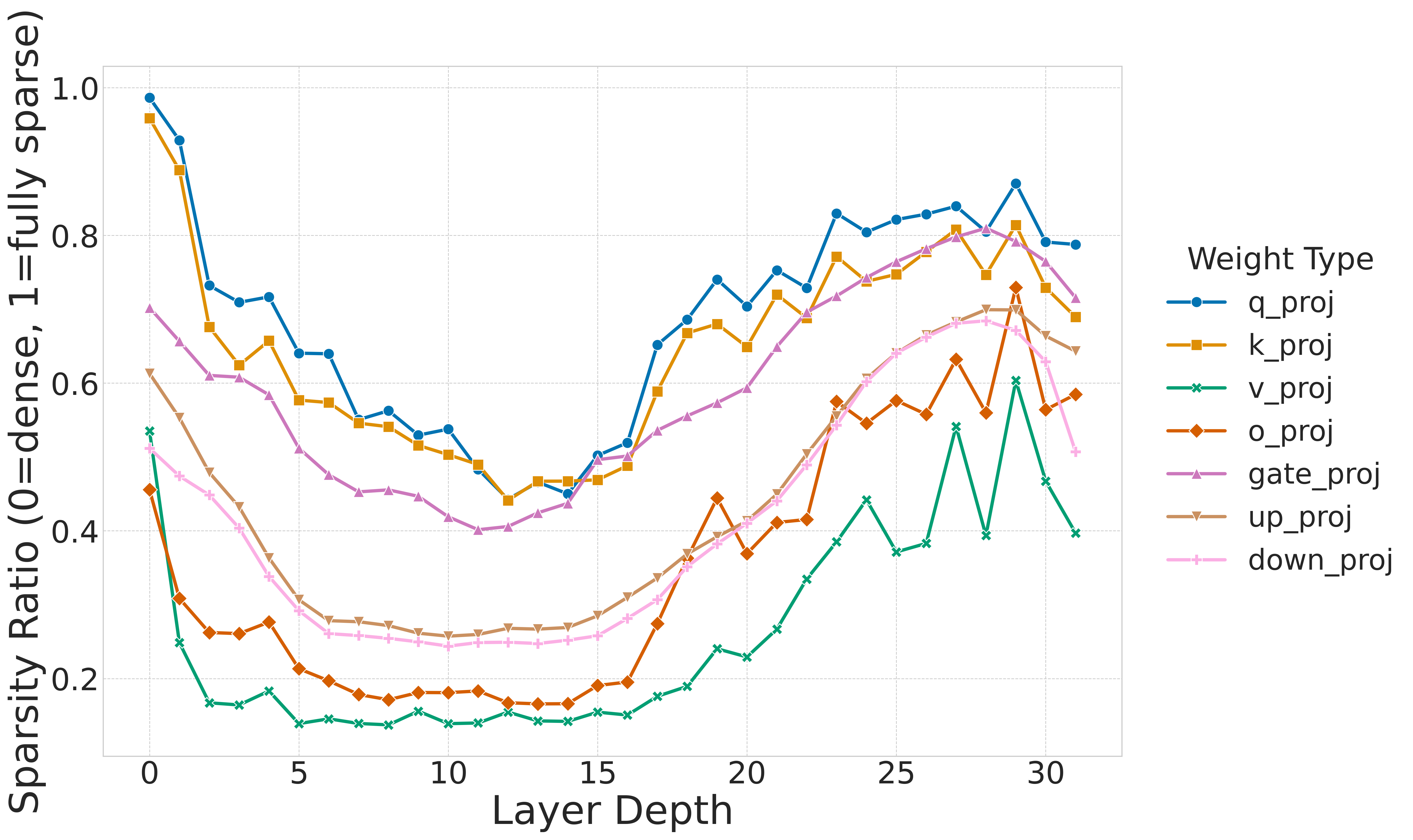}
        \caption{FFG Pruning (Math Expert)}
    \end{subfigure}
    \begin{subfigure}[b]{0.49\textwidth}
        \includegraphics[width=\textwidth]{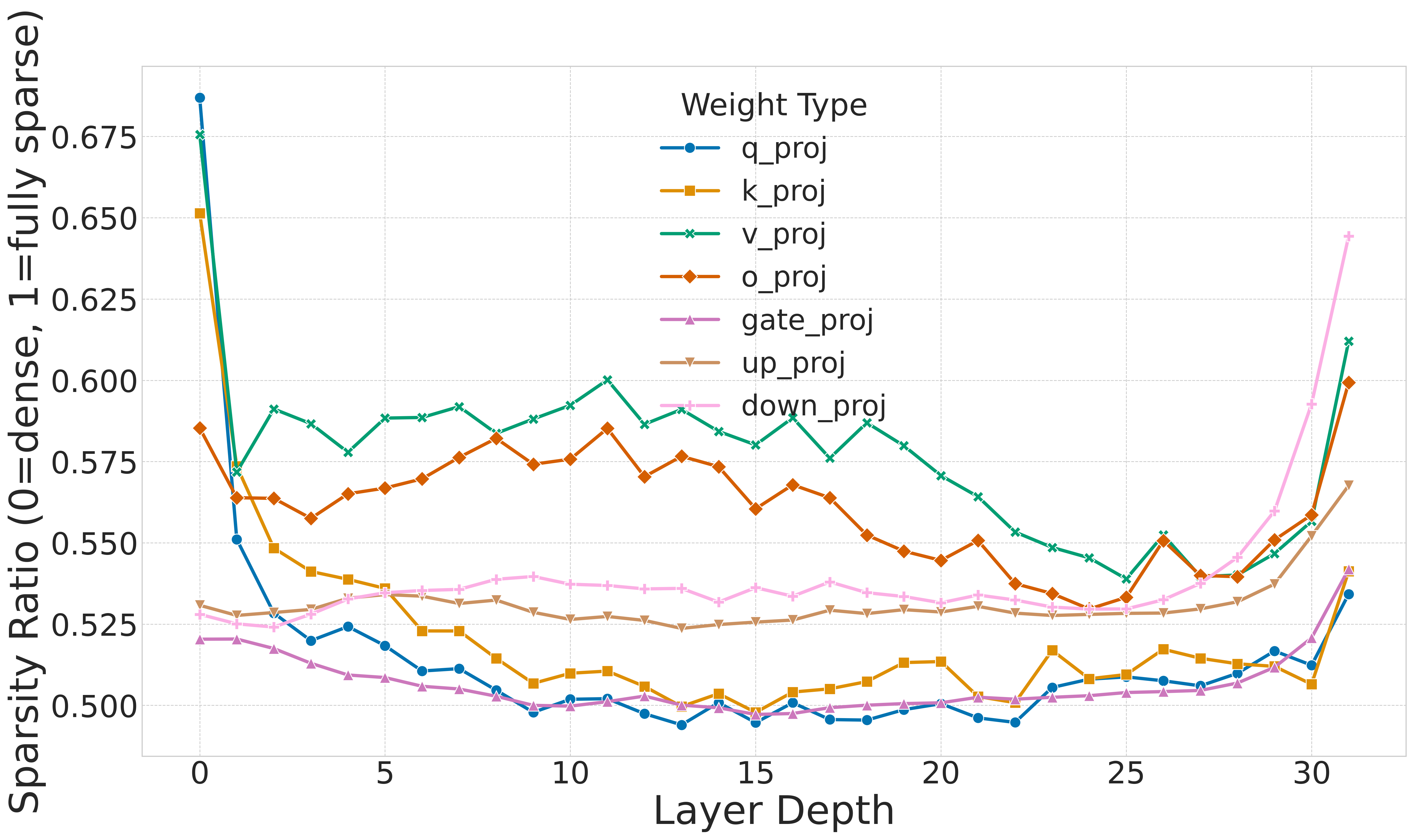}
        \caption{Magnitude Pruning (Code Expert)}
    \end{subfigure}
    \hfill
    \begin{subfigure}[b]{0.49\textwidth}
        \includegraphics[width=\textwidth]{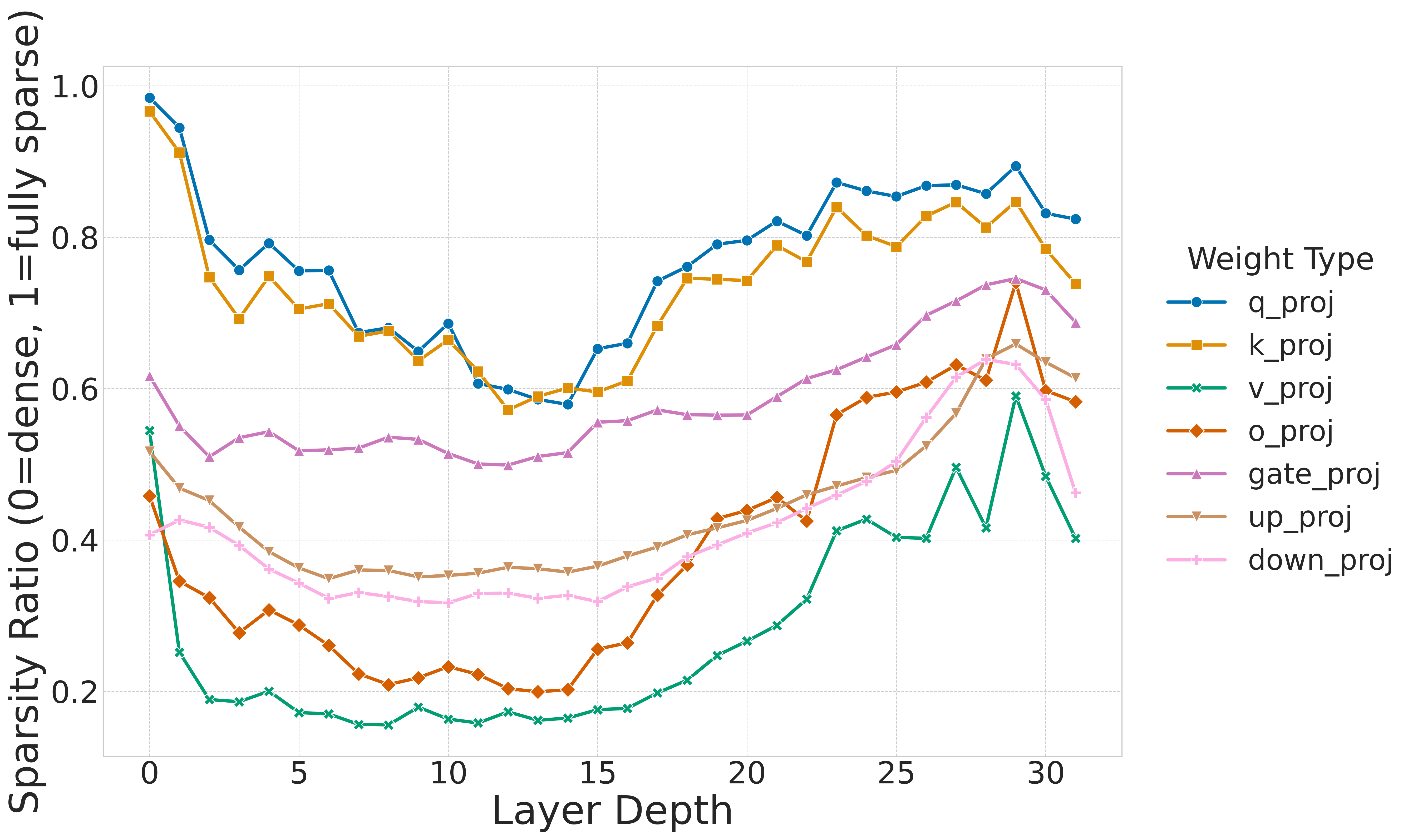}
        \caption{FFG Pruning (Code Expert)}
    \end{subfigure}
    \caption{\textbf{Layer-wise density distribution at a global 40\% task vector pruning density.} FFG (right) exhibits strong, emergent role-aware pruning, aggressively sparsifying query/key weights while preserving value/output/FFN weights. Magnitude pruning (left) is far more uniform and less structured.}
    \label{fig:local_sparsity_distributions}
\end{figure}
\subsubsection{Analyzing the Underlying Mechanism of FFG}
To investigate the distinct mechanisms of magnitude pruning and FFG, we set a global density ratio of $40\%$ to generate sub-network masks for SFT models trained on math, code, and \texttt{precise-if} tasks. Notably, both methods maintain their fine-tuning performance at this budget. Our analysis proceeds in two stages. First, we analyze the resulting density patterns across individual transformer layers. Since both methods rank parameters globally, this fixed global ratio induces different layer-wise density ratios. Second, to further understand these patterns, we visualized the sub-networks selected by each method by weight type and layer depth.

Our first analysis is summarized in Figure~\ref{fig:local_sparsity_distributions}, which plots the density distributions for the math and code SFT models for both magnitude pruning and FFG across all layers and different weight types (e.g., query, key, and value projections). A distinct U-shaped sparsity pattern emerges for each transformer weight type for both FFG and magnitude pruning across layer depths, where the early and late layers are more aggressively pruned, while the middle layers remain the densest. Query and key projections share similar sparsity patterns for each method, and the same property is observed for the up- and down-projection weights.

However, FFG shows a much more diverse range of sparsity ratios across both layer depth and weight type. For instance, in the math SFT model, the sparsity induced by FFG ranges from $99\%$ in the first layer's query projection to $18\%$ in the fifth layer's value projection. On the other hand, the sparsity induced by magnitude pruning ranges from $72\%$ in the first layer's value projection to $45\%$ in the layer 15 query weight.

Furthermore, each method shows a distinct pattern of density ranking across weight types. Magnitude pruning shows that the value and output projections contain the least significant magnitude updates in the task vector (resulting in higher sparsity), while the most significant updates occur in the query and key layers (resulting in lower sparsity compared to other weight types). \textbf{Surprisingly}, FFG aggressively reverts this trend. FFG induces much higher sparsity in the query and key weight types, with even more aggressive pruning towards the initial and late layers when compared to magnitude pruning (e.g., $99\%$ sparsity for the query and key weights of the first layer in FFG versus $70\%$ for these layers in magnitude pruning). On the other hand, the value and output projections are pushed to be much denser, especially in the middle layers (e.g., $18\%$ sparsity for the value projection at layer 5 for FFG, versus $58\%$ for magnitude pruning). The same pattern holds for the down- and up-projection weight types, where FFG exhibits a much more aggressive U-shape compared to magnitude pruning, and the middle layers are pushed to be denser.

Motivated by the aggressive sparsification of FFG on the early and late query/key layers, we conducted an additional experiment to further study the structure of the induced sparsity. We computed the FFG and magnitude pruning density for each row and column of each weight type and visualized the results for the math SFT model by plotting a histogram of the row-wise and column-wise sparsity. Figure~\ref{fig:ffg-q-hist} in the appendix shows the corresponding histograms, for which we chose the query layers 0, 1, 29, and 30, as they maintain the highest sparsity compared to other layer depths and weight types. We observed that FFG exhibits highly structured sparsity, where over $85\%$ of the query rows (corresponding to output features) for all four query layers mentioned are set entirely to zero. A \textbf{low-rank} property is the immediate consequence of this observation. On the other hand, such a property is not observed for magnitude pruning (see Figure~\ref{fig:mag-q-hist}), and we, therefore, conjecture that induced low-rank sparsity is an implicit property of FFG.

\paragraph{FFG shows an implicit layer-wise and role-wise grafting mechanism}
Overall, our analysis of the density distribution patterns reveals novel insights into the task localization of FFG across weight types and layer depths and strongly supports its implicit layer-wise density allocation. FFG aggressively sparsifies the early and late query/key layers, even at a moderate global density ratio (e.g., $40\%$), and imposes a low-rank structure on their task vectors. On the other hand, it allocates most of the global density to the value and projection weight types in the middle layers (approximately twice the density allocated to these weight types compared to magnitude pruning). This implicit density allocation mechanism aligns well with our understanding of SFT training paradigms, where the query and key layers of task vectors were shown to be extremely low-rank, as presented in the seminal work LoRA~\citep{hu2022lora} and further studied theoretically in~\cite{tarzanagh2023transformers}.

\subsection{SFT Task Localization through FFG lens}
\label{sec:localize}
Grounded on FFG's implicit layer-wise and role-wise grafting, and its significantly superior performance for task vector pruning compared to magnitude pruning, we leverage FFG as a lens for SFT task localization within the model. This is achieved through a comprehensive analysis and comparison of FFG masks/sub-networks, with visualizations within each layer and weight type of transformers across different capability SFT models (math, code, and precise-if), as shown in Figure~\ref{fig:3way_ffg_attention_layers}. The FFG global density ratio fixed at $40\%$ across experts. These visualizations illustrate mask localization, where each element's color represents its status across the three expert tasks: dark for elements pruned from all three, white for elements selected by all three, and other colors for elements shared by a subset of the experts. For clarity, we downsampled each weight matrix (e.g., from $4096 \times 4096$ to $256 \times 256$) using uniform row and column subsampling with an adaptive stride based on the weight matrix's size. Note that we chose layers 1, 15, and 30 as representative layers and will analyze and discuss only the patterns observed consistently across layer depth for the same weight type. The complete set of our heatmap visualizations and artifacts, categorized by layer depth and weight type, is available in the project's GitHub repository to support our claims and ensure completeness.

Moreover, the legends in Figure~\ref{fig:3way_ffg_attention_layers} provide the computed overlap across different SFT models' masks, which we use as a second signal in our task visualization analysis. Furthermore, it is important to note that for all heatmaps presented in this paper, excluding the embedding layer and language model heads, the columns represent the weights of a single input feature connected to all output features, while the rows represent all the weights connected to a single output feature. Therefore, a colored dense or sparse row in the mask visualization indicates that the weights connected to the corresponding output feature are densely or sparsely utilizing input features. Conversely, a dense colored column signifies that the corresponding input feature is heavily updated during Supervised Fine-Tuning (SFT), while a dark column represents an input feature that is not used at all.  

\paragraph{SFT Updates Use a Shared, Extremely Sparse Subset of Embeddings' Features Across Tasks.}
As shown in Section~\ref{sec:ffg-analysis}, FFG introduces the most sparsity in the first two and last two layers of the transformer. The mask visualizations for these layers are predominantly dark, with strong row- and column-wise patterns, which supports the low-rank structure of the FFG mask in these areas. Most interestingly, the query and key weights in the first layer show extremely high overlap in pruned regions across the three tasks (e.g., a 97.8\% shared pruned region). We observed that all three masks select an extremely sparse and nearly identical set of input features (a column-wise mask pattern) for the query and key matrices of the first layer, with almost no dense output features. This strongly suggests that SFT updates only a very sparse subset of embedding features. This observation is reasonable, as we expect the early layers to be focused on general language understanding, an ability largely acquired during pre-training.

\paragraph{Task-Specific Dominance in Attention vs. FFN Layers.}
A comprehensive analysis of mask overlaps reveals consistent density ranking patterns across different weight types and transformer layer depths for each SFT model. The Math SFT model consistently dominates the query and key attention layers across all layer depths, often having twice the parameter density compared to the Precise-IF SFT model. Similarly, the Code SFT model consistently shows higher density in the query and key layers than the Precise-IF model, although the difference is less pronounced. These patterns are most significant in the middle layers of the network. Conversely, the Precise-IF and Code SFT models dominate the FFN layers (up, down, and gate projections). This dominance over the Math SFT is clear in the first two layers, vanishes until layer 22, and then re-emerges aggressively from layers 23 to 31. In these later layers, the Precise-IF SFT is significantly denser than both the Code and Math SFTs, with the Code SFT being slightly denser than the Math SFT. The value and output projections exhibit a similar ranking, where the Math SFT is slightly denser than the Code SFT, which in turn is slightly denser than the Precise-IF SFT. These observations seem natural. We expect attention layers to play a more critical role in mathematical reasoning, which requires understanding more complicated patterns between tokens. On the other hand, precise instruction-following may not introduce such complex analogical patterns and might instead rely more on the richer feature extraction capabilities of the FFN layers.

\paragraph{Formation of Specialized and General-Purpose Attention Heads in Layers 17-31.}
The value and output projections reveal another interesting property. From layers 1 to 16, we observed maximum dense mask overlap across the expert models, and the heatmaps appear mostly random with no clear visual pattern. However, from layers 17 to 31, two distinct regions emerge within the masks. The first region maintains high overlap across the experts, while the second region contains almost no overlap. This phenomenon is illustrated for the value and output projections of layer 30 in Figure~\ref{fig:3way_ffg_attention_layers}. It is worth noting that this two-region behavior also appears to some extent in layers 1 and 2 before vanishing until it re-emerges at layer 17. Since a set of subsequent columns in the output projection represents the aggregated feature set from a specific attention head, the non-overlapping regions in layers 17-31 provide strong evidence for the formation of task-specialized heads alongside more general-purpose heads. This claim is further supported by the patterns in the query and key layers for this same range (17-31), where two distinct regions also exist: one with almost no overlap and another with extremely high overlap.  

\begin{figure}[!ht]
    \centering
    \begin{subfigure}[b]{0.24\textwidth}\includegraphics[width=\textwidth]{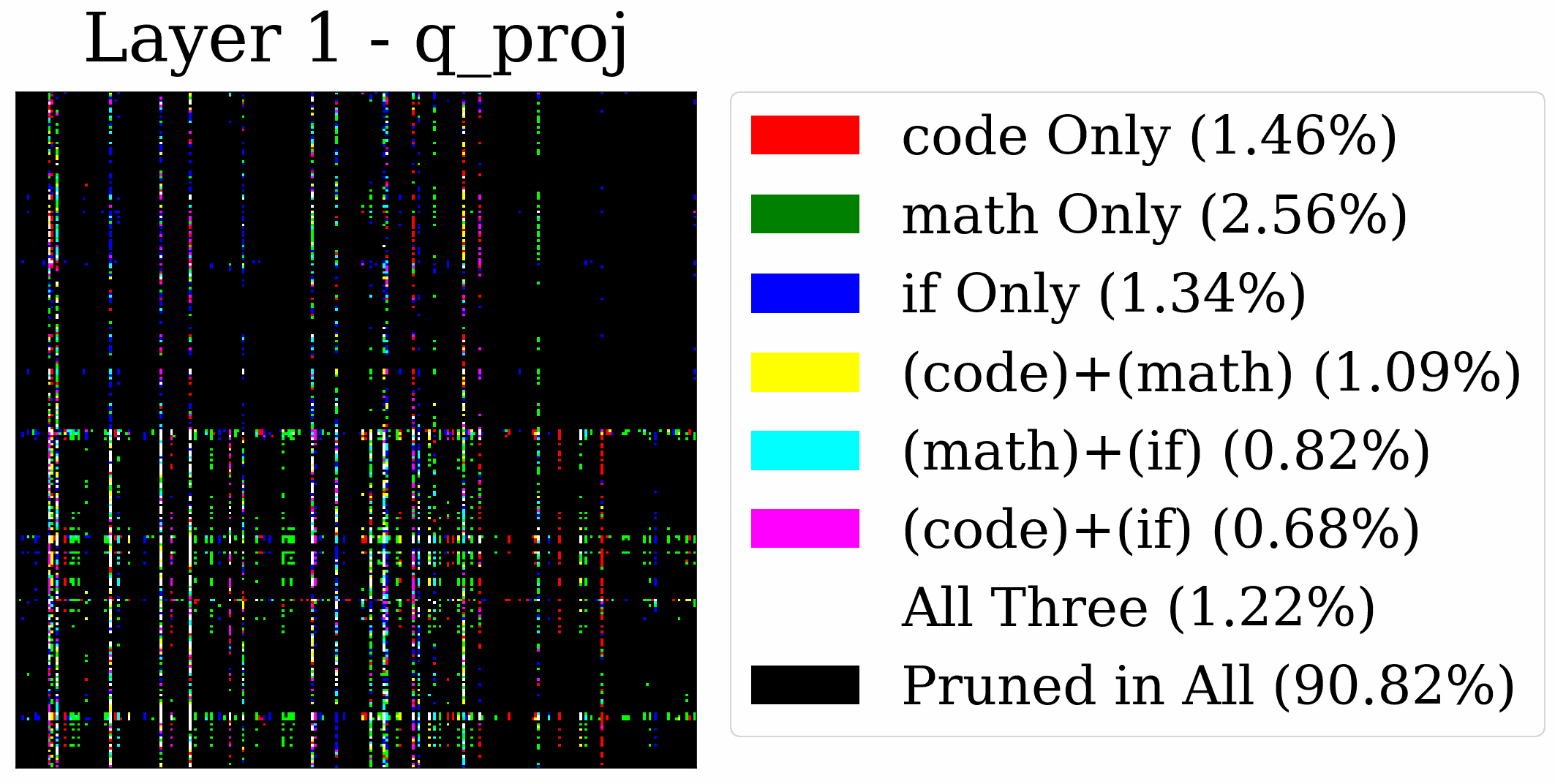}\end{subfigure}\hfill
    \begin{subfigure}[b]{0.24\textwidth}\includegraphics[width=\textwidth]{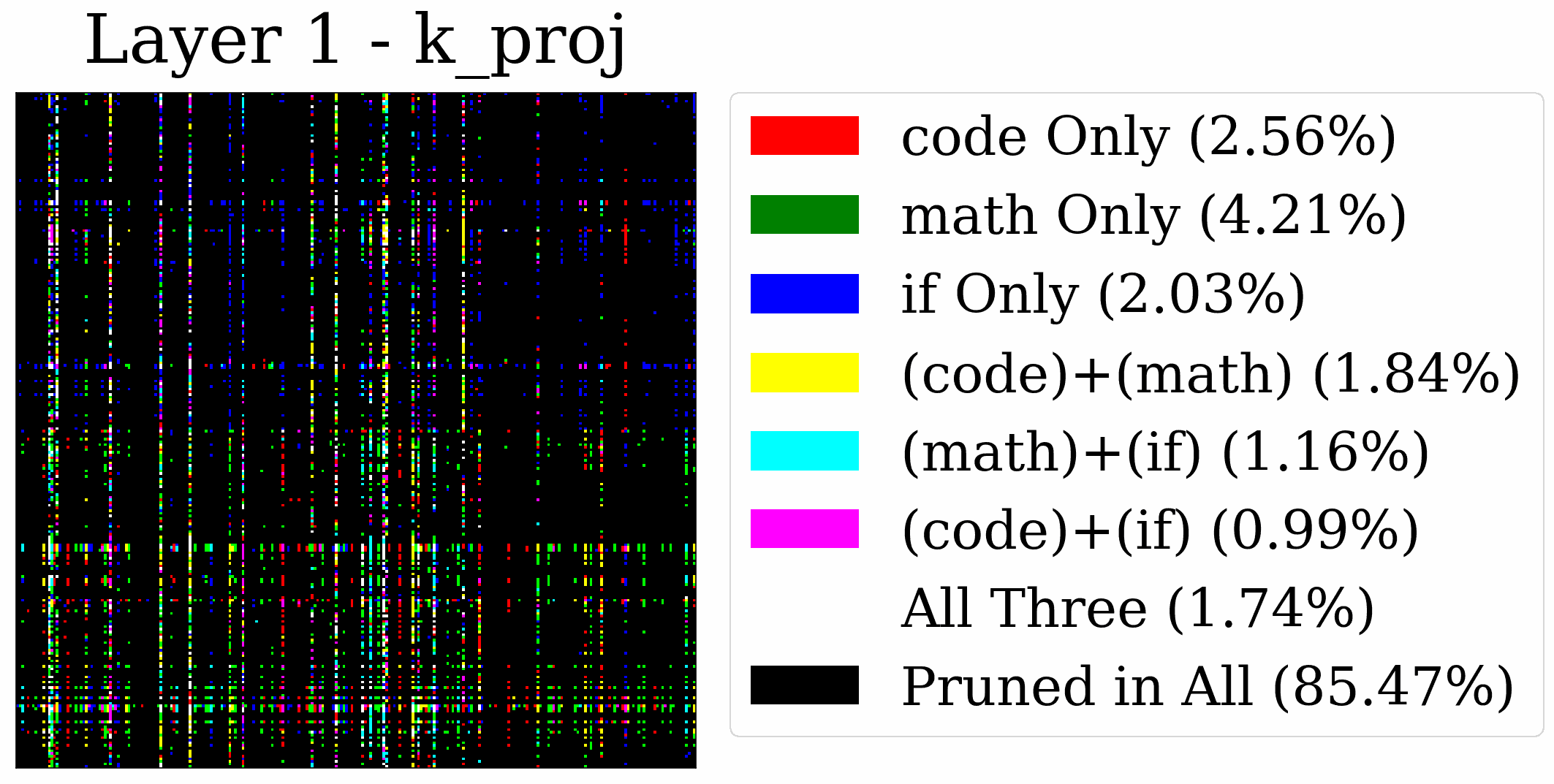}\end{subfigure}\hfill
    \begin{subfigure}[b]{0.24\textwidth}\includegraphics[width=\textwidth]{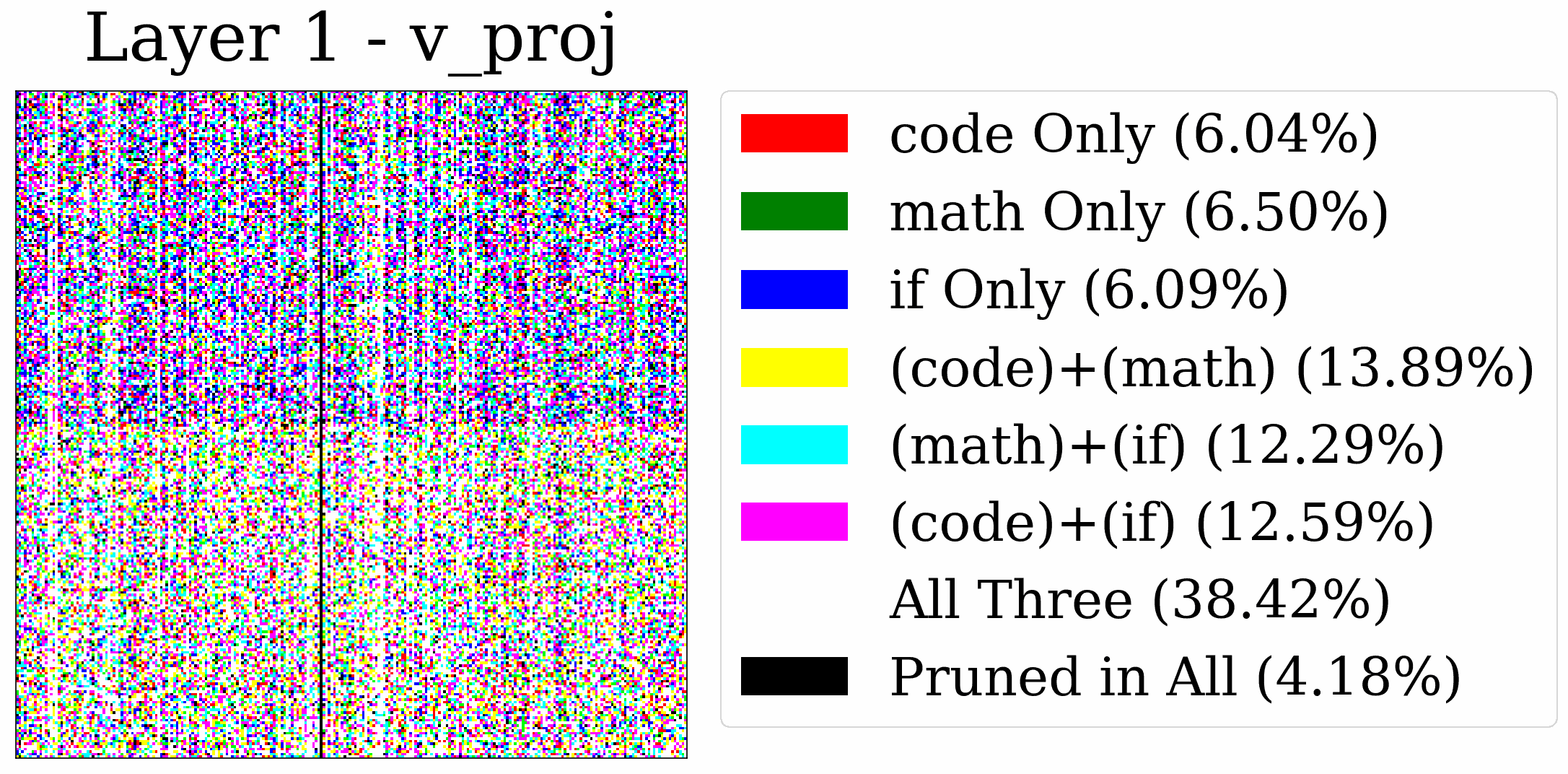}\end{subfigure}\hfill
    \begin{subfigure}[b]{0.24\textwidth}\includegraphics[width=\textwidth]{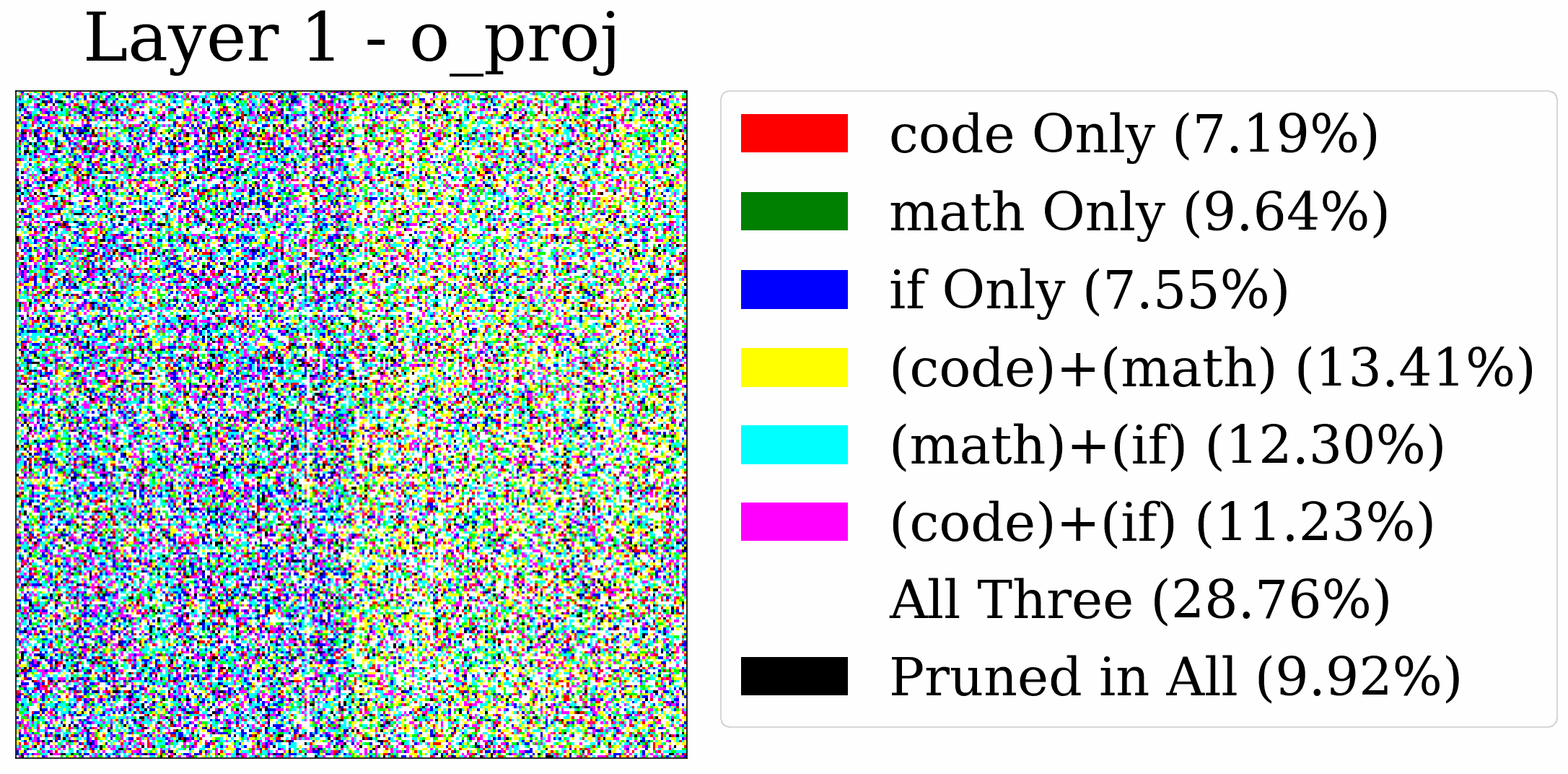}\end{subfigure}
    \begin{subfigure}[b]{0.24\textwidth}\includegraphics[width=\textwidth]{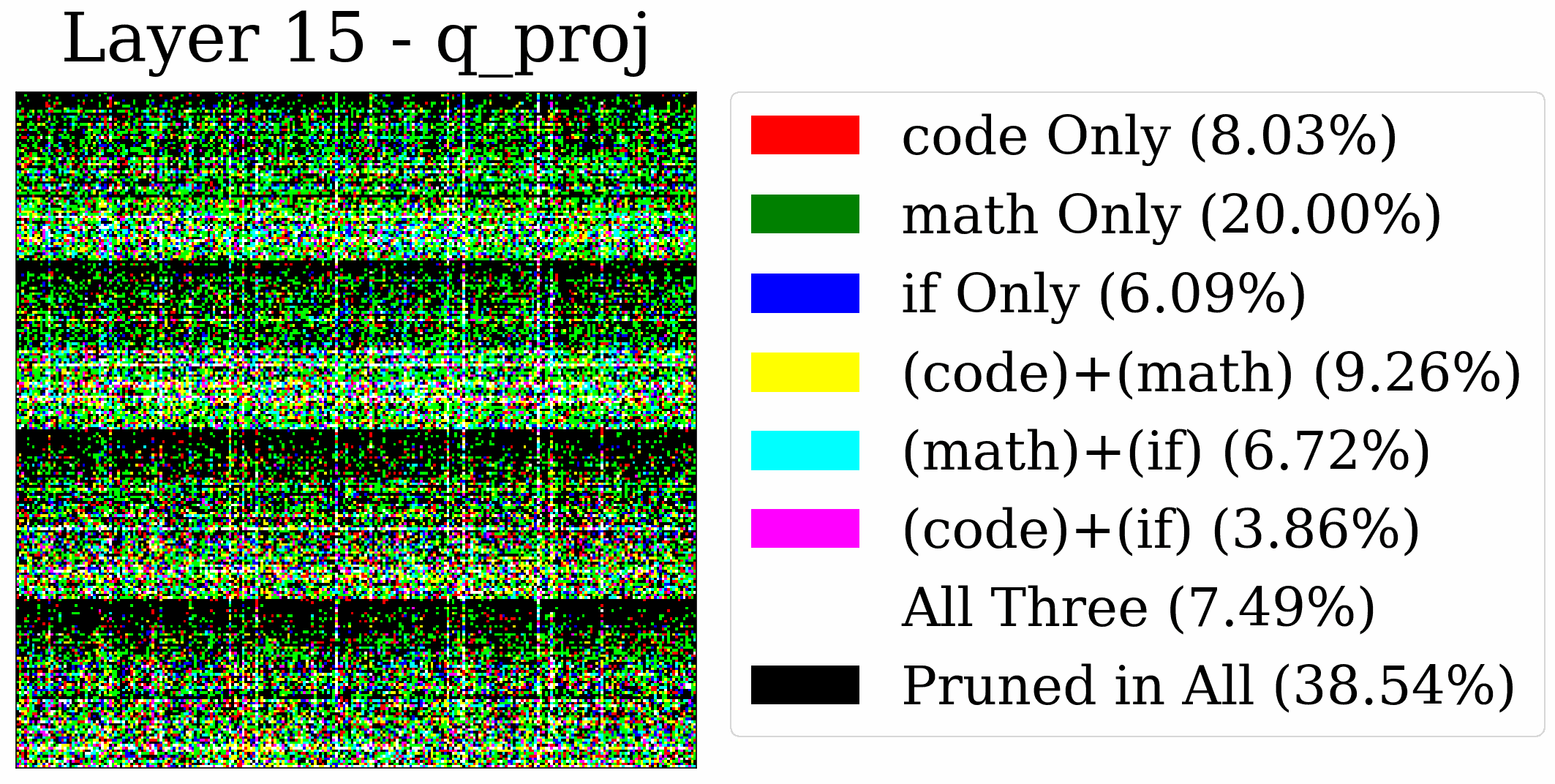}\end{subfigure}\hfill
    \begin{subfigure}[b]{0.24\textwidth}\includegraphics[width=\textwidth]{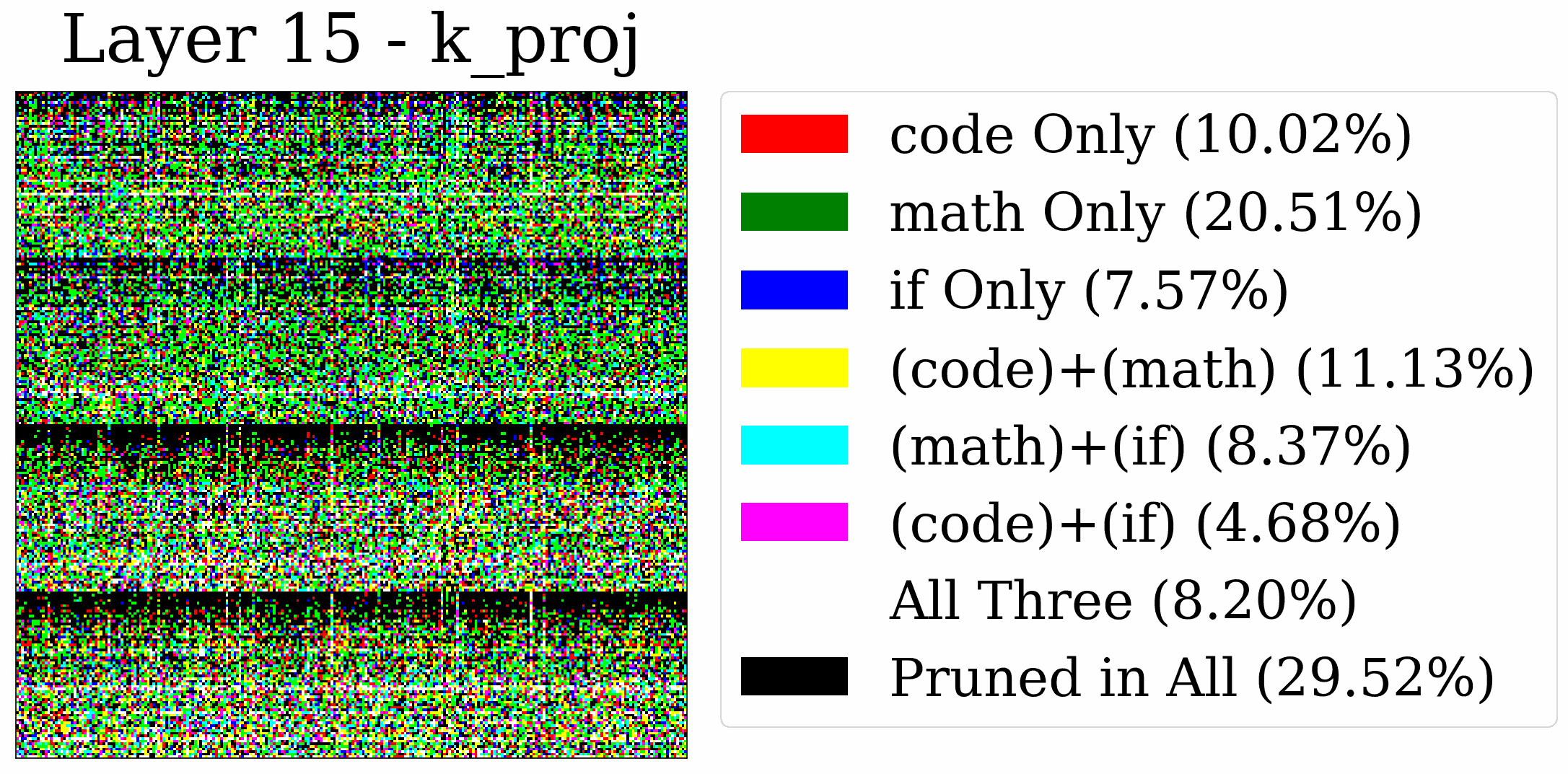}\end{subfigure}\hfill
    \begin{subfigure}[b]{0.24\textwidth}\includegraphics[width=\textwidth]{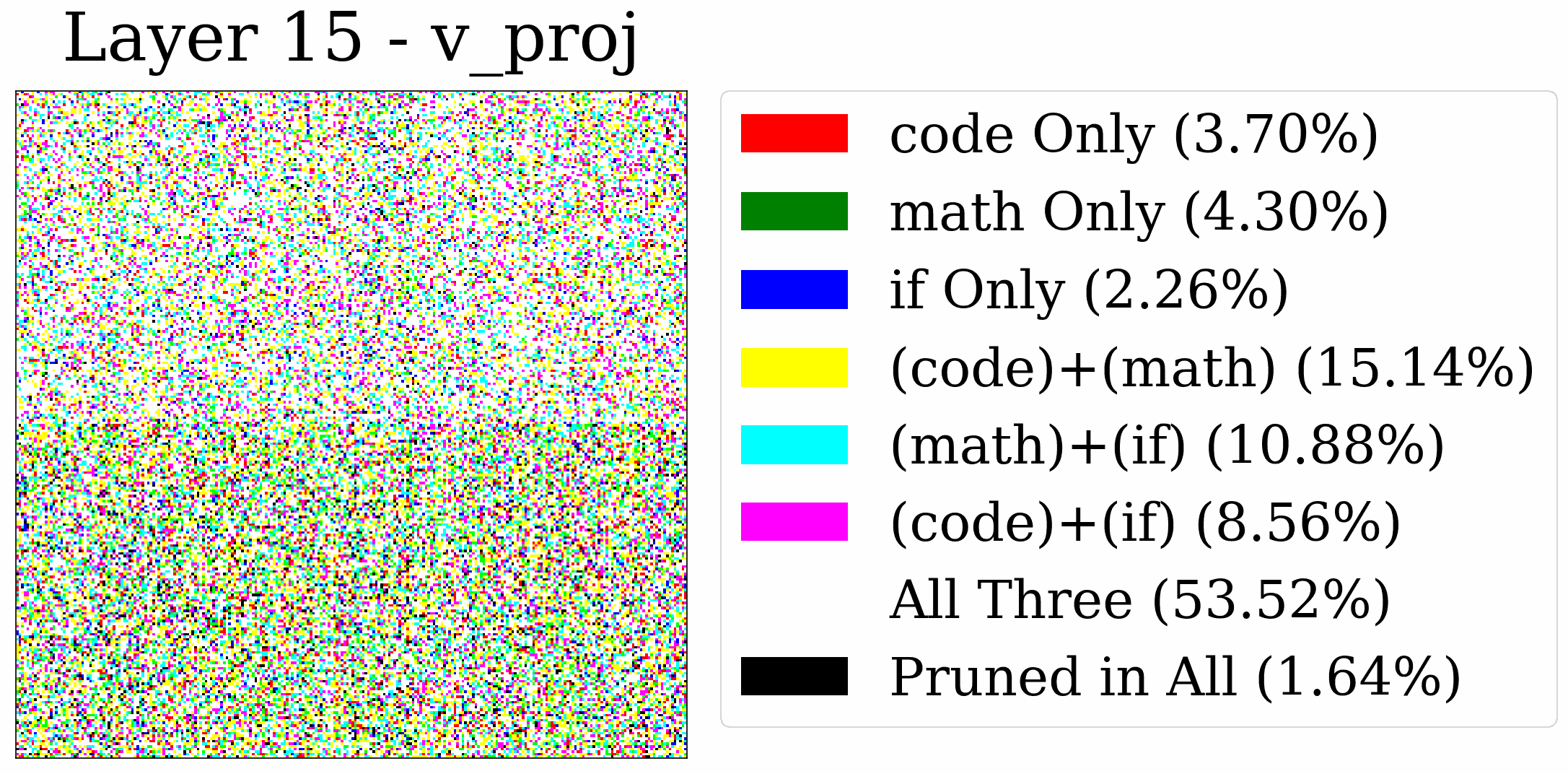}\end{subfigure}\hfill
    \begin{subfigure}[b]{0.24\textwidth}\includegraphics[width=\textwidth]{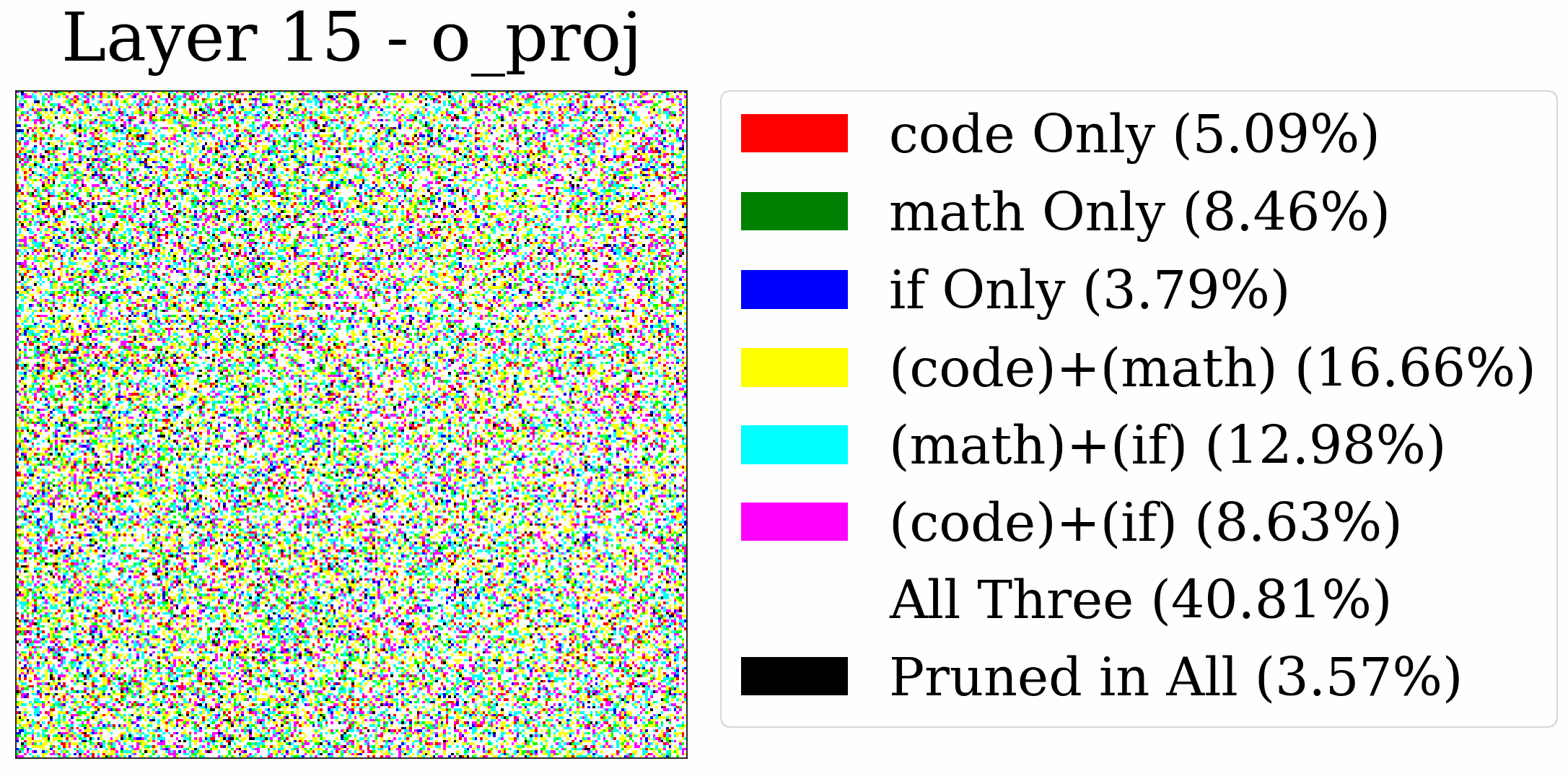}\end{subfigure}
    \begin{subfigure}[b]{0.24\textwidth}\includegraphics[width=\textwidth]{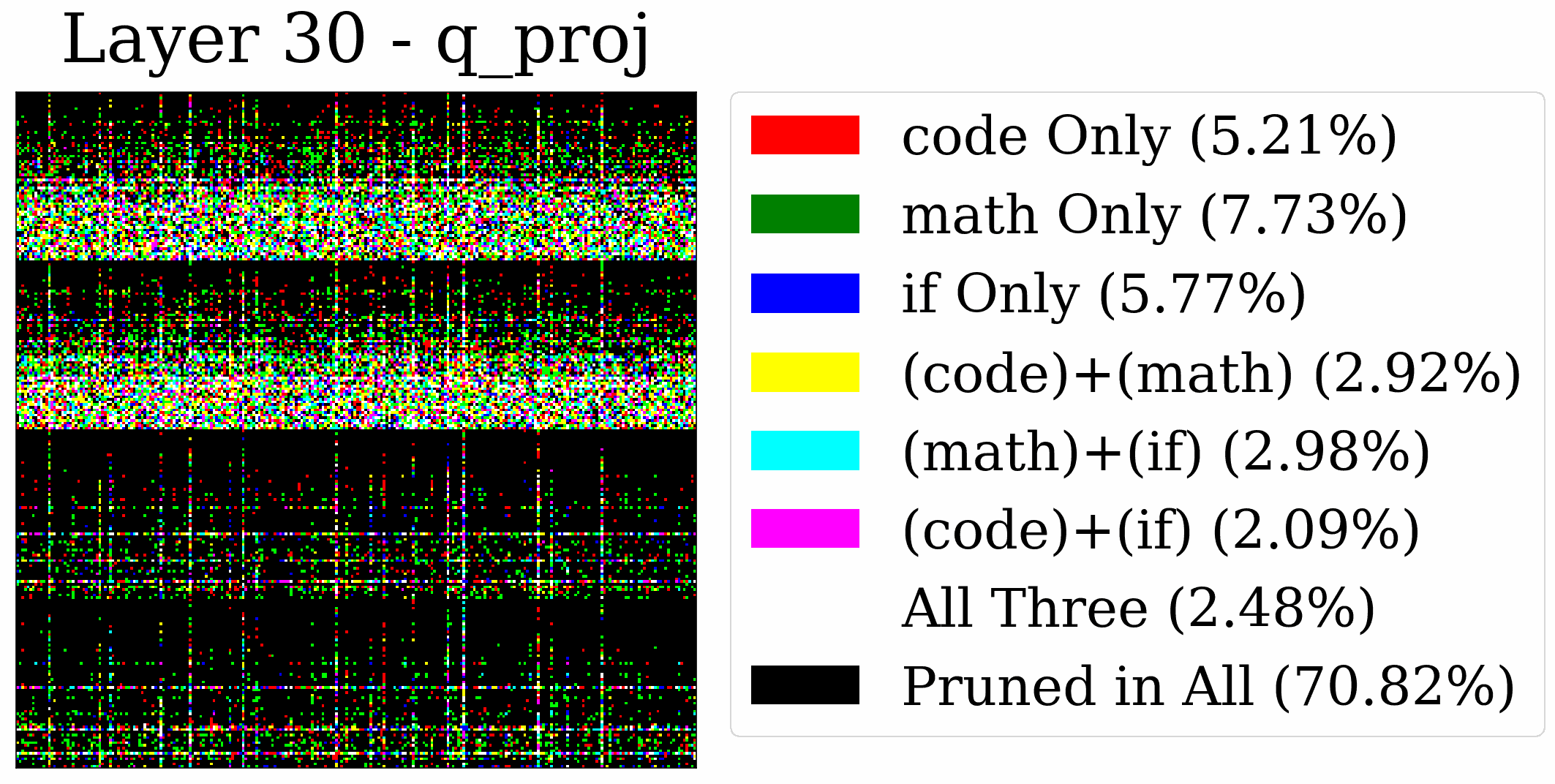}\end{subfigure}\hfill
    \begin{subfigure}[b]{0.24\textwidth}\includegraphics[width=\textwidth]{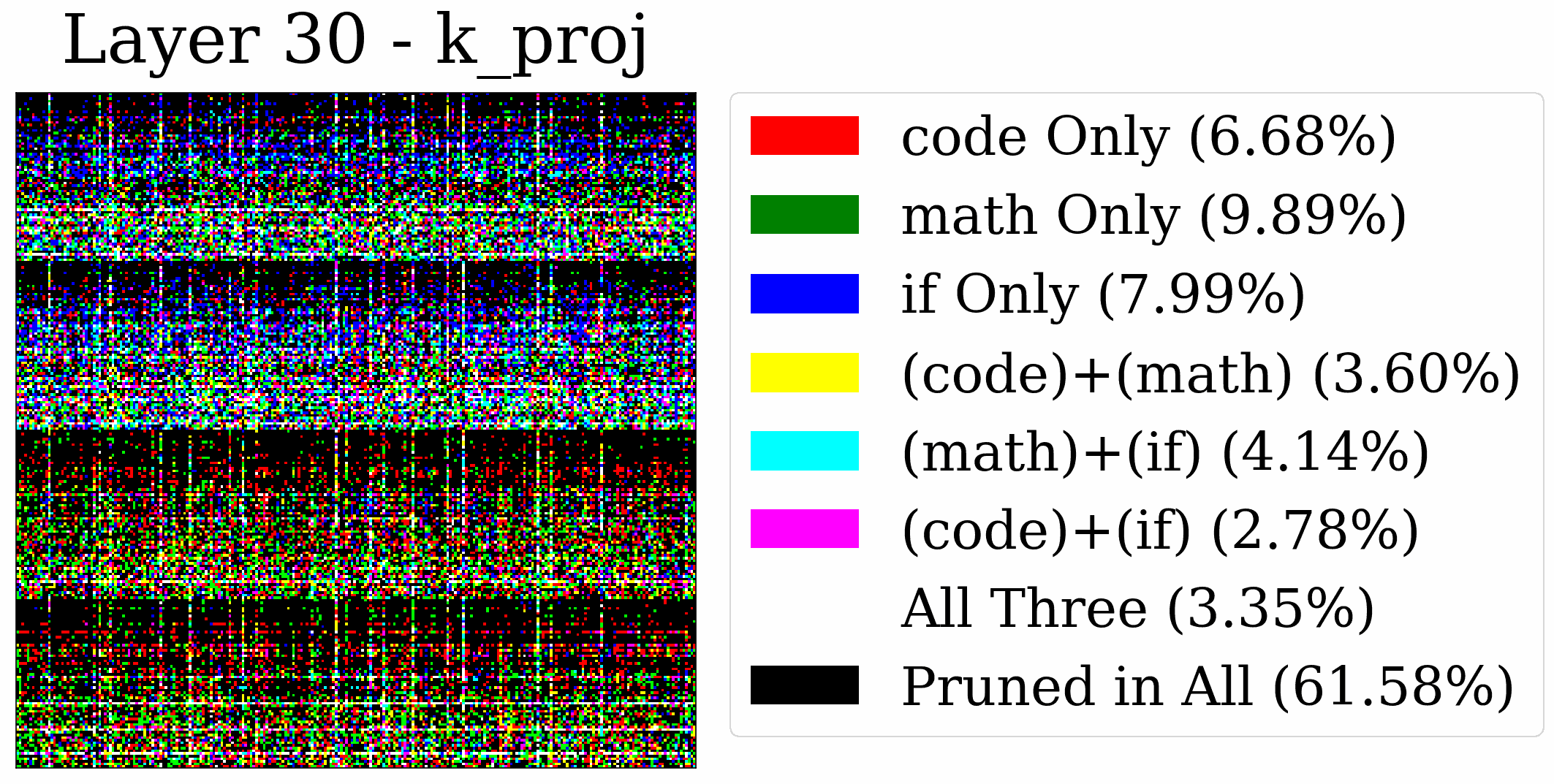}\end{subfigure}\hfill
    \begin{subfigure}[b]{0.24\textwidth}\includegraphics[width=\textwidth]{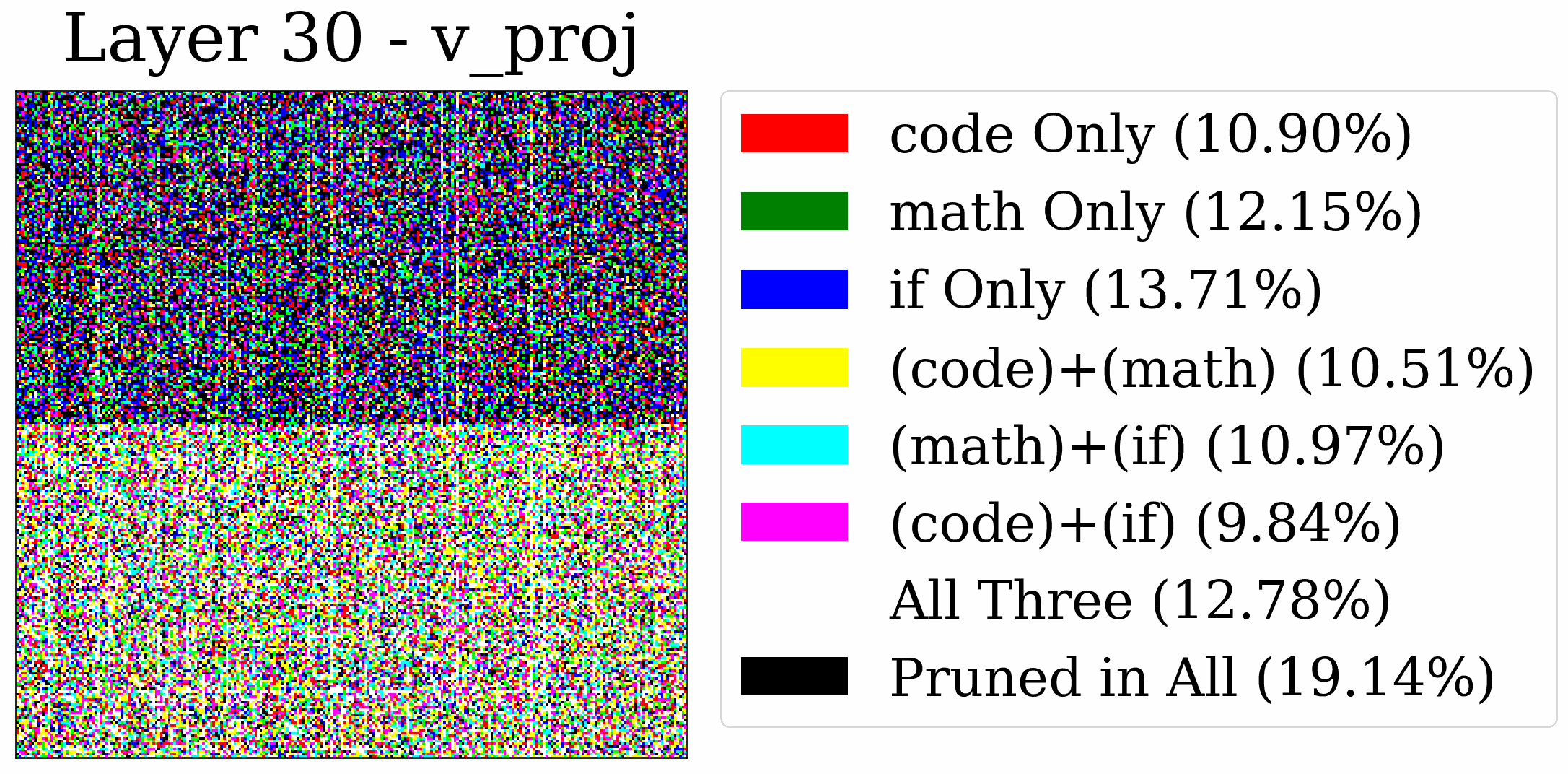}\end{subfigure}\hfill
    \begin{subfigure}[b]{0.24\textwidth}\includegraphics[width=\textwidth]{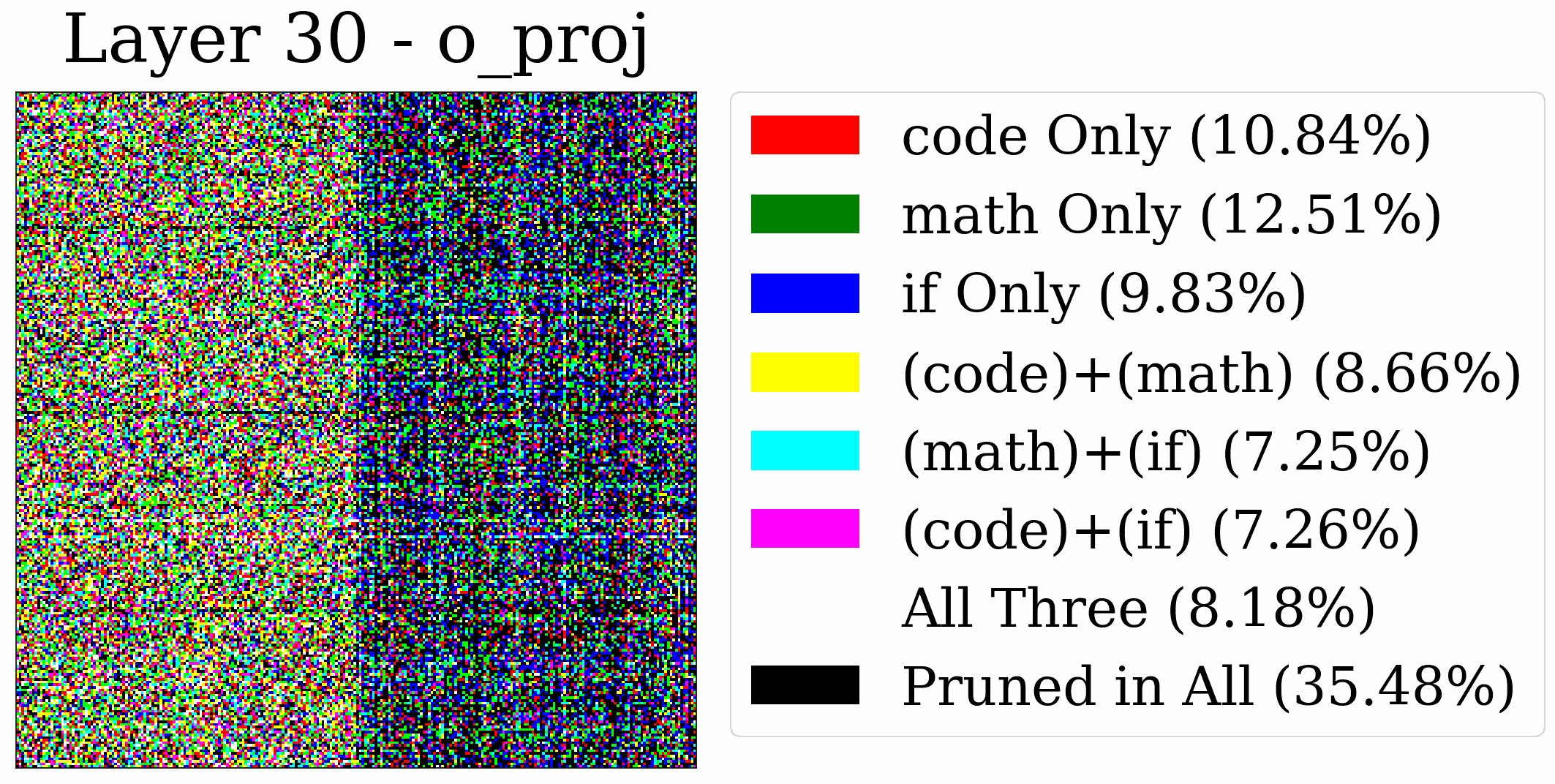}\end{subfigure}
    \caption{3-way FFG comparison for attention components across layers 1, 15, and 30 (top to bottom). Columns show $W_q, W_k, W_v, W_o$. RGB channels represent Code (red), Math (green), and Instruction-Following (blue).}
    \label{fig:3way_ffg_attention_layers}
\end{figure}

\subsection{Analysis of Curvature and Rank Structure}

\paragraph{Second Moments Have Low Stable Rank, Justifying Compression.}
A core assumption of our AdaFactor-style compression is that the second-moment tensors, $\mathbf{v}_\tau$, are inherently low-rank, allowing for efficient compression. We validate this by computing the stable rank of the $\mathbf{v}_\tau$ matrices for the Math and Code experts (Figure~\ref{fig:stable-rank}). Across all layers and for both SFT models, the stable rank is surprisingly low (below $1.3$), confirming that the second-moment matrices are highly compressible and that a rank-1 approximation can capture a significant fraction of their energy. This provides a solid empirical justification for our memory-efficient variant of OTA, which aggressively reduces the required storage for second-moment matrices (from 29.9 GB to 12.6 MB for Llama 3.1 8b under fp32 precision) with a minor drop in model merging performance (from 0.582 to 0.571 average score), as detailed in Table~\ref{tab:benchmark_results}.

\begin{figure}[ht]
  \centering
  \subfloat[Math Expert]{%
    \includegraphics[width=0.48\linewidth]{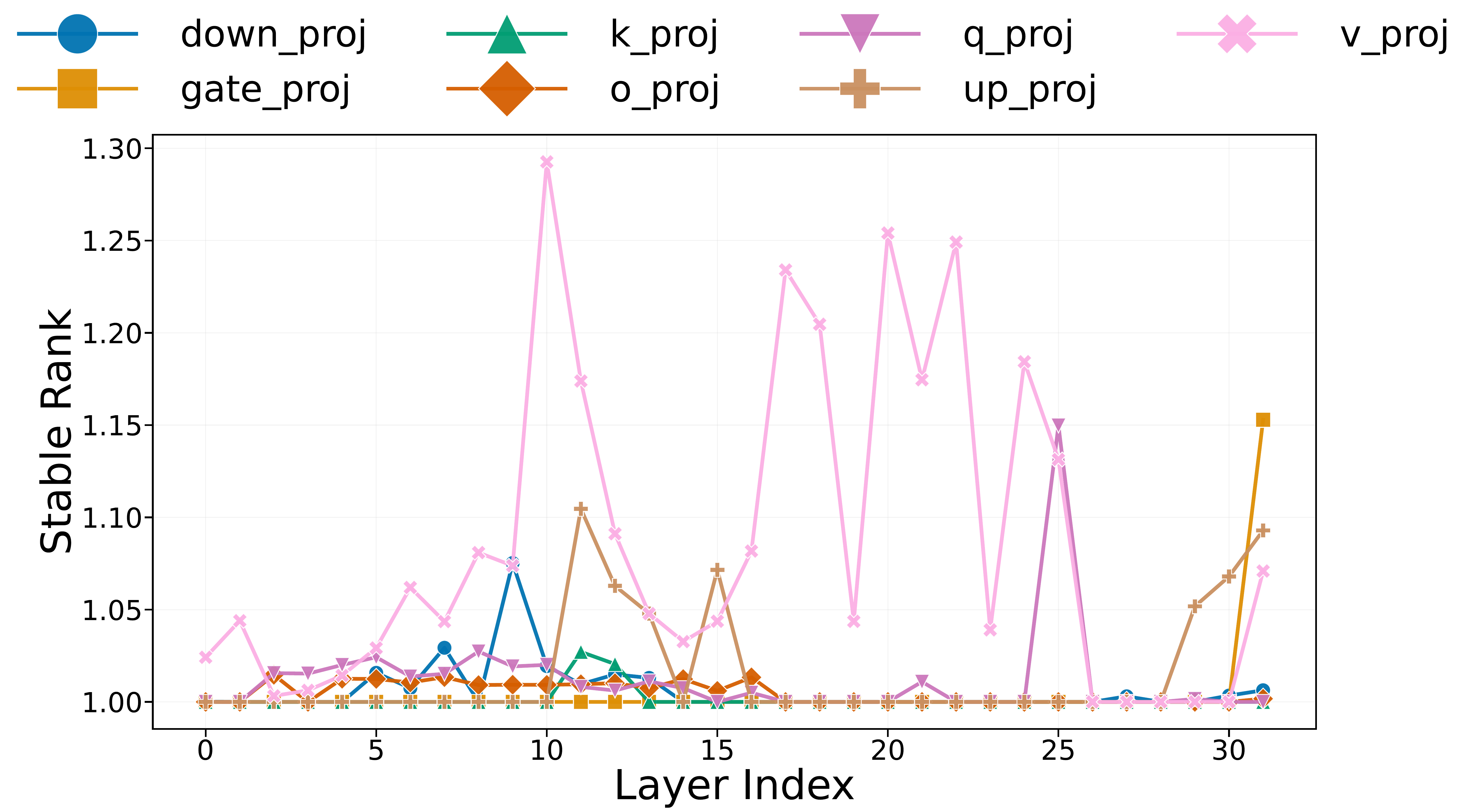}
    \label{fig:rank-left}
  }
  \hfill
  \subfloat[Code Expert]{%
    \includegraphics[width=0.48\linewidth]{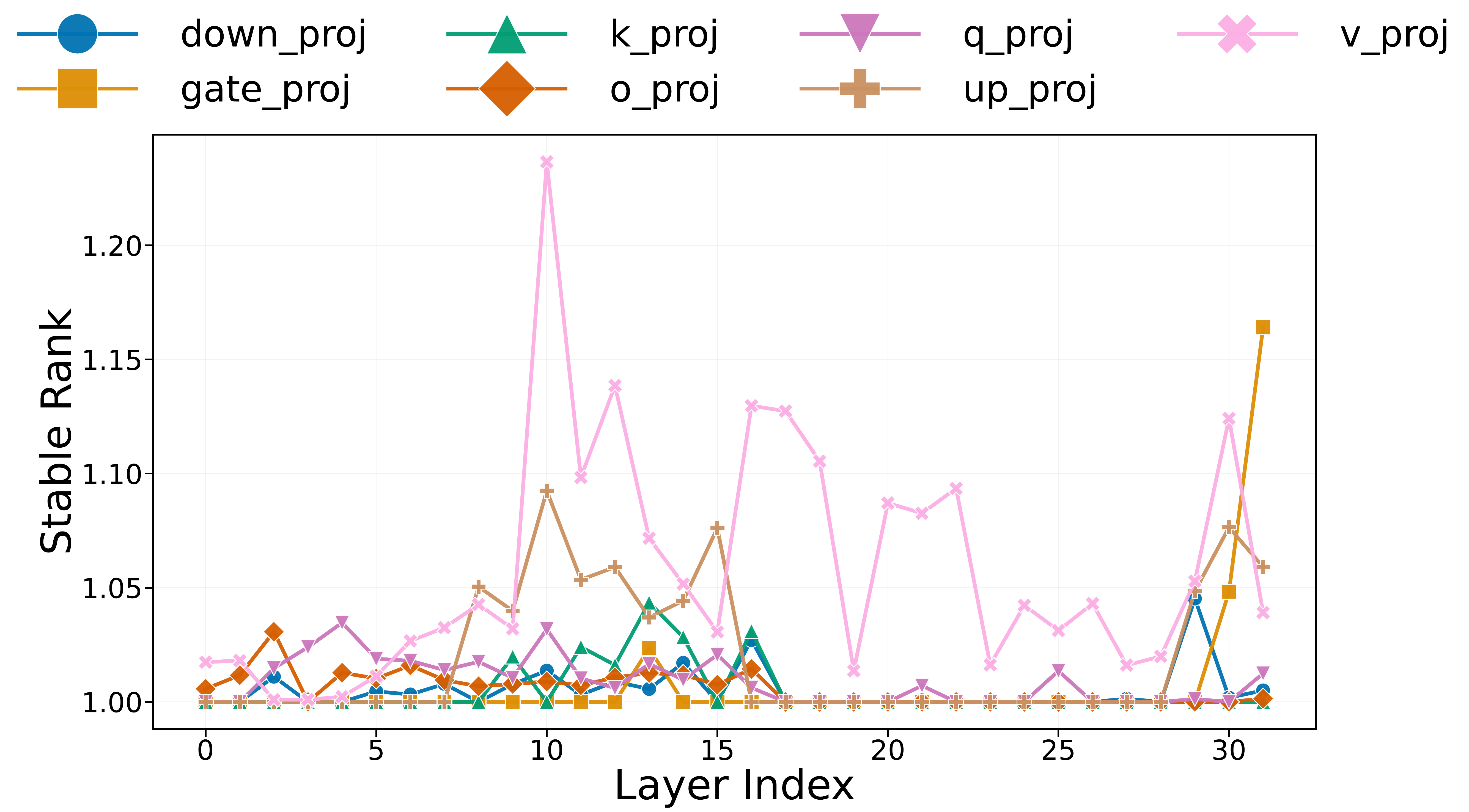}
    \label{fig:rank-right}
  }
  \caption{Stable rank analysis of the second-moment matrices ($\vv_\tau$). The consistently low stable rank across all layers validates our use of AdaFactor for compression.}
  \label{fig:stable-rank}
\end{figure}

\paragraph{Visualizing Shared Curvature Geometry.}
In this section, we leverage the connection between the second moment and diagonal curvature to study how curvature differs across SFT models. We use the same subsampling strategy for the heatmaps that was used to visualize task localization, but we apply it to the second-moment matrices instead of the grafting masks. The curvature heatmap comparisons across experts for each layer depth and weight type provide an empirical visualization suite to study our central conjecture: that SFT models fine-tuned from the same base converge to basins with highly similar curvature geometry.

\begin{figure}[ht]
  \centering
  \newcommand{\subfigwidth}{0.24\textwidth}
  \begin{subfigure}[b]{\subfigwidth}\includegraphics[width=\textwidth]{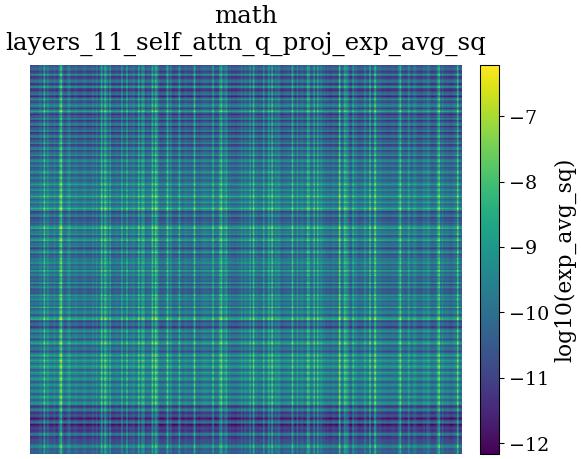}\end{subfigure}\hfill
  \begin{subfigure}[b]{\subfigwidth}\includegraphics[width=\textwidth]{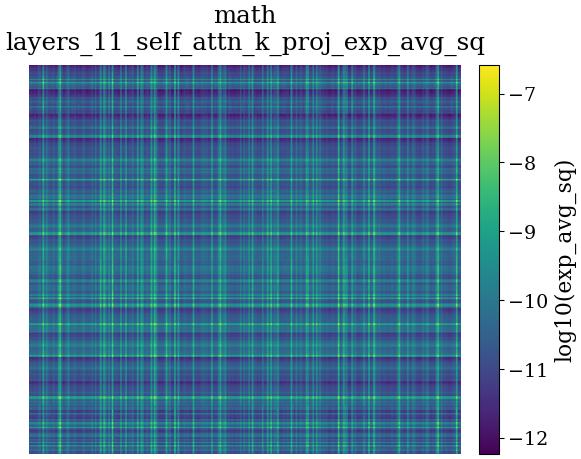}\end{subfigure}\hfill
  \begin{subfigure}[b]{\subfigwidth}\includegraphics[width=\textwidth]{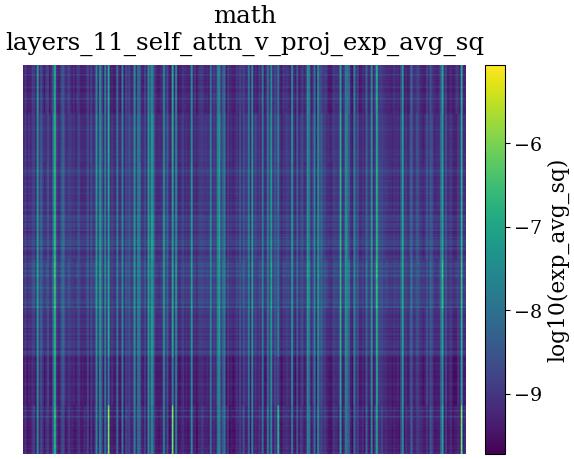}\end{subfigure}\hfill
  \begin{subfigure}[b]{\subfigwidth}\includegraphics[width=\textwidth]{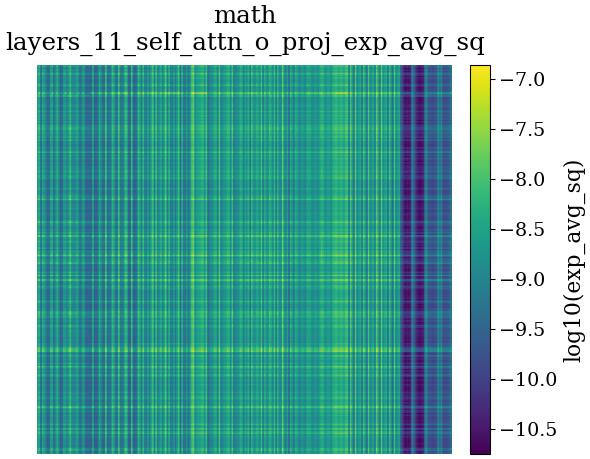}\end{subfigure}
  \begin{subfigure}[b]{\subfigwidth}\includegraphics[width=\textwidth]{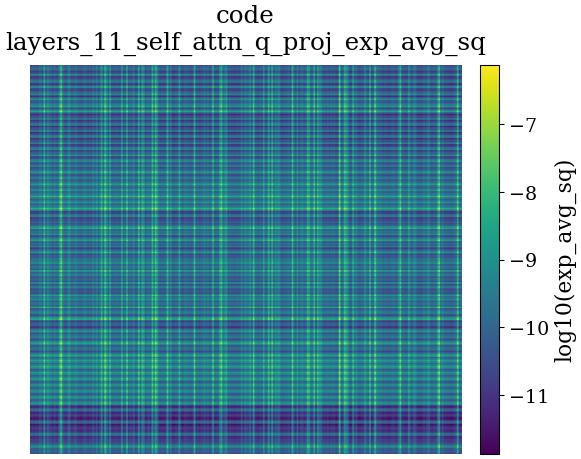}\end{subfigure}\hfill
  \begin{subfigure}[b]{\subfigwidth}\includegraphics[width=\textwidth]{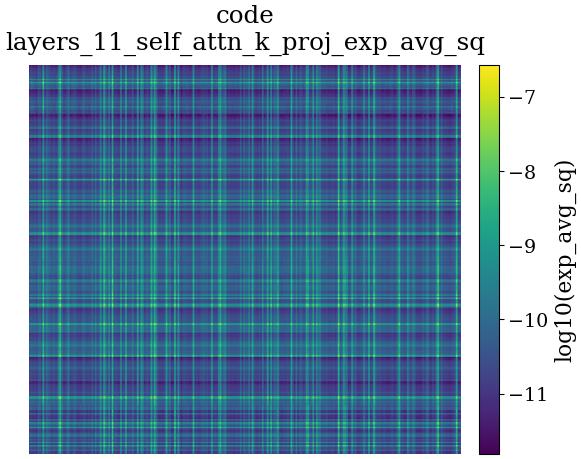}\end{subfigure}\hfill
  \begin{subfigure}[b]{\subfigwidth}\includegraphics[width=\textwidth]{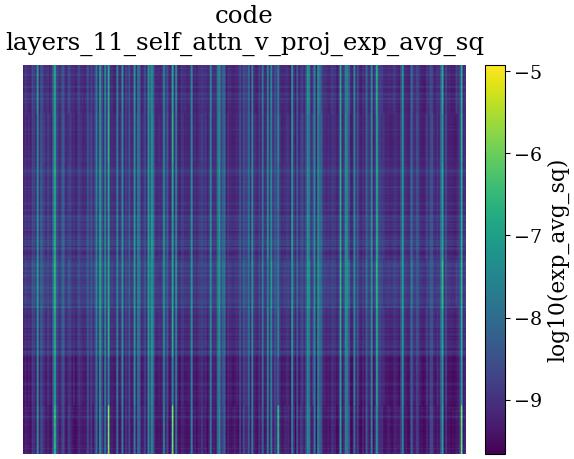}\end{subfigure}\hfill
  \begin{subfigure}[b]{\subfigwidth}\includegraphics[width=\textwidth]{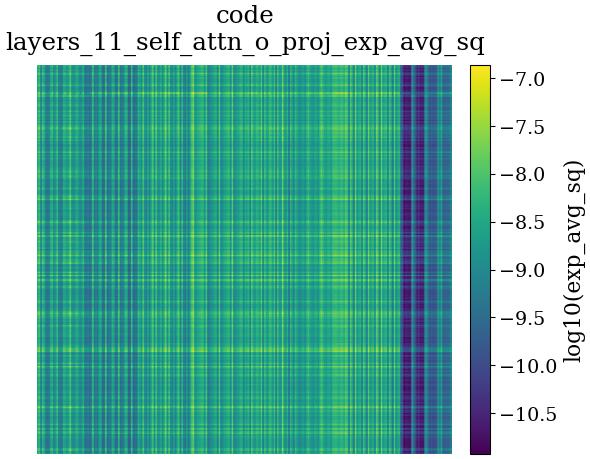}\end{subfigure}
  \begin{subfigure}[b]{\subfigwidth}\includegraphics[width=\textwidth]{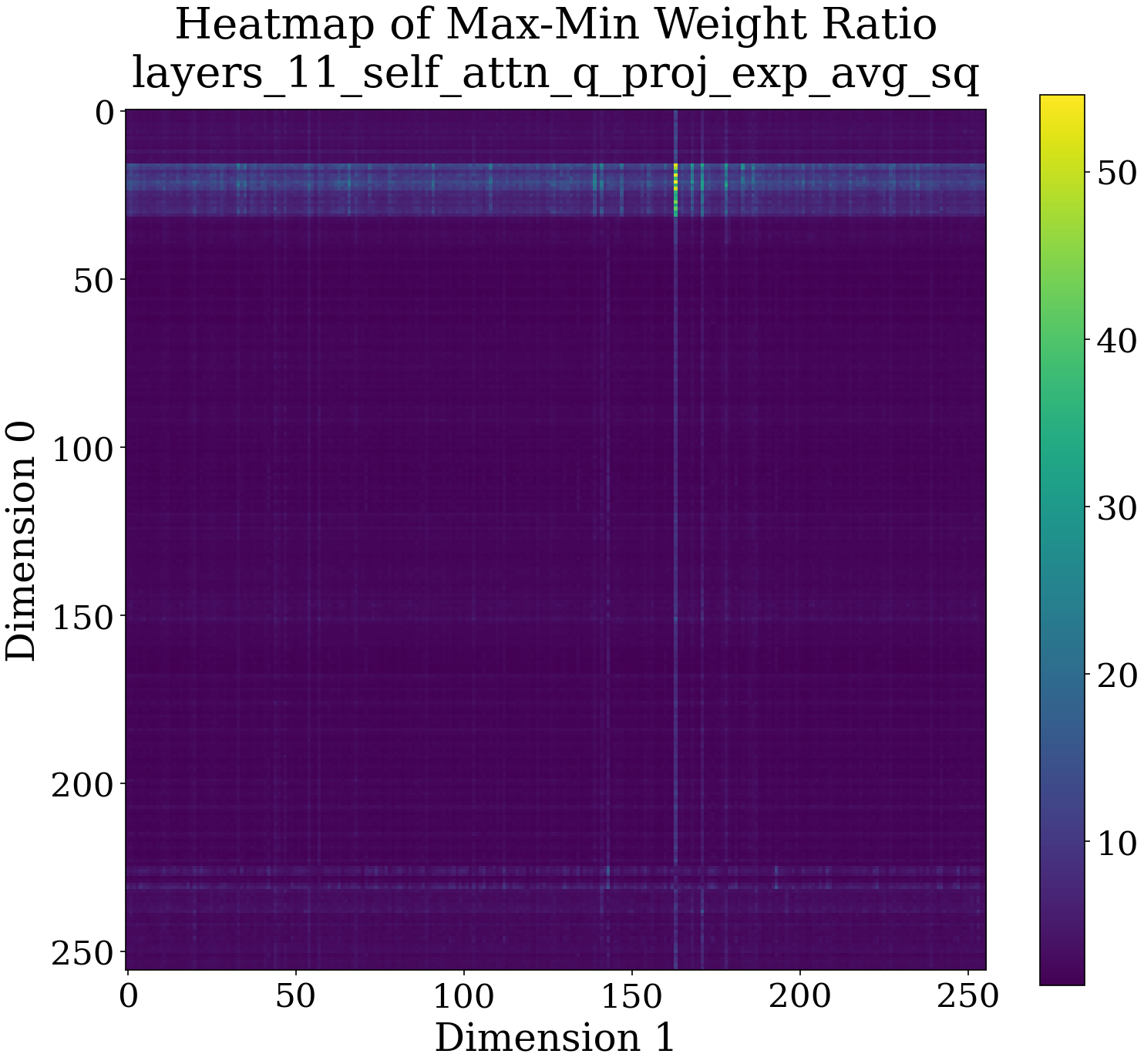}\end{subfigure}\hfill
  \begin{subfigure}[b]{\subfigwidth}\includegraphics[width=\textwidth]{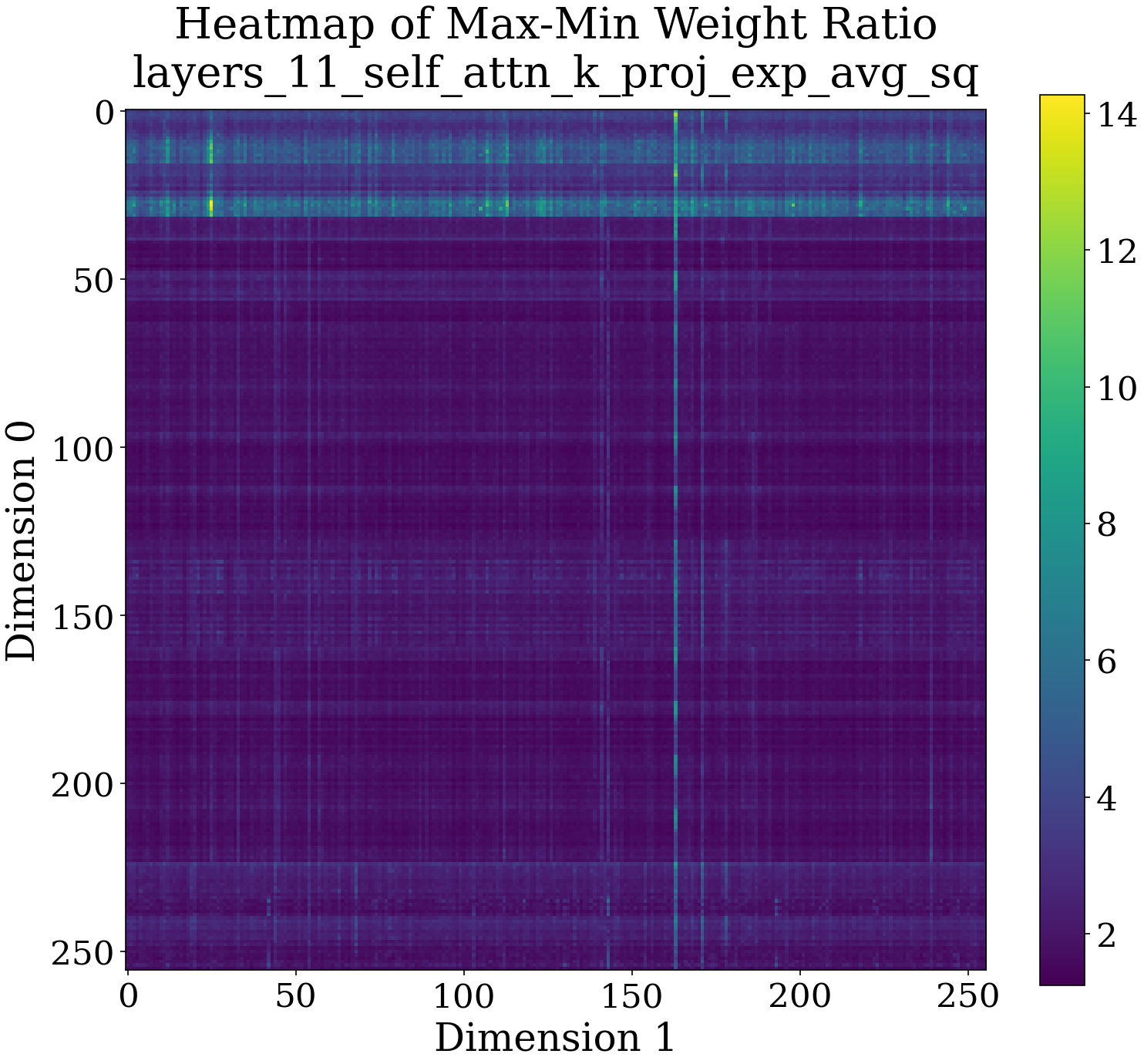}\end{subfigure}\hfill
  \begin{subfigure}[b]{\subfigwidth}\includegraphics[width=\textwidth]{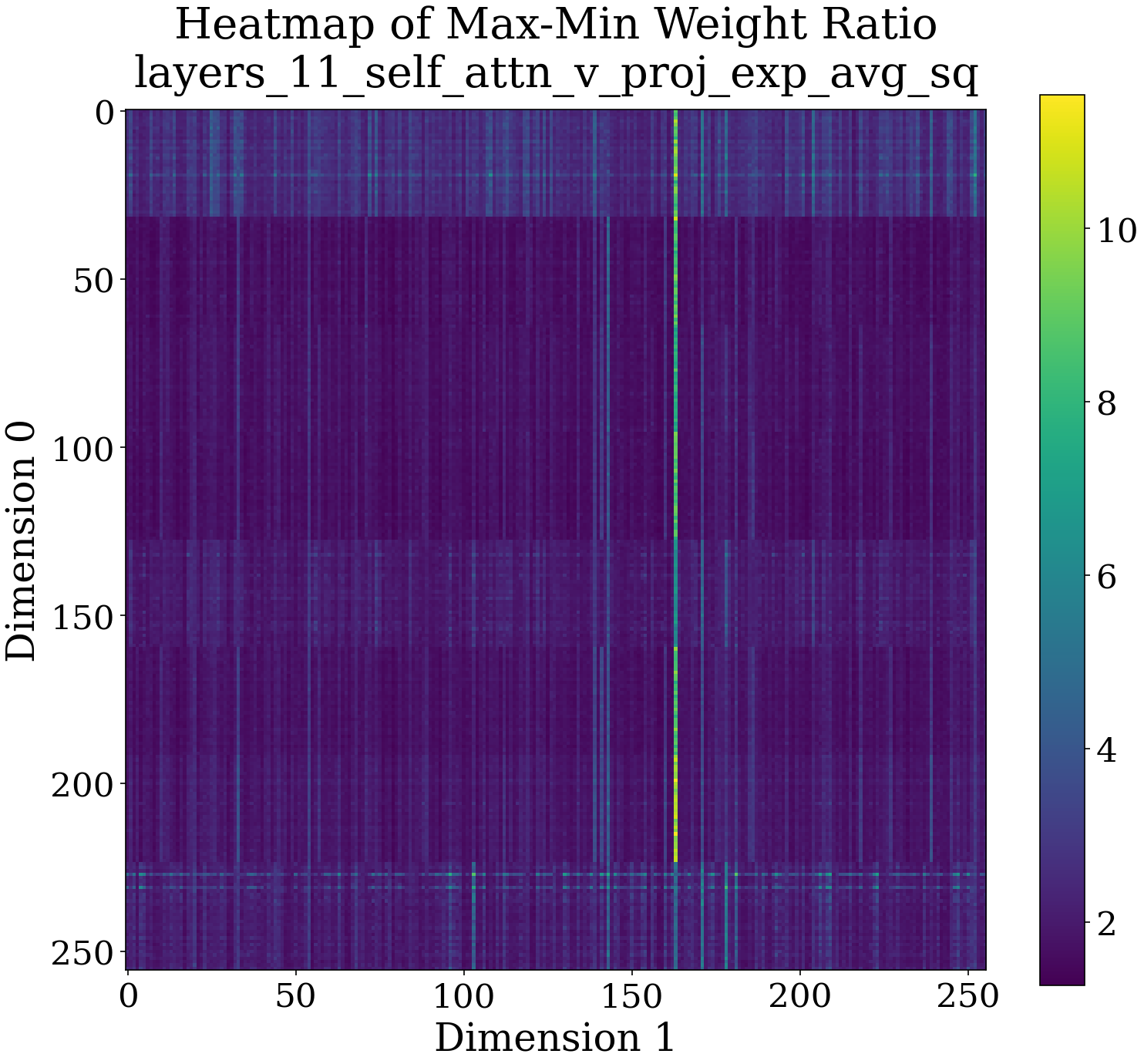}\end{subfigure}\hfill
  \begin{subfigure}[b]{\subfigwidth}\includegraphics[width=\textwidth]{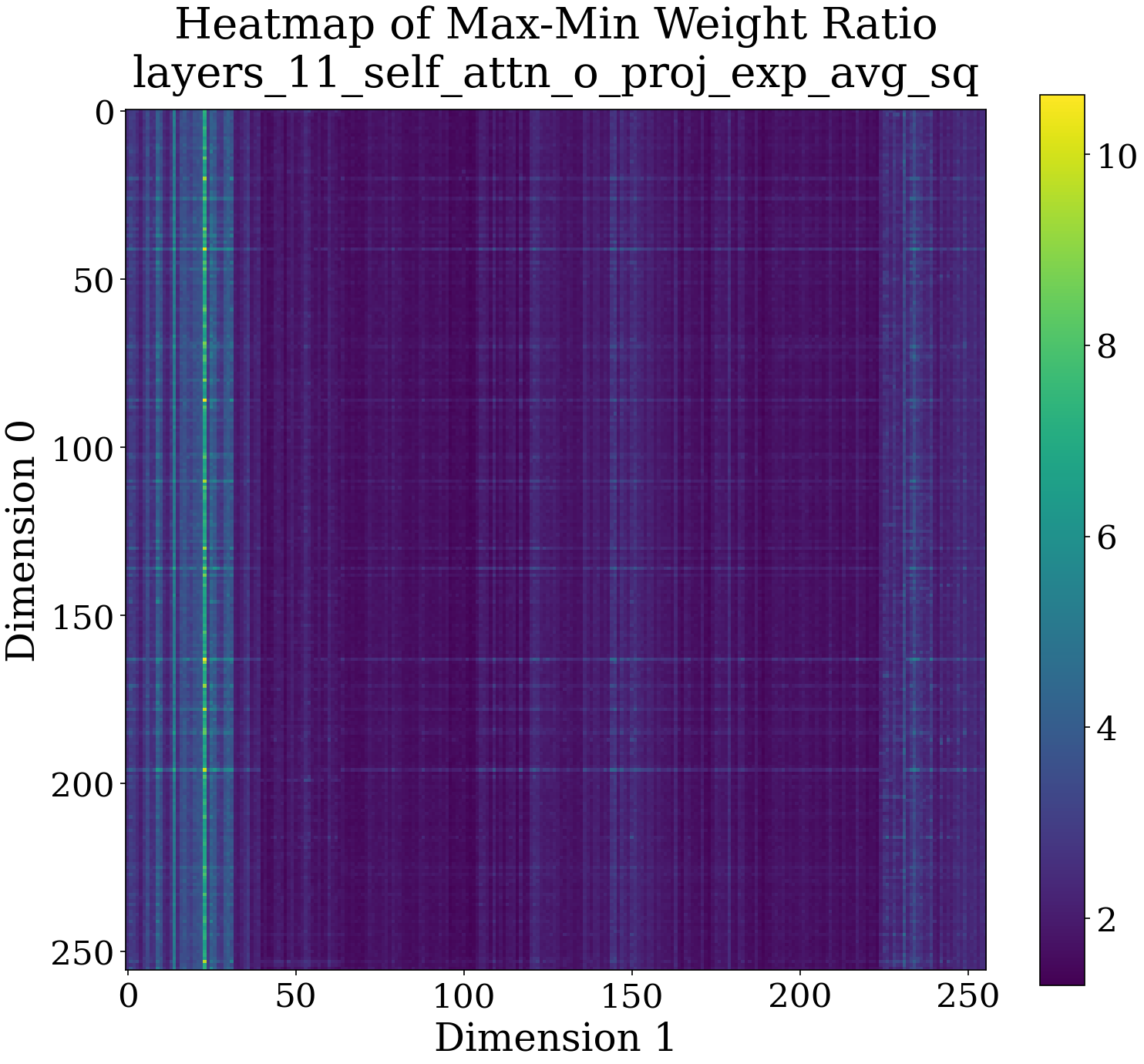}\end{subfigure}
  \caption{\textbf{Shared curvature geometry in attention layer 11.} The top two rows show the Math and Code SFT models, respectively. The striking consistency in structural patterns validates our shared curvature hypothesis. The bottom row shows the max-min ratio across all models, highlighting regions of highest variance (light color) where task specialization is most pronounced.}
\label{fig:attn_shared_curvature_code_vs_math}
\end{figure}

In Figure~\ref{fig:attn_shared_curvature_code_vs_math}, we visualize the log-scaled heatmaps of the square root of the \texttt{exp\_avg\_sq} tensor for the attention weights of two distinct SFT checkpoints, Math and Code, alongside the max-to-min ratio of their diagonal curvatures. We only report patterns that exist consistently across all layers; layer $11$ is shown here as a representative example. The complete set of curvature comparison heatmaps is available in the provided GitHub repository. We first observe that the diagonal curvature has a clearly visible row-wise and column-wise structure, matching the observations in the mask visualization in Section~\ref{sec:localize}. The column-wise band can be interpreted as an \textbf{input feature curvature} for all weights connected to a given input neuron. Similarly, the row-wise band represents an \textbf{output feature curvature} for all weights connected to a given output neuron.

\paragraph{How to interpret these heatmaps, and what is the takeaway?}
We use our notions of input and output curvature to analyze and compare the curvature scales across models. We observed a very high overlap between the subsets of input features that have the largest input curvatures for each model, across all weight types and layers. This can be seen in the strong column-wise bands shared at the exact same positions (row and column indices) for each weight type. A similar property is observed for high-curvature output features. To further confirm this, we also relied on linear max-min ratio heatmaps (as seen in Figure~\ref{fig:attn_shared_curvature_code_vs_math}), where we observed dark heatmaps across all layers and depths. This indicates that the max-min ratio remains orders of magnitude smaller than the curvature's max-min range within each heatmap. For instance, in the layer 11 projection v-layer, we observe that each expert has a curvature max-min ratio of around $10^4$, while the element-wise max-min ratio between the two heatmaps remains consistently below $5$ for almost all elements. Overall, our results strongly show a significant match between the input and output feature curvatures across models, revealing a similar (though not identical) curvature structure across different training checkpoints.

 This shared geometry explains why simple linear merging often performs so well---the models are already geometrically aligned, meaning that linear merging is \textbf{implicitly} curvature-aware, or Fisher-optimal in some sense. Our merging benchmark results further shed light on this, as we observed that Fisher merging does not provide a gain over linear merging (Table~\ref{tab:benchmark_results}), a finding consistent with reports in~\cite{tiesmerging_yadav_2023,yadav2024mattersmerging}. While the overall structure is shared, the max-min ratio plots in the bottom row reveal subtle but important variations, pinpointing specific parameter groups where task-specific fine-tuning induced the most significant changes in the loss landscape's curvature. This shared foundation, with localized, high-variance differences, is precisely what motivates our two-stage approach: FFG isolates these task-specific parameters before a simple aggregation is performed on the shared structure.

\begin{figure}[ht]
  \centering
  \newcommand{\subfigwidthcode}{0.24\textwidth}
  \begin{subfigure}[b]{\subfigwidthcode}\includegraphics[width=\textwidth]{Figures/curvature_heatmaps/layers_11_self_attn_q_proj_exp_avg_sq_model_1_code_weights_heatmap.jpg}\end{subfigure}\hfill
  \begin{subfigure}[b]{\subfigwidthcode}\includegraphics[width=\textwidth]{Figures/curvature_heatmaps/layers_11_self_attn_k_proj_exp_avg_sq_model_1_code_weights_heatmap.jpg}\end{subfigure}\hfill
  \begin{subfigure}[b]{\subfigwidthcode}\includegraphics[width=\textwidth]{Figures/curvature_heatmaps/layers_11_self_attn_v_proj_exp_avg_sq_model_1_code_weights_heatmap.jpg}\end{subfigure}\hfill
  \begin{subfigure}[b]{\subfigwidthcode}\includegraphics[width=\textwidth]{Figures/curvature_heatmaps/layers_11_self_attn_o_proj_exp_avg_sq_model_1_code_weights_heatmap.jpg}\end{subfigure}
  \begin{subfigure}[b]{\subfigwidthcode}\includegraphics[width=\textwidth]{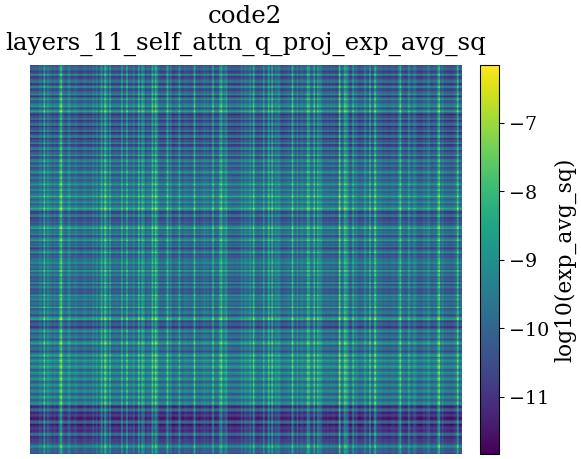}\end{subfigure}\hfill
  \begin{subfigure}[b]{\subfigwidthcode}\includegraphics[width=\textwidth]{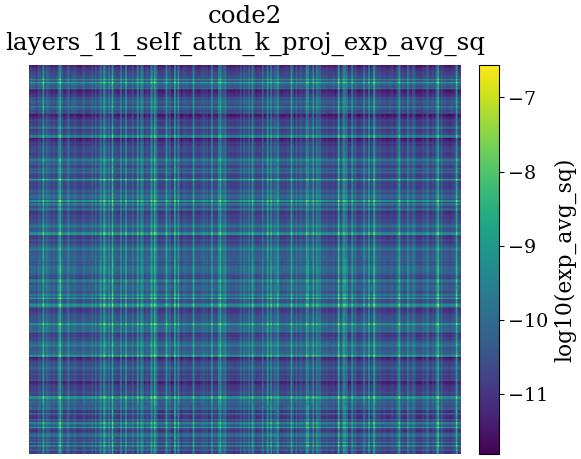}\end{subfigure}\hfill
  \begin{subfigure}[b]{\subfigwidthcode}\includegraphics[width=\textwidth]{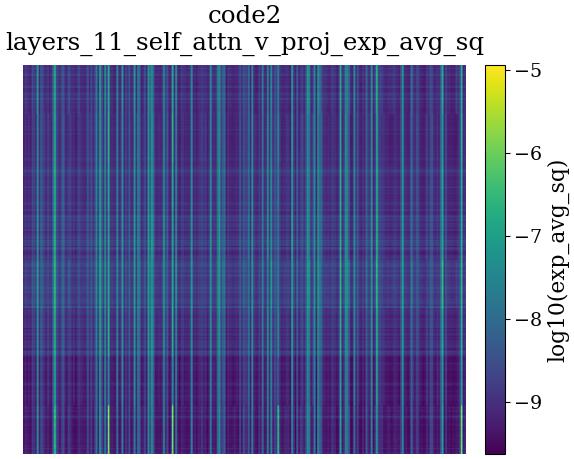}\end{subfigure}\hfill
  \begin{subfigure}[b]{\subfigwidthcode}\includegraphics[width=\textwidth]{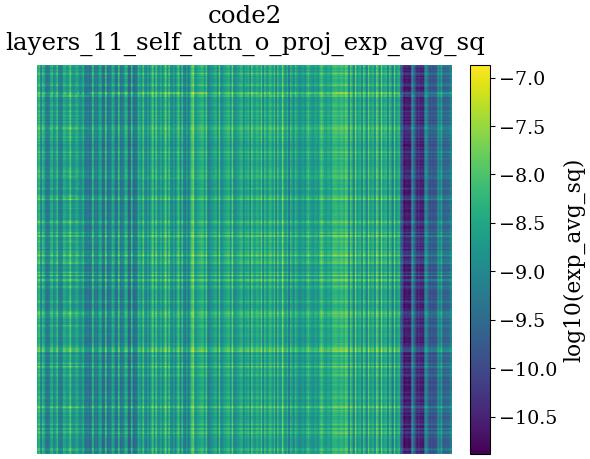}\end{subfigure}
  \begin{subfigure}[b]{\subfigwidthcode}\includegraphics[width=\textwidth]{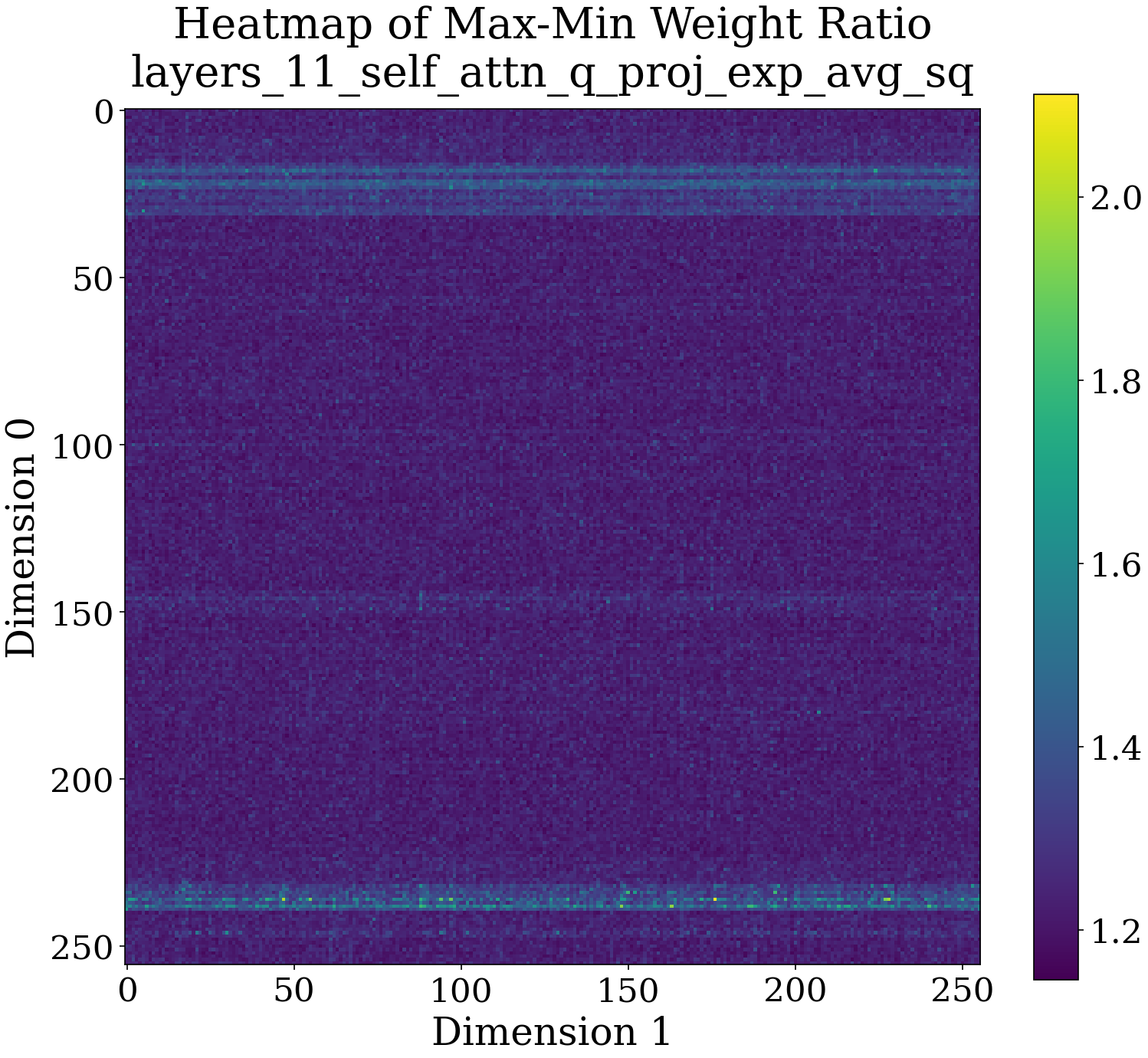}\end{subfigure}\hfill
  \begin{subfigure}[b]{\subfigwidthcode}\includegraphics[width=\textwidth]{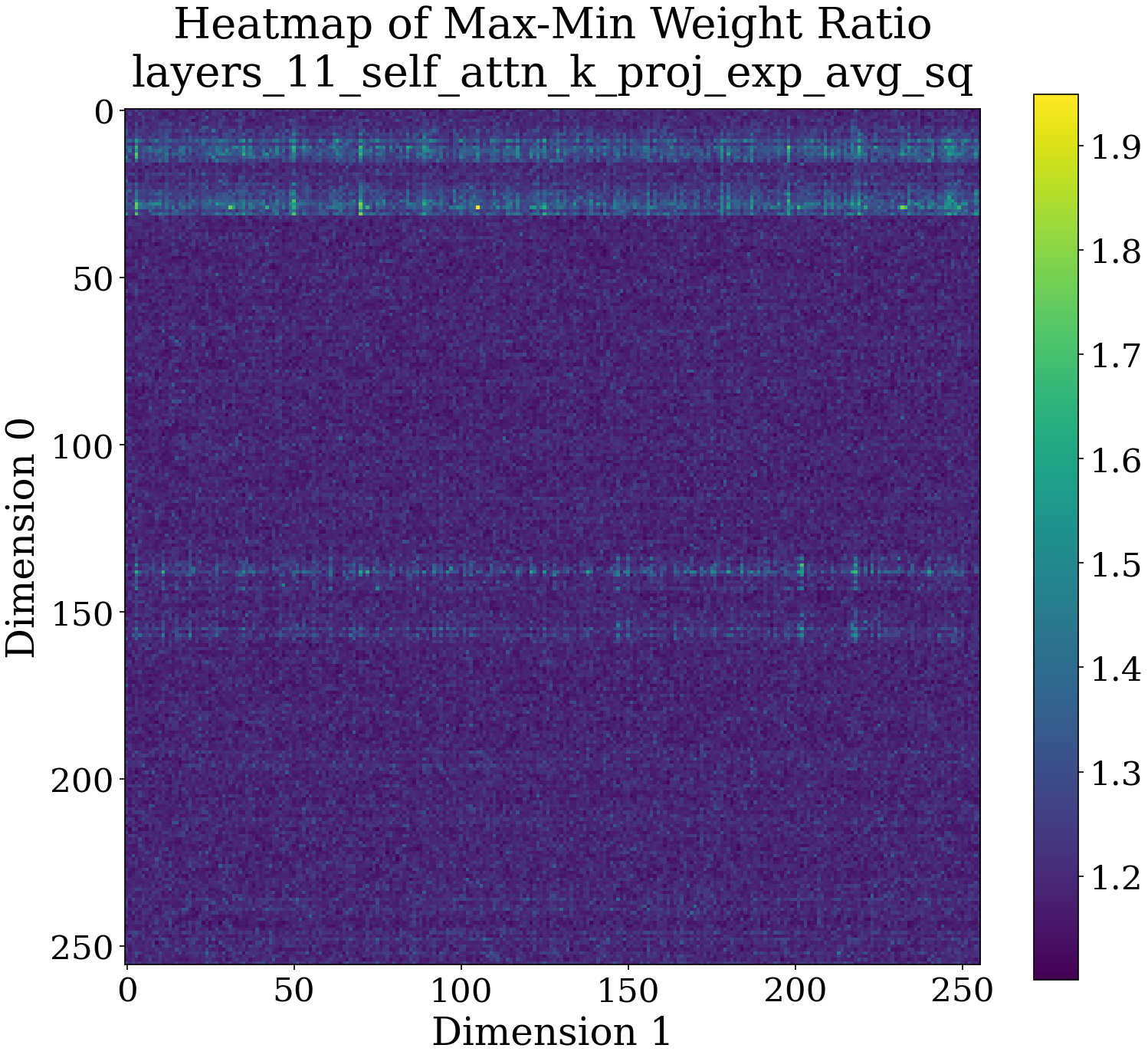}\end{subfigure}\hfill
  \begin{subfigure}[b]{\subfigwidthcode}\includegraphics[width=\textwidth]{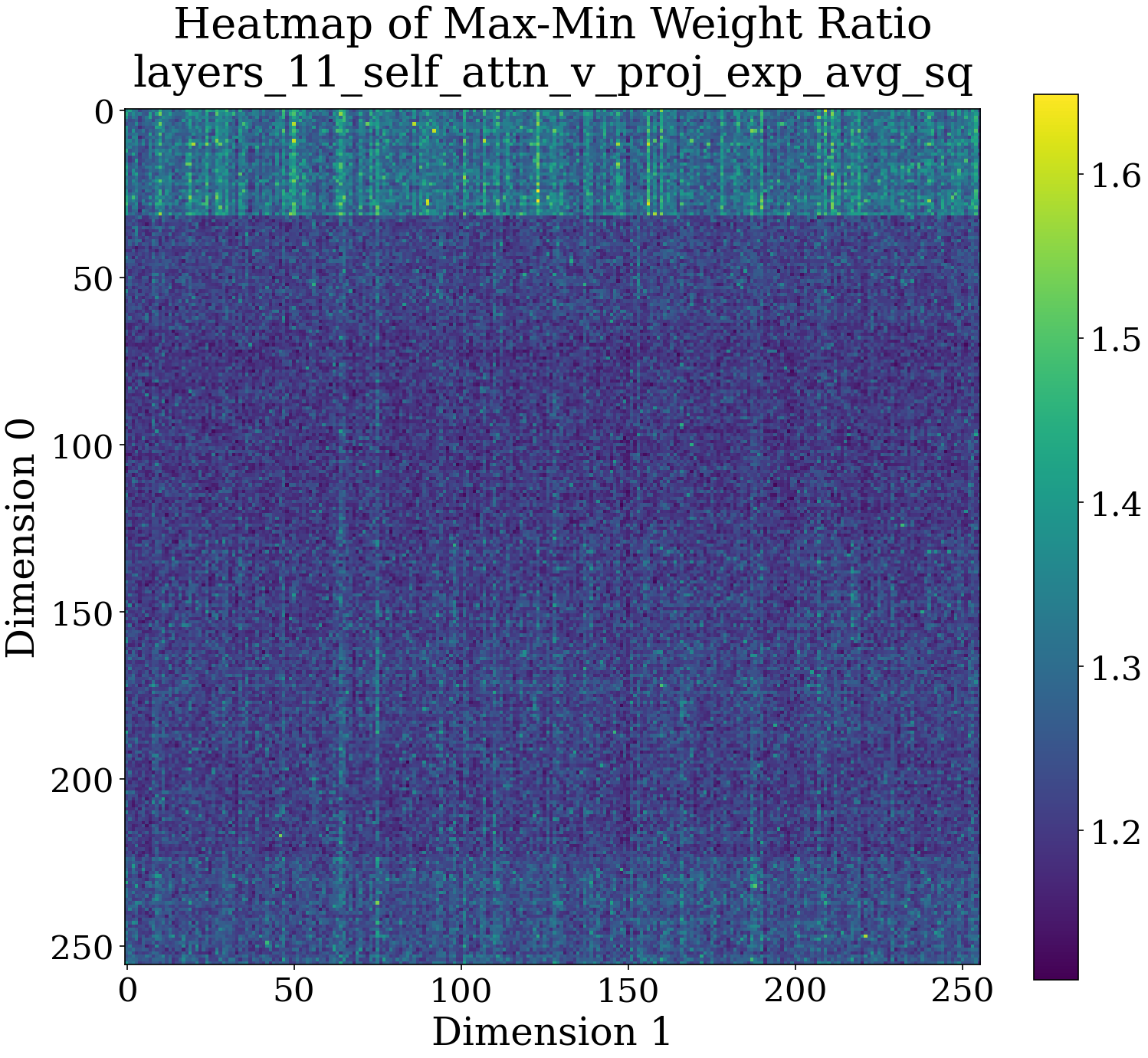}\end{subfigure}\hfill
  \begin{subfigure}[b]{\subfigwidthcode}\includegraphics[width=\textwidth]{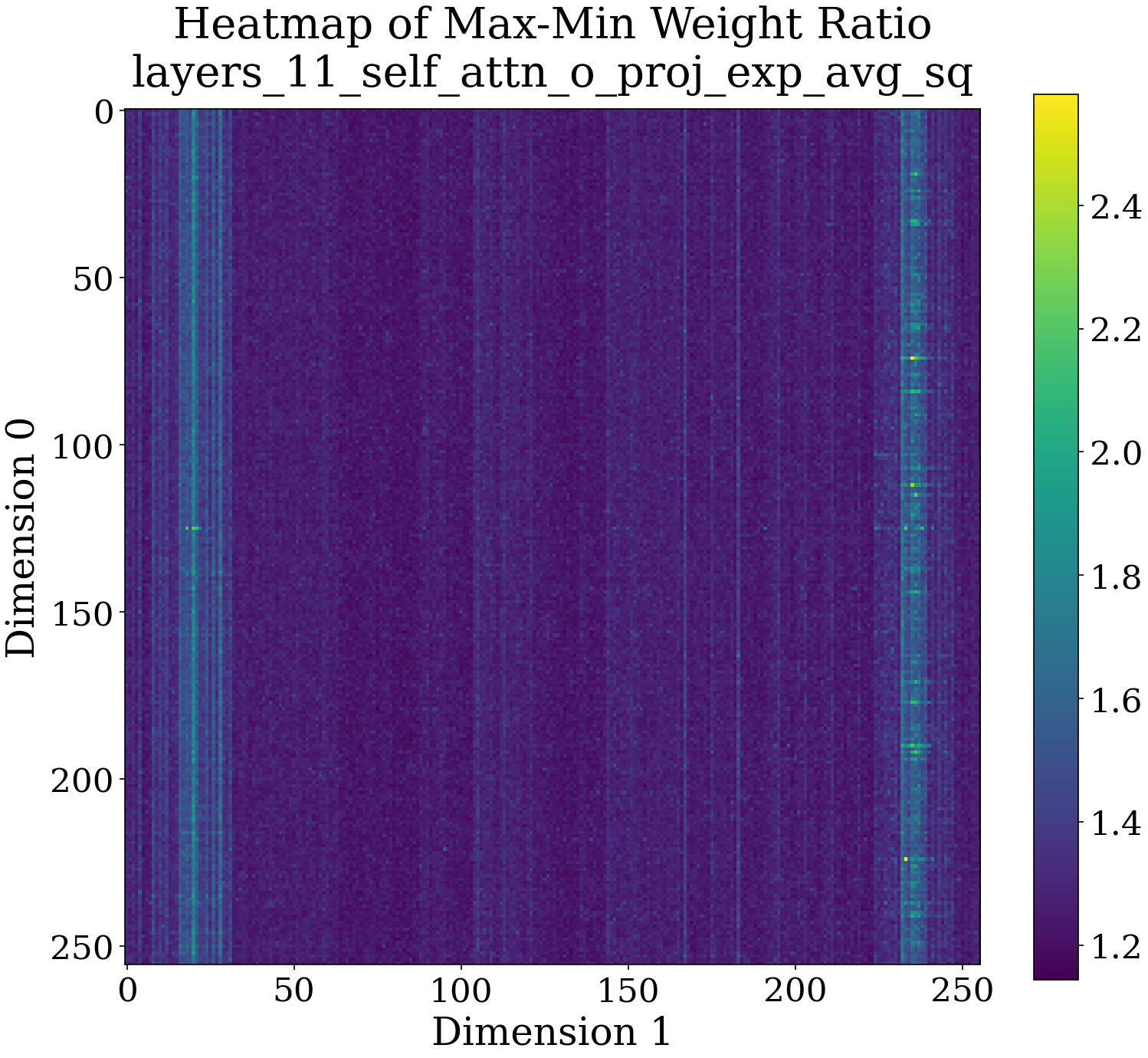}\end{subfigure}
  \caption{\textbf{Control Experiment: Shared curvature in attention layer 11 for two Code models.} The top row shows a Code model trained with a Cosine LR schedule, and the second row shows a Code model trained on the same data with a WSD schedule. The structural similarity is nearly perfect. The bottom row shows the max-min ratio is consistently close to 1 (dark color), indicating minimal geometric deviation.}
  \label{fig:attn_shared_curvature_code_vs_code2}
\end{figure}

 To validate that these observed differences in are indeed task-specific and not mere training artifacts, we conducted a control experiment comparing two Code specialists trained on the same data but with different learning rate schedulers (Cosine vs. WSD). As shown in Figure~\ref{fig:attn_shared_curvature_code_vs_code2}, the resulting curvature patterns are virtually indistinguishable. This is quantitatively confirmed by the max-min ratio plots, which are an order of magnitude smaller than in the Code vs. Math comparison and are consistently close to one. This result provides strong evidence that our curvature-based analysis accurately captures meaningful, task-driven geometric differences, reinforcing the theoretical foundation of OTA-Merging.

\section{Conclusion}
\label{sec:conclusion}
In this work, we introduced OTA Merging, a scalable and effective framework for consolidating specialized models by harnessing the rich, yet often discarded, second-moment statistics from the Adam optimizer. We demonstrated that this optimization history serves as a powerful and computationally efficient proxy for the local loss landscape curvature.

Our approach is twofold: first, FFG leverages this curvature information to act as a principled "denoiser," identifying and reverting noisy parameter updates to isolate the essential knowledge of each expert. Second, a curvature-aware aggregation scheme merges these denoised experts in a way that respects their underlying geometry. This methodology is motivated by our central discovery that independently fine-tuned models exhibit a remarkable geometric consensus, shifting the primary challenge of merging from alignment to interference mitigation.

Our experiments, which consolidated five distinct capabilities into a single Llama 3.1 8B model, validate this perspective. The OTA-merged model achieved state-of-the-art performance, and ablation studies confirmed that the saliency-aware pruning from FFG was the most critical factor for success. Furthermore, we showed that FFG is a potent analytical tool, revealing structured, role-aware sparsity patterns that offer new insights into task localization. By treating the optimization trajectory as a valuable asset, OTA-Merging provides a new, robust paradigm for efficient model composition, paving the way for future explorations into more complex curvature approximations and their application across different model composition techniques.

\section*{Acknowledgements}
This work was partially supported by NSF CAREER Award \#2239374.

\clearpage
\bibliographystyle{unsrtnat}  
\bibliography{ref}

\newpage
\appendix
\appendix

\section{Theoretical Justifications and Proofs}
\label{sec:appendix_proofs}

This section provides a detailed derivation of the theoretical insights presented in Section~\ref{subsec:theory}, which establish Adam's second-moment accumulator, $\mathbf{v}$, as a principled proxy for the diagonal of the empirical Fisher Information Matrix (FIM).

\subsection{Proof of Equivalence between Hessian and Empirical FIM}

Our first result connects the Hessian of the loss function to the Observed Empirical FIM at a fine-tuned model's optimal parameters, $\mathbf{w}_*$. The Hessian for a loss $\mathcal{L}_{D}(\mathbf{w})$ over a dataset $D$ is given by:
\begin{equation}
\label{eq:grad_hessian_general_appendix}
\nabla^2 \mathcal{L}_{D} (\mathbf{w}) = \mathbb{E}_{(\mathbf{x},y) \sim D} \left[ \underbrace{\mathbf{J}_f(\mathbf{w}, \mathbf{x})^\top \nabla^2_{\mathbf{z}} \ell(y, \mathbf{z}) \mathbf{J}_f(\mathbf{w} , \mathbf{x})}_{\text{Generalized Gauss-Newton (GGN) term}} \right] + \mathbb{E}_{(\mathbf{x},y) \sim D} \left[ \underbrace{\sum_{j=1}^{C} \left( \nabla_{z^j} \ell(y, \mathbf{z}) \right) \nabla^2_{\mathbf{w}} f^j(\mathbf{w}, \mathbf{x})}_{\text{Second-order term}} \right],
\end{equation}
where $\mathbf{z} = f(\mathbf{w}, \mathbf{x})$ are the model's logits, $\ell$ is the per-sample loss (e.g., negative log-likelihood), and $\mathbf{J}_f$ is the Jacobian of the network function $f$ with respect to the parameters $\mathbf{w}$. To simplify this expression, we rely on two standard assumptions.

\begin{assumption}[Late NTK Locality]
\label{assum:ggn_appendix}
Near an optimal set of parameters $\mathbf{w}_*$, the second-order term in Equation~\eqref{eq:grad_hessian_general_appendix}, which depends on the curvature of the network function $f$ itself, is negligible. This implies local linearity: $f(\mathbf{x};\mathbf{w}_* + \bm{\delta}) \approx f(\mathbf{x};\mathbf{w}_*) + \mathbf{J}_f \bm{\delta} $ for small perturbations $\bm{\delta}$.
\end{assumption}

This assumption allows us to approximate the Hessian using only the GGN term:
\begin{equation}
\label{ref:GGN-EmpFisher1_appendix}
\nabla^2 \mathcal{L}_{D} (\mathbf{w}_*) \approx \mathbb{E}_{(\mathbf{x},y) \sim D} \left[ \mathbf{J}_f(\mathbf{w}_*, \mathbf{x})^\top \nabla^2_{\mathbf{z}} \ell(y, f(\mathbf{x};\mathbf{w}_*)) \mathbf{J}_f(\mathbf{w}_* , \mathbf{x}) \right].
\end{equation}

\begin{assumption}[Perfect Calibration at Fine-Tuned Checkpoints]
\label{assum:calib_appendix}
At the optimal parameters $\mathbf{w}_*$, the model is perfectly calibrated, meaning its predictive distribution matches the true conditional data distribution for any given input $\mathbf{x}$: $p(y|\mathbf{x}; \mathbf{w}_*) = d(y|\mathbf{x})$.
\end{assumption}

With these assumptions, we can now state and prove the main lemma.

\begin{lemma}
\label{lemma:hessian_to_fim_appendix}
Under Assumptions~\ref{assum:ggn_appendix} and~\ref{assum:calib_appendix}, the Hessian of the loss at $\mathbf{w}_*$ is approximately equal to the Observed Empirical FIM, $\Bar{\mathbf{F}}(\mathbf{w}_*)$.
\end{lemma}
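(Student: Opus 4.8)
The plan is to connect the Generalized Gauss-Newton (GGN) approximation of the Hessian—already available from Assumption~\ref{assum:ggn_appendix}—to the Observed Empirical FIM by exploiting the special structure of the cross-entropy loss composed with the softmax. First I would recall the explicit form of the per-sample loss Hessian with respect to the logits: for the softmax cross-entropy loss $\ell(y,\mathbf{z}) = -\mathbf{e}_y^\top \mathbf{z} + \log\sum_j \exp(z_j)$, a direct computation gives $\nabla^2_{\mathbf{z}}\ell(y,\mathbf{z}) = \operatorname{Diag}(\mathbf{p}) - \mathbf{p}\mathbf{p}^\top$, where $\mathbf{p} = \mathbf{p}(\mathbf{x};\mathbf{w})$ is the softmax probability vector. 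The crucial observation is that this logit-Hessian does \emph{not} depend on the label $y$; it depends only on the input $\mathbf{x}$ and the parameters. Substituting into Equation~\eqref{ref:GGN-EmpFisher1_appendix}, the label expectation collapses and we obtain
\begin{equation*}
\nabla^2\mathcal{L}_D(\mathbf{w}_*) \approx \mathbb{E}_{\mathbf{x}\sim D(\mathbf{x})}\left[\mathbf{J}_f^\top\left(\operatorname{Diag}(\mathbf{p}) - \mathbf{p}\mathbf{p}^\top\right)\mathbf{J}_f\right].
\end{equation*}

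Next I would identify the right-hand side with the \emph{true} FIM by a standard identity: for the softmax model, $\mathbb{E}_{y\sim p(\cdot|\mathbf{x};\mathbf{w})}\left[\nabla_{\mathbf{z}}\ell\,\nabla_{\mathbf{z}}\ell^\top\right] = \operatorname{Diag}(\mathbf{p}) - \mathbf{p}\mathbf{p}^\top$ as well, because the gradient of the loss with respect to the logits is $\nabla_{\mathbf{z}}\ell = \mathbf{p} - \mathbf{e}_y$, whose covariance under $y\sim\mathbf{p}$ is exactly $\operatorname{Diag}(\mathbf{p}) - \mathbf{p}\mathbf{p}^\top$. Pushing the Jacobians through via the chain rule $\nabla_{\mathbf{w}}\log p(y|\mathbf{x};\mathbf{w}) = -\mathbf{J}_f^\top\nabla_{\mathbf{z}}\ell$ shows the GGN expression equals $\mathbf{F}(\mathbf{w}_*)$, the True FIM of Equation~\eqref{eq:true_fim}. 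Finally, Assumption~\ref{assum:calib_appendix} (perfect calibration) lets me replace the model's predictive distribution by the true data distribution inside every label expectation, so that sampling $y$ from $p(\cdot|\mathbf{x};\mathbf{w}_*)$ is interchangeable with using the observed labels $y\sim d(\cdot|\mathbf{x})$; this is precisely the statement that $\mathbf{F}(\mathbf{w}_*) = \hat{\mathbf{F}}(\mathbf{w}_*) = \Bar{\mathbf{F}}(\mathbf{w}_*)$ in expectation, which is the classical information-matrix equality. Chaining the three approximate equalities—Hessian $\approx$ GGN (Assumption~\ref{assum:ggn_appendix}), GGN $=$ True FIM (softmax algebra), True FIM $=$ Observed Empirical FIM (Assumption~\ref{assum:calib_appendix} plus the information equality)—delivers $\nabla^2\mathcal{L}_D(\mathbf{w}_*) \approx \Bar{\mathbf{F}}(\mathbf{w}_*)$.

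I expect the main subtlety to lie not in any single algebraic step but in being careful about \emph{which} expectations are over the data distribution versus the model distribution, and in stating honestly that the two equalities flanking the calibration assumption are exact identities only in the population (infinite-data) sense—the finite-sample Observed Empirical FIM $\Bar{\mathbf{F}}$ of Equation~\eqref{eq:empirical_fim_observed} equals its expected counterpart $\hat{\mathbf{F}}$ only up to sampling noise. A clean way to handle this is to prove the lemma at the level of population quantities (so all three objects coincide exactly given the two assumptions) and then remark that the empirical version is the natural Monte Carlo estimator, which is what one actually accumulates in Adam's second moment. I would also note explicitly that the label-independence of $\nabla^2_{\mathbf{z}}\ell$ is special to matched loss/link pairs (softmax + cross-entropy), so the argument is not vacuous but does rely on the specific SFT setup introduced in Section~\ref{sec:prelim}; no further regularity beyond the two stated assumptions is needed.
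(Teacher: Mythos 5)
You take a genuinely different and more specialized route than the paper. The paper's proof works with the general information-matrix identity: it expands $-\nabla^2_{\mathbf{z}}\log p = -\nabla^2_{\mathbf{z}} p / p + (\nabla_{\mathbf{z}}\log p)(\nabla_{\mathbf{z}}\log p)^\top$ inside the GGN block, uses perfect calibration to switch the expectation in the first term from $y\sim d(\cdot|\mathbf{x})$ to $y\sim p(\cdot|\mathbf{x};\mathbf{w}_*)$ so that the regularity condition $\int \nabla^2_{\mathbf{z}} p\,dy = \mathbf{0}$ kills it, and then contracts the surviving score outer product through $\mathbf{J}_f$ to arrive at $\Bar{\mathbf{F}}$. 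You instead exploit the softmax--cross-entropy structure directly: since $\nabla^2_{\mathbf{z}}\ell = \operatorname{Diag}(\mathbf{p}) - \mathbf{p}\mathbf{p}^\top$ carries no label dependence, the label expectation in Equation~\eqref{ref:GGN-EmpFisher1_appendix} collapses immediately, and the very same matrix reappears as the covariance of $\mathbf{e}_y$ under $y\sim\mathbf{p}$, so that GGN $=$ True FIM is an exact structural identity requiring no calibration whatsoever. Calibration enters your argument only at the final step, to replace sampling $y$ from the model by sampling from the observed-label distribution and thereby identify $\mathbf{F}$ with $\Bar{\mathbf{F}}$. This isolates precisely where the calibration assumption does work, which is a genuine expository improvement, and you are also more careful than the paper about the distinction between the finite-sample $\Bar{\mathbf{F}}$ of Equation~\eqref{eq:empirical_fim_observed} and the population expectation that the paper quietly substitutes for it in the last line. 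The trade-off is scope: the paper's decomposition applies to any negative log-likelihood loss, whereas your identity $\nabla^2_{\mathbf{z}}\ell = \mathbb{E}_{y\sim\mathbf{p}}[\nabla_{\mathbf{z}}\ell\,\nabla_{\mathbf{z}}\ell^\top]$ is special to matched link/loss pairs such as softmax plus cross-entropy (or, more generally, natural-parameter exponential families). You flag this restriction yourself, and it is real, but it is also exactly the setting the paper's SFT setup lives in, so nothing is lost for the purpose of this lemma. Both proofs are correct.
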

\begin{proof}
We begin by analyzing the inner term of the GGN in Equation~\eqref{ref:GGN-EmpFisher1_appendix}, which is the expectation of the Hessian of the negative log-likelihood $\ell = -\log p$ with respect to the true data distribution $y \sim d(y|\mathbf{x})$. Using the identity for the Hessian of the log-likelihood, we have:
\begin{align*}
\mathbb{E}_{y \sim d(y|\mathbf{x})} [-\nabla^2_{\mathbf{z}} \log p(y|\mathbf{z})] &= \mathbb{E}_{y \sim d(y|\mathbf{x})} \left[ -\frac{\nabla^2_{\mathbf{z}} p(y|\mathbf{z})}{p(y|\mathbf{z})} + \frac{(\nabla_{\mathbf{z}} p(y|\mathbf{z}))(\nabla_{\mathbf{z}} p(y|\mathbf{z}))^\top}{p(y|\mathbf{z})^2} \right] \\
&= -\mathbb{E}_{y \sim p(y|\mathbf{z})} \left[ \frac{\nabla^2_{\mathbf{z}} p(y|\mathbf{z})}{p(y|\mathbf{z})} \right] + \mathbb{E}_{y \sim d(y|\mathbf{x})} \left[ (\nabla_{\mathbf{z}} \log p(y|\mathbf{z}))(\nabla_{\mathbf{z}} \log p(y|\mathbf{z}))^\top \right]
\end{align*}
In the second line, we invoke Assumption~\ref{assum:calib_appendix} ($d=p$) to switch the distribution of the expectation for the first term. This first term vanishes because the expectation of the score's gradient is zero under standard regularity conditions:
\[
\mathbb{E}_{y \sim p(y|\mathbf{z})} [\nabla^2_{\mathbf{z}} \log p(y|\mathbf{z})] = \int \nabla^2_{\mathbf{z}} p(y|\mathbf{z}) dy = \nabla^2_{\mathbf{z}} \int p(y|\mathbf{z}) dy = \nabla^2_{\mathbf{z}}(1) = \mathbf{0}.
\]
This leaves only the second term, which is the definition of the Fisher information matrix of the logits. Substituting this result back into Equation~\eqref{ref:GGN-EmpFisher1_appendix}, we arrive at the main result:
\begin{equation}
\label{eq:hessian_fim_equivalence_appendix}
\begin{split}
\nabla^2 \mathcal{L}_{D} (\mathbf{w}_*) &\approx \mathbb{E}_{(\mathbf{x},y) \sim D} \left[ \mathbf{J}_f^\top \left( \mathbb{E}_{y \sim d(y|\mathbf{x})} [(\nabla_{\mathbf{z}} \log p)(\nabla_{\mathbf{z}} \log p)^\top] \right) \mathbf{J}_f \right] \\
&= \mathbb{E}_{(\mathbf{x},y) \sim D} \left[ \left( \mathbf{J}_f^\top \nabla_{\mathbf{z}} \log p \right) \left( \mathbf{J}_f^\top \nabla_{\mathbf{z}} \log p \right)^\top \right] \\
&= \mathbb{E}_{(\mathbf{x},y) \sim D} \left[ (\nabla_{\mathbf{w}} \log p(y|\mathbf{x};\mathbf{w}_*)) (\nabla_{\mathbf{w}} \log p(y|\mathbf{x};\mathbf{w}_*))^\top \right] = \Bar{\mathbf{F}}(\mathbf{w}_*).
\end{split}
\end{equation}
Thus, the Hessian of the loss at an optimal point $\mathbf{w}_*$ is approximately equal to the Observed Empirical FIM.
\end{proof}

\subsection{Proof of Relation between Mini-Batch Second Moment and FIM}
Next, we show how the second moment of mini-batch gradients relates to the Empirical FIM defined above.

\begin{lemma}
\label{lemma:fim-to-batch-fim_appendix}
Let $\nabla \mathcal{L}_{B_k} (\mathbf{w}) = \frac{1}{|B|}\sum_{(\mathbf{x},y) \in B_k} \nabla \mathcal{L}_{\mathbf{x},y}(\mathbf{w})$ be the gradient for a mini-batch $B_k$ of size $|B|$ sampled from the data distribution $D$. Under Assumption~\ref{assum:calib_appendix}, the expectation of the outer product of this mini-batch gradient is a scaled version of the FIM:
\begin{equation}
  \mathbb{E}_{B_k \sim D^{|B|}} \left[ \nabla \mathcal{L}_{B_k} (\mathbf{w}_*) \nabla \mathcal{L}_{B_k} (\mathbf{w}_*)^\top \right] = \frac{1}{|B|} \Bar{\mathbf{F}}(\mathbf{w}_*).
\end{equation}
\end{lemma}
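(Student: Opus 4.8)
The plan is to expand the mini-batch gradient as an average of per-sample gradients, form the outer product, and exploit the i.i.d.\ structure of the batch to split the expectation into ``diagonal'' ($i=j$) and ``cross'' ($i\neq j$) contributions. Write $\nabla \mathcal{L}_{B_k}(\mathbf{w}_*) = \frac{1}{|B|}\sum_{(\mathbf{x}_i,y_i)\in B_k}\mathbf{g}_i$ with $\mathbf{g}_i := \nabla_{\mathbf{w}} \mathcal{L}_{\mathbf{x}_i,y_i}(\mathbf{w}_*) = -\nabla_{\mathbf{w}}\log p(y_i|\mathbf{x}_i;\mathbf{w}_*)$. Then the outer product is $\frac{1}{|B|^2}\sum_{i,j}\mathbf{g}_i\mathbf{g}_j^\top$, and taking $\mathbb{E}_{B_k\sim D^{|B|}}$ and using linearity, the $|B|$ diagonal terms each contribute $\mathbb{E}_{(\mathbf{x},y)\sim D}[\mathbf{g}\mathbf{g}^\top]$, while by independence the $|B|(|B|-1)$ off-diagonal terms each contribute $\mathbb{E}_{(\mathbf{x},y)\sim D}[\mathbf{g}]\,\mathbb{E}_{(\mathbf{x},y)\sim D}[\mathbf{g}]^\top$.

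Next I would identify the diagonal term with the (population) observed empirical FIM. Since the per-sample loss is the negative log-likelihood, the minus signs cancel in $\mathbf{g}\mathbf{g}^\top = (\nabla_{\mathbf{w}}\log p)(\nabla_{\mathbf{w}}\log p)^\top$, so $\mathbb{E}_{(\mathbf{x},y)\sim D}[\mathbf{g}\mathbf{g}^\top]$ is exactly $\Bar{\mathbf{F}}(\mathbf{w}_*)$ in the population sense of Eq.~\eqref{eq:empirical_fim_observed}. I would add a one-line remark that, here and in Lemma~\ref{lemma:hessian_to_fim_appendix}, $\Bar{\mathbf{F}}(\mathbf{w}_*)$ denotes the population observed Fisher $\mathbb{E}_{(\mathbf{x},y)\sim D}[(\nabla_{\mathbf{w}}\log p)(\nabla_{\mathbf{w}}\log p)^\top]$ rather than its finite-sample average, to avoid clashing with the Preliminaries notation.

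The crux is showing the cross term vanishes, i.e.\ $\mathbb{E}_{(\mathbf{x},y)\sim D}[\mathbf{g}] = \mathbf{0}$. This is exactly where Assumption~\ref{assum:calib_appendix} (perfect calibration) enters: since $p(y|\mathbf{x};\mathbf{w}_*) = d(y|\mathbf{x})$, the conditional score has zero mean by the standard identity $\mathbb{E}_{y\sim p(y|\mathbf{x};\mathbf{w}_*)}[\nabla_{\mathbf{w}}\log p(y|\mathbf{x};\mathbf{w}_*)] = \int \nabla_{\mathbf{w}} p\, dy = \nabla_{\mathbf{w}}\!\int p\, dy = \mathbf{0}$, and calibration licenses replacing the expectation over $p$ with the expectation over the true conditional $d$; a further expectation over $\mathbf{x}\sim D(\mathbf{x})$ then gives $\mathbb{E}_{(\mathbf{x},y)\sim D}[\mathbf{g}] = \mathbf{0}$ (equivalently, calibration makes $\mathbf{w}_*$ a stationary point of the population loss, so $\nabla\mathcal{L}_D(\mathbf{w}_*) = \mathbb{E}_{(\mathbf{x},y)\sim D}[\mathbf{g}] = \mathbf{0}$). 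I expect this step---together with the mild regularity conditions needed to interchange $\nabla_{\mathbf{w}}$ and $\int dy$---to be the only substantive point; the rest is bookkeeping.

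With the off-diagonal contribution eliminated, only the $|B|$ diagonal terms survive, giving $\mathbb{E}_{B_k\sim D^{|B|}}[\nabla\mathcal{L}_{B_k}(\mathbf{w}_*)\nabla\mathcal{L}_{B_k}(\mathbf{w}_*)^\top] = \frac{1}{|B|^2}\cdot|B|\cdot\Bar{\mathbf{F}}(\mathbf{w}_*) = \frac{1}{|B|}\Bar{\mathbf{F}}(\mathbf{w}_*)$, which is the claim. As a consistency check I would note this matches the intuition that averaging $|B|$ i.i.d.\ mean-zero gradients shrinks the second moment by $1/|B|$, and, as remarked in the main text, that it recovers the corollary of Lemma~8 in~\citet{morwani2024new}.
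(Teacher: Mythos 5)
Your proposal is correct and follows essentially the same argument as the paper's proof: decompose the outer product of the averaged mini-batch gradient into diagonal and cross terms via the i.i.d.\ structure, kill the cross terms using the zero-mean-score identity licensed by the perfect-calibration assumption, and collect the surviving $|B|$ diagonal terms to get the $1/|B|$ factor. Your added remark that $\Bar{\mathbf{F}}(\mathbf{w}_*)$ here should be read as the population observed Fisher $\mathbb{E}_{(\mathbf{x},y)\sim D}[(\nabla_{\mathbf{w}}\log p)(\nabla_{\mathbf{w}}\log p)^\top]$ rather than the finite-sample average of Eq.~\eqref{eq:empirical_fim_observed} is a legitimate and useful clarification, but it does not change the substance of the argument.
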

\begin{proof}
We decompose the expectation of the outer product:
\begin{align*}
    \mathbb{E}_{B_k \sim D^{|B|}} \left[ \nabla \mathcal{L}_{B_k} \nabla \mathcal{L}_{B_k}^\top \right] &= \mathbb{E} \left[ \left(\frac{1}{|B|}\sum_{i=1}^{|B|} \nabla \mathcal{L}_{\mathbf{x}_i,y_i}\right) \left(\frac{1}{|B|}\sum_{j=1}^{|B|} \nabla \mathcal{L}_{\mathbf{x}_j,y_j}\right)^\top \right] \\
    &= \frac{1}{|B|^2} \sum_{i,j} \mathbb{E} \left[ \nabla \mathcal{L}_{\mathbf{x}_i,y_i} \nabla \mathcal{L}_{\mathbf{x}_j,y_j}^\top \right] \\
    &= \frac{1}{|B|^2}  \sum_{i=1}^{|B|} \mathbb{E}_{\mathbf{x}_i,y_i \sim D}[\nabla \mathcal{L}_{\mathbf{x}_i,y_i} \nabla \mathcal{L}_{\mathbf{x}_i,y_i}^\top] + \frac{1}{|B|^2} \sum_{i\neq j} \mathbb{E} \left[ \nabla \mathcal{L}_{\mathbf{x}_i,y_i} \nabla \mathcal{L}_{\mathbf{x}_j,y_j}^\top \right].
\end{align*}
Since the samples in the mini-batch are i.i.d., the expectation of the cross-terms ($i \neq j$) decouples:
\[
\mathbb{E} \left[ \nabla \mathcal{L}_{\mathbf{x}_i,y_i} \nabla \mathcal{L}_{\mathbf{x}_j,y_j}^\top \right] = \mathbb{E}_{\mathbf{x}_i,y_i \sim D} [\nabla \mathcal{L}_{\mathbf{x}_i,y_i}] \mathbb{E}_{\mathbf{x}_j,y_j \sim D} [\nabla \mathcal{L}_{\mathbf{x}_j,y_j}]^\top.
\]
At the optimum $\mathbf{w}_*$, the expected gradient over the true data distribution is zero. This follows from Assumption~\ref{assum:calib_appendix} and the fact that the expectation of the score function is zero:
\[
\mathbb{E}_{(\mathbf{x},y) \sim D} \left[ \nabla_{\mathbf{w}} \mathcal{L}_{\mathbf{x},y} (\mathbf{w}_*) \right] = \mathbb{E}_{\mathbf{x}} \left[ \mathbf{J}_f^\top \mathbb{E}_{y \sim p(y|\mathbf{x})} [ \nabla_{\mathbf{z}} \log p(y|\mathbf{z}) ] \right] = \mathbf{0}.
\]
Therefore, all the cross-terms ($i \neq j$) in the decomposition vanish. We are left with only the diagonal terms of the sum:
\begin{align*}
\mathbb{E}_{B_k \sim D^{|B|}} \left[ \nabla \mathcal{L}_{B_k} \nabla \mathcal{L}_{B_k}^\top \right] &= \frac{1}{|B|^2} \sum_{i=1}^{|B|} \mathbb{E} \left[ \nabla \mathcal{L}_{\mathbf{x}_i,y_i} \nabla \mathcal{L}_{\mathbf{x}_i,y_i}^\top \right] \\
&= \frac{|B|}{|B|^2} \mathbb{E}_{(\mathbf{x},y) \sim D} \left[ \nabla \mathcal{L}_{\mathbf{x},y}(\mathbf{w}_*) \nabla \mathcal{L}_{\mathbf{x},y}(\mathbf{w}_*)^\top \right] \\
&= \frac{1}{|B|} \Bar{\mathbf{F}}(\mathbf{w}_*).
\end{align*}
This completes the proof, showing that the second moment of the mini-batch gradient is, on expectation, a scaled version of the full Empirical FIM.
\end{proof}

\section{Additional Information on Training SFT Models}

\subsection{Gradient, and Loss Visualizations}
\label{app:grad-norm}

Figure~\ref{fig:grad-norms} shows the training gradient norm trajectories $\|\nabla \mathcal{L}\|$ for each of our five specialist LLMs. Tasks like \texttt{knowledge\_recall} and \texttt{coding} exhibit high gradient norms, whereas \texttt{math\_reasoning} and \texttt{precise\_if} remain much lower.

\begin{figure}[ht]
  \centering
  \includegraphics[scale=0.7]{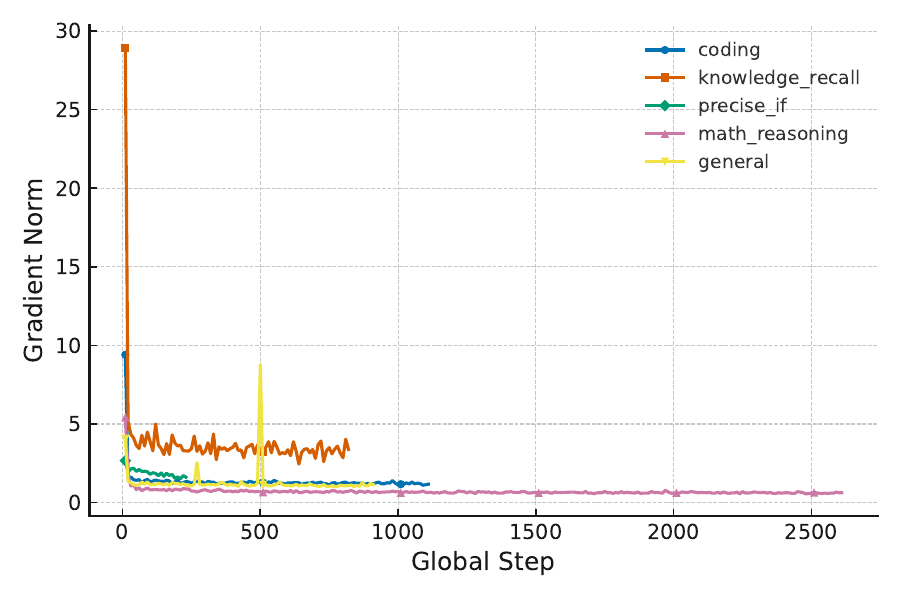}
  \caption{Gradient norm trajectories over global training steps for each fine-tuned SFT models}
  \label{fig:grad-norms}
\end{figure}

Moreover, we visualize the optimization trajectories of each SFT model on distinct tasks as indicated in Figure~\ref{fig:training-losses}. 

\begin{figure}[htbp]
  \centering
  \includegraphics[width=0.9\linewidth]{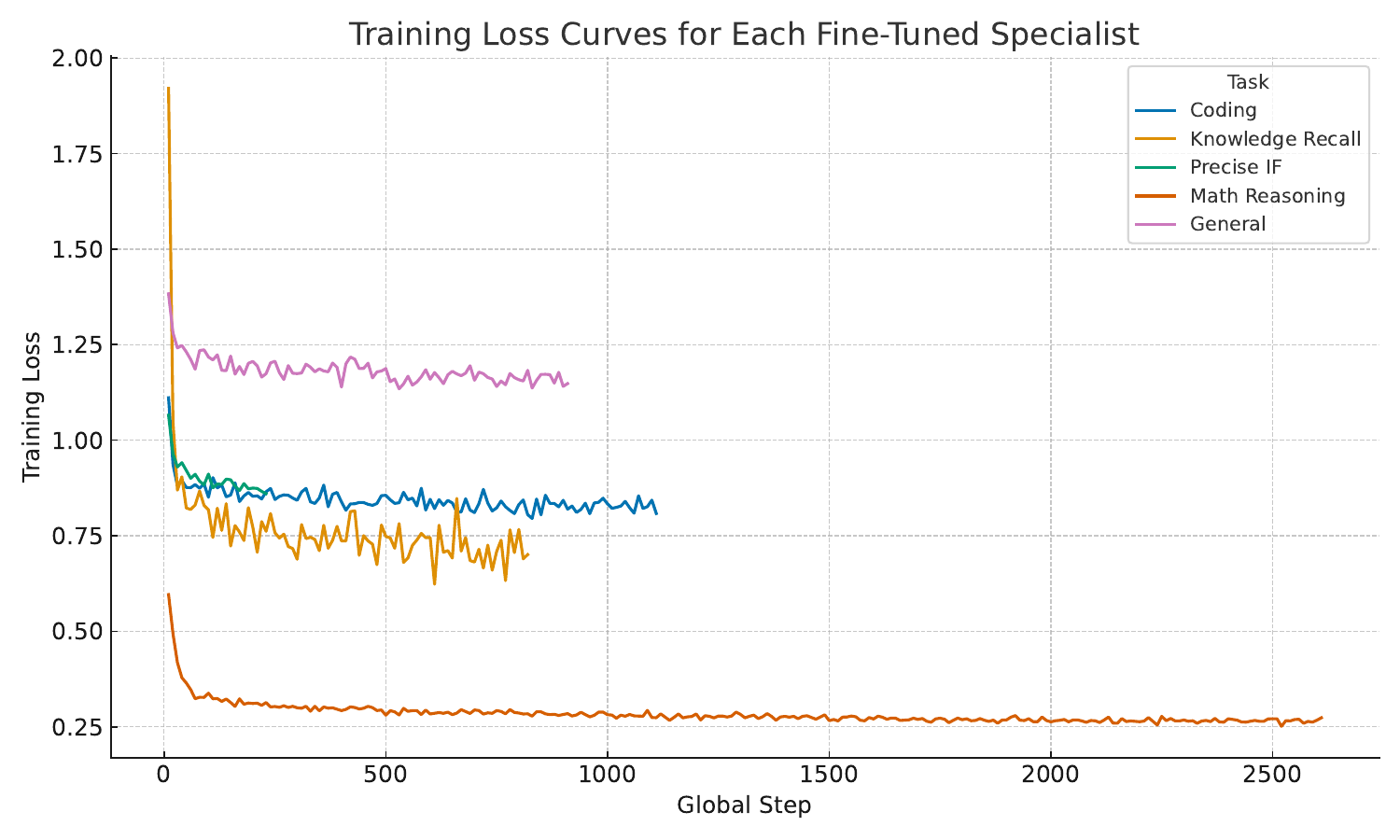}
  \caption{Training loss trajectories across global steps for each SFT model}
  \label{fig:training-losses}
\end{figure}

\subsection{Experiments Details for Training SFT Checkpoints}
\label{sec:sft-d}

The models were trained from the \texttt{Meta-Llama-3.1--8B} base model using full-parameter Supervised Fine-Tuning (SFT) and AdamW optimizer. The second-moment statistics (\texttt{exp\_avg\_sq}) of the optimizer were checkpointed and later used as curvature proxies in OTA-Merging.

We leveraged full post-training stack provided by \textbf{LLaMA-Factory}~\cite{zheng2024llamafactory}. This unified infrastructure handled supervised fine-tuning (SFT), tokenizer alignment, and checkpoint conversion. 

Our configuration closely followed the hyperparameter recipe from the T\"{u}lu-3~\cite{lambert2024t}, with slight task-specific adjustments. Fine-tuning was performed on two NVIDIA A100 GPUs (80GB) per task. We used gradient accumulation of 32 and a micro-batch size of 2 per device, yielding an effective global batch size of 128. All models were trained with a learning rate of \(5 \times 10^{-6}\), except for Precise Instruction Following (IF), which used \(1 \times 10^{-5}\) to encourage faster convergence.

\begin{table}[h]
\centering
\caption{Fine-tuning hyperparameters used for training specialist models.}
\vspace{0.3em}
\label{tab:training-hparams}
\begin{tabular}{@{}lcc@{}}
\toprule
\textbf{Hyperparameter} & \textbf{Value} & \textbf{Notes} \\
\midrule
Base Model & LLaMA 3.1--8B & \\
Micro Batch Size & 2 & Per GPU \\
Gradient Accumulation & 32 & \\
Effective Batch Size & 128 & Across GPUs \\
Max Token Length & 4096 & \\
Learning Rate & \(5 \times 10^{-6}\) & \(1 \times 10^{-5}\) for Precise IF \\
Learning Rate Schedule & Linear & \\
Warmup Ratio & 0.03 & \\
Epochs & 1 & \\
Post-training Stack & LLaMA-Factory & Full stack used~\cite{zheng2024llamafactory} \\
\bottomrule
\end{tabular}
\end{table}

\subsection{Training Dataset Curation}
\label{app:dataset-curation}

Our SFT models were fine-tuned on curated subsets of the \texttt{allenai/tulu-3-sft-mixture} dataset, retrieved from the Hugging Face \texttt{datasets} library. This dataset aggregates instruction-following conversations from a wide range of sources, and we mapped these sources to specific capability categories for training the OTA-Merging specialists.

Each training example consists of a structured conversation in the \texttt{messages} field---a list of dictionaries that capture the dialogue turns between roles: \texttt{system}, \texttt{user}, and \texttt{assistant}. This structure enables role-specific formatting for instruction tuning.

\paragraph{Task Categories and Source Mapping.}
The data sources used for fine-tuning were grouped into the following categories:

\begin{itemize}
    \item \textbf{General Instruction}: \texttt{wildchat}, \texttt{oasst1\_converted}, \texttt{no\_robots}, etc.
    \item \textbf{Knowledge Recall}: \texttt{flan\_v2}, \texttt{sciriff}, \texttt{table\_gpt}
    \item \textbf{Mathematical Reasoning}: \texttt{persona\_math}, \texttt{numinamath}, \texttt{open\_math}
    \item \textbf{Coding}: \texttt{codealpaca}, \texttt{persona\_code}
    \item \textbf{Precise Instruction Following (Precise IF)}: \texttt{persona\_ifdata}
\end{itemize}

\paragraph{Dataset Statistics.}
The number of examples in each category used for fine-tuning is summarized below:

\begin{table}[htbp]
\centering
\caption{Number of examples per task category in \texttt{tulu-3-sft-mixture}.}
\label{tab:dataset-category-stats}
\vspace{0.3em}
\begin{tabular}{@{}lcc@{}}
\toprule
\textbf{Category} & \textbf{Examples} & \textbf{Percentage} \\
\midrule
Mathematical Reasoning & 334,252 & 39.70\% \\
Coding & 142,275 & 16.89\% \\
General Instruction & 116,871 & 13.89\% \\
Knowledge Recall & 104,982 & 12.46\% \\
Precise Instruction Following & 29,980 & 3.56\% \\
\midrule
\textbf{Total} & 728,360 & 100.00\% \\
\bottomrule
\end{tabular}
\end{table}

\paragraph{Message Formatting.}
Each \texttt{messages} list was rendered into a flattened training string using a standardized Jinja2 template consistent with LLaMA-Factory's post-training stack. This template inserts role-specific delimiters and appends \texttt{<eos\_token>} after assistant responses.

For example, the following JSON input:

\begin{verbatim}
[
  {"role": "system", "content": "System prompt."},
  {"role": "user", "content": "User question."},
  {"role": "assistant", "content": "Assistant answer."}
]
\end{verbatim}

is rendered as:

\begin{verbatim}
<|system|>
System prompt.
<|user|>
User question.
<|assistant|>
Assistant answer.<eos_token>
\end{verbatim}

This formatting ensures consistency across training samples and compatibility with instruction-tuned decoding patterns adopted from Tulu~\cite{lambert2024t}.

\section{Magnitude-Based Grafting Visualizations}
\label{sec:ffg_viz_mag}
To complement the main-text FFG visualizations, we provide analogous 3-way magnitude-based grids across layers 1, 2, 15, 18, 29, and 30. RGB channels encode Code (red), Math (green), and Instruction-Following (blue) experts.

\begin{figure}[!ht]
    \centering
    \begin{subfigure}[b]{0.24\textwidth}
        \includegraphics[width=\textwidth]{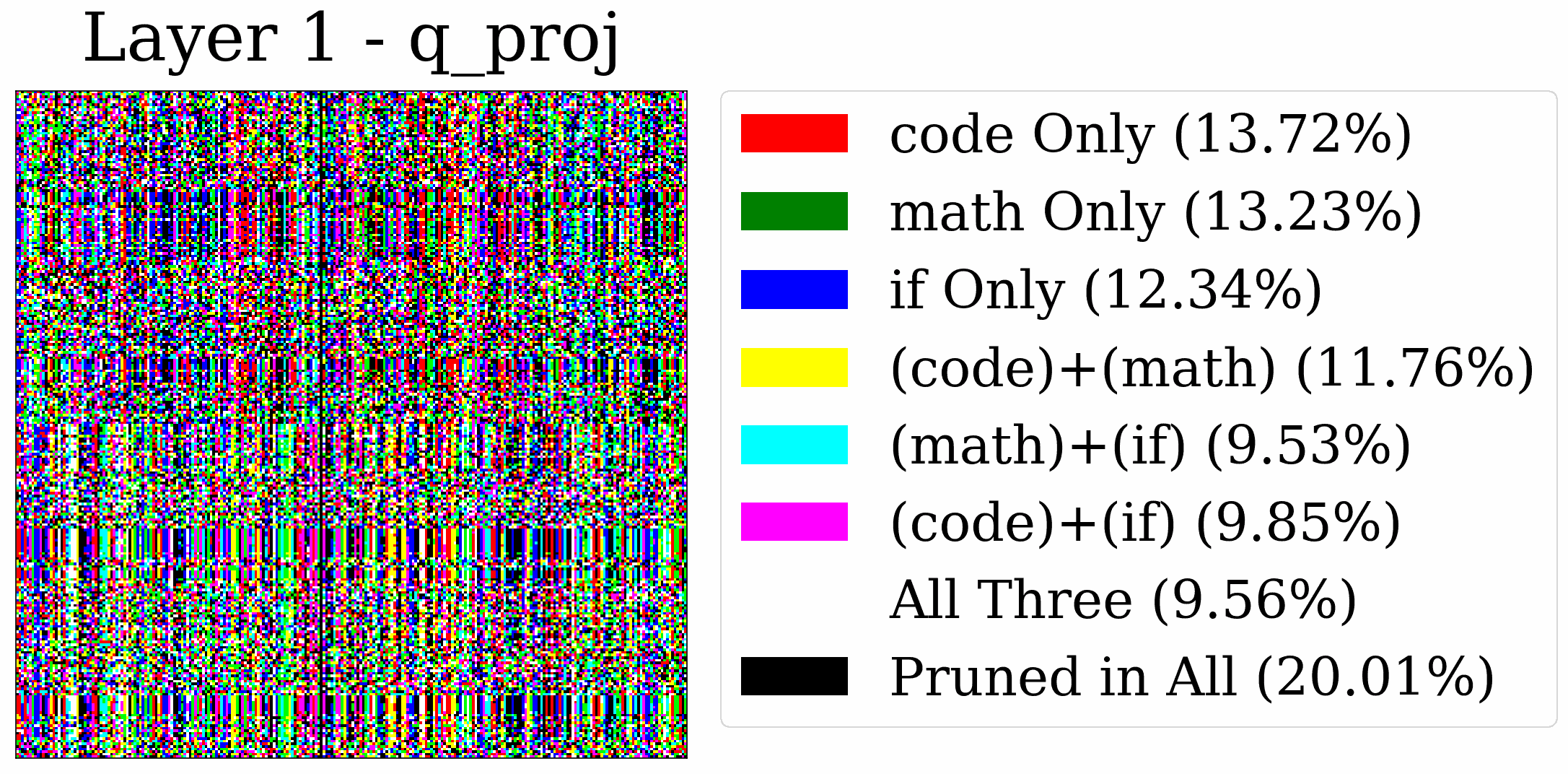}
    \end{subfigure}
    \hfill
    \begin{subfigure}[b]{0.24\textwidth}
        \includegraphics[width=\textwidth]{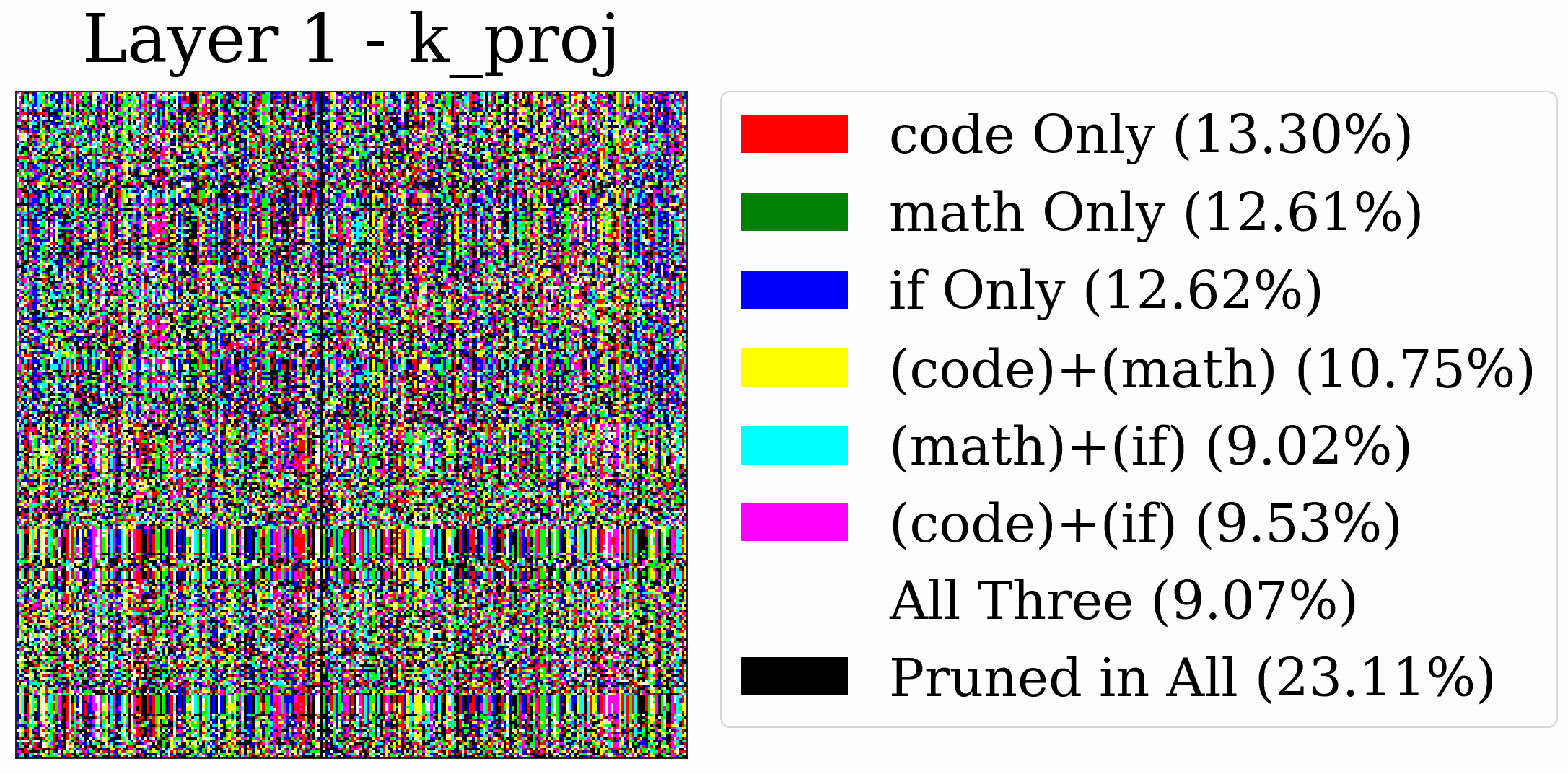}
    \end{subfigure}
    \hfill
    \begin{subfigure}[b]{0.24\textwidth}
        \includegraphics[width=\textwidth]{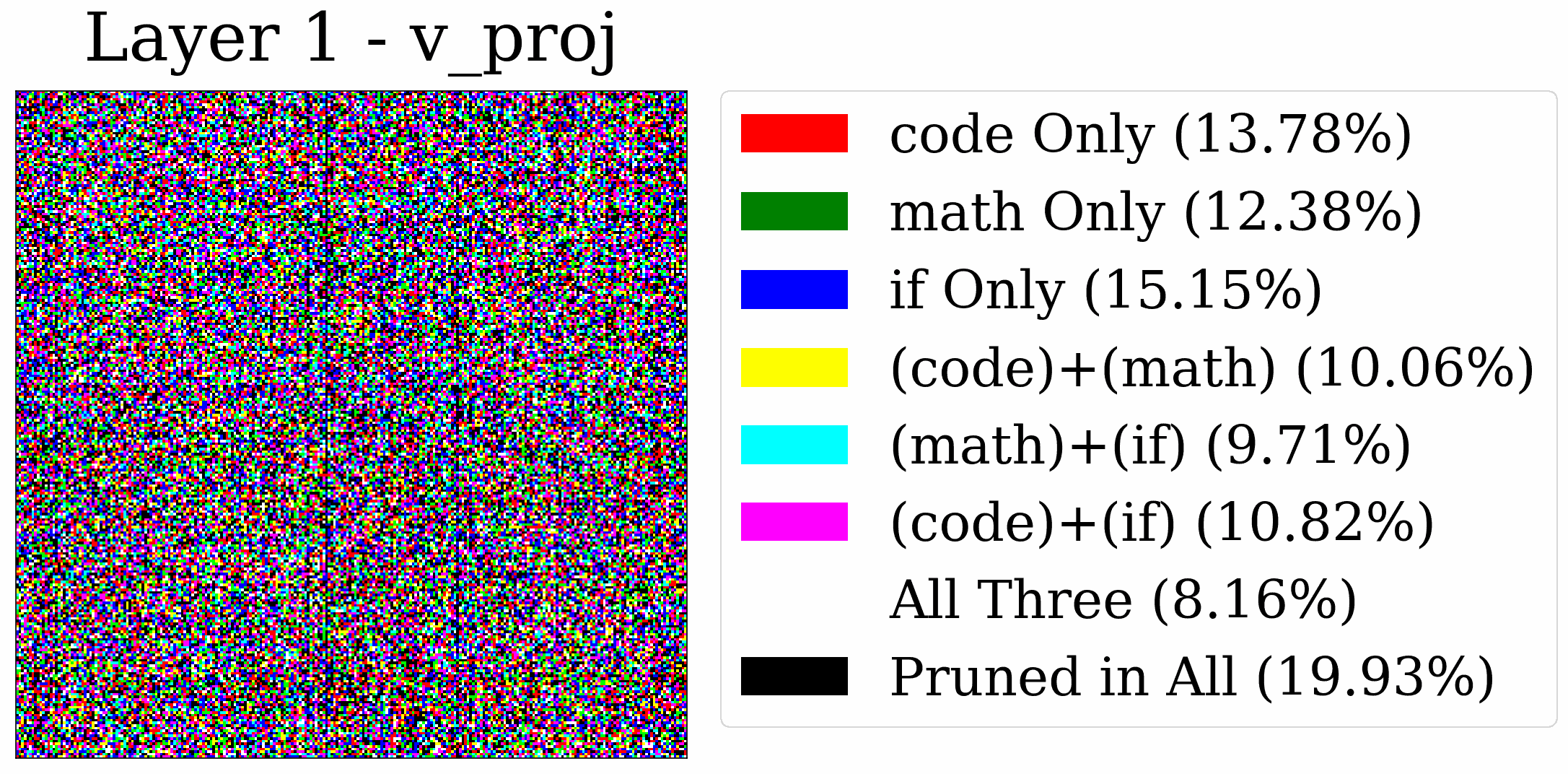}
    \end{subfigure}
    \hfill
    \begin{subfigure}[b]{0.24\textwidth}
        \includegraphics[width=\textwidth]{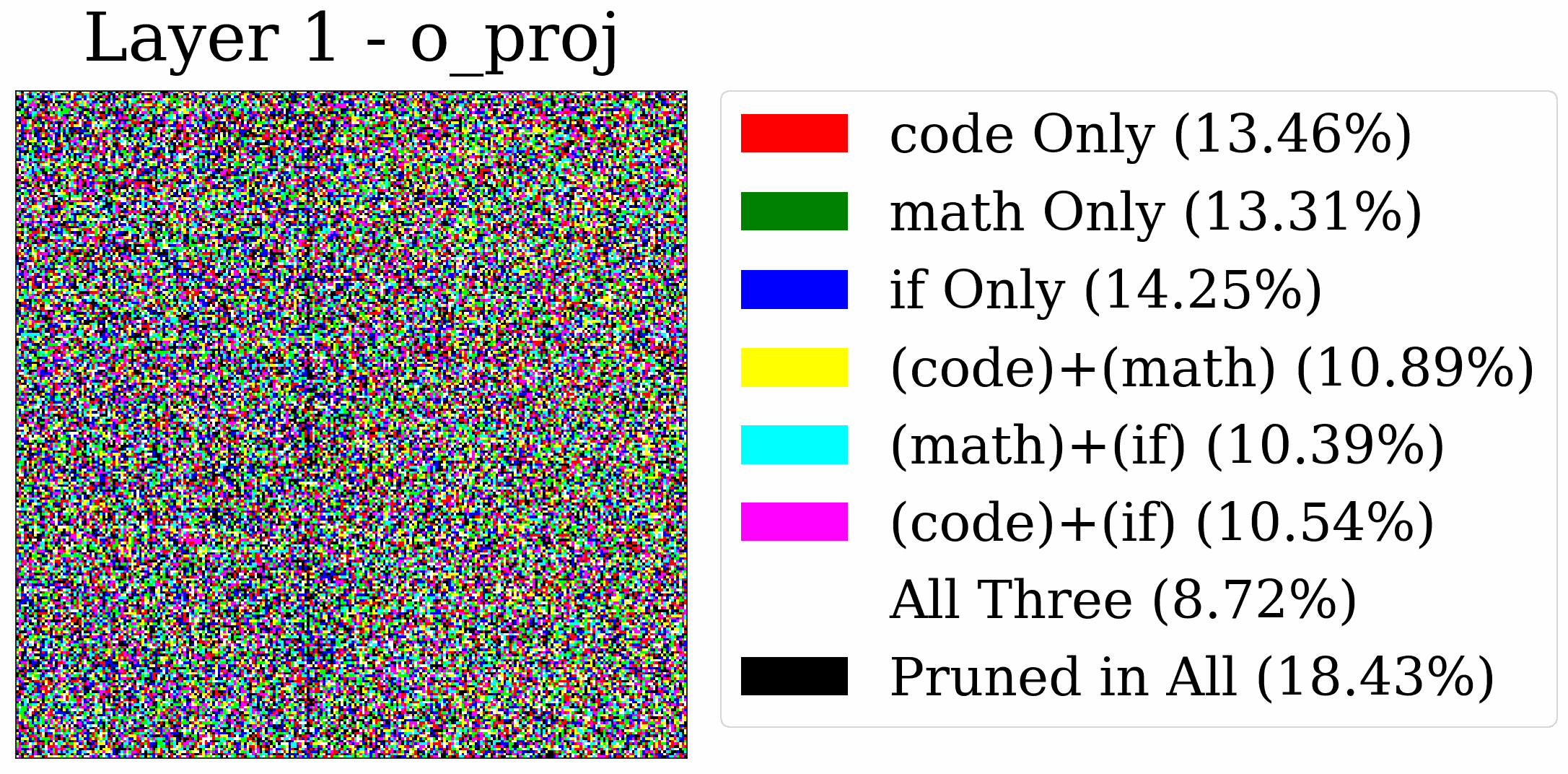}
    \end{subfigure}
    \begin{subfigure}[b]{0.24\textwidth}
        \includegraphics[width=\textwidth]{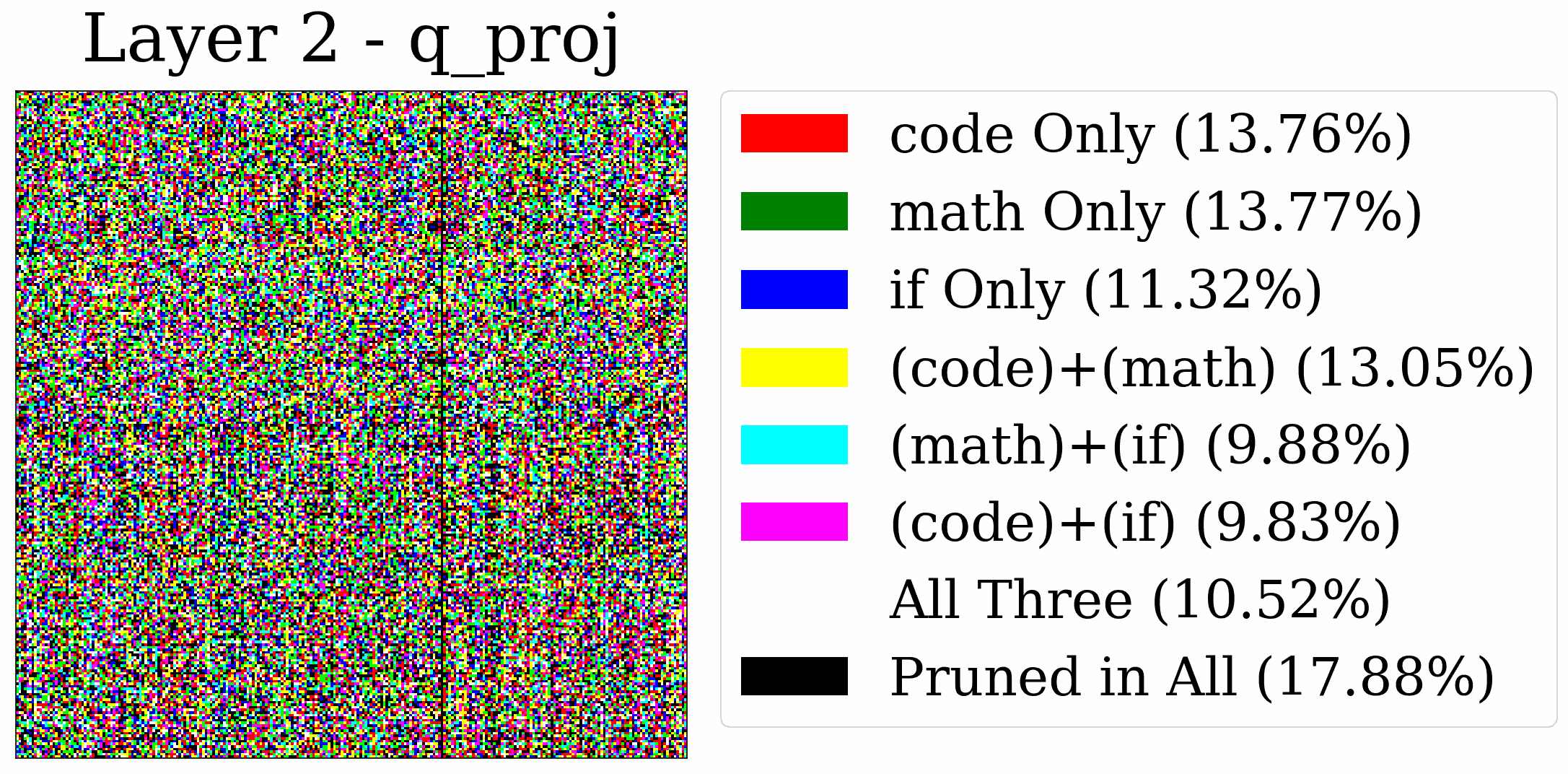}
    \end{subfigure}
    \hfill
    \begin{subfigure}[b]{0.24\textwidth}
        \includegraphics[width=\textwidth]{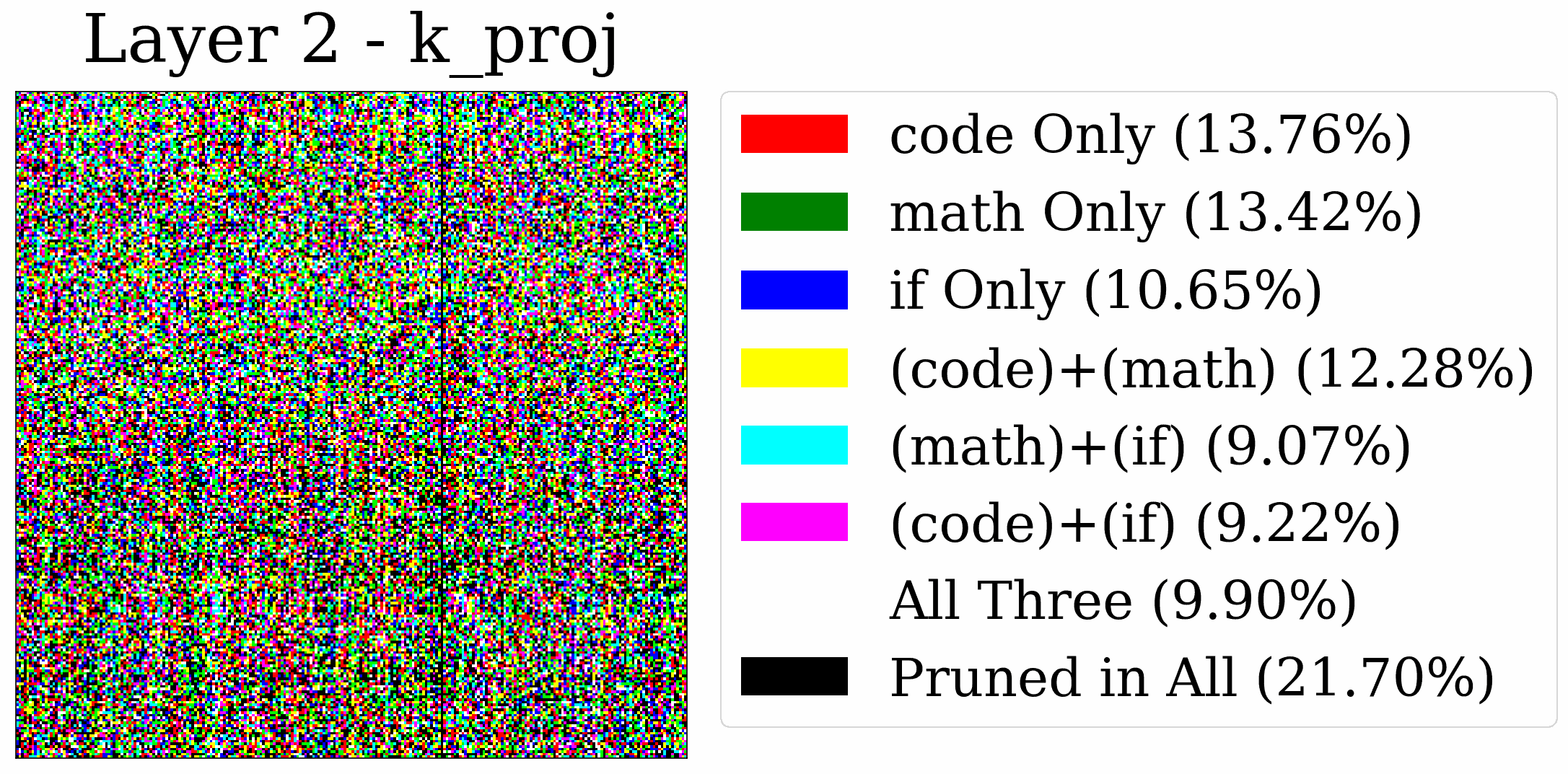}
    \end{subfigure}
    \hfill
    \begin{subfigure}[b]{0.24\textwidth}
        \includegraphics[width=\textwidth]{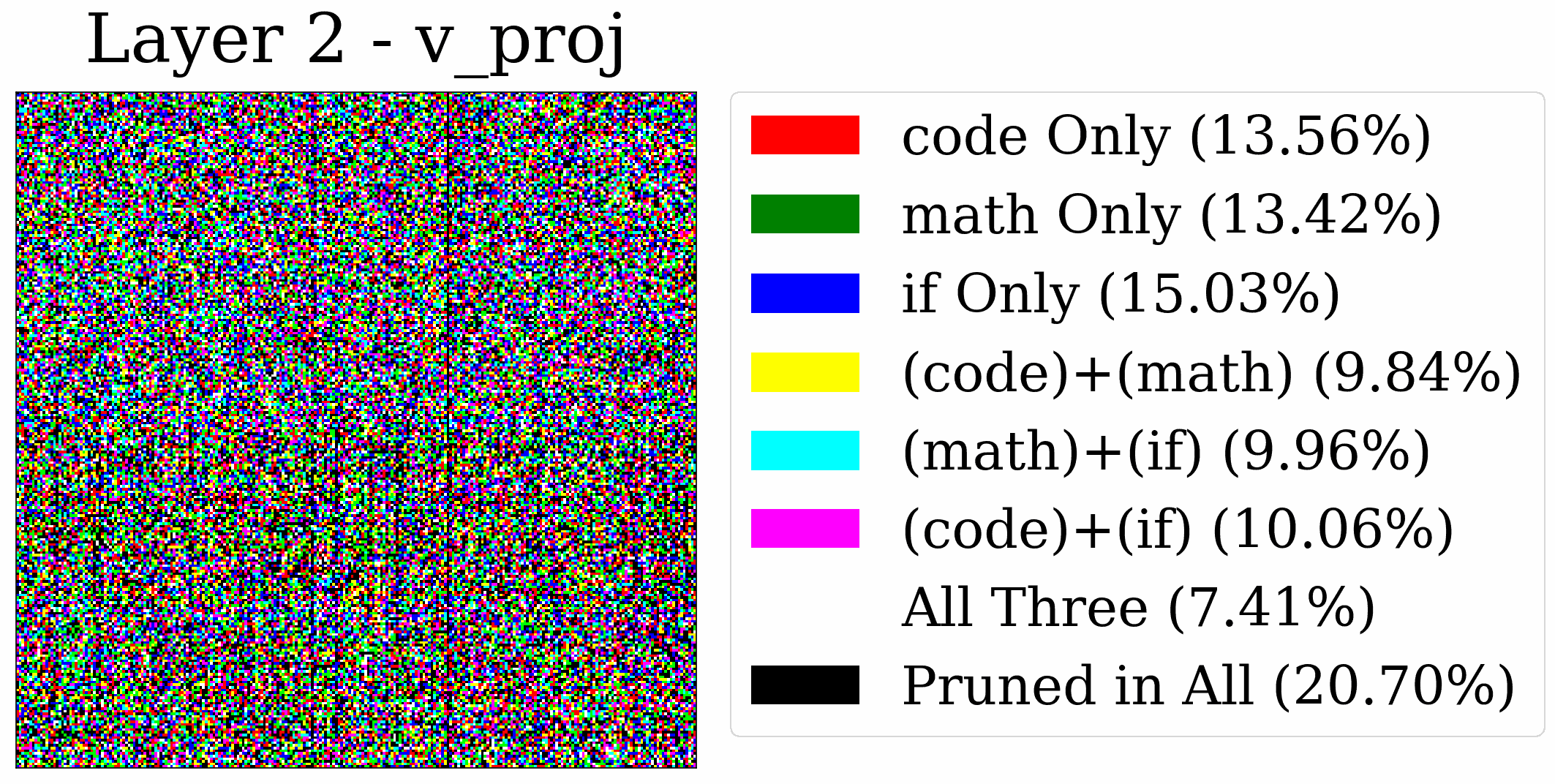}
    \end{subfigure}
    \hfill
    \begin{subfigure}[b]{0.24\textwidth}
        \includegraphics[width=\textwidth]{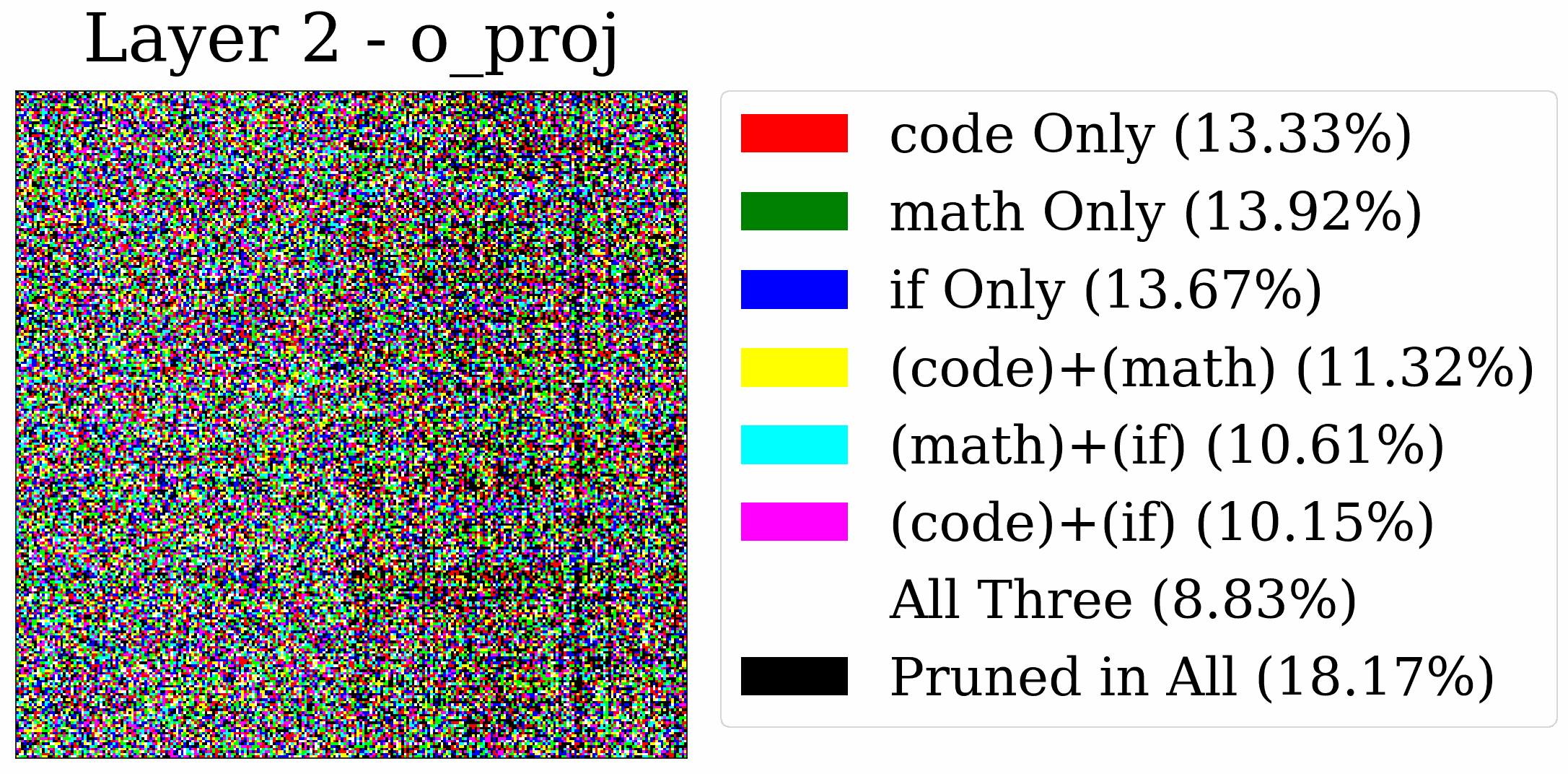}
    \end{subfigure}
    \begin{subfigure}[b]{0.24\textwidth}
        \includegraphics[width=\textwidth]{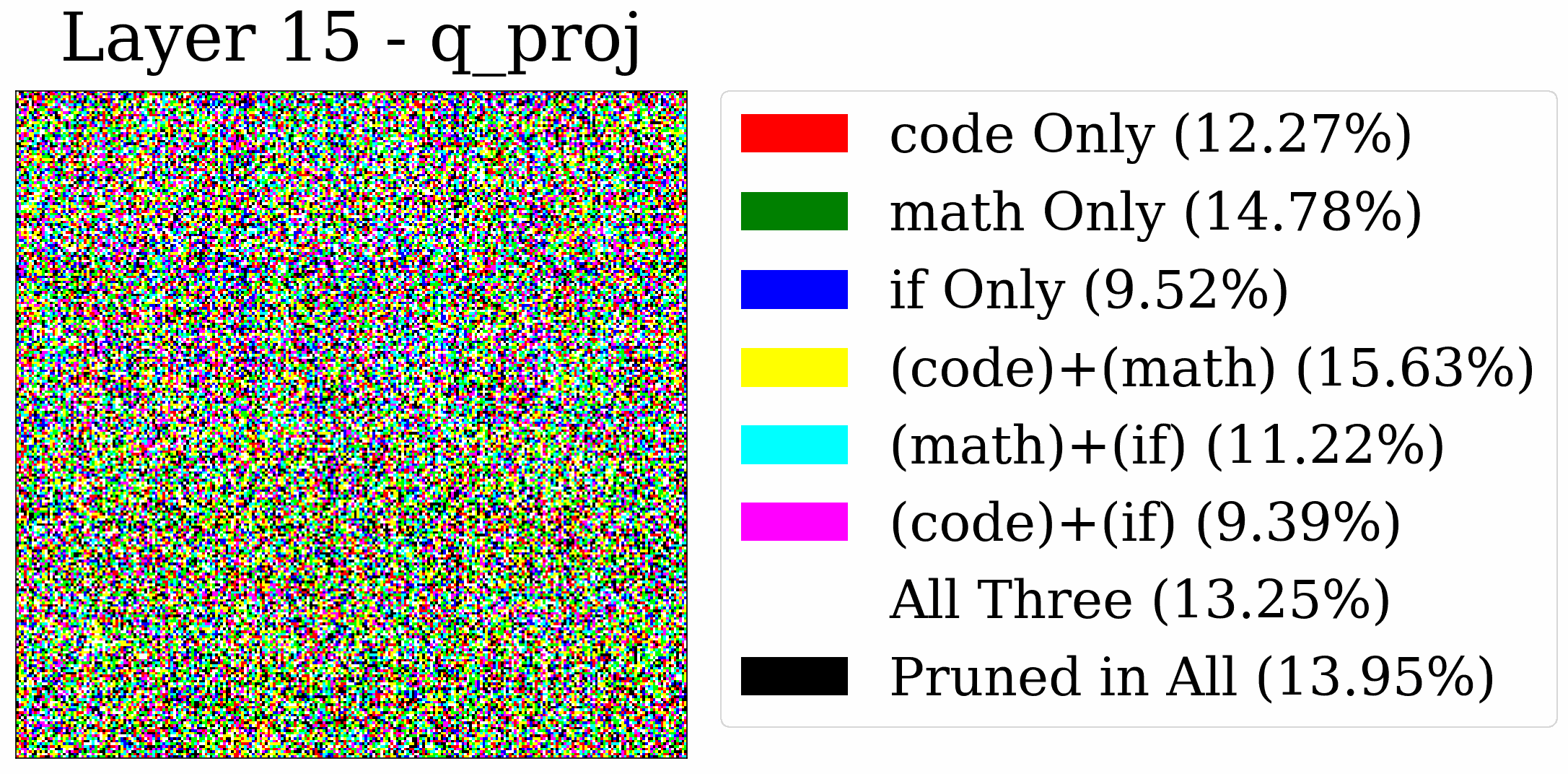}
    \end{subfigure}
    \hfill
    \begin{subfigure}[b]{0.24\textwidth}
        \includegraphics[width=\textwidth]{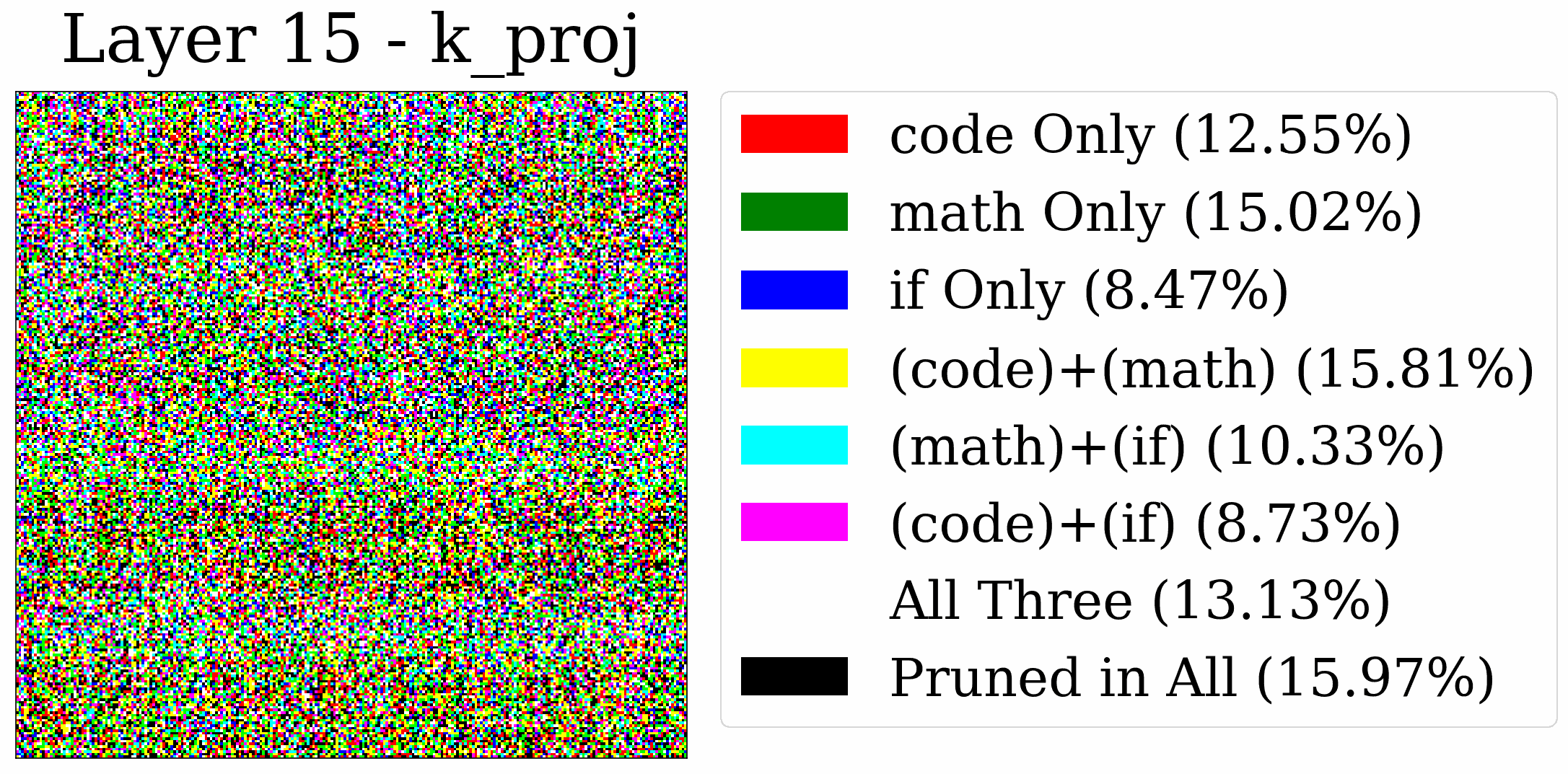}
    \end{subfigure}
    \hfill
    \begin{subfigure}[b]{0.24\textwidth}
        \includegraphics[width=\textwidth]{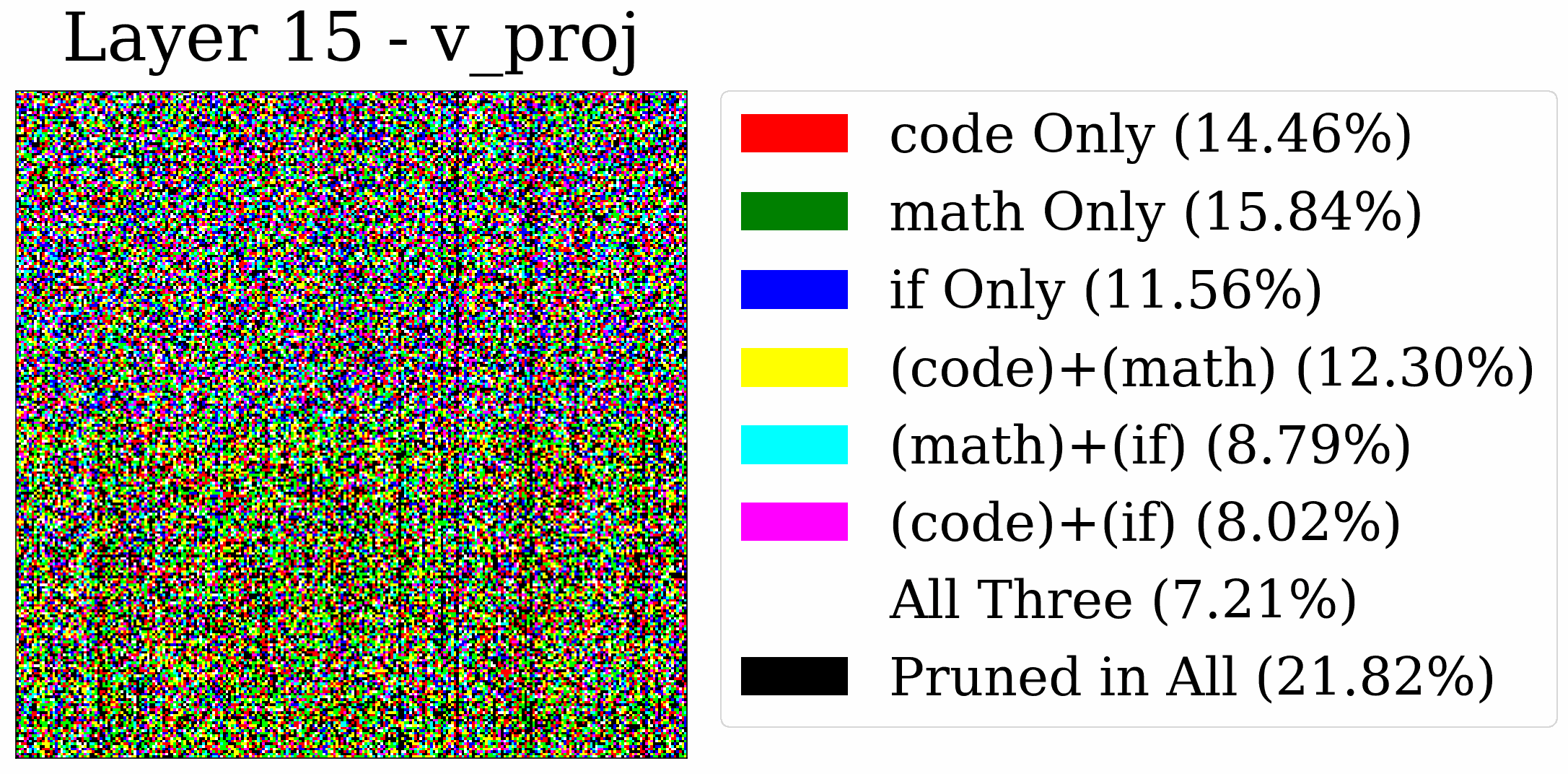}
    \end{subfigure}
    \hfill
    \begin{subfigure}[b]{0.24\textwidth}
        \includegraphics[width=\textwidth]{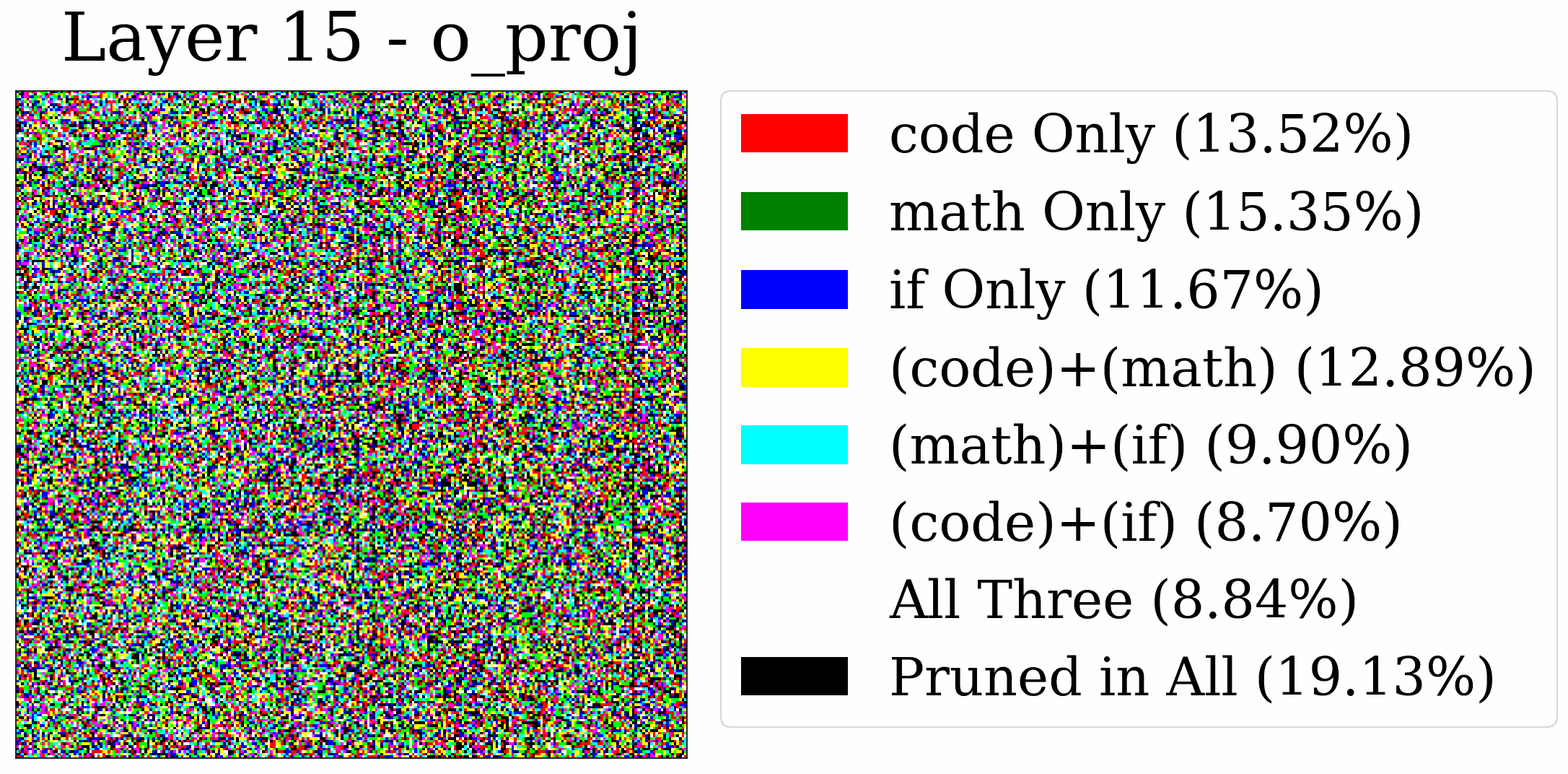}
    \end{subfigure}
    \begin{subfigure}[b]{0.24\textwidth}
        \includegraphics[width=\textwidth]{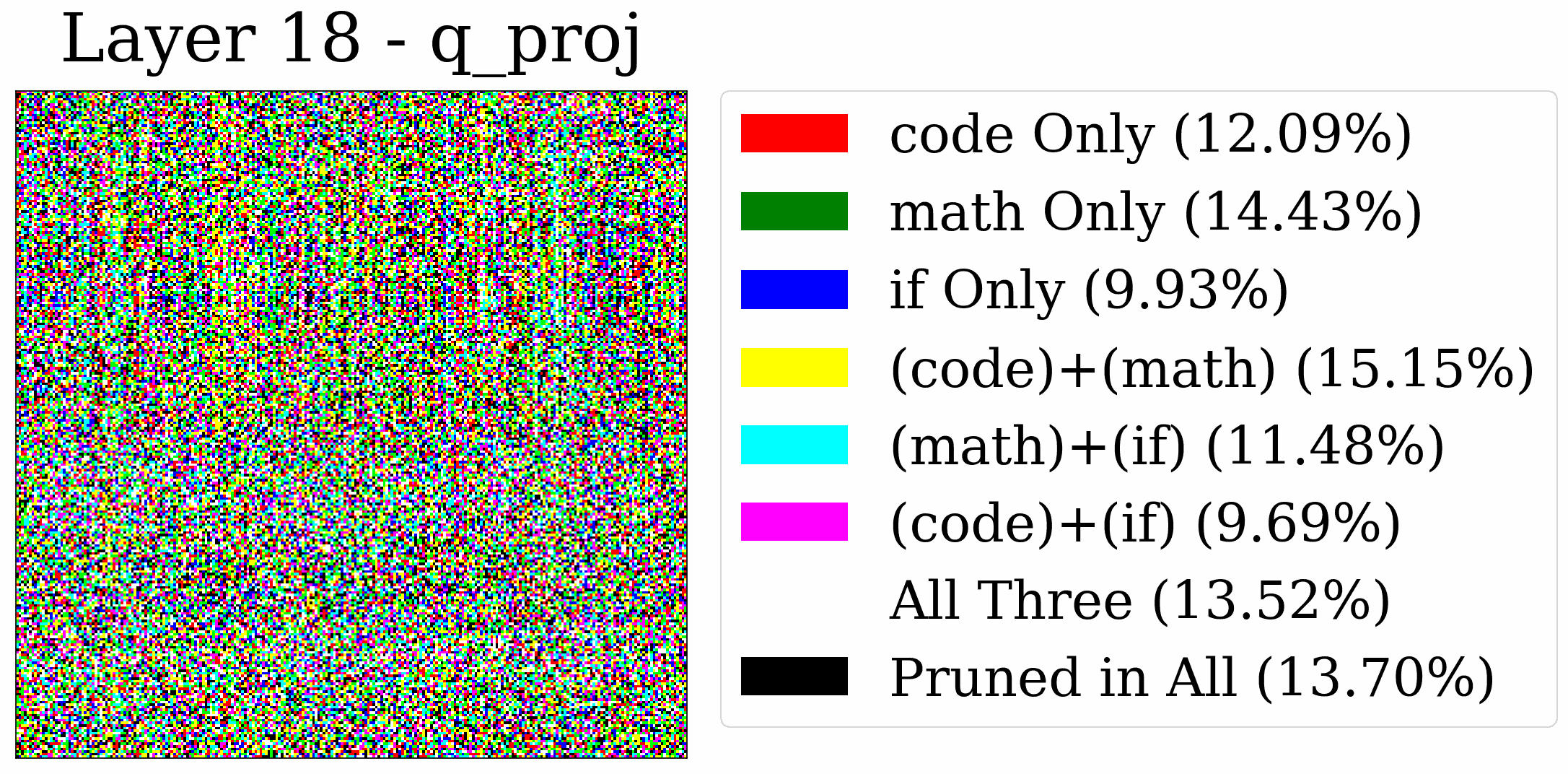}
    \end{subfigure}
    \hfill
    \begin{subfigure}[b]{0.24\textwidth}
        \includegraphics[width=\textwidth]{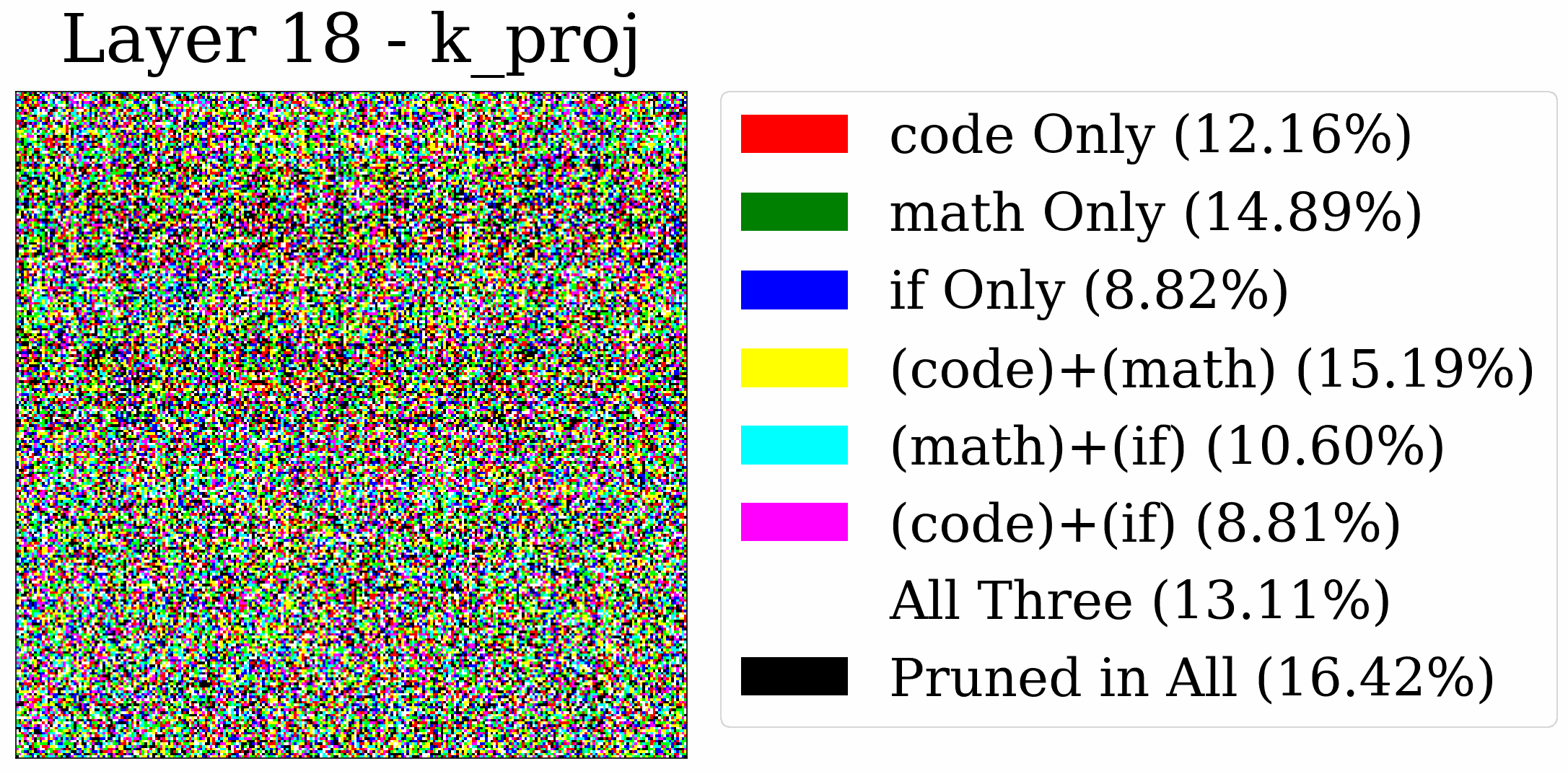}
    \end{subfigure}
    \hfill
    \begin{subfigure}[b]{0.24\textwidth}
        \includegraphics[width=\textwidth]{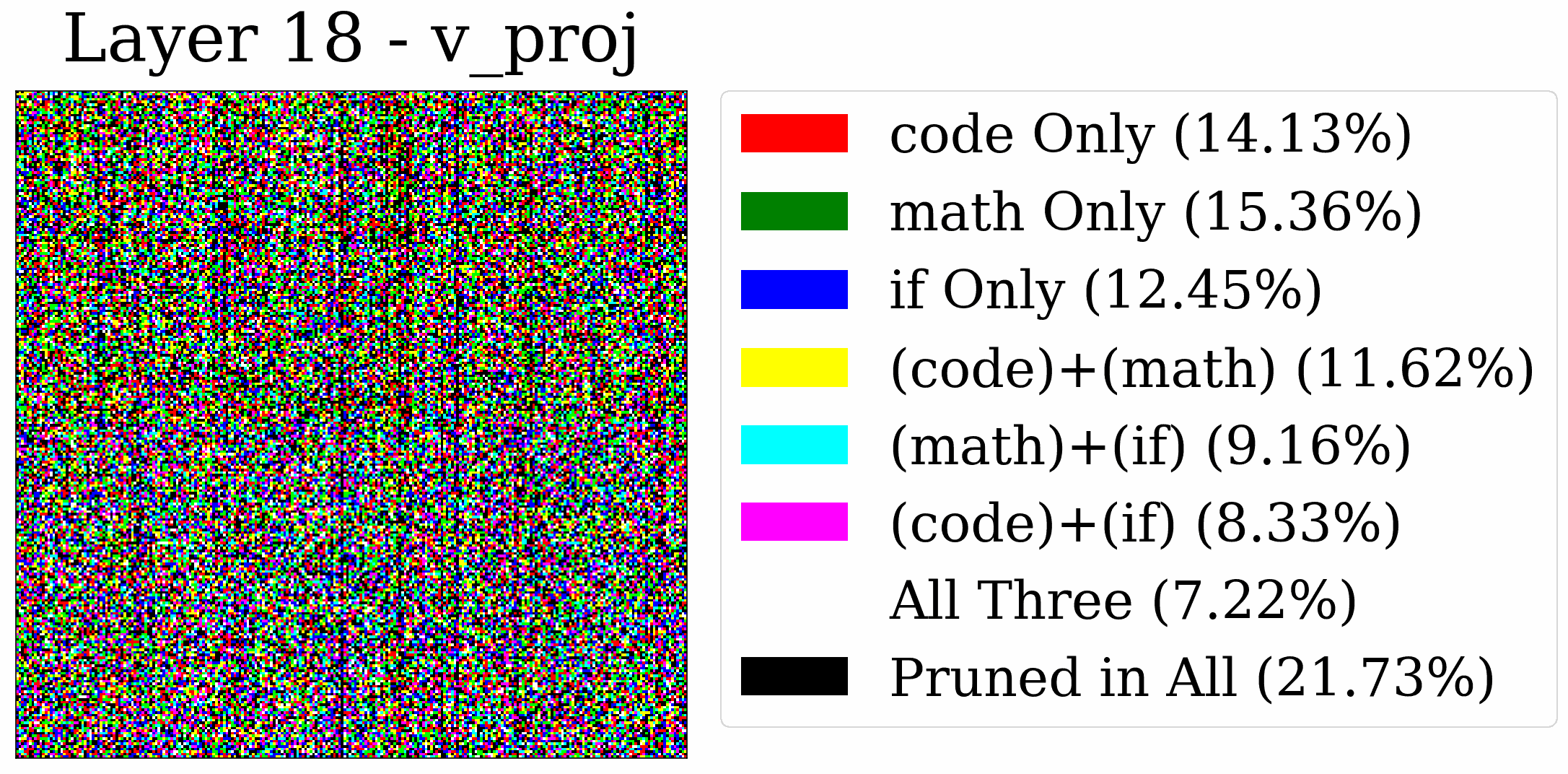}
    \end{subfigure}
    \hfill
    \begin{subfigure}[b]{0.24\textwidth}
        \includegraphics[width=\textwidth]{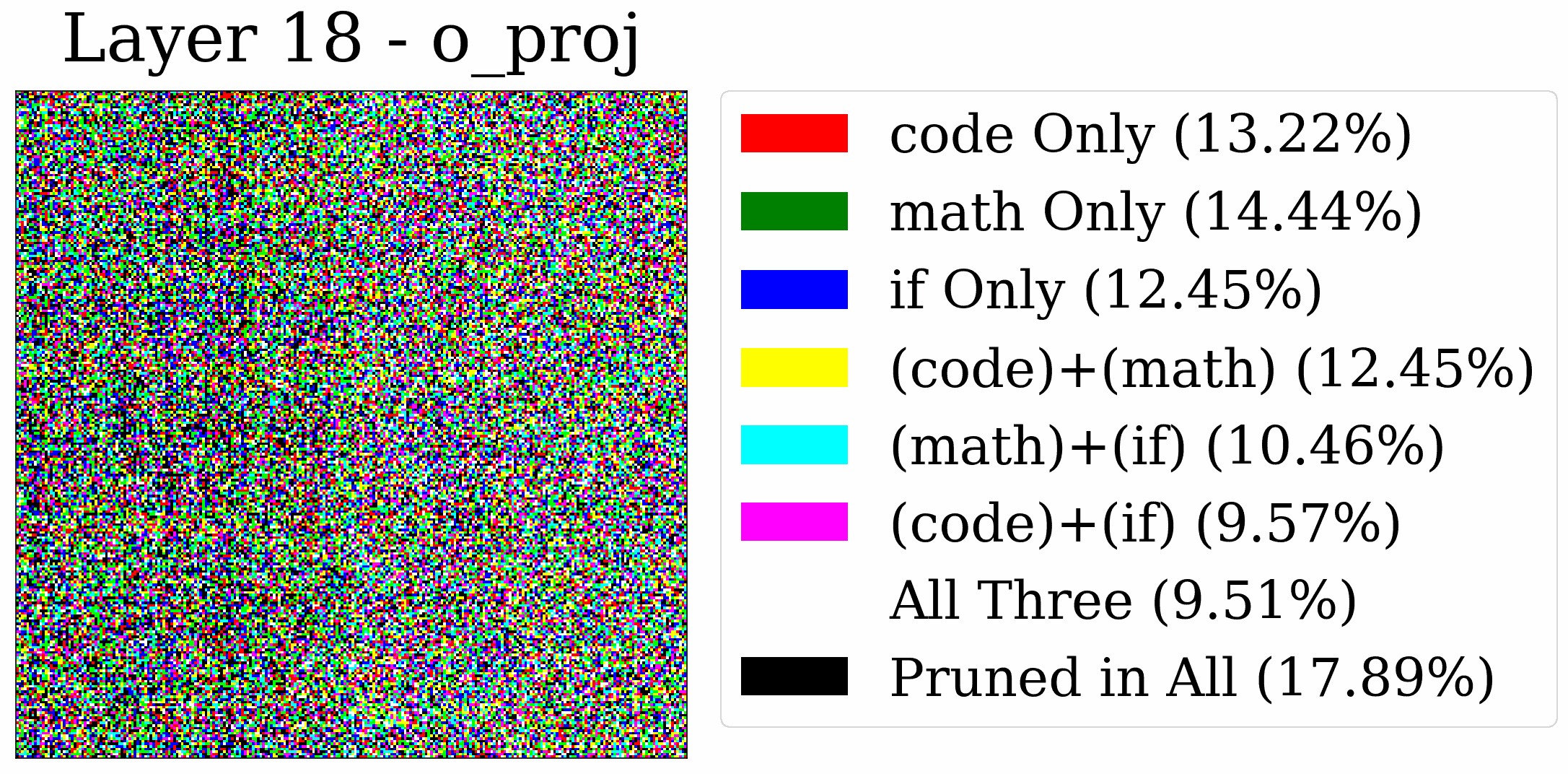}
    \end{subfigure}
    \begin{subfigure}[b]{0.24\textwidth}
        \includegraphics[width=\textwidth]{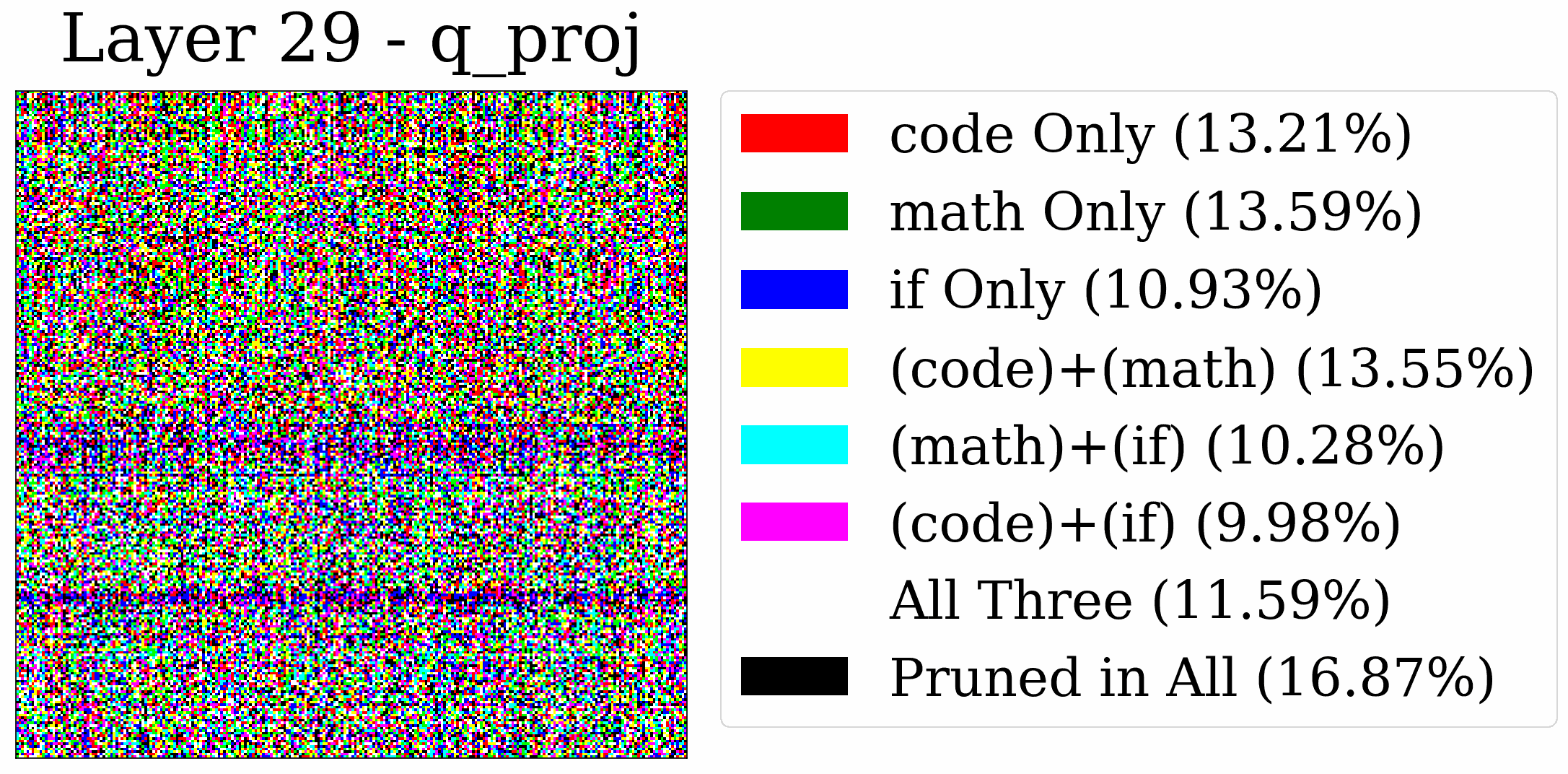}
    \end{subfigure}
    \hfill
    \begin{subfigure}[b]{0.24\textwidth}
        \includegraphics[width=\textwidth]{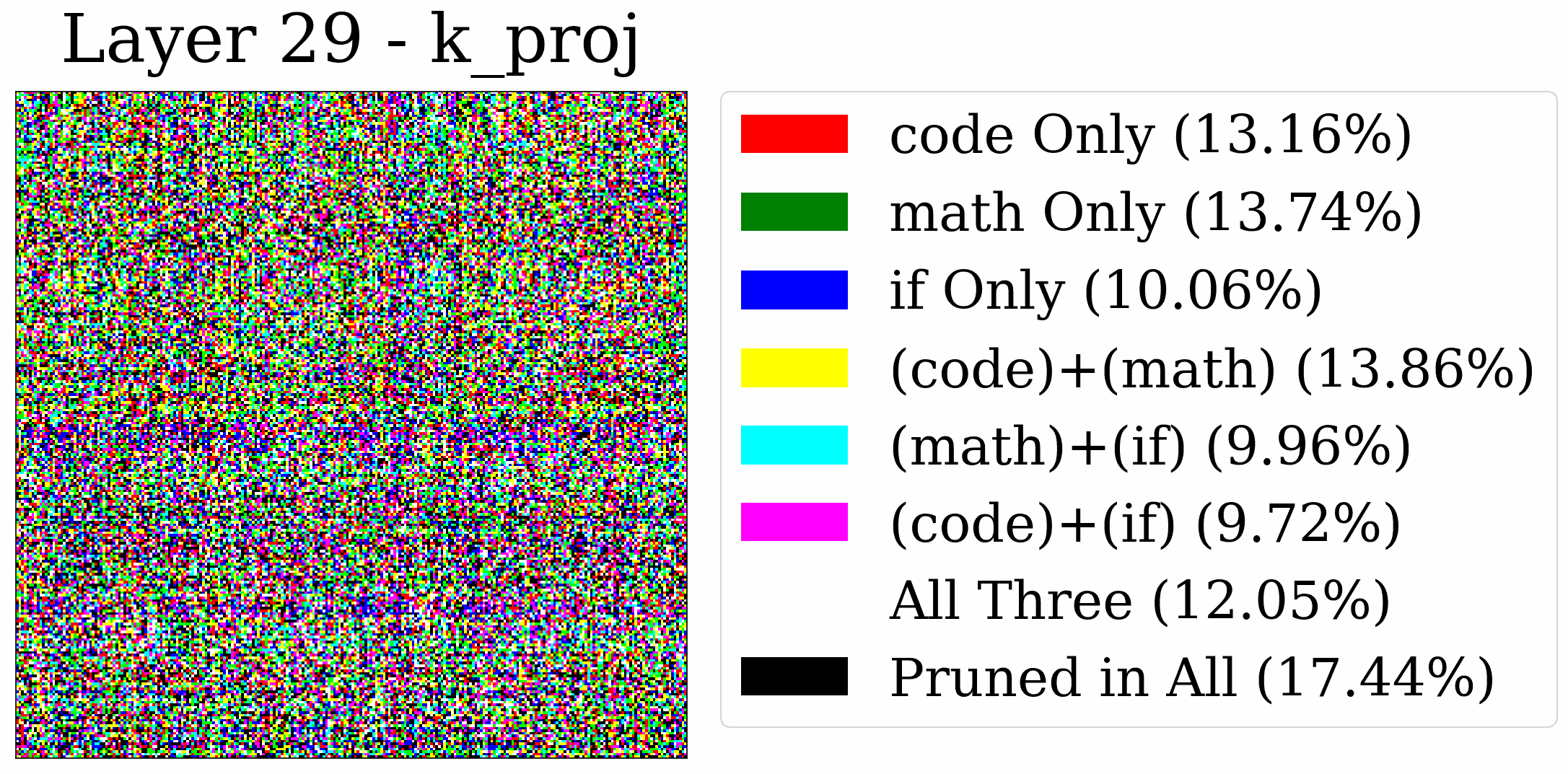}
    \end{subfigure}
    \hfill
    \begin{subfigure}[b]{0.24\textwidth}
        \includegraphics[width=\textwidth]{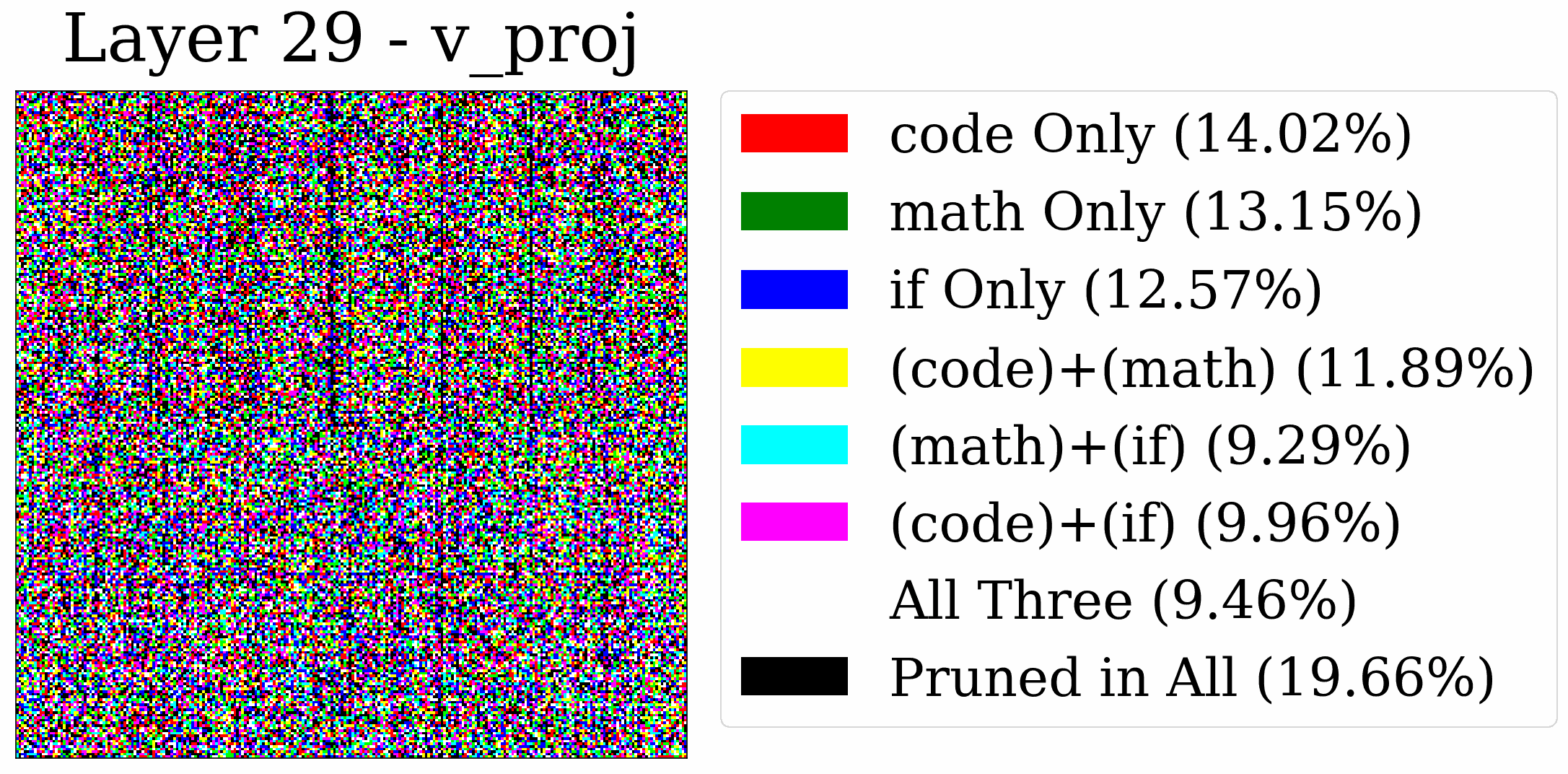}
    \end{subfigure}
    \hfill
    \begin{subfigure}[b]{0.24\textwidth}
        \includegraphics[width=\textwidth]{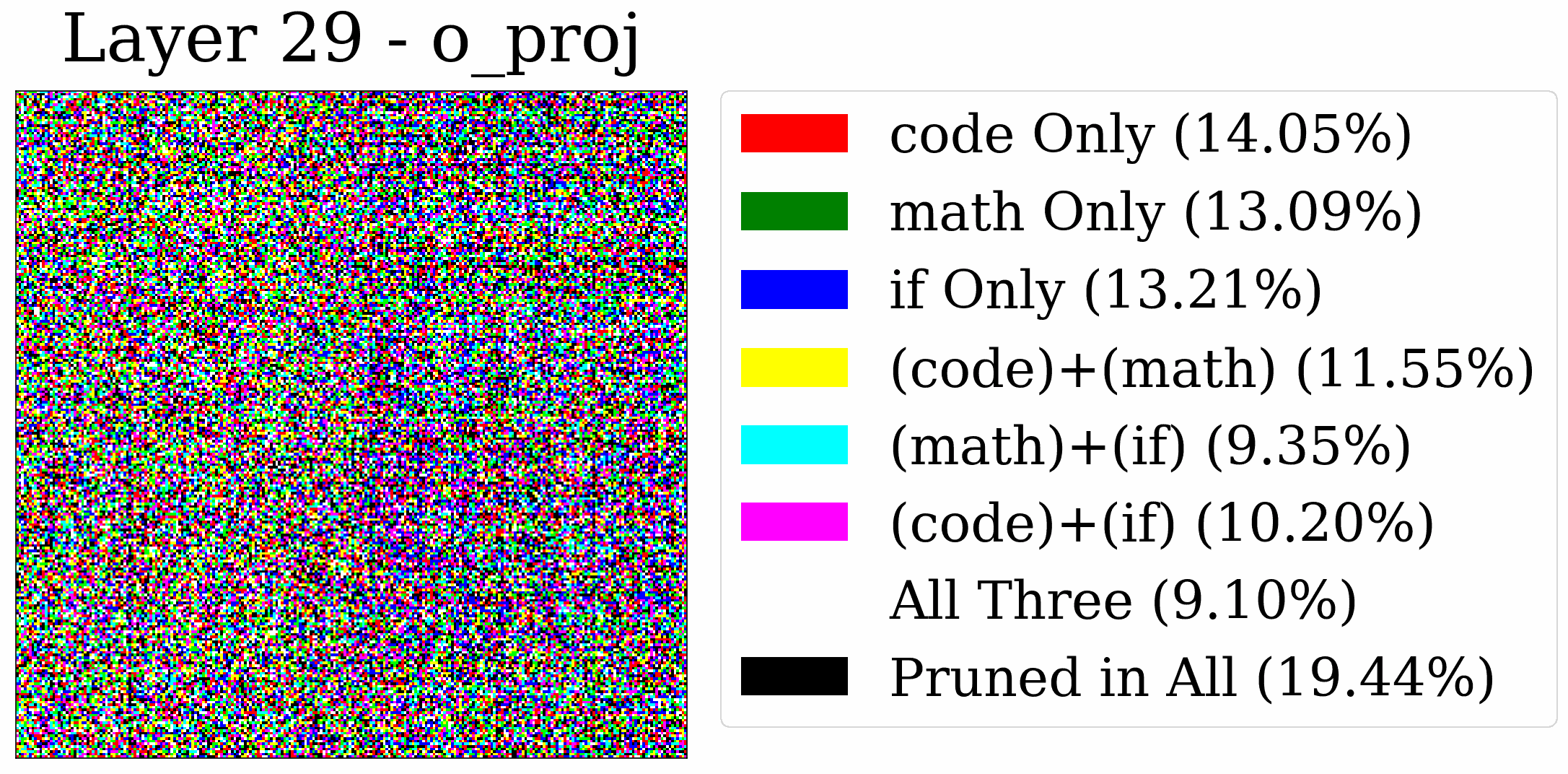}
    \end{subfigure}
    \begin{subfigure}[b]{0.24\textwidth}
        \includegraphics[width=\textwidth]{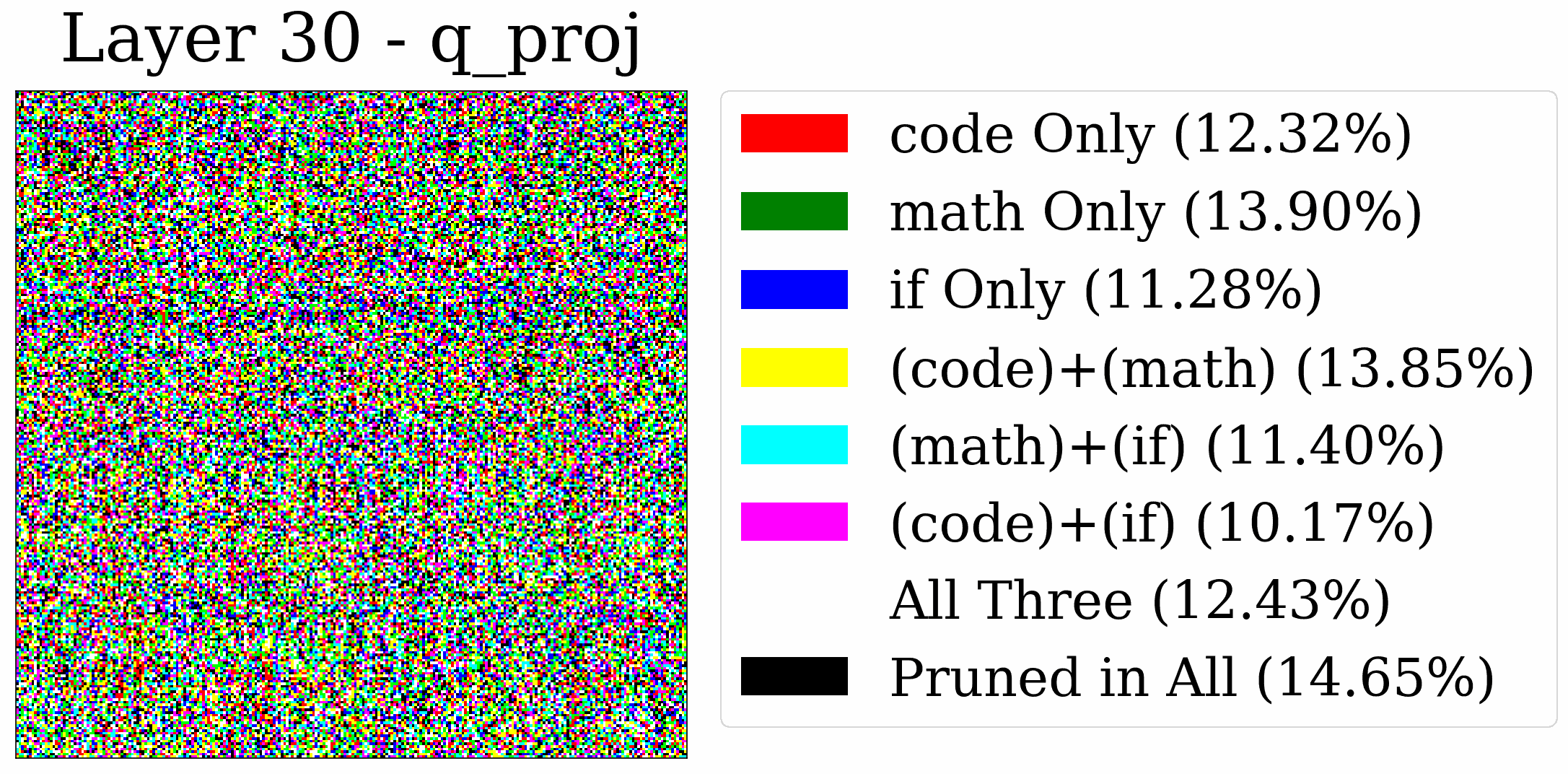}
    \end{subfigure}
    \hfill
    \begin{subfigure}[b]{0.24\textwidth}
        \includegraphics[width=\textwidth]{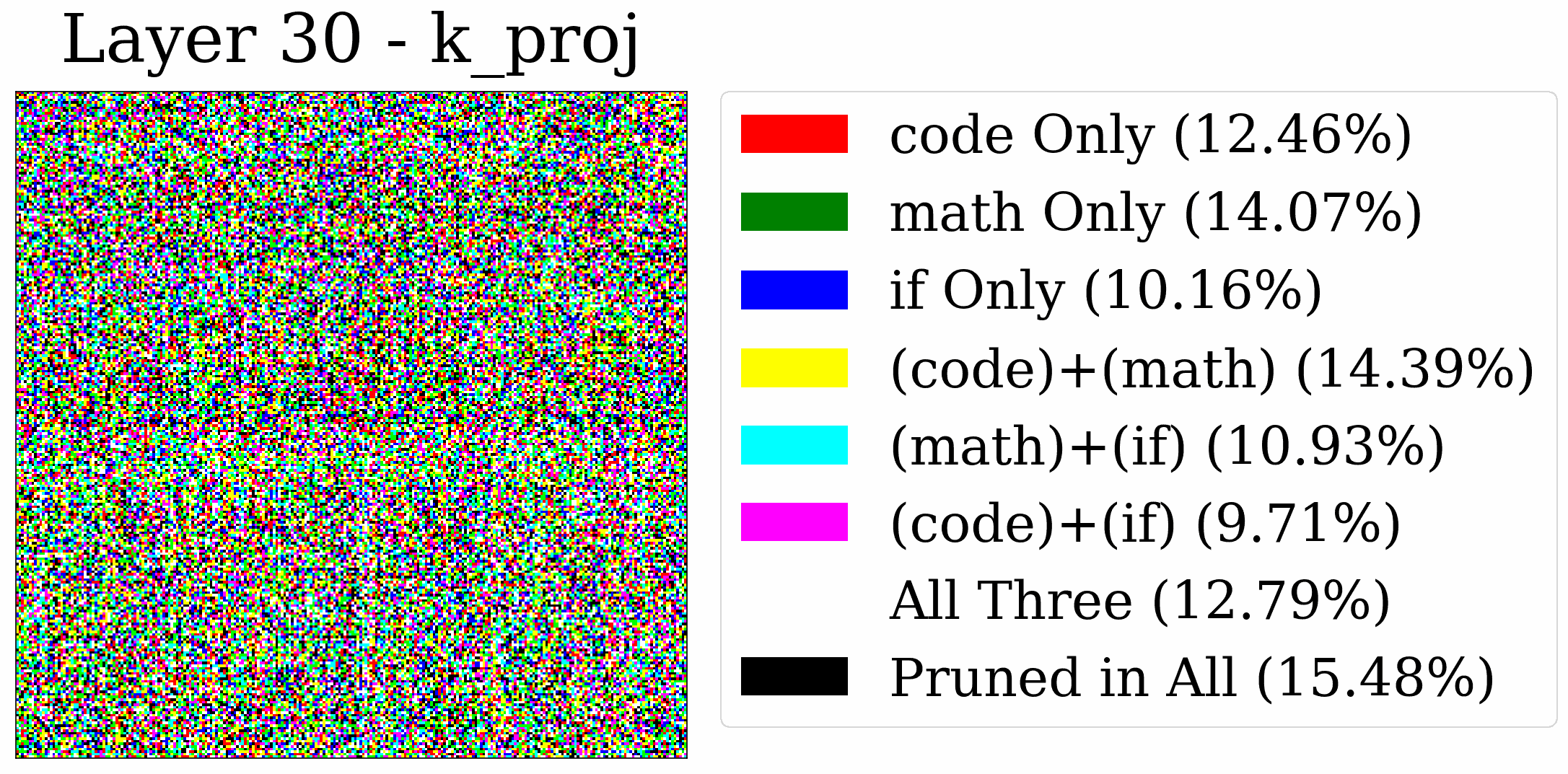}
    \end{subfigure}
    \hfill
    \begin{subfigure}[b]{0.24\textwidth}
        \includegraphics[width=\textwidth]{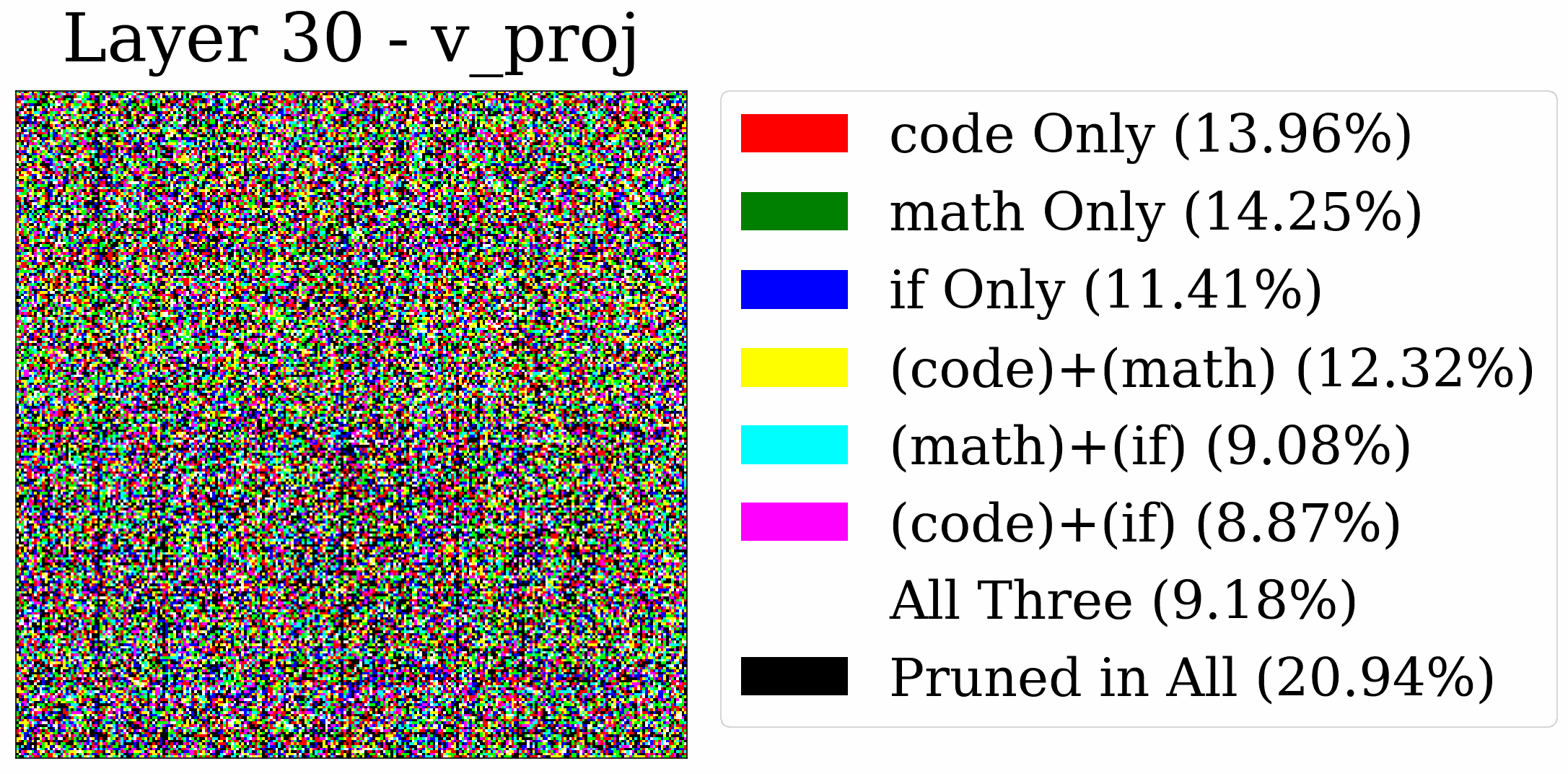}
    \end{subfigure}
    \hfill
    \begin{subfigure}[b]{0.24\textwidth}
        \includegraphics[width=\textwidth]{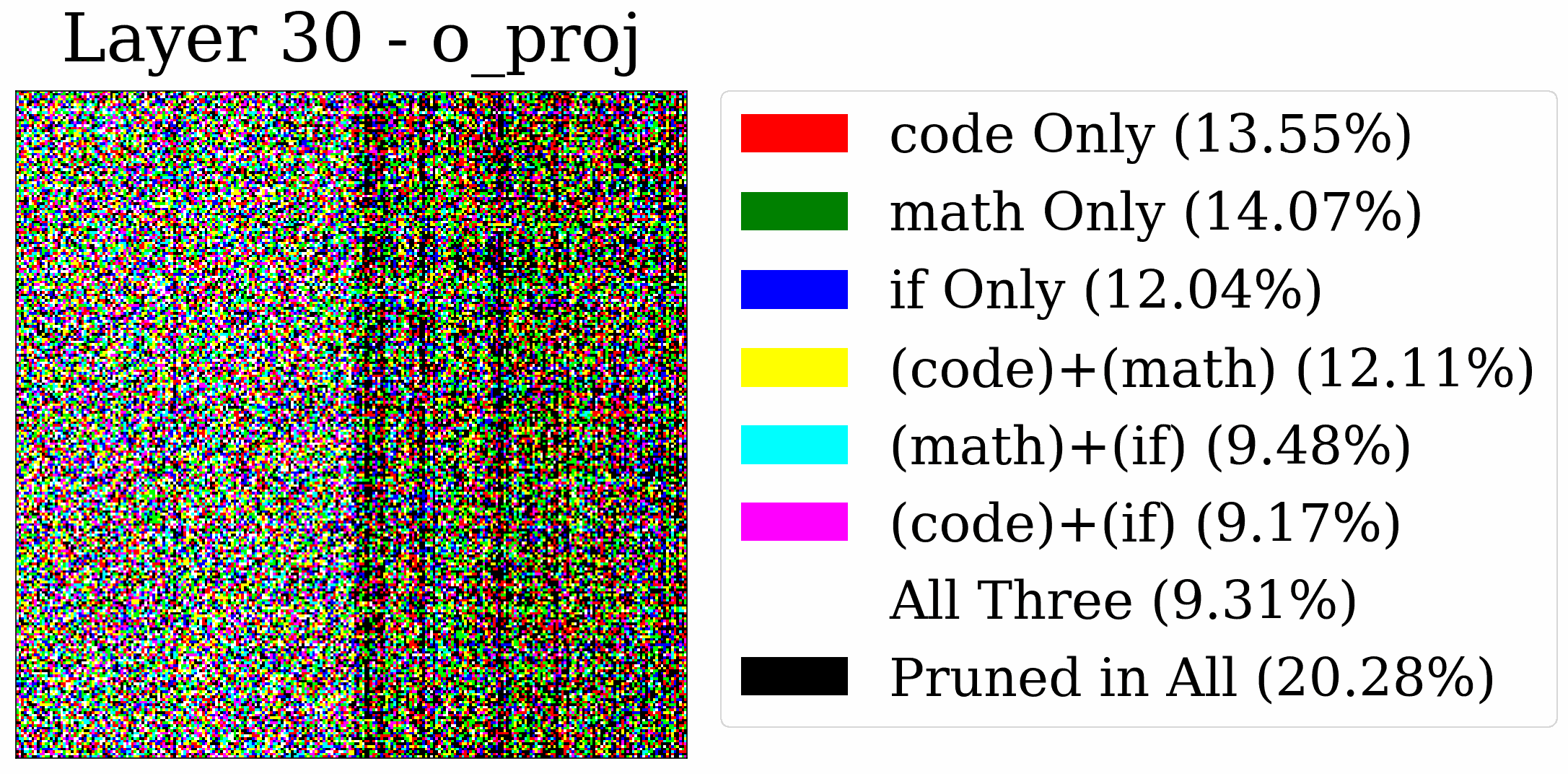}
    \end{subfigure}
    \caption{3-way magnitude-based comparison for attention components across layers 1, 2, 15, 18, 29, and 30. Columns show $W_q$, $W_k$, $W_v$, and $W_o$.}
    \label{fig:3way_mag_attention_layers}
\end{figure}

\begin{figure}[!ht]
    \centering
    \begin{subfigure}[b]{0.32\textwidth}
        \includegraphics[width=\textwidth]{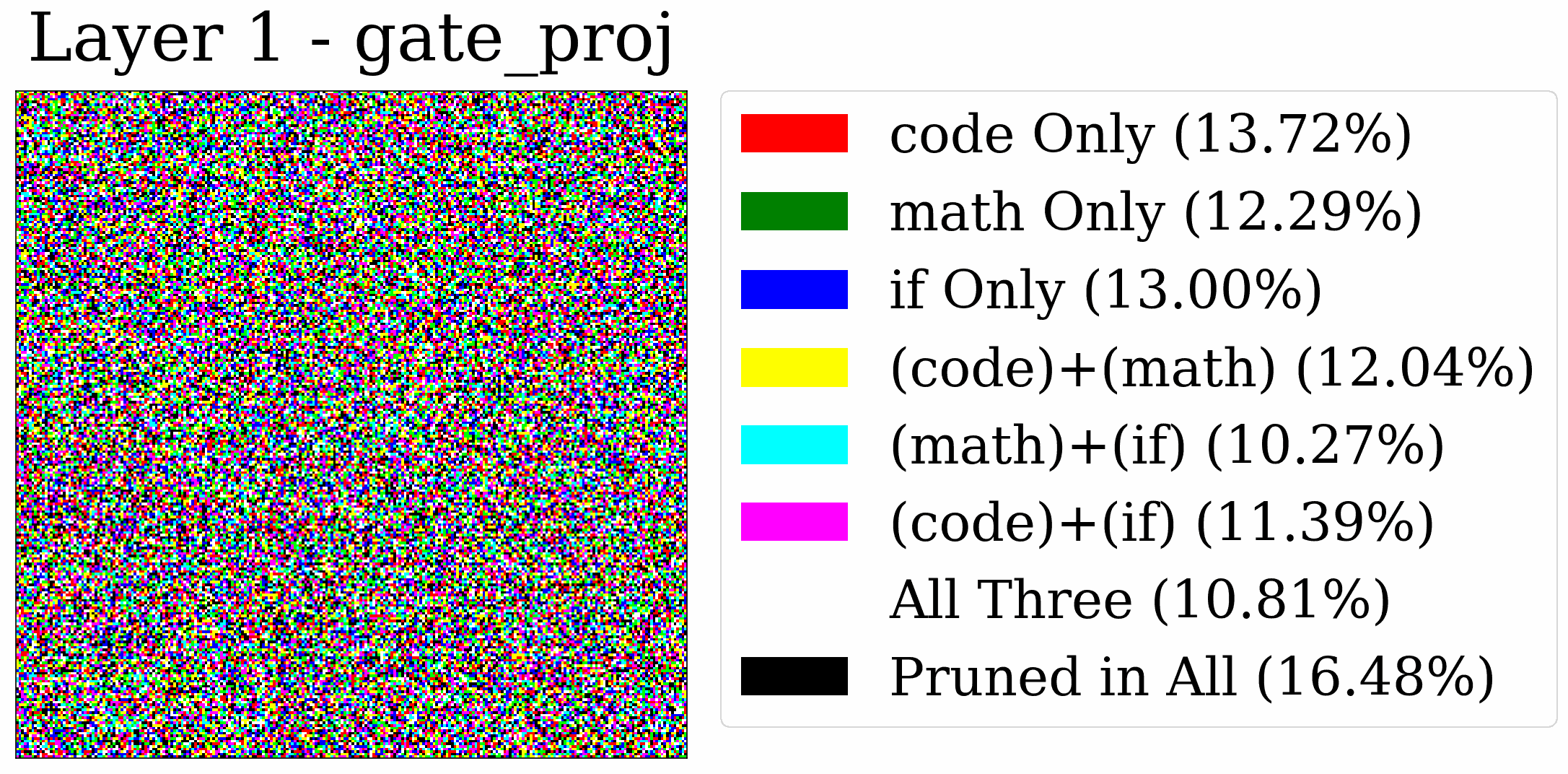}
    \end{subfigure}
    \hfill
    \begin{subfigure}[b]{0.32\textwidth}
        \includegraphics[width=\textwidth]{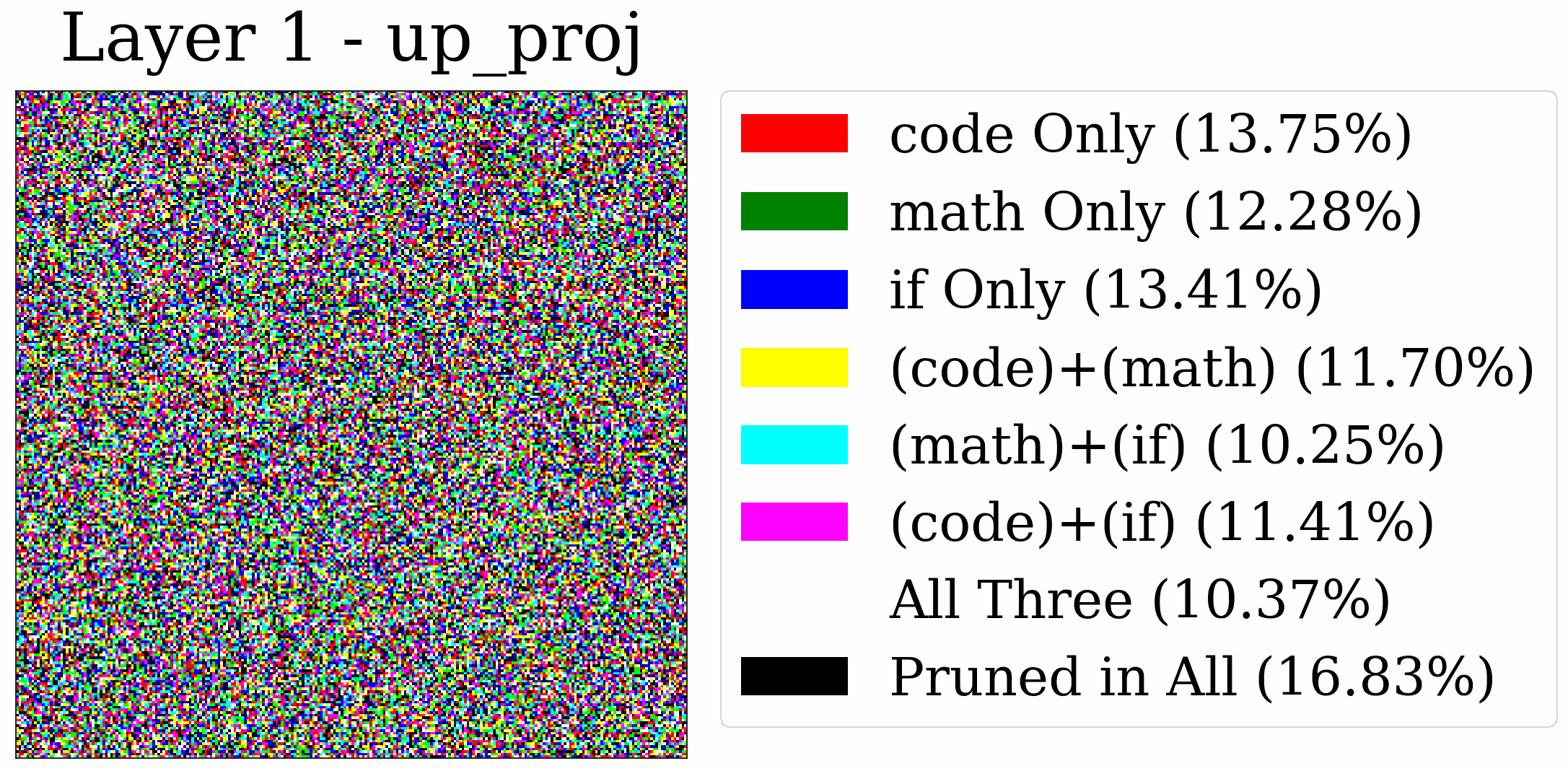}
    \end{subfigure}
    \hfill
    \begin{subfigure}[b]{0.32\textwidth}
        \includegraphics[width=\textwidth]{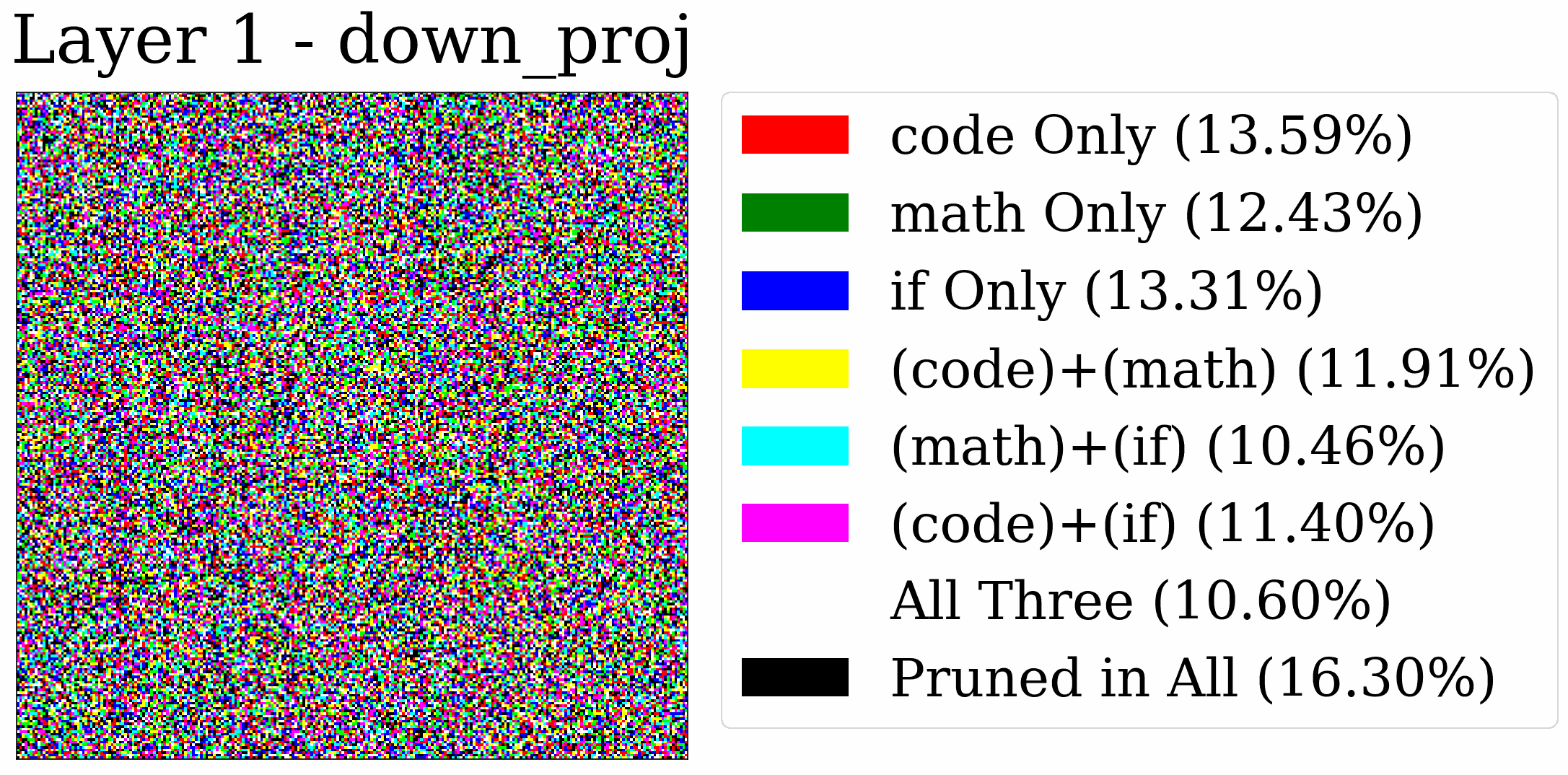}
    \end{subfigure}
    \begin{subfigure}[b]{0.32\textwidth}
        \includegraphics[width=\textwidth]{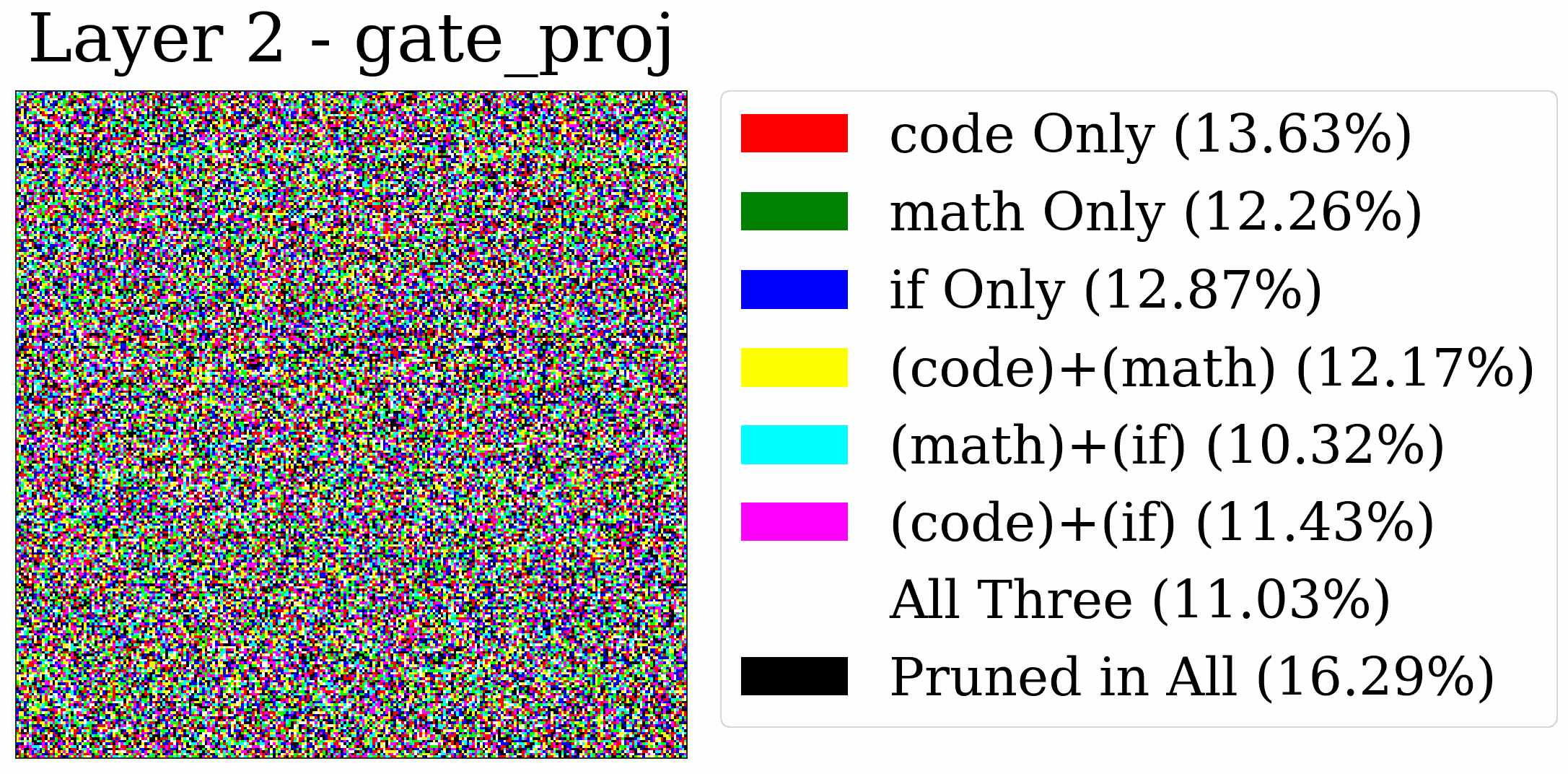}
    \end{subfigure}
    \hfill
    \begin{subfigure}[b]{0.32\textwidth}
        \includegraphics[width=\textwidth]{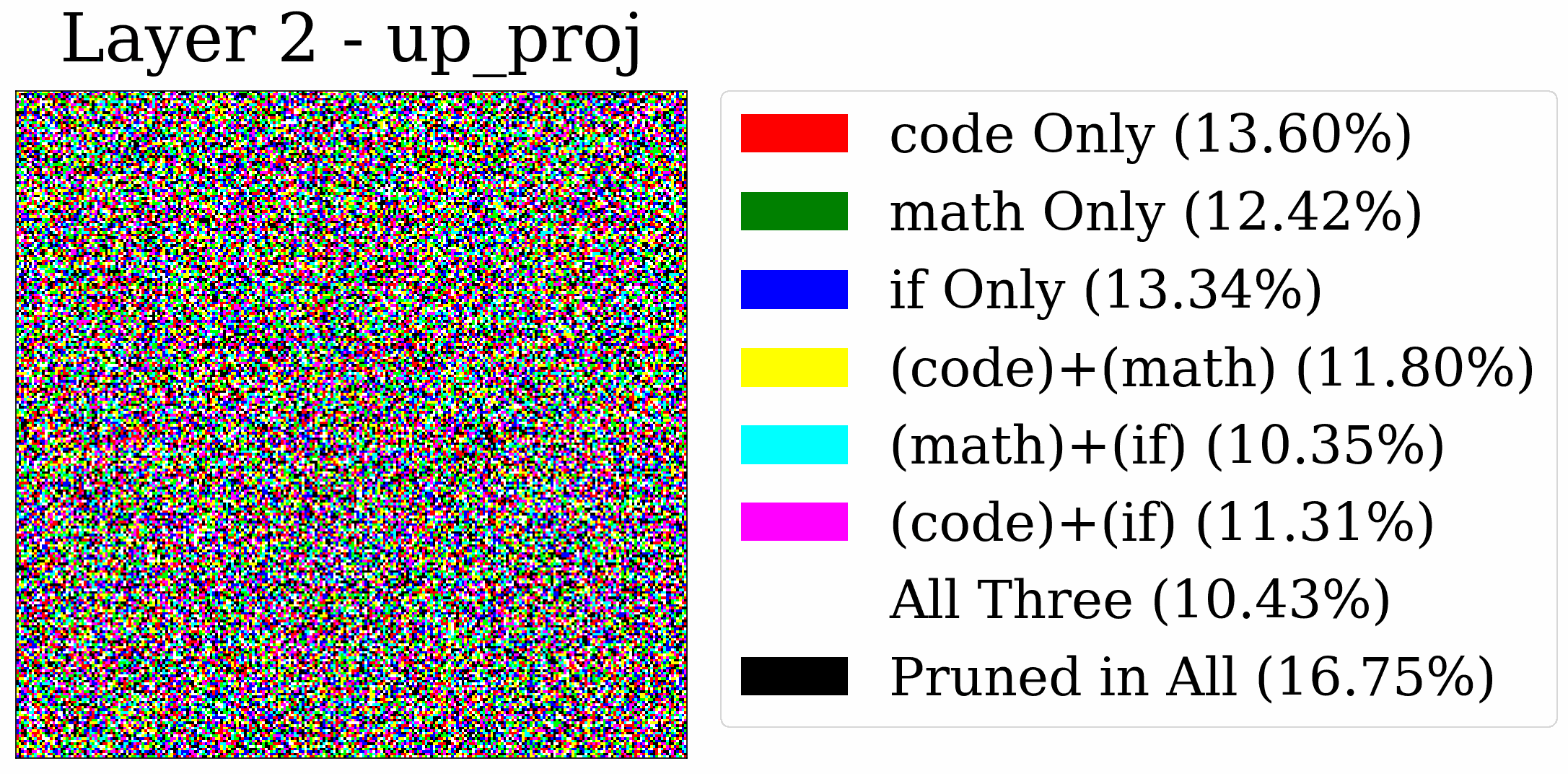}
    \end{subfigure}
    \hfill
    \begin{subfigure}[b]{0.32\textwidth}
        \includegraphics[width=\textwidth]{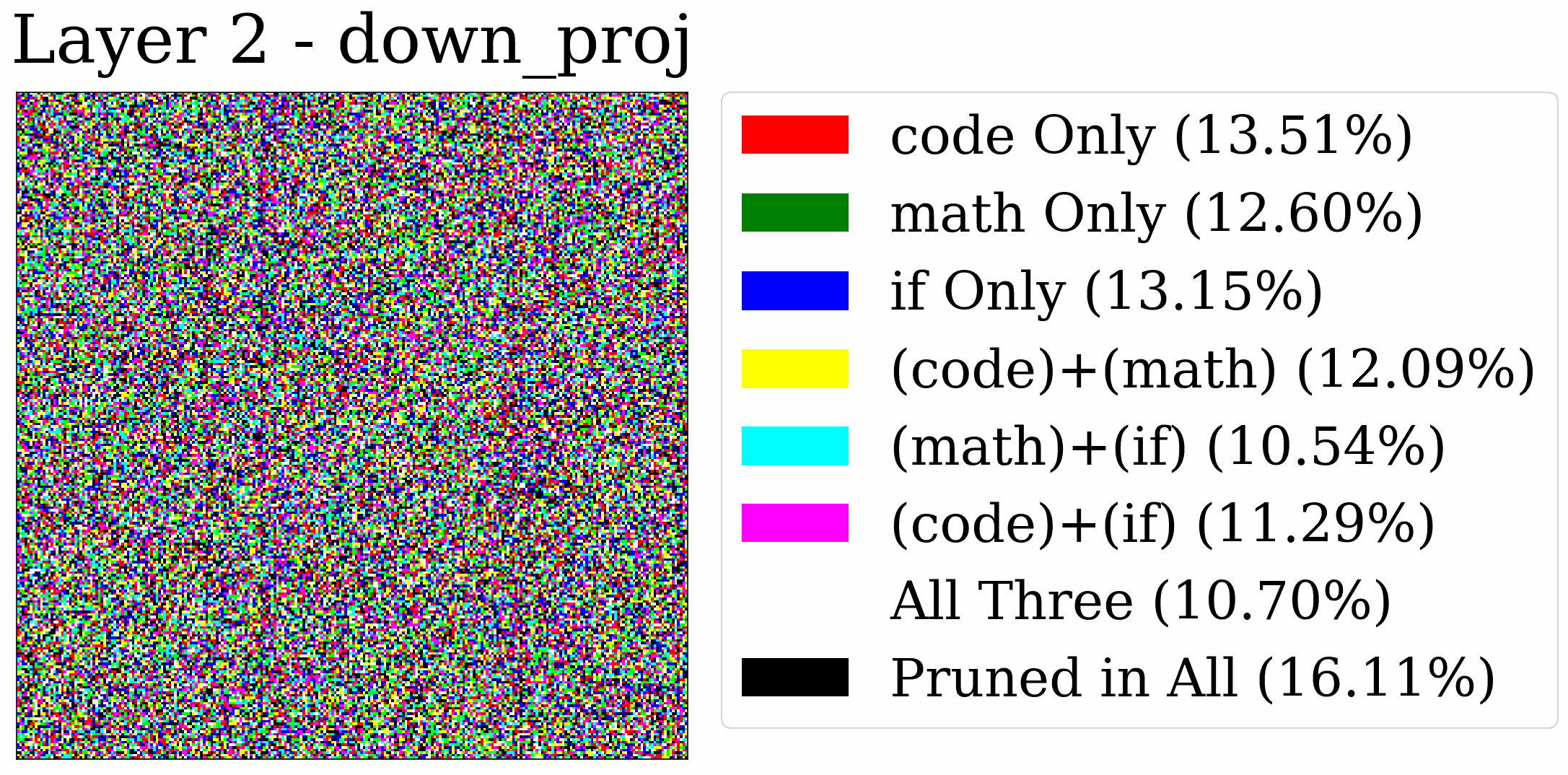}
    \end{subfigure}
    \begin{subfigure}[b]{0.32\textwidth}
        \includegraphics[width=\textwidth]{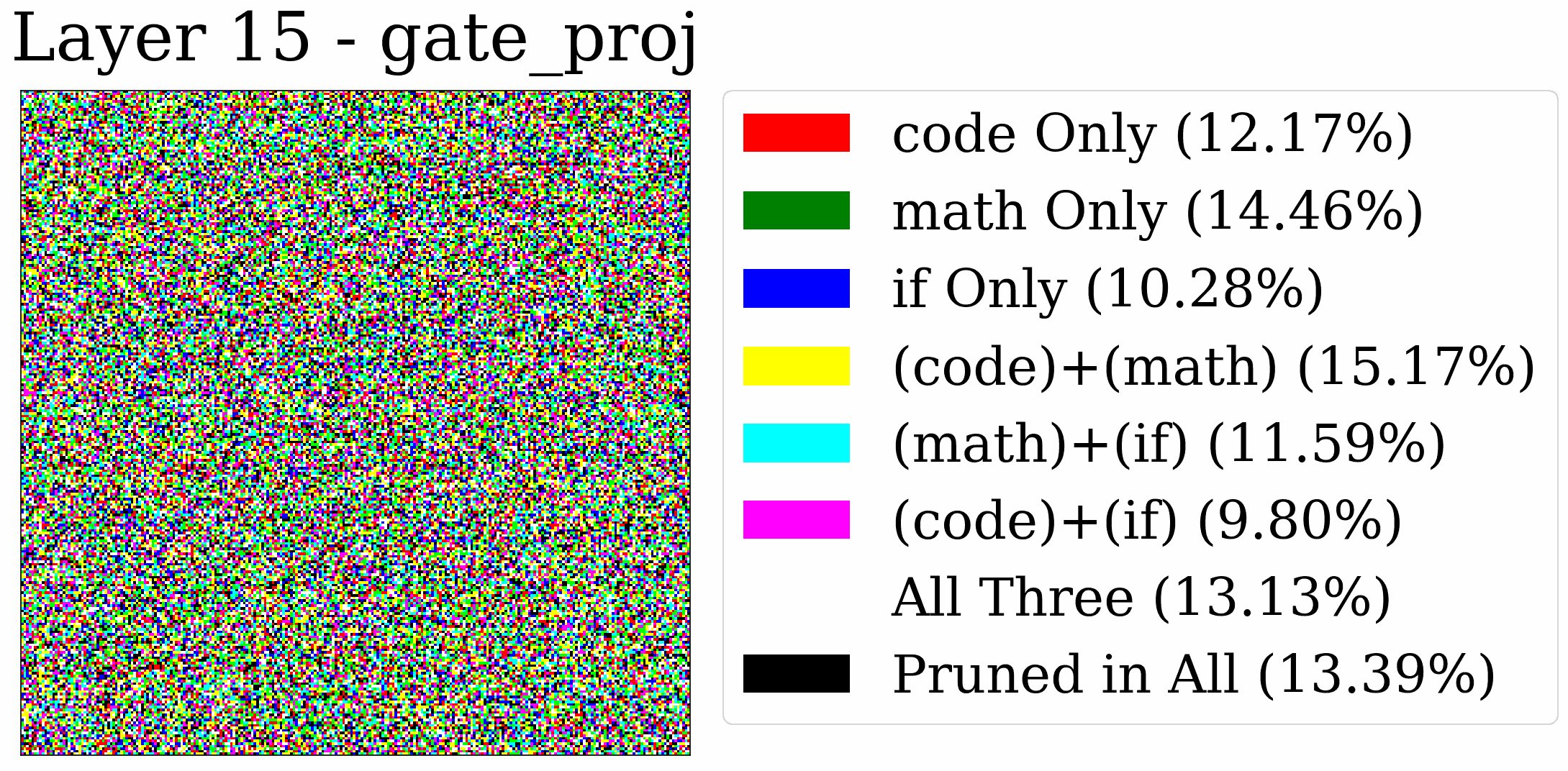}
    \end{subfigure}
    \hfill
    \begin{subfigure}[b]{0.32\textwidth}
        \includegraphics[width=\textwidth]{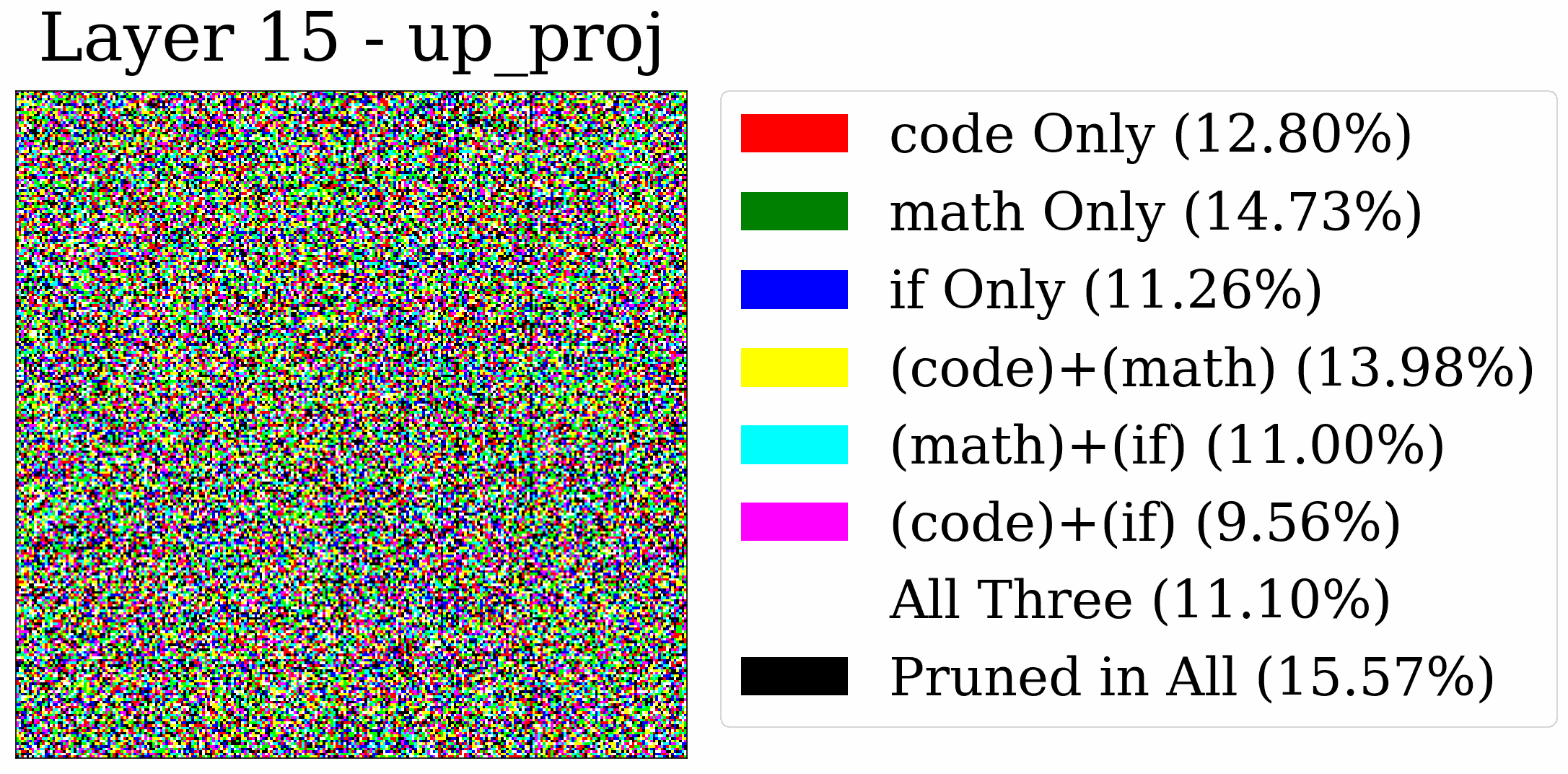}
    \end{subfigure}
    \hfill
    \begin{subfigure}[b]{0.32\textwidth}
        \includegraphics[width=\textwidth]{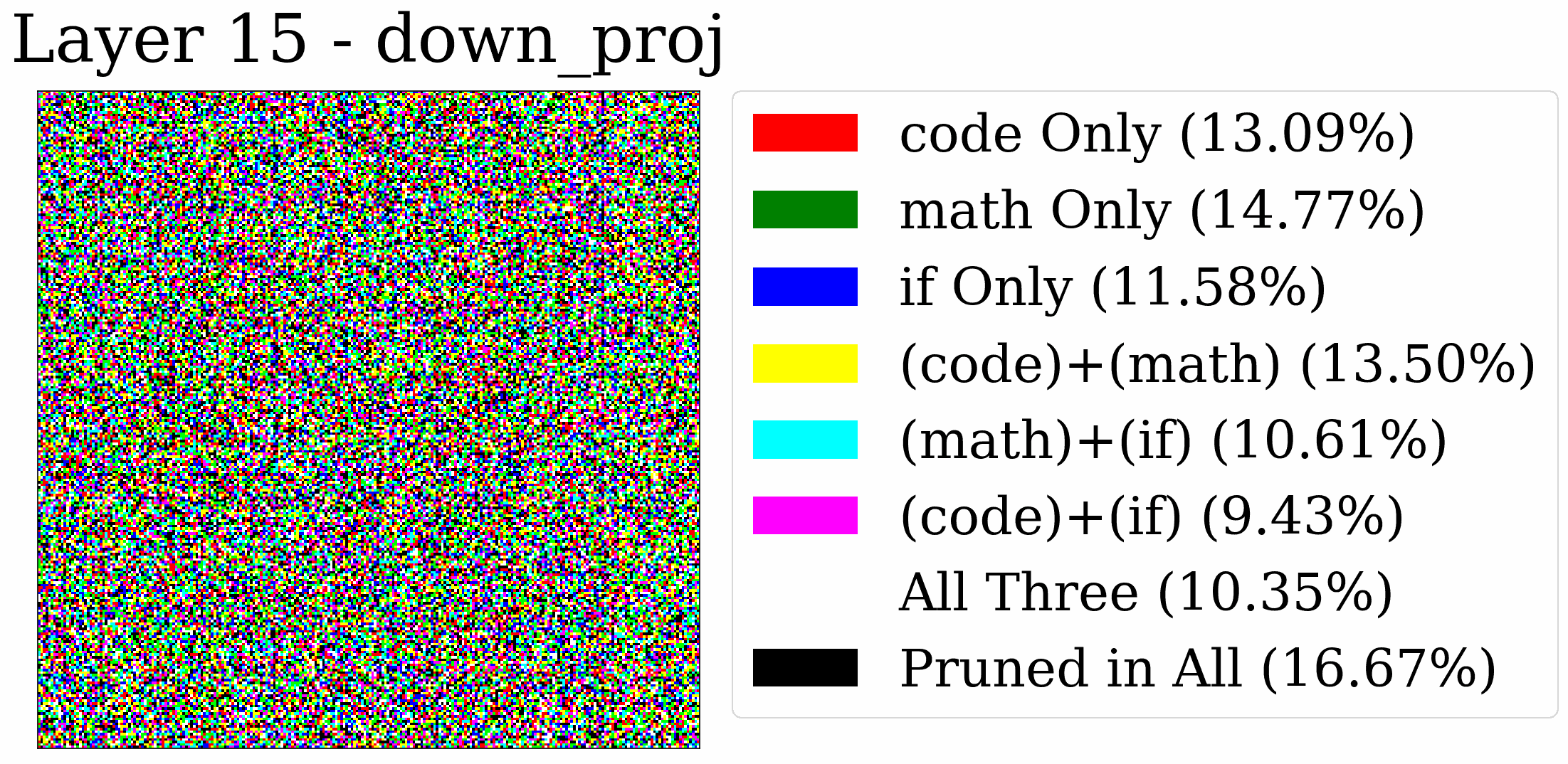}
    \end{subfigure}
    \begin{subfigure}[b]{0.32\textwidth}
        \includegraphics[width=\textwidth]{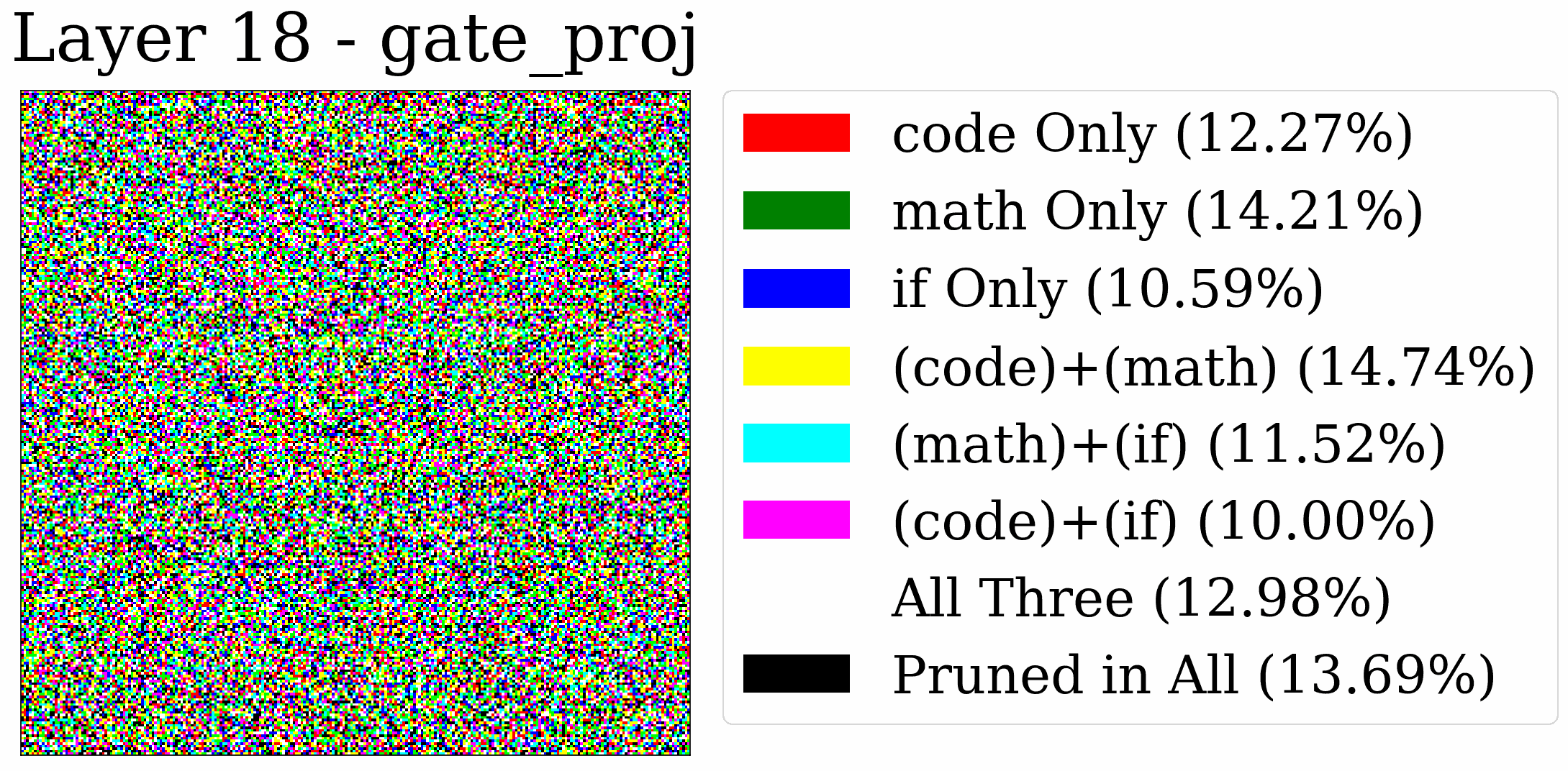}
    \end{subfigure}
    \hfill
    \begin{subfigure}[b]{0.32\textwidth}
        \includegraphics[width=\textwidth]{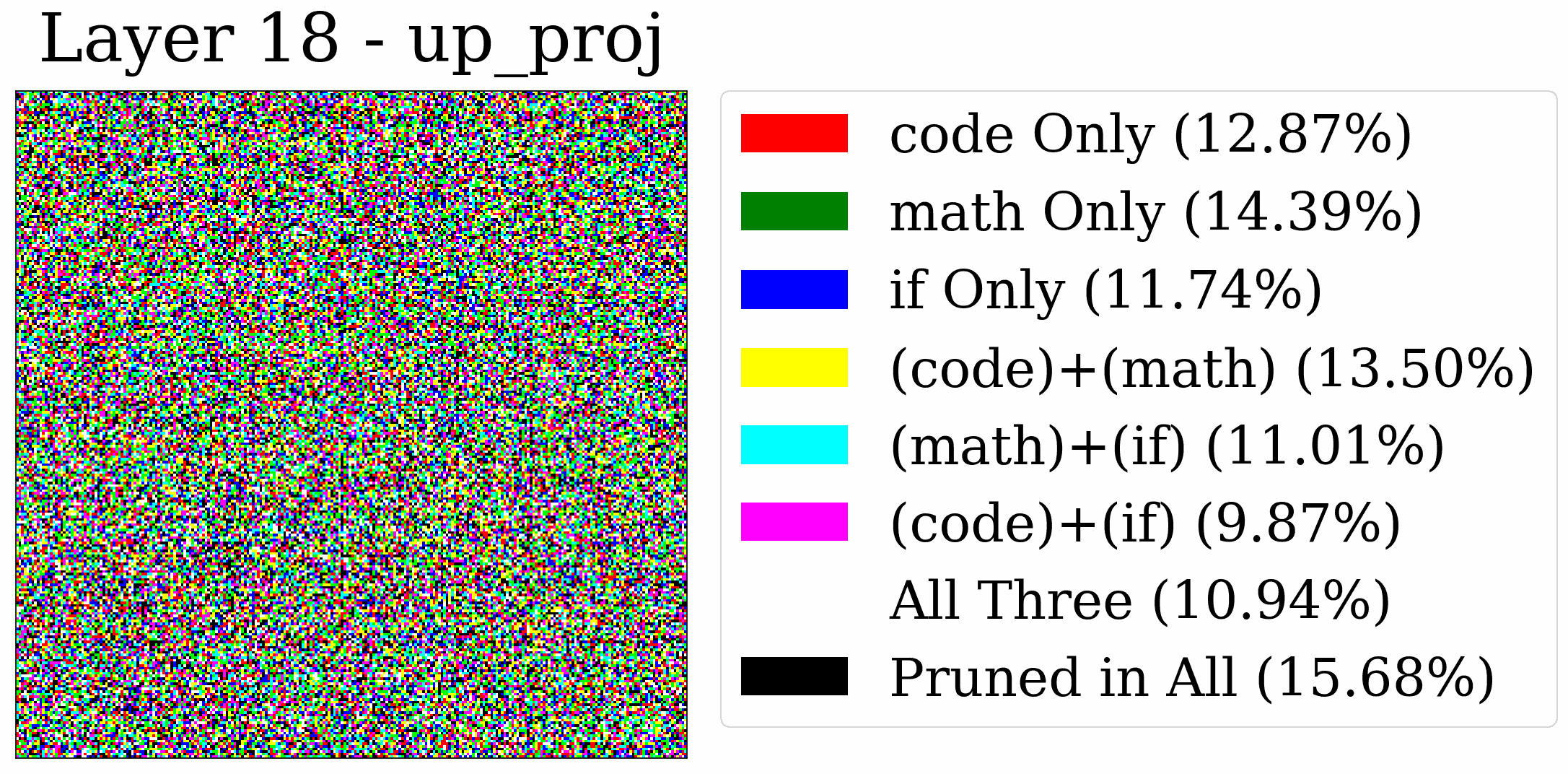}
    \end{subfigure}
    \hfill
    \begin{subfigure}[b]{0.32\textwidth}
        \includegraphics[width=\textwidth]{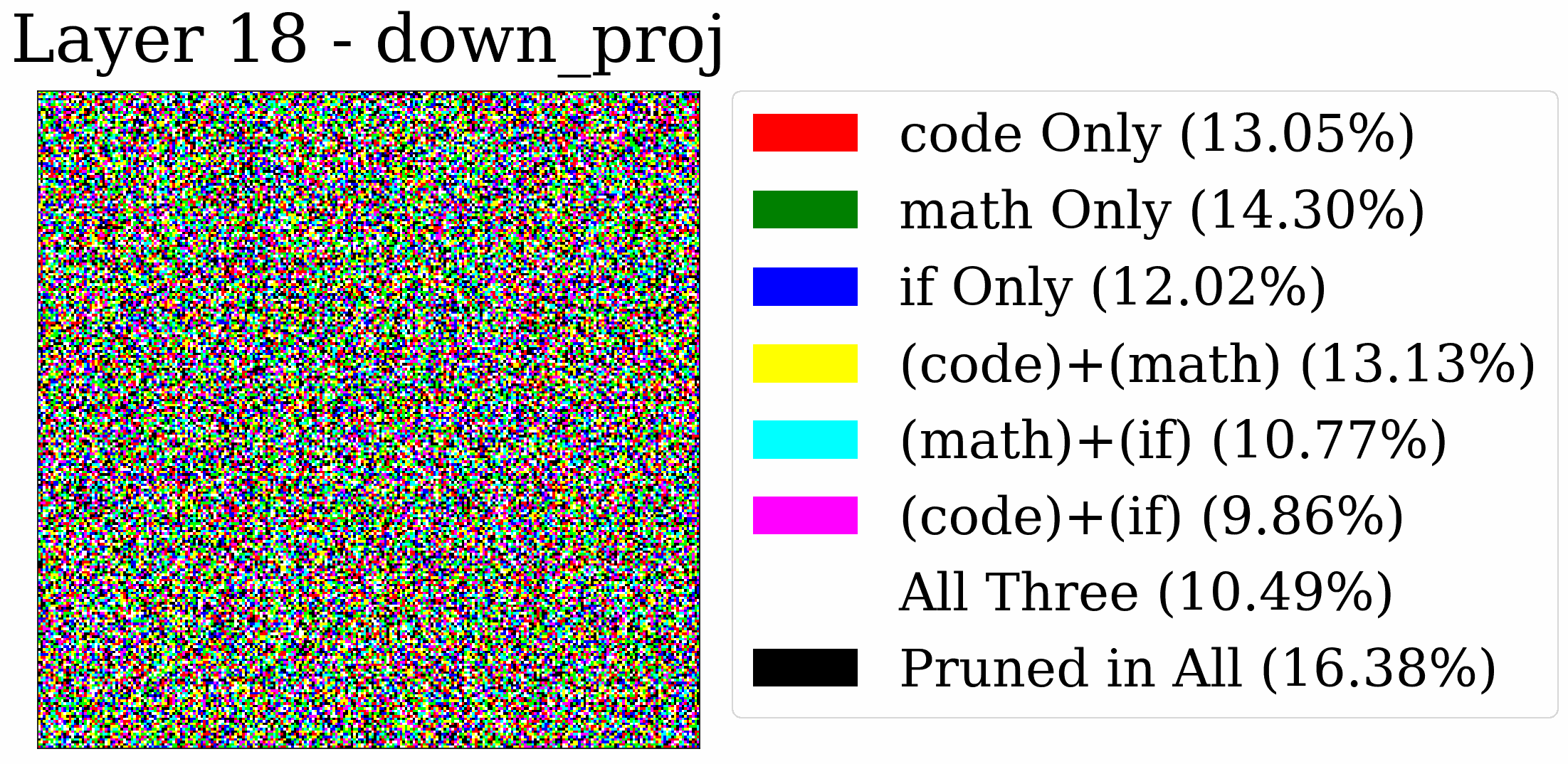}
    \end{subfigure}
    \begin{subfigure}[b]{0.32\textwidth}
        \includegraphics[width=\textwidth]{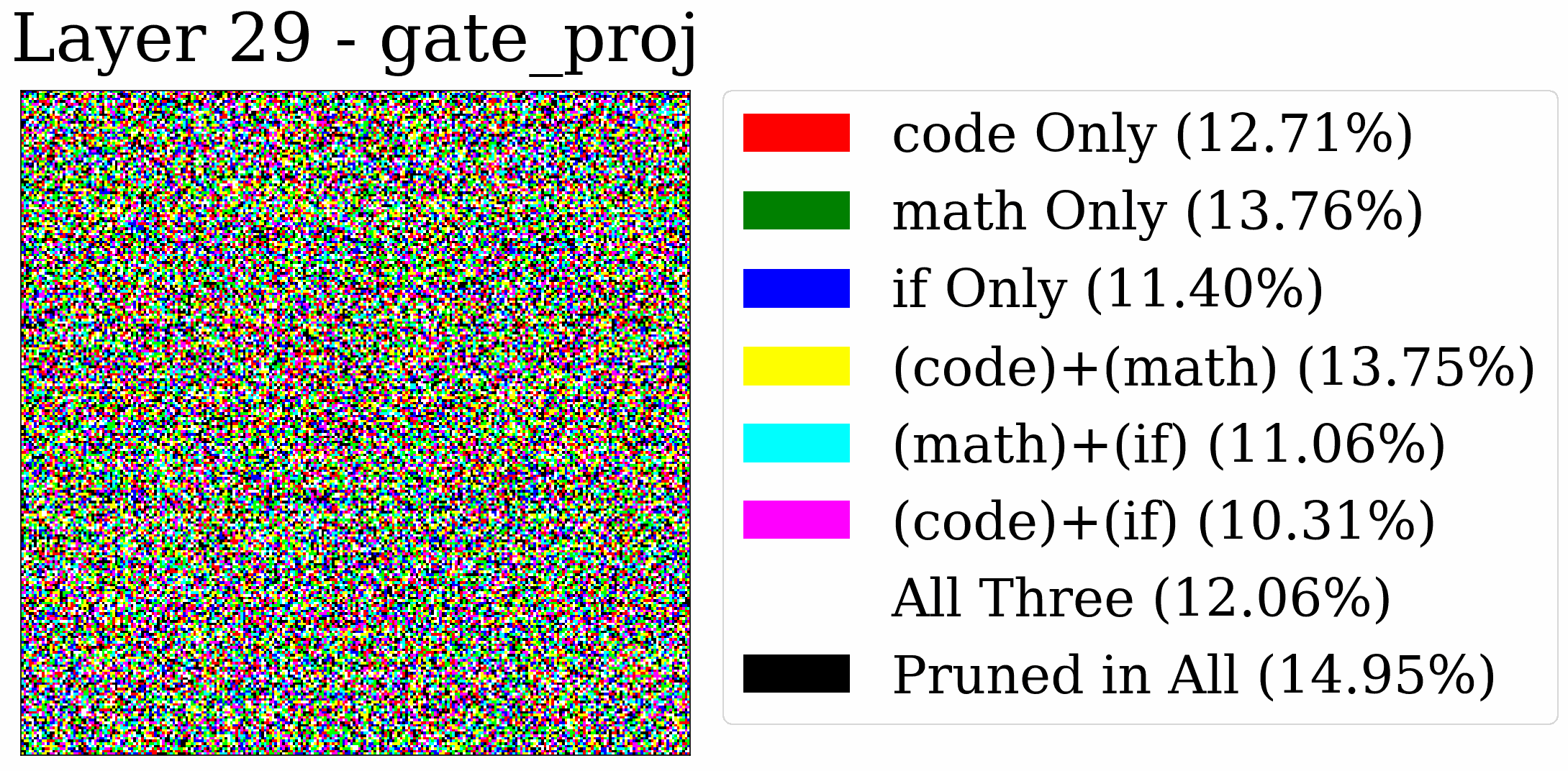}
    \end{subfigure}
    \hfill
    \begin{subfigure}[b]{0.32\textwidth}
        \includegraphics[width=\textwidth]{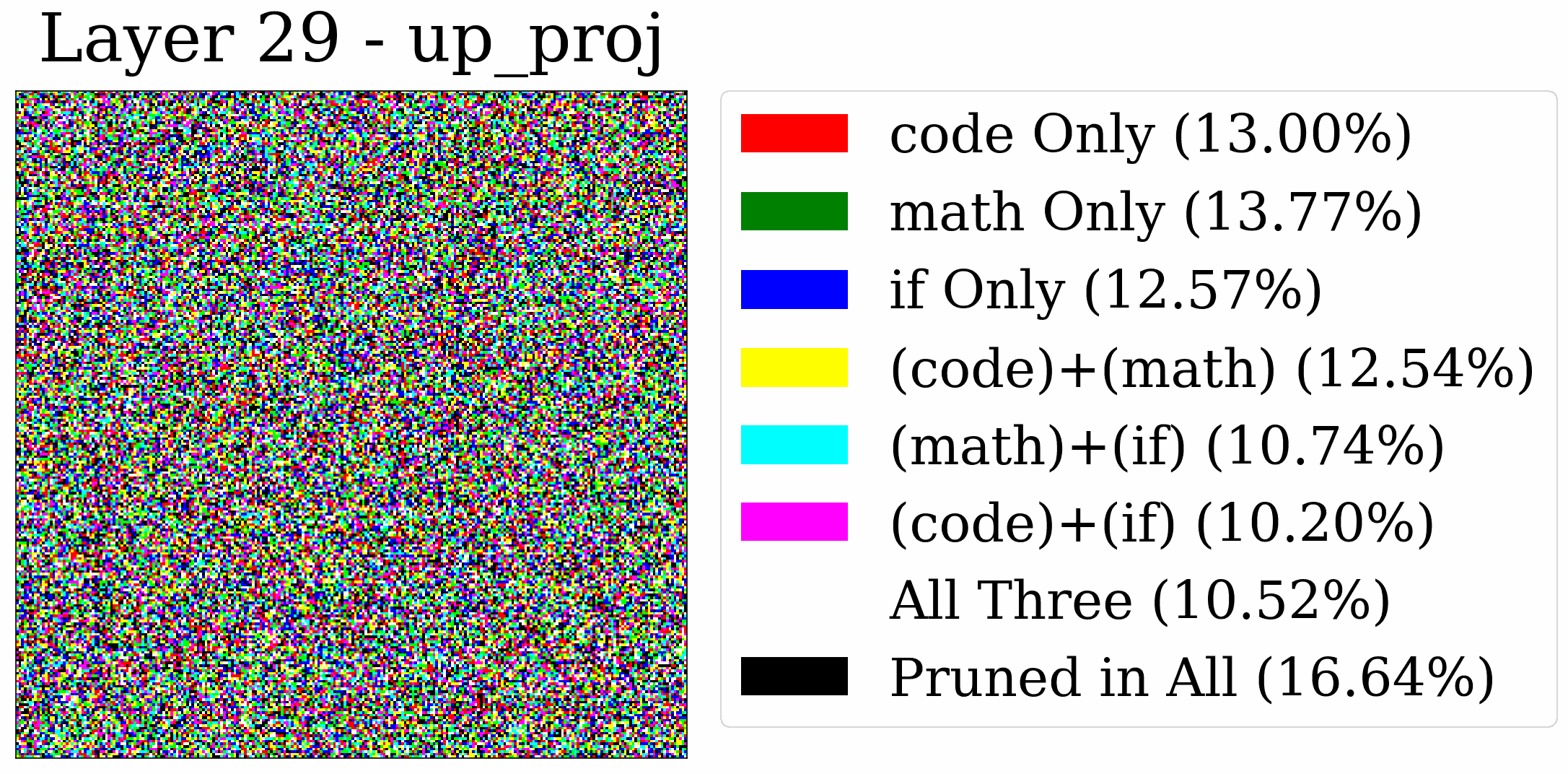}
    \end{subfigure}
    \hfill
    \begin{subfigure}[b]{0.32\textwidth}
        \includegraphics[width=\textwidth]{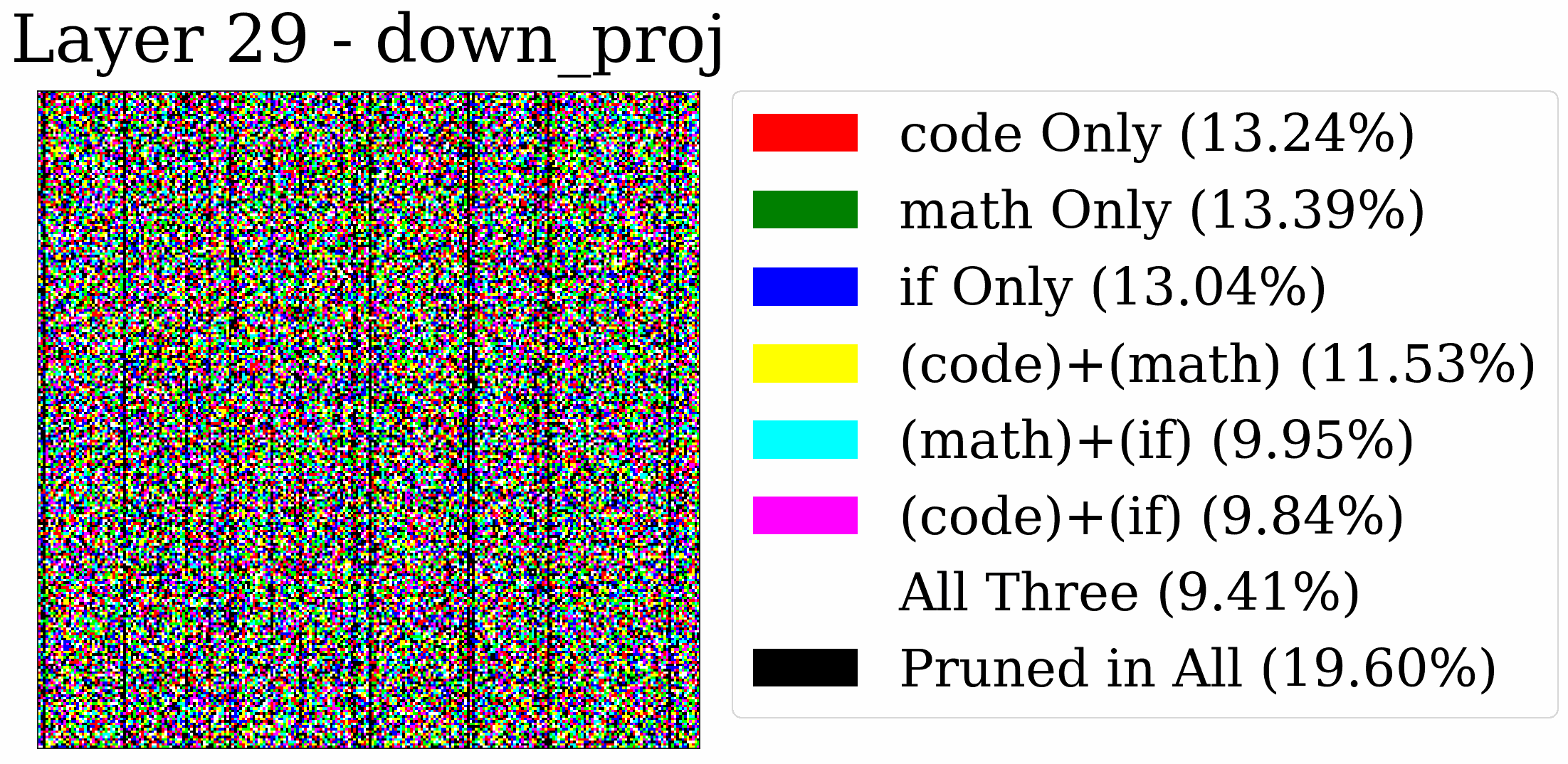}
    \end{subfigure}
    \begin{subfigure}[b]{0.32\textwidth}
        \includegraphics[width=\textwidth]{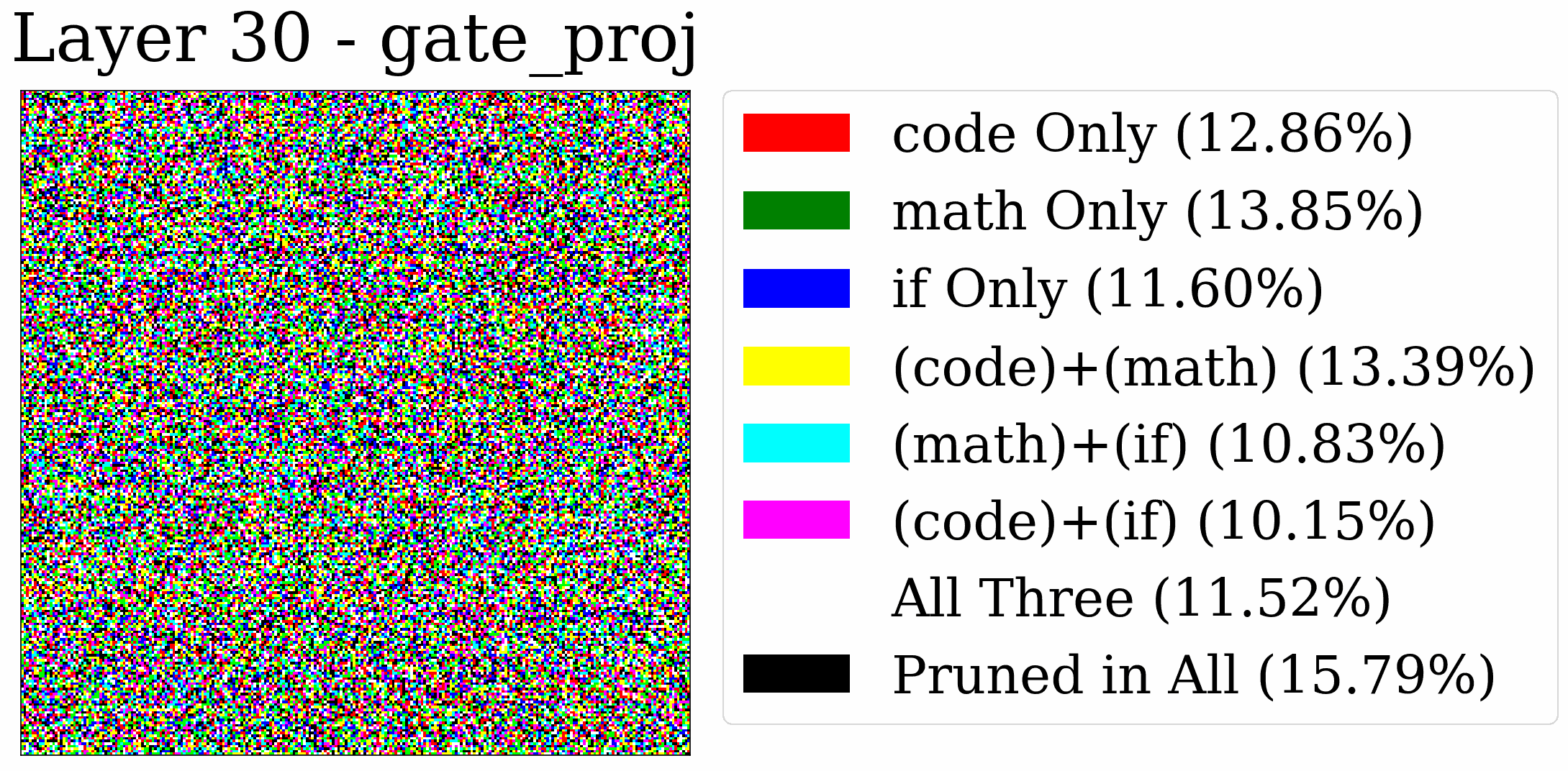}
    \end{subfigure}
    \hfill
    \begin{subfigure}[b]{0.32\textwidth}
        \includegraphics[width=\textwidth]{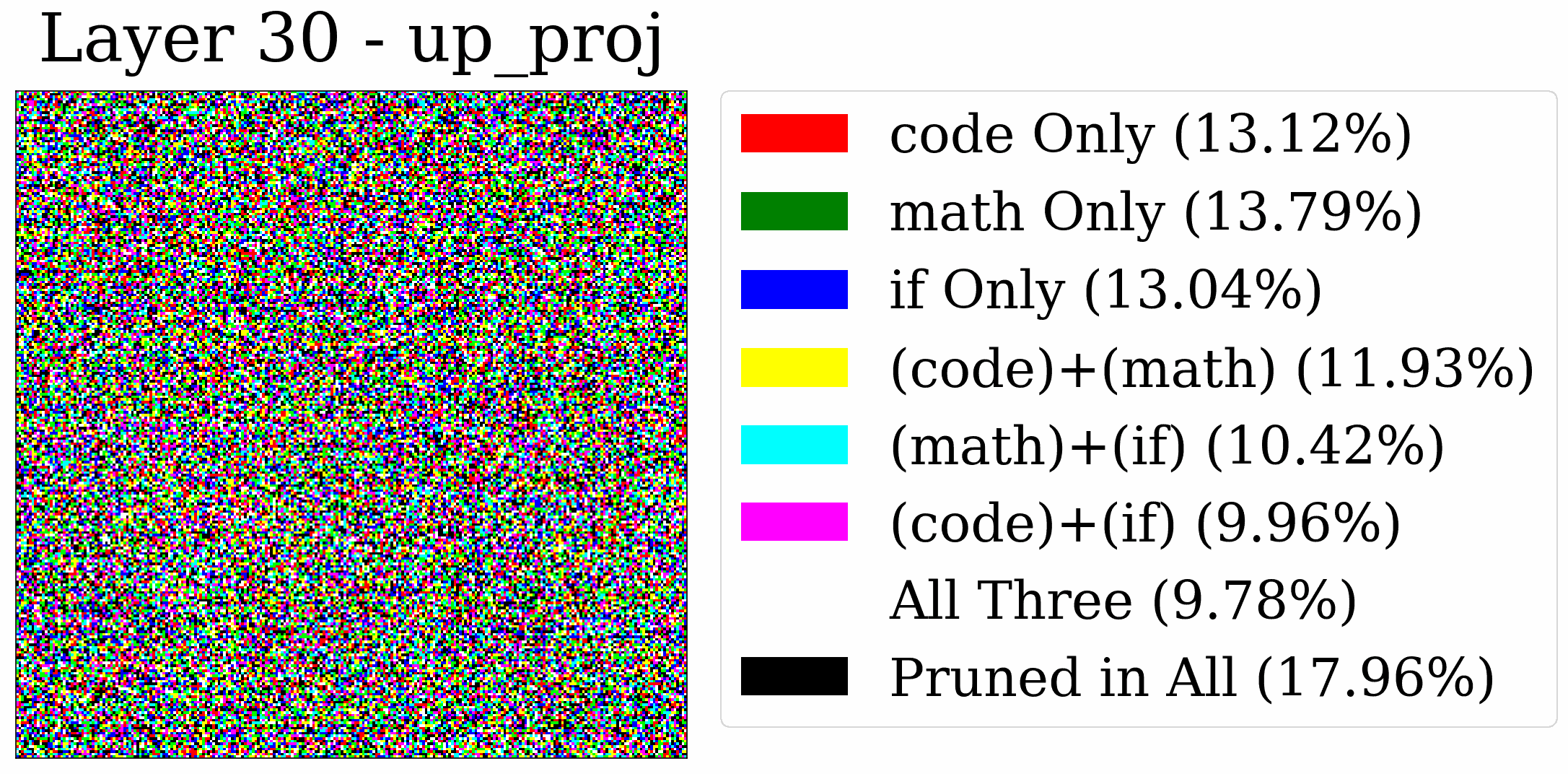}
    \end{subfigure}
    \hfill
    \begin{subfigure}[b]{0.32\textwidth}
        \includegraphics[width=\textwidth]{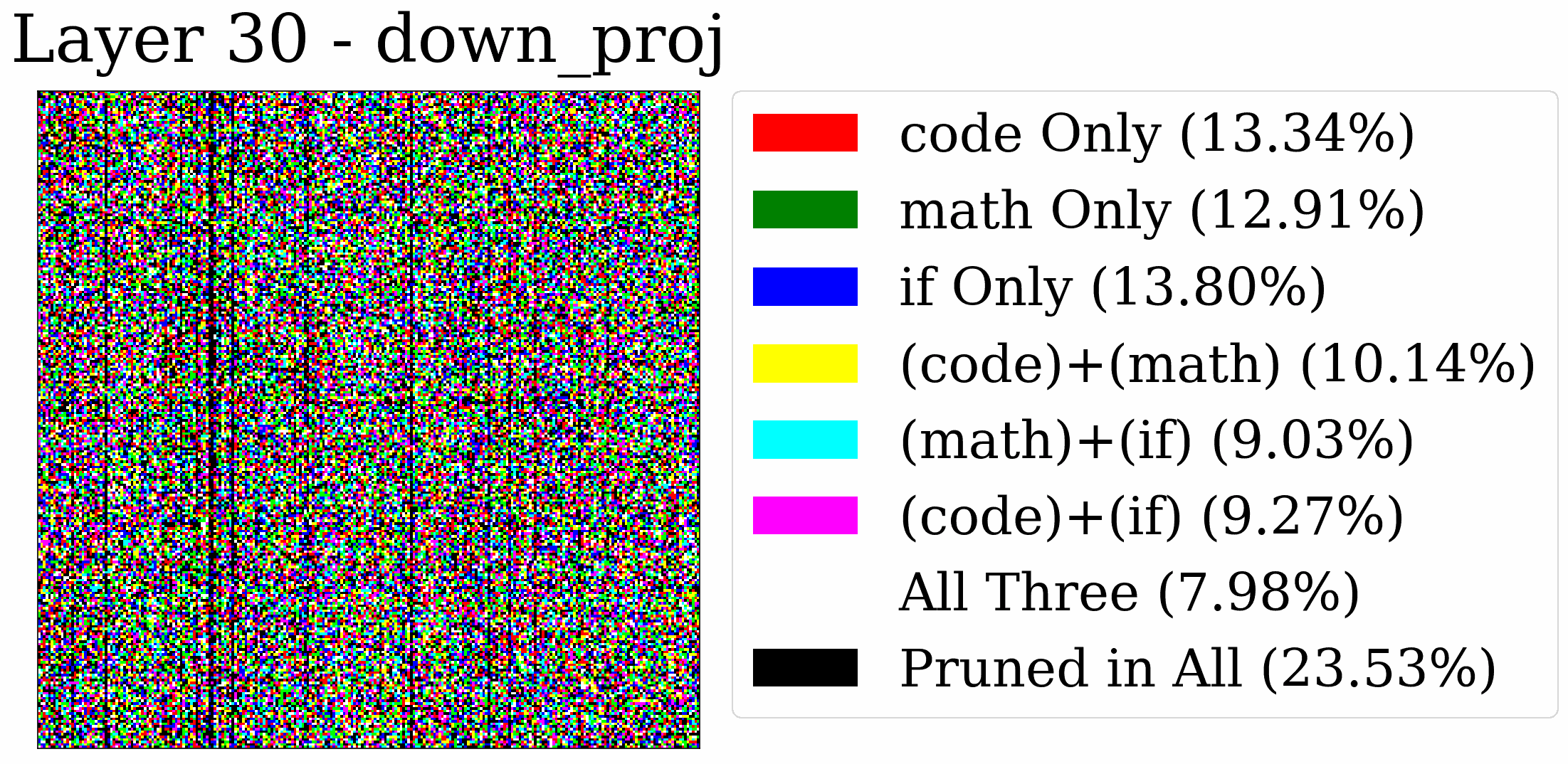}
    \end{subfigure}
    \caption{3-way magnitude-based comparison for FFN components across layers 1, 2, 15, 18, 29, and 30. Columns show $W_{gate}$, $W_{up}$, and $W_{down}$.}
    \label{fig:3way_mag_ffn_layers}
\end{figure}

\begin{figure}[!ht]
    \centering
    \begin{subfigure}[b]{0.32\textwidth}
        \includegraphics[width=\textwidth]{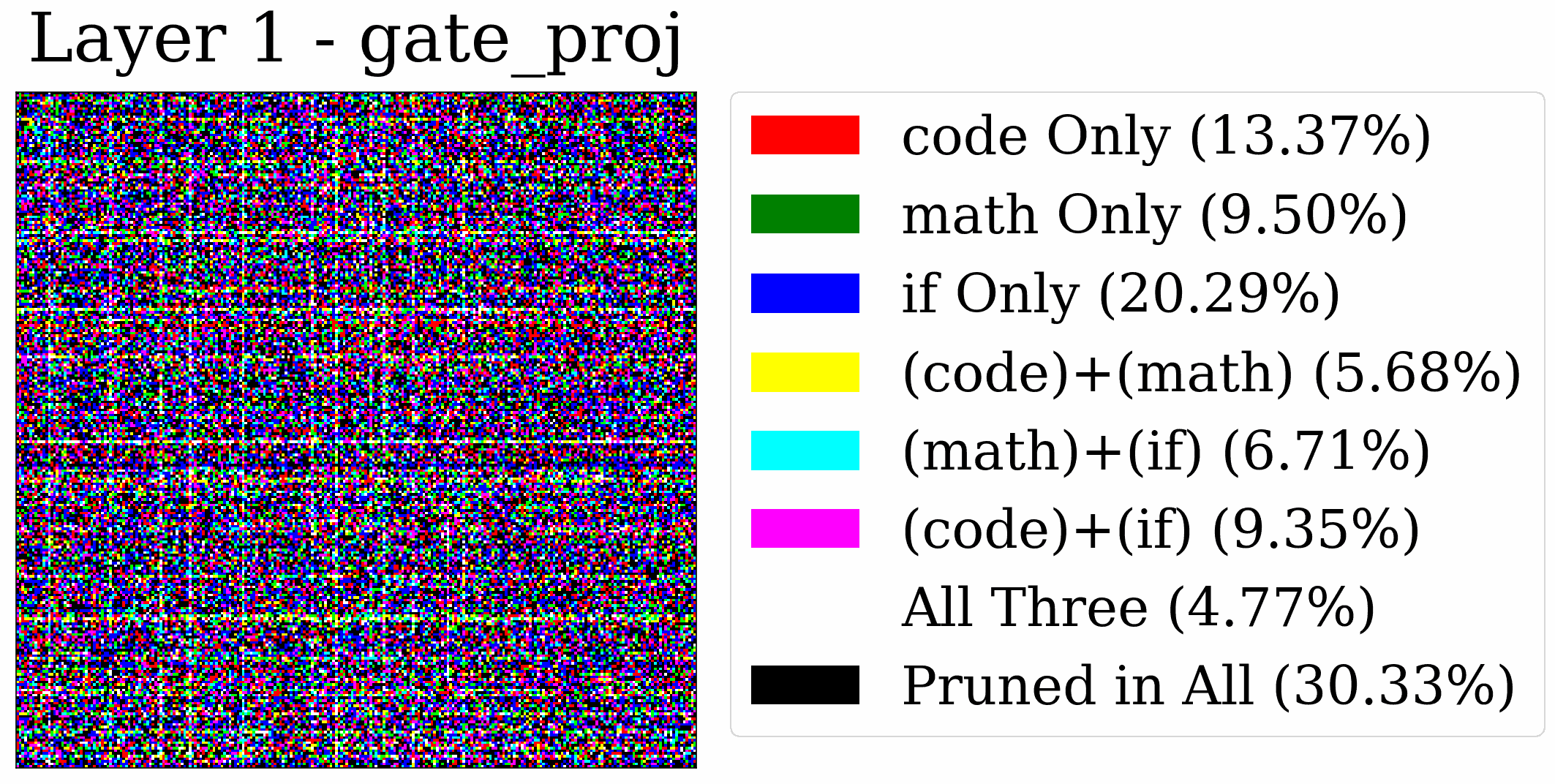}
    \end{subfigure}
    \hfill
    \begin{subfigure}[b]{0.32\textwidth}
        \includegraphics[width=\textwidth]{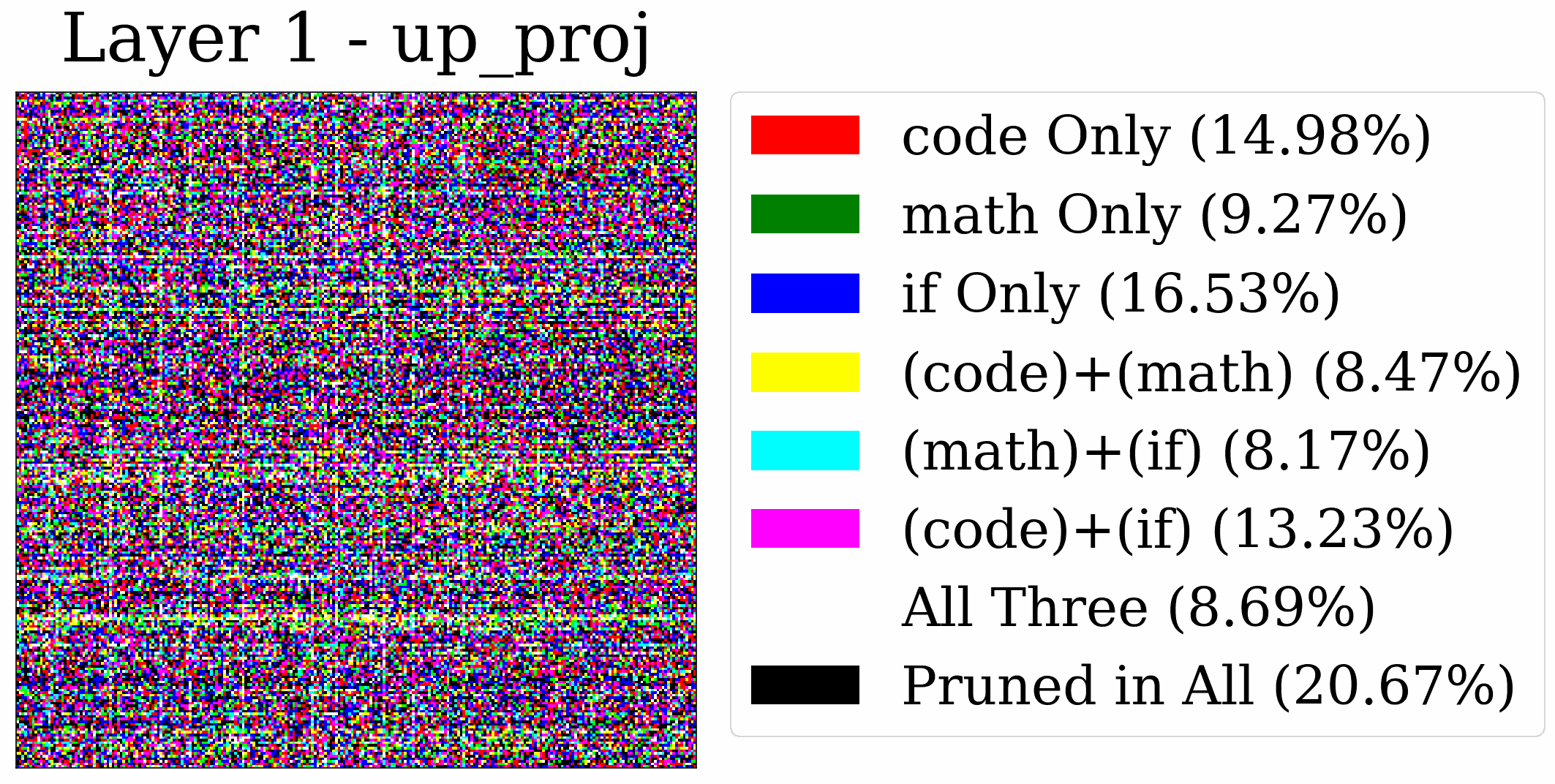}
    \end{subfigure}
    \hfill
    \begin{subfigure}[b]{0.32\textwidth}
        \includegraphics[width=\textwidth]{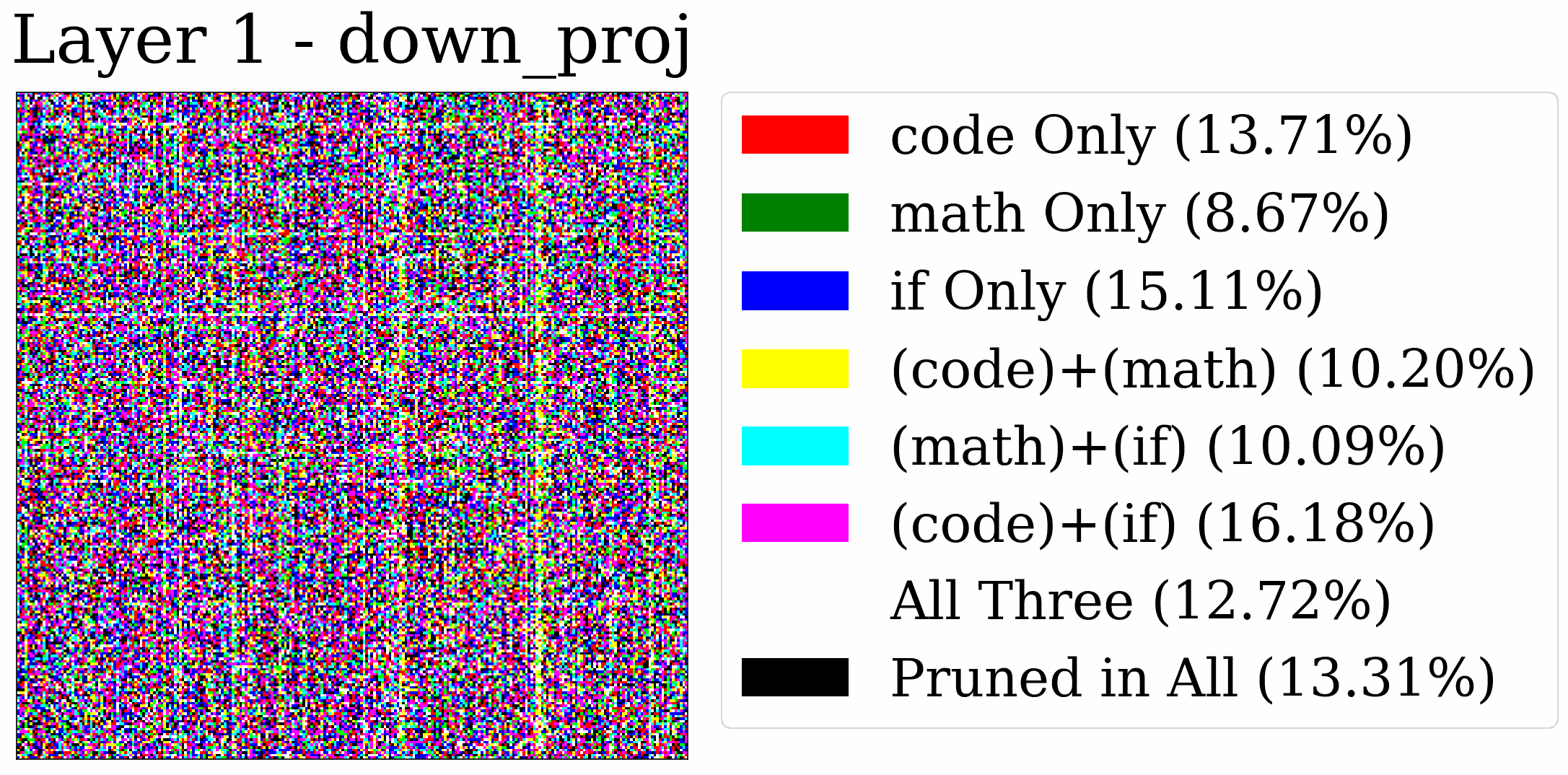}
    \end{subfigure}
    \begin{subfigure}[b]{0.32\textwidth}
        \includegraphics[width=\textwidth]{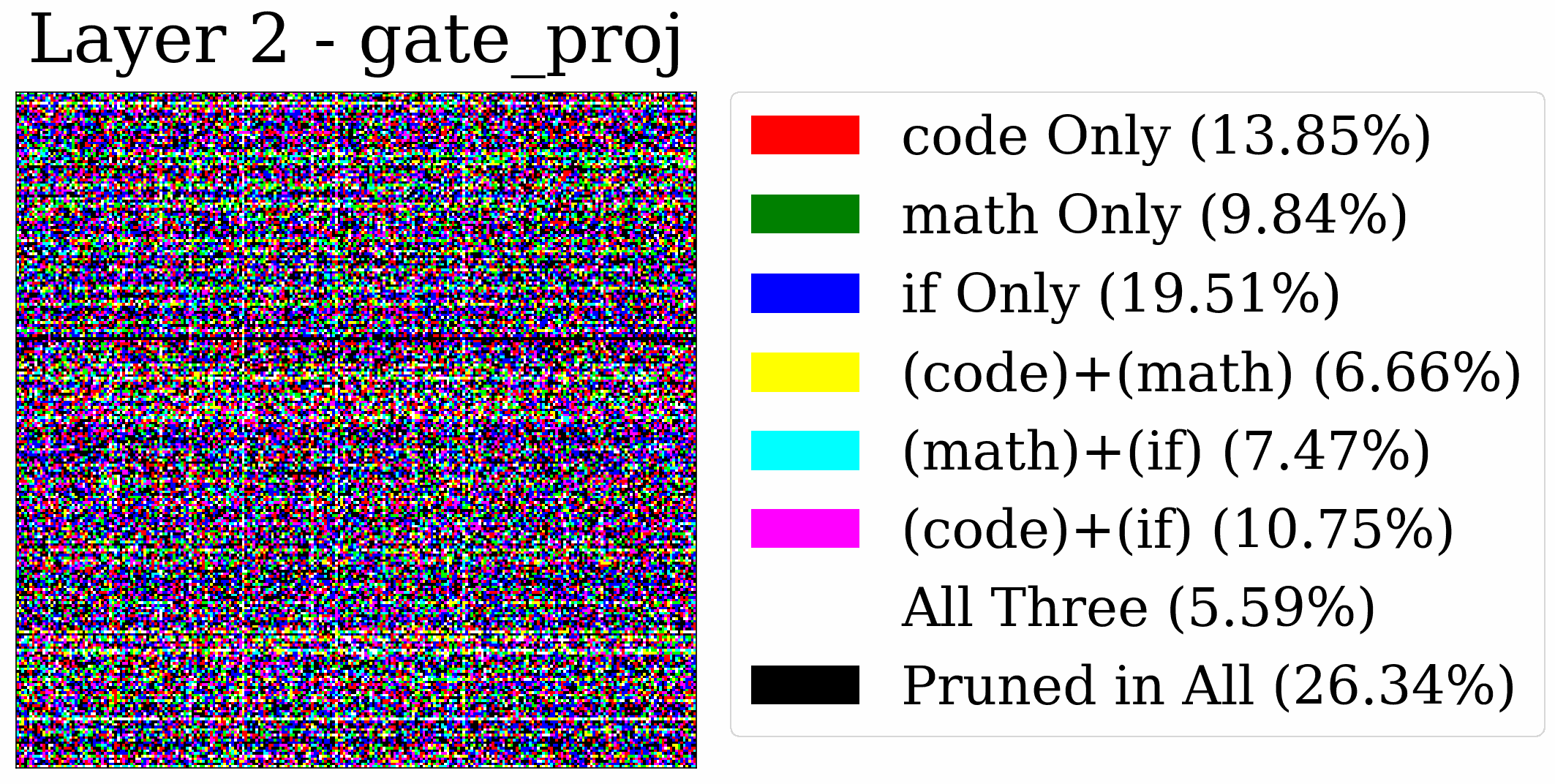}
    \end{subfigure}
    \hfill
    \begin{subfigure}[b]{0.32\textwidth}
        \includegraphics[width=\textwidth]{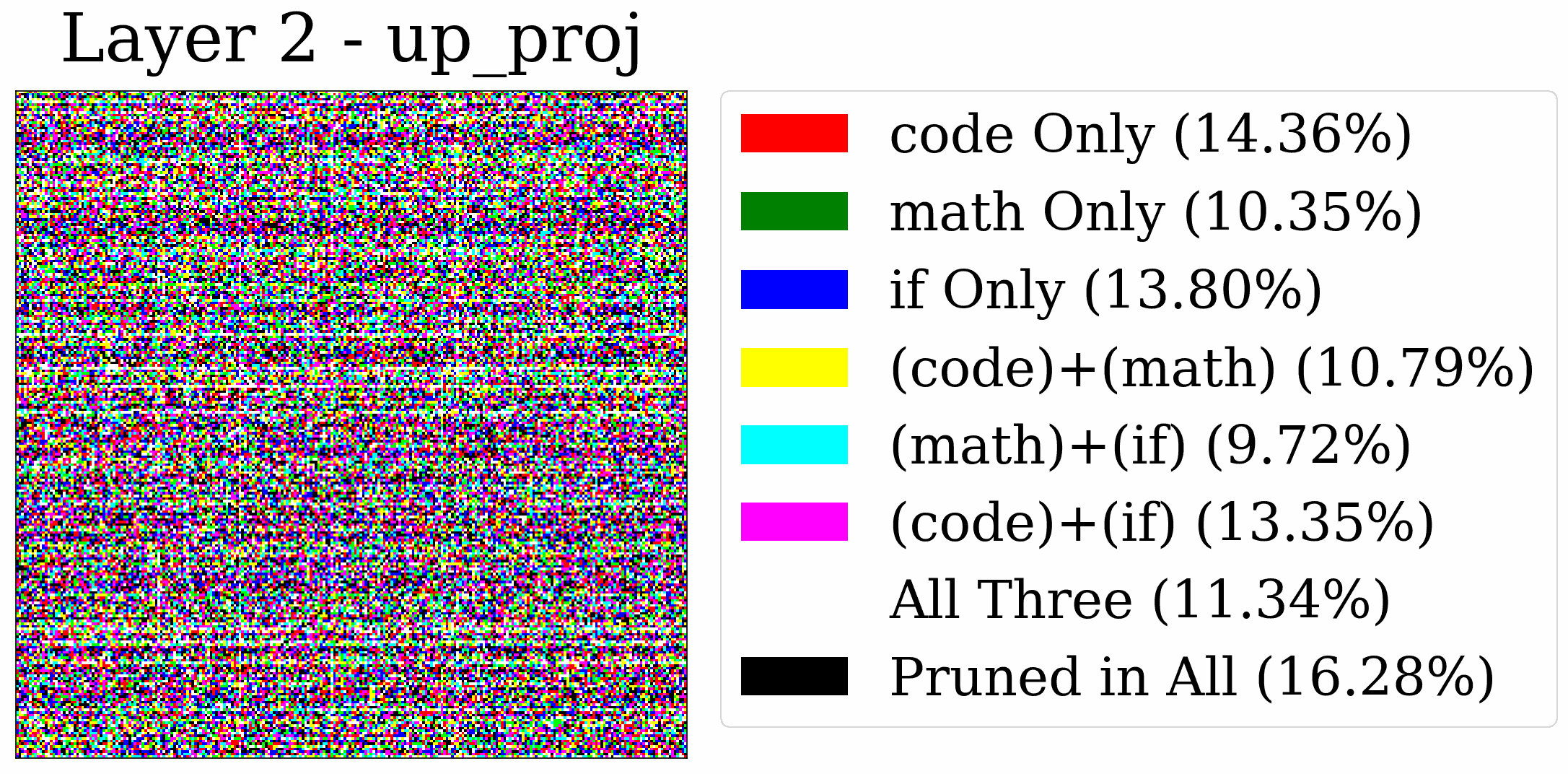}
    \end{subfigure}
    \hfill
    \begin{subfigure}[b]{0.32\textwidth}
        \includegraphics[width=\textwidth]{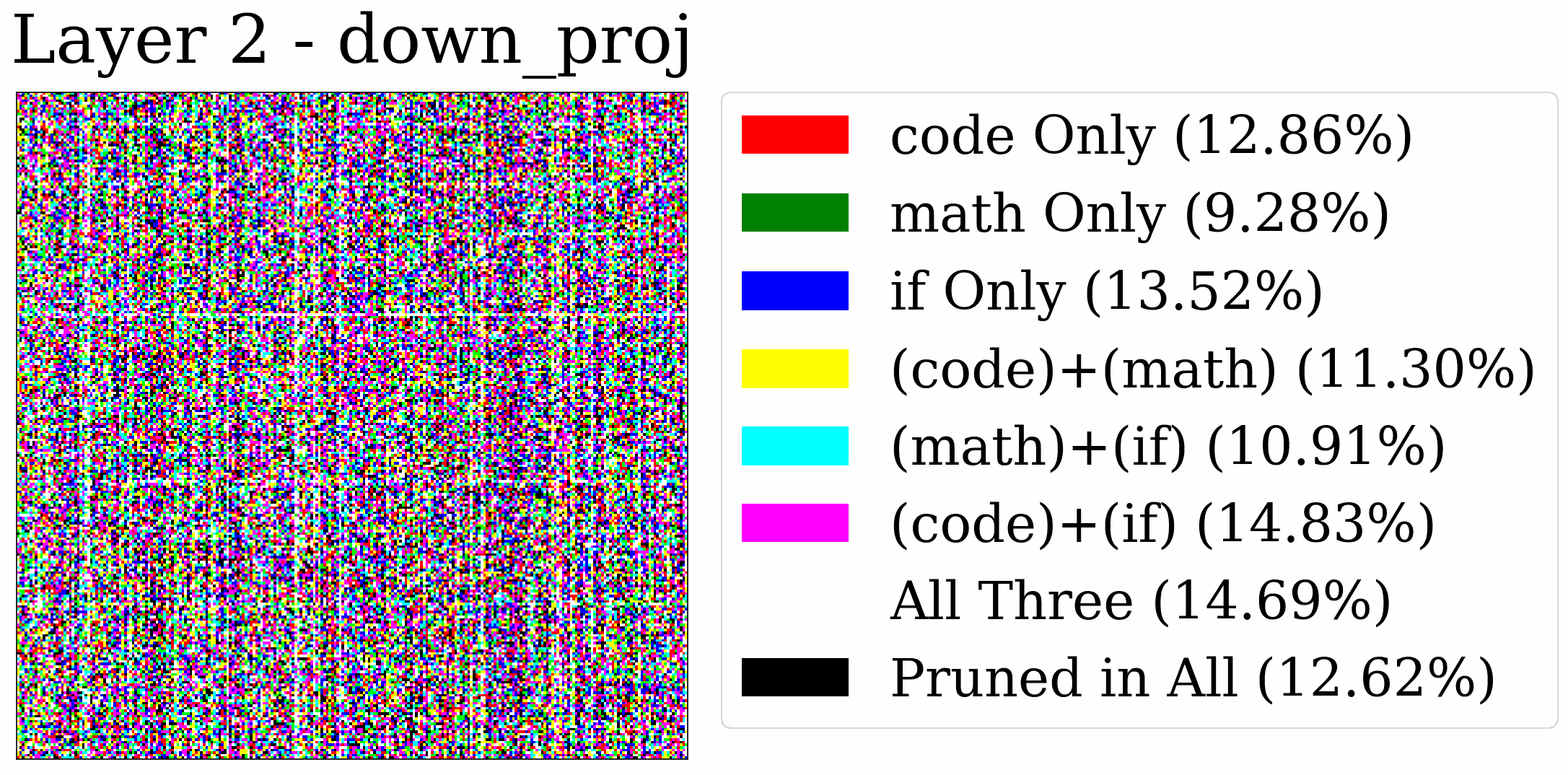}
    \end{subfigure}
    \begin{subfigure}[b]{0.32\textwidth}
        \includegraphics[width=\textwidth]{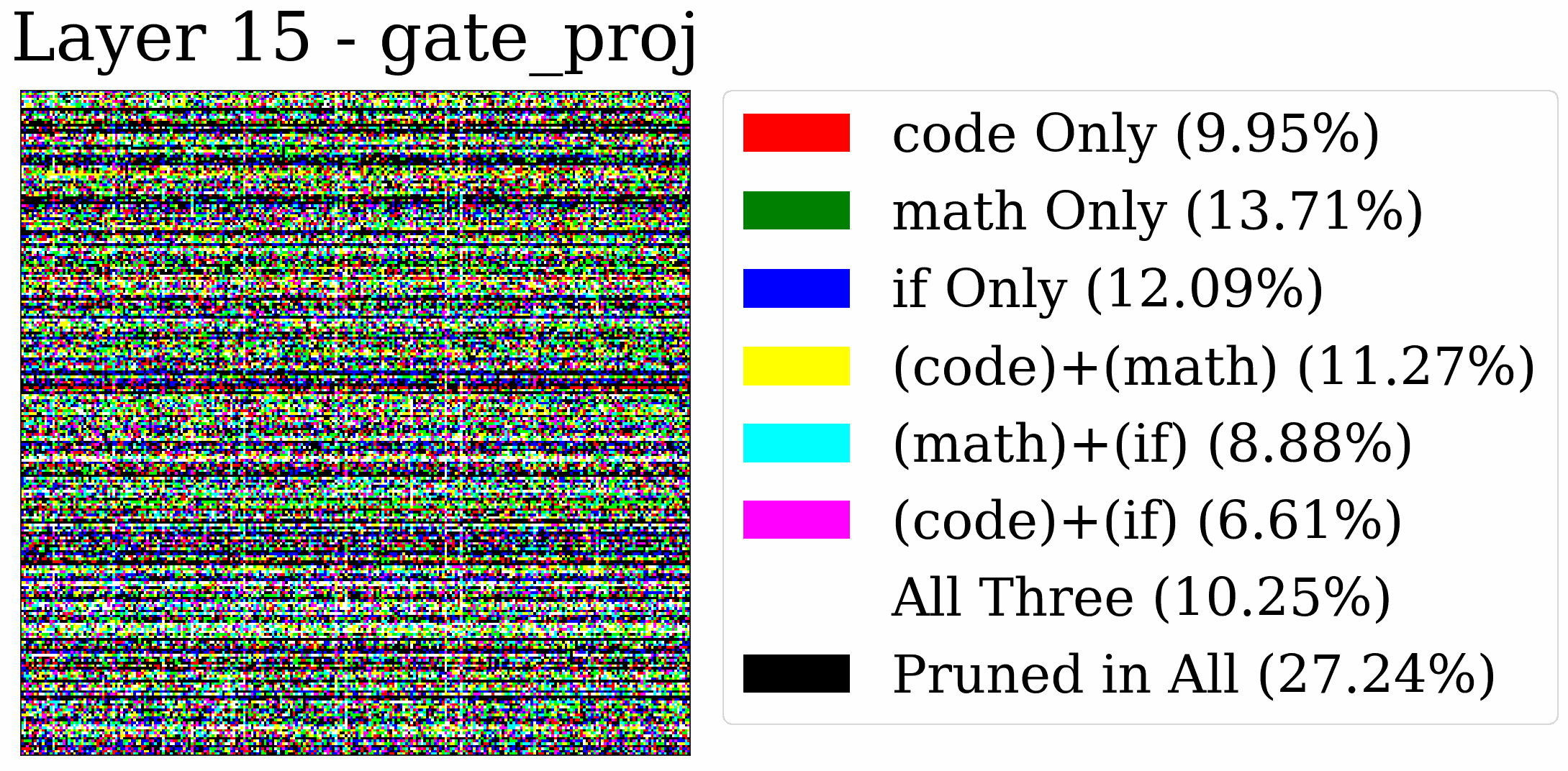}
    \end{subfigure}
    \hfill
    \begin{subfigure}[b]{0.32\textwidth}
        \includegraphics[width=\textwidth]{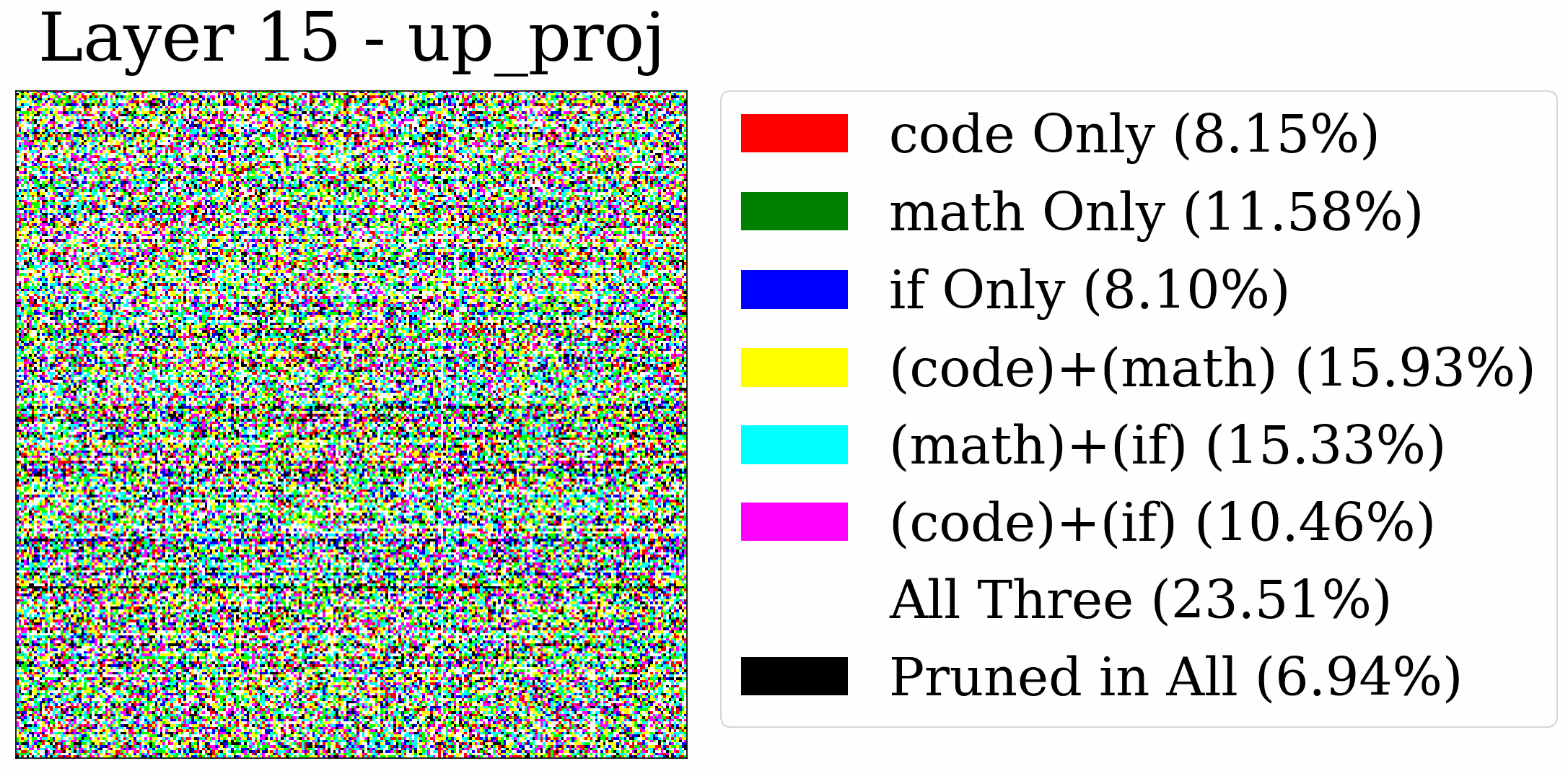}
    \end{subfigure}
    \hfill
    \begin{subfigure}[b]{0.32\textwidth}
        \includegraphics[width=\textwidth]{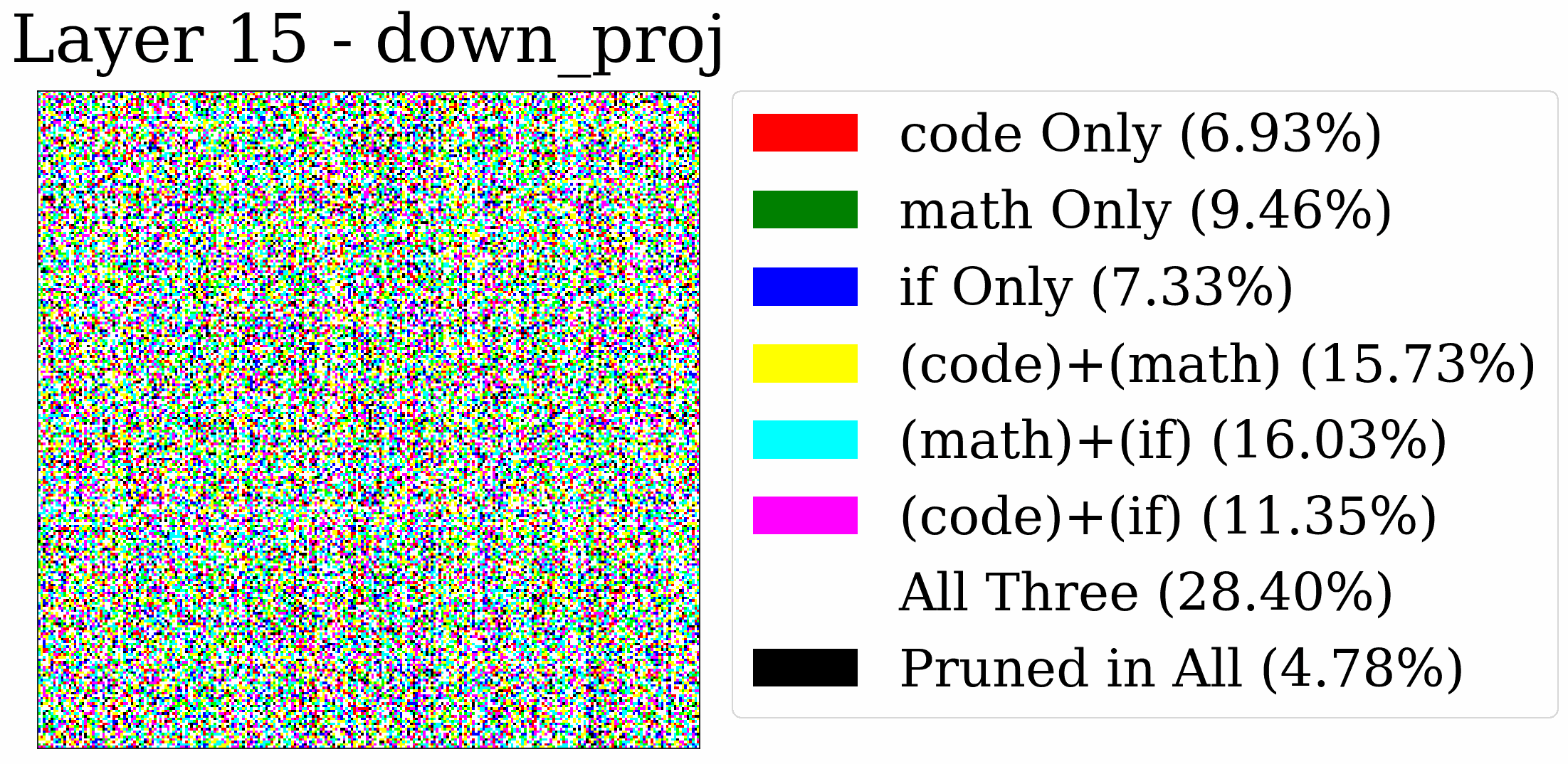}
    \end{subfigure}
    \begin{subfigure}[b]{0.32\textwidth}
        \includegraphics[width=\textwidth]{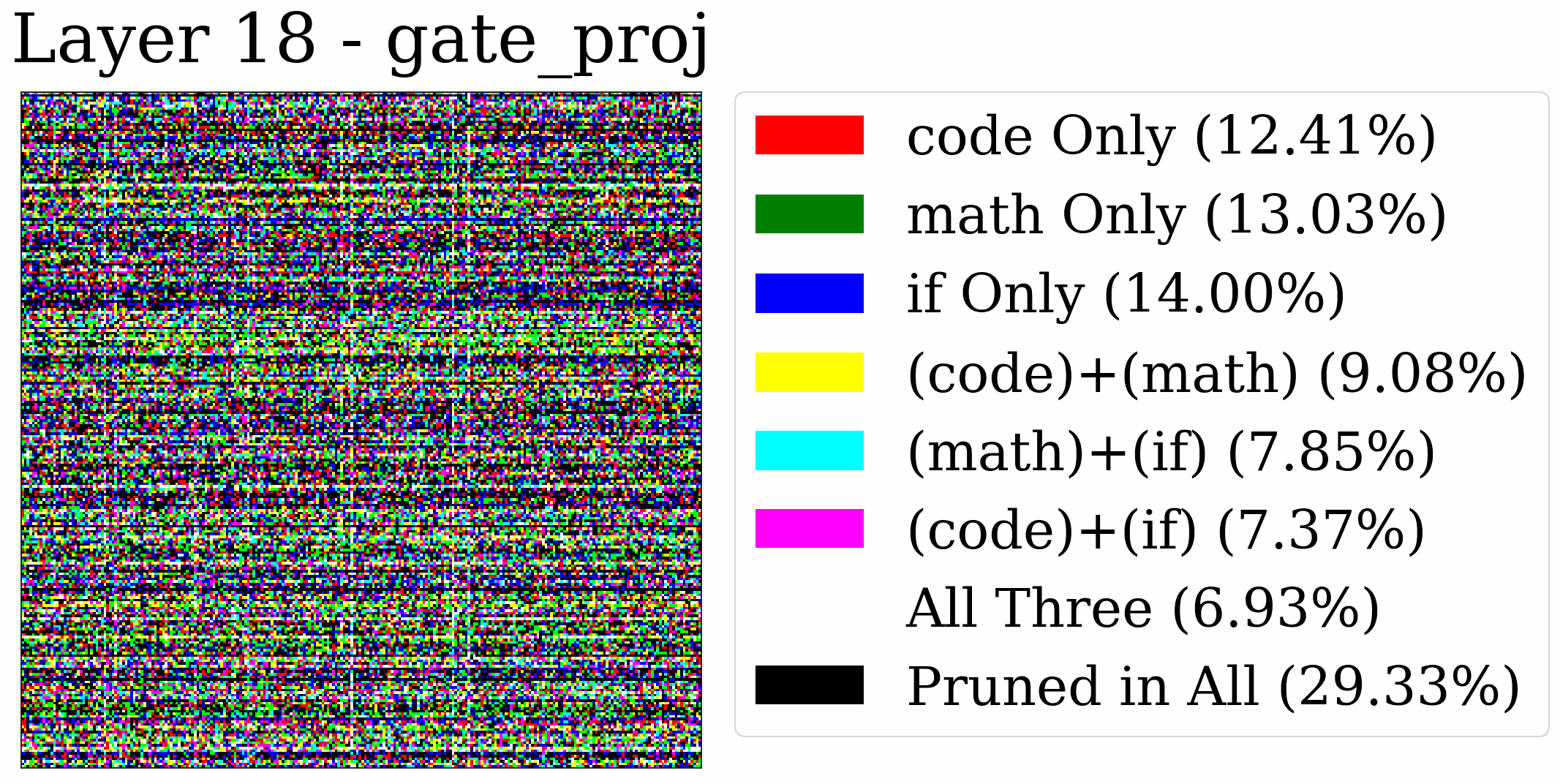}
    \end{subfigure}
    \hfill
    \begin{subfigure}[b]{0.32\textwidth}
        \includegraphics[width=\textwidth]{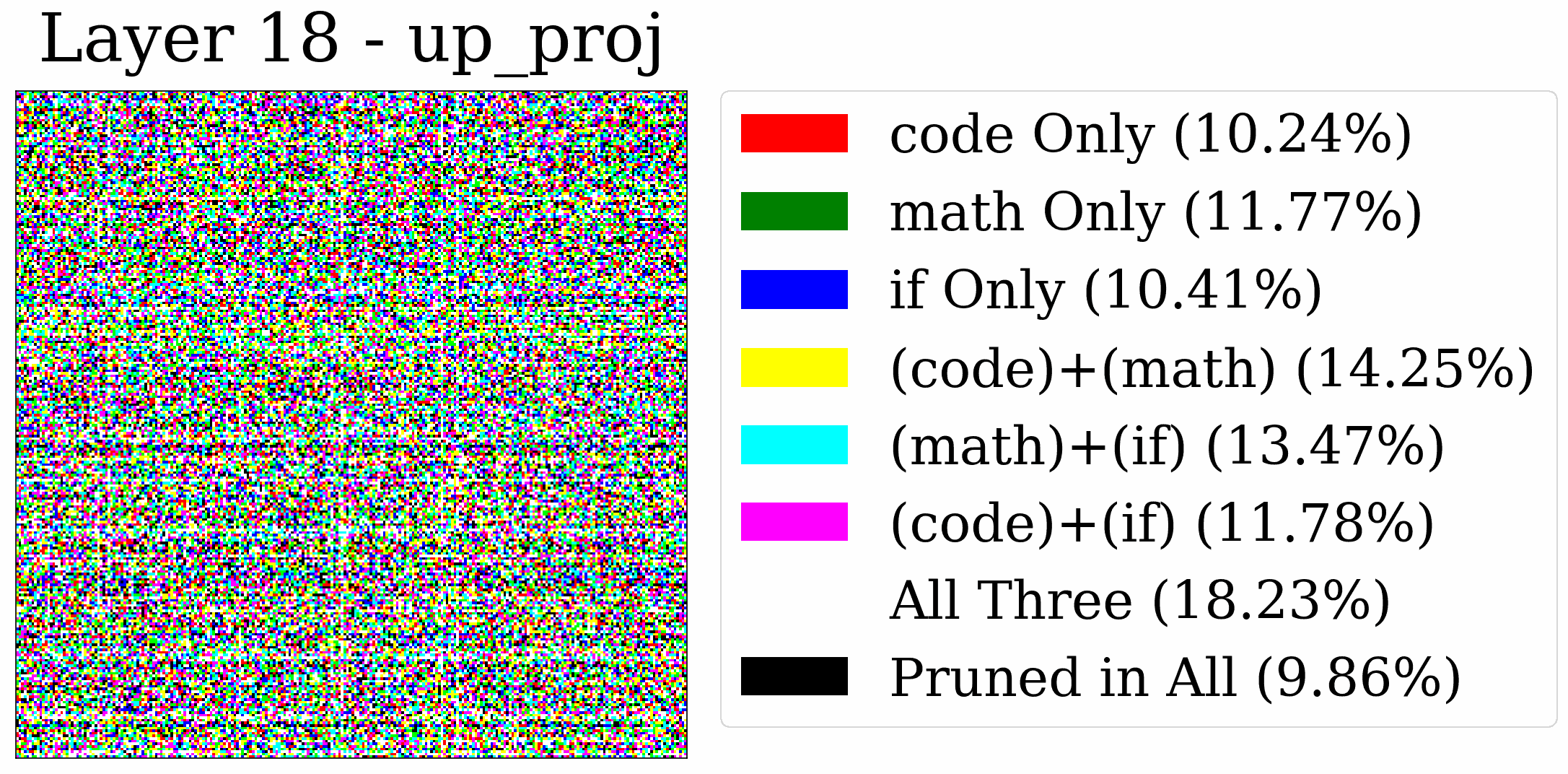}
    \end{subfigure}
    \hfill
    \begin{subfigure}[b]{0.32\textwidth}
        \includegraphics[width=\textwidth]{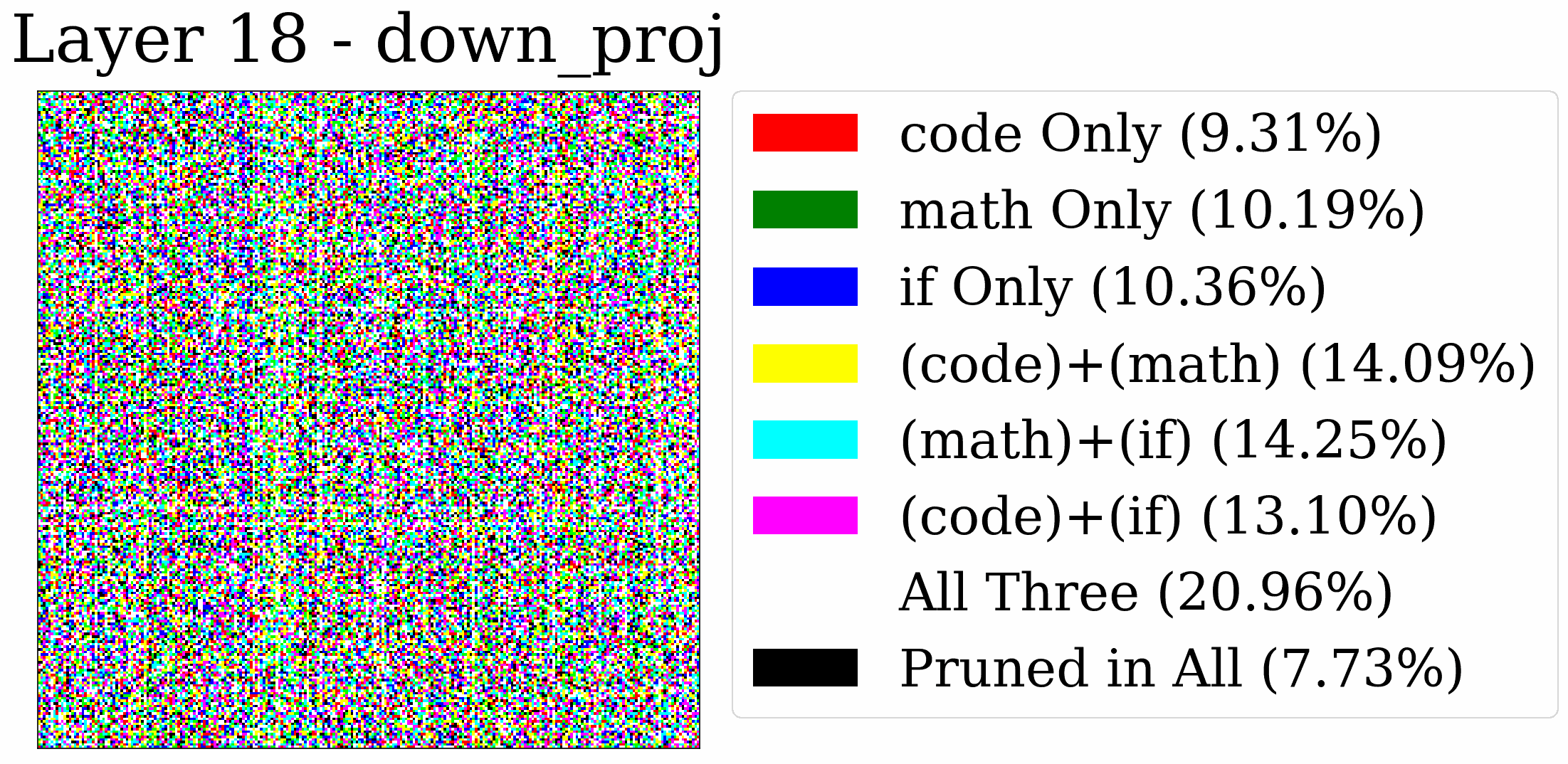}
    \end{subfigure}
    \begin{subfigure}[b]{0.32\textwidth}
        \includegraphics[width=\textwidth]{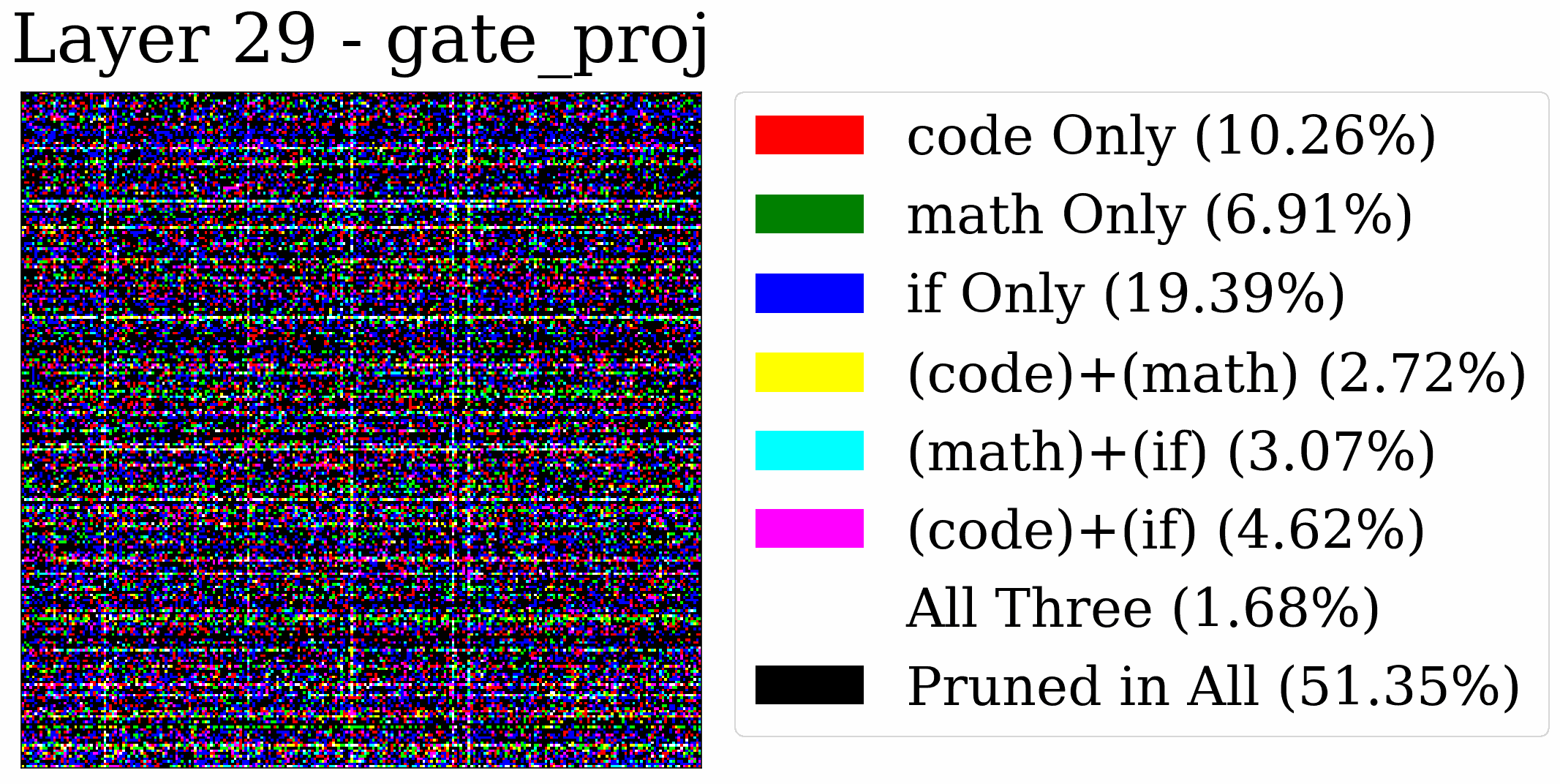}
    \end{subfigure}
    \hfill
    \begin{subfigure}[b]{0.32\textwidth}
        \includegraphics[width=\textwidth]{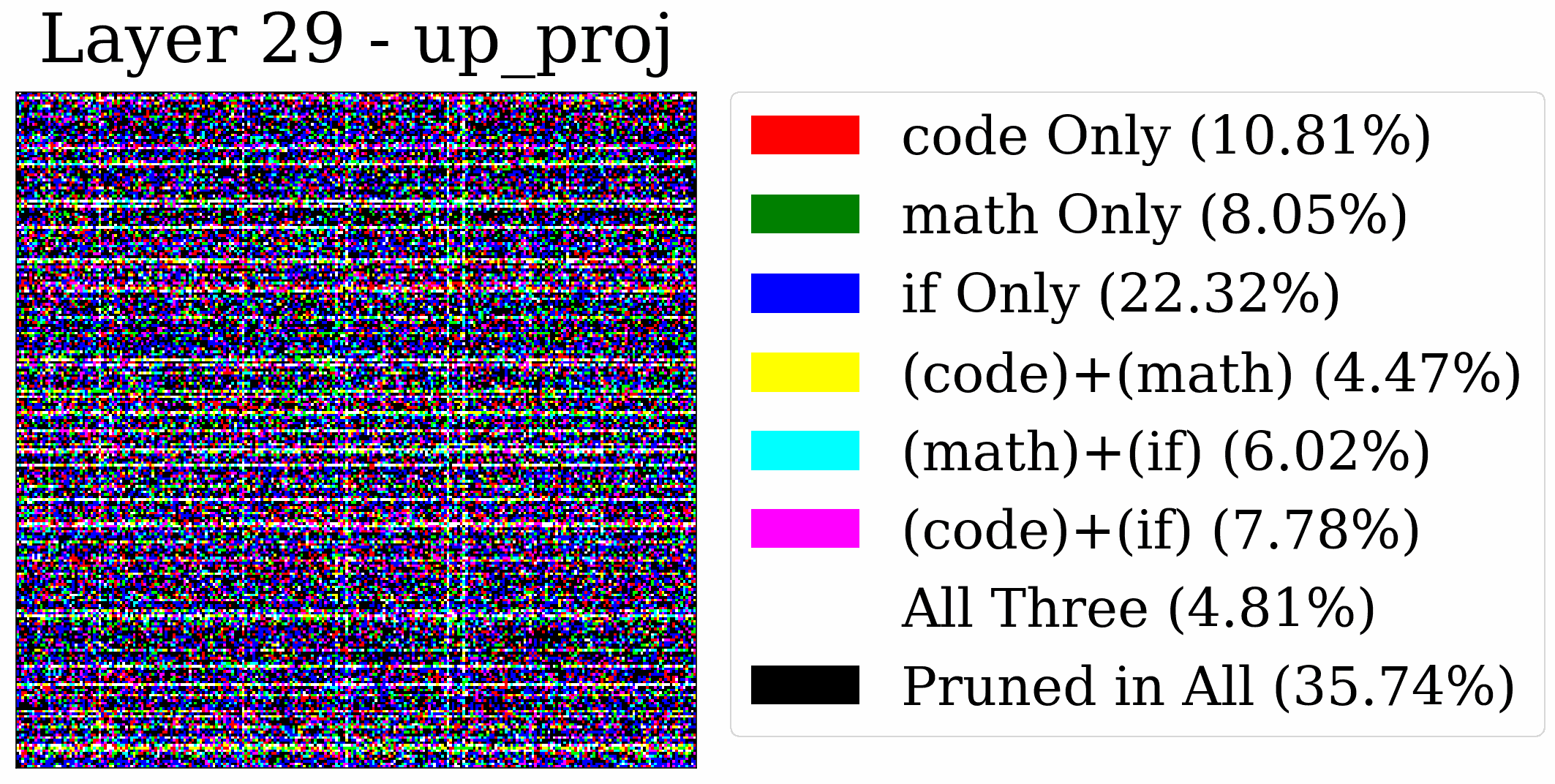}
    \end{subfigure}
    \hfill
    \begin{subfigure}[b]{0.32\textwidth}
        \includegraphics[width=\textwidth]{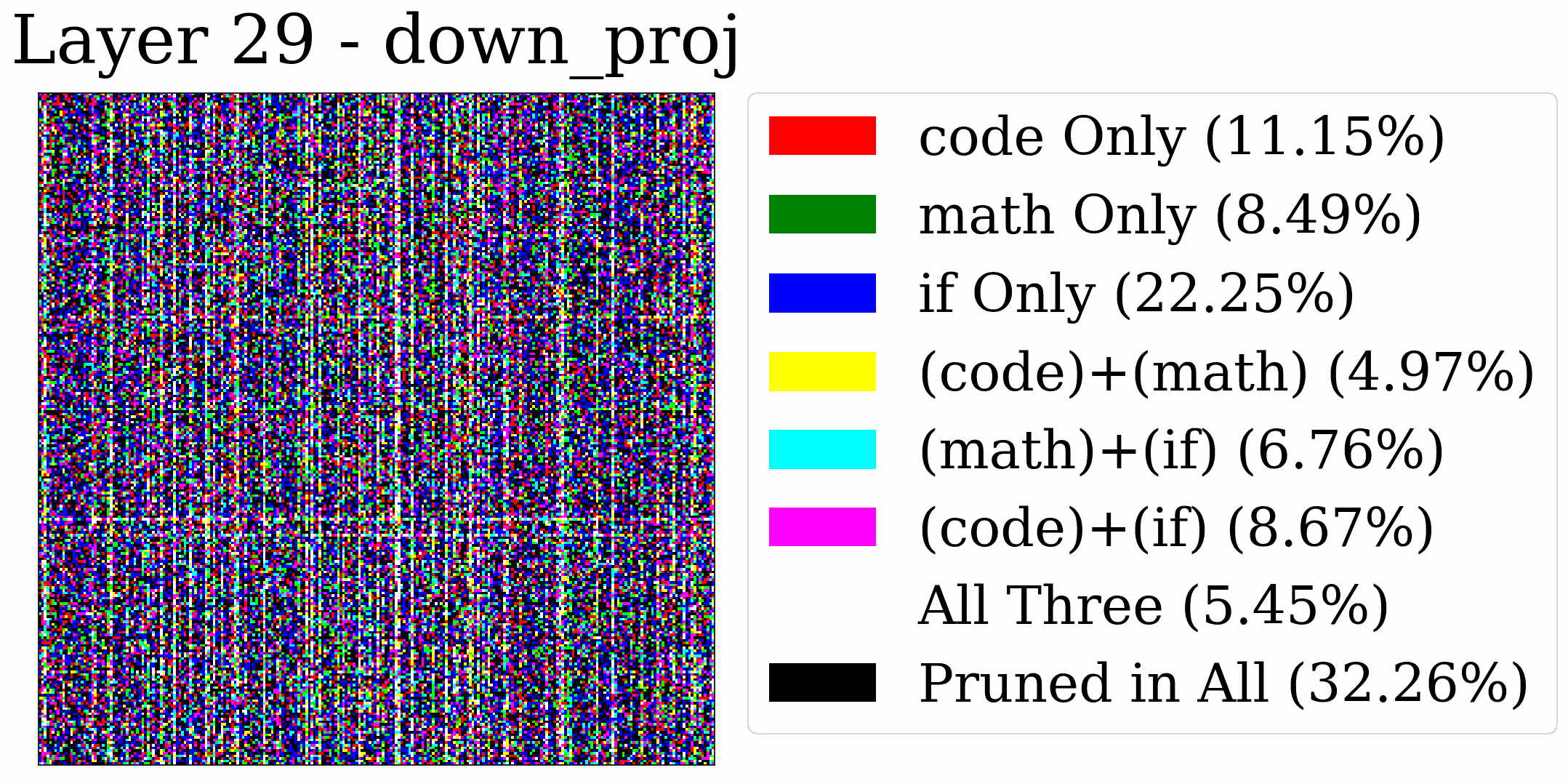}
    \end{subfigure}
    \begin{subfigure}[b]{0.32\textwidth}
        \includegraphics[width=\textwidth]{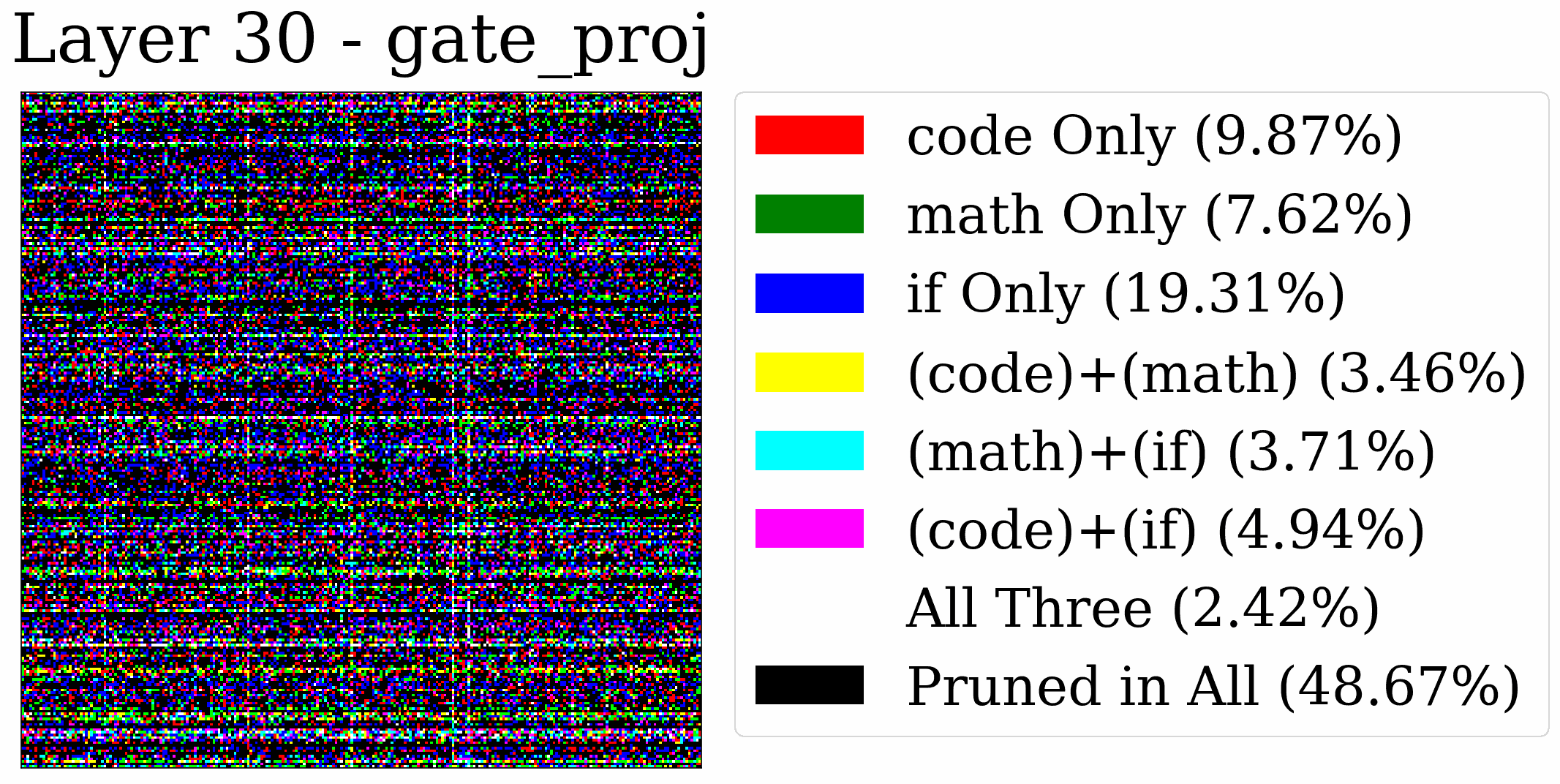}
    \end{subfigure}
    \hfill
    \begin{subfigure}[b]{0.32\textwidth}
        \includegraphics[width=\textwidth]{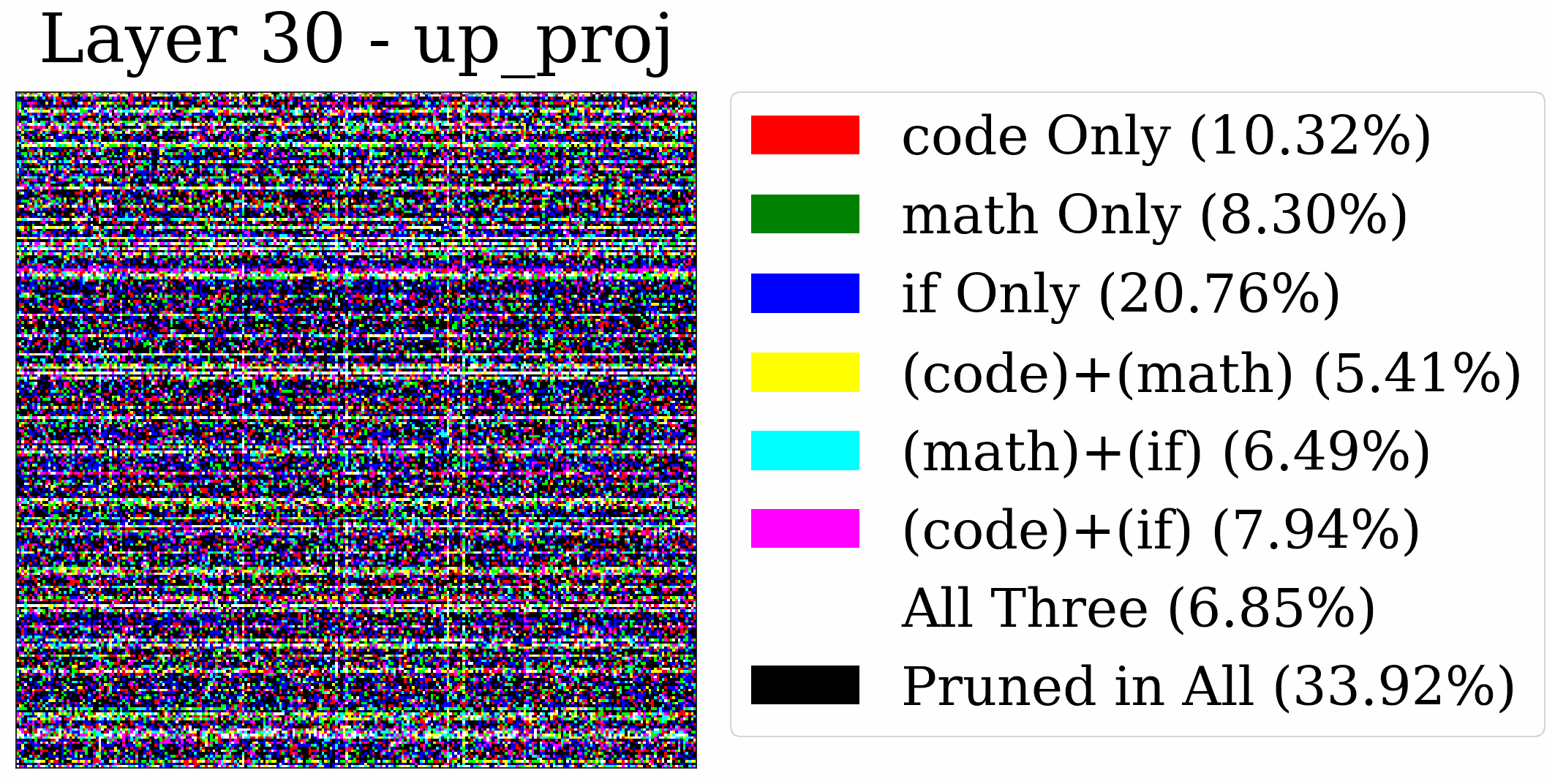}
    \end{subfigure}
    \hfill
    \begin{subfigure}[b]{0.32\textwidth}
        \includegraphics[width=\textwidth]{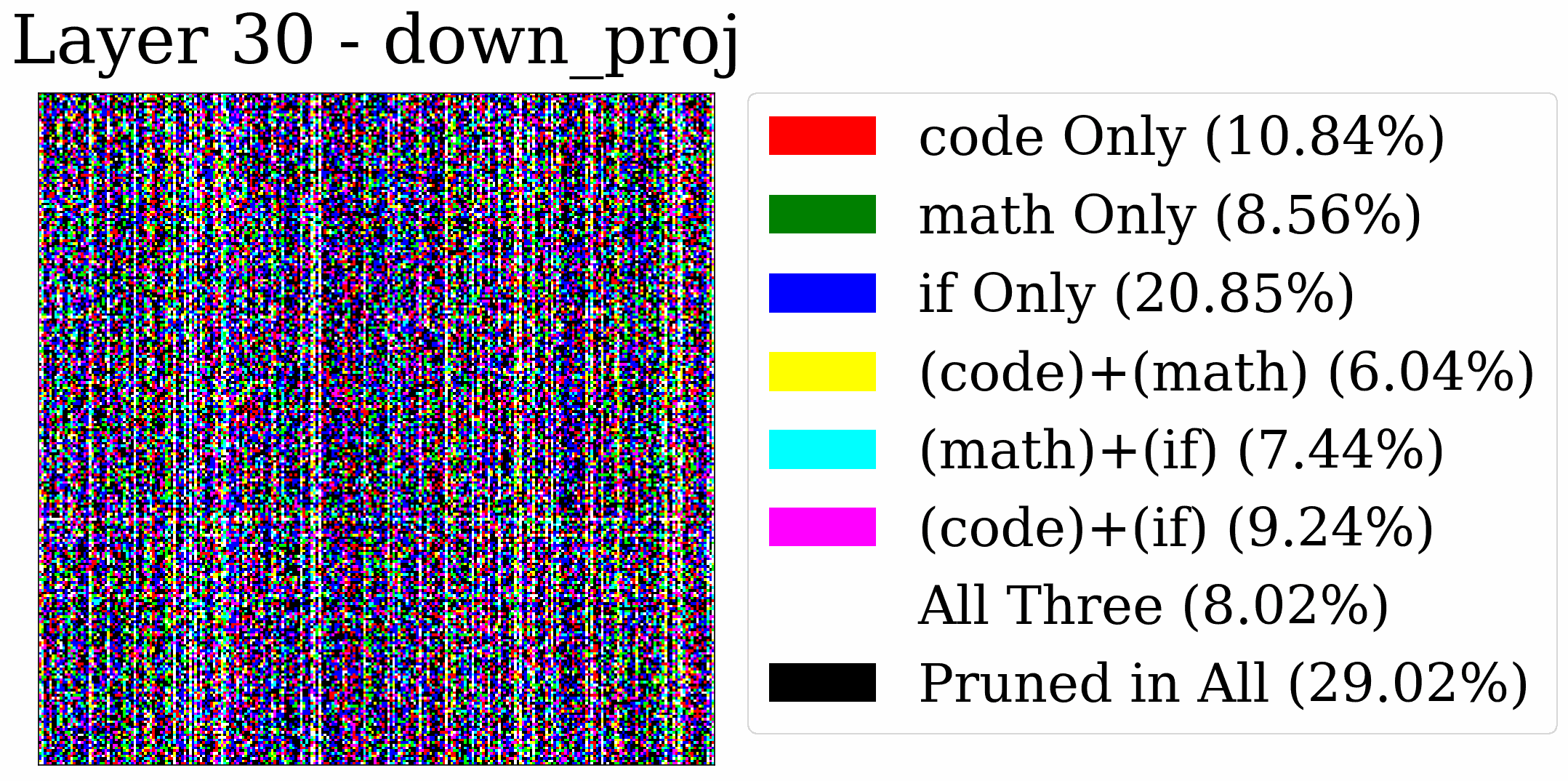}
    \end{subfigure}
    \caption{3-way FFG comparison for FFN components across layers 1, 2, 15, 18, 29, and 30. Columns show $W_{gate}$, $W_{up}$, and $W_{down}$.}
    \label{fig:3way_ffg_ffn_layers}
\end{figure}

\begin{figure}[ht]
  \centering
  \begin{subfigure}[b]{0.32\textwidth}
    \includegraphics[width=\textwidth]{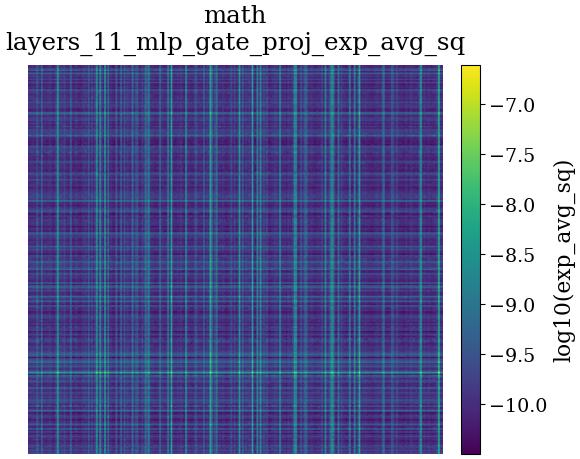}
  \end{subfigure}
  \hfill
  \begin{subfigure}[b]{0.32\textwidth}
    \includegraphics[width=\textwidth]{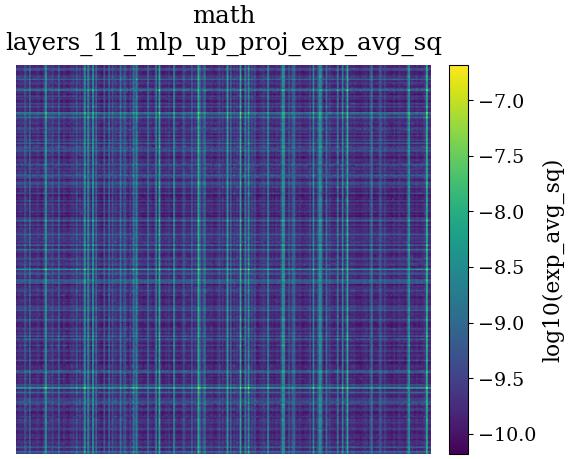}
  \end{subfigure}
  \hfill
  \begin{subfigure}[b]{0.32\textwidth}
    \includegraphics[width=\textwidth]{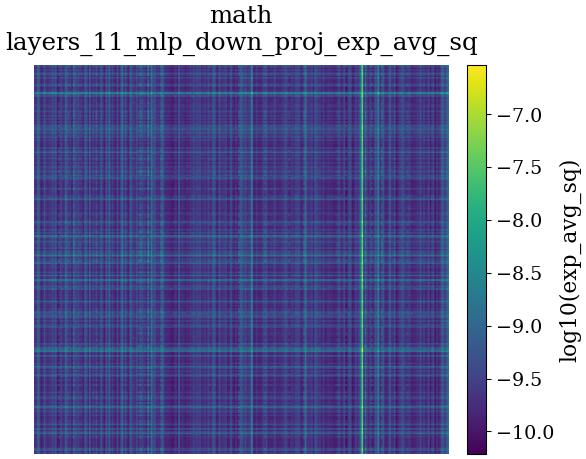}
  \end{subfigure}
  \begin{subfigure}[b]{0.32\textwidth}
    \includegraphics[width=\textwidth]{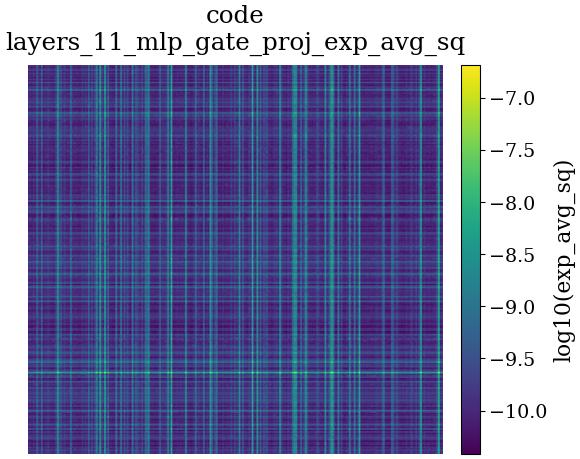}
  \end{subfigure}
  \hfill
  \begin{subfigure}[b]{0.32\textwidth}
    \includegraphics[width=\textwidth]{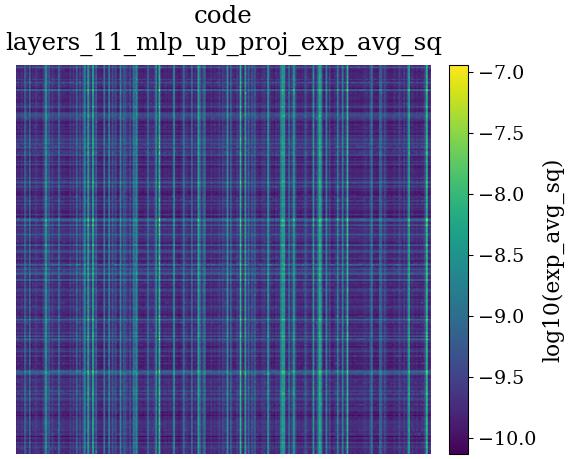}
  \end{subfigure}
  \hfill
  \begin{subfigure}[b]{0.32\textwidth}
    \includegraphics[width=\textwidth]{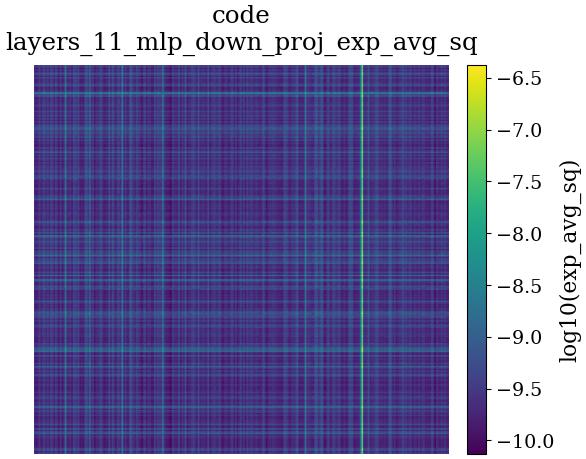}
  \end{subfigure}
  \begin{subfigure}[b]{0.32\textwidth}
    \includegraphics[width=\textwidth]{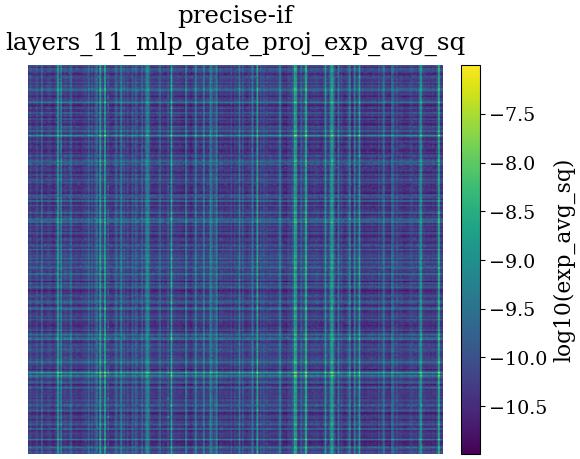}
  \end{subfigure}
  \hfill
  \begin{subfigure}[b]{0.32\textwidth}
    \includegraphics[width=\textwidth]{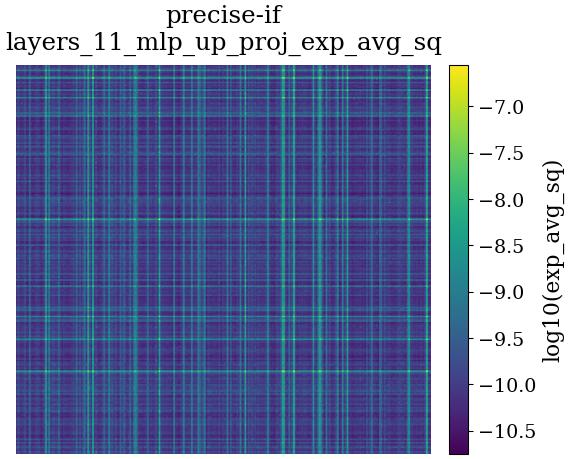}
  \end{subfigure}
  \hfill
  \begin{subfigure}[b]{0.32\textwidth}
    \includegraphics[width=\textwidth]{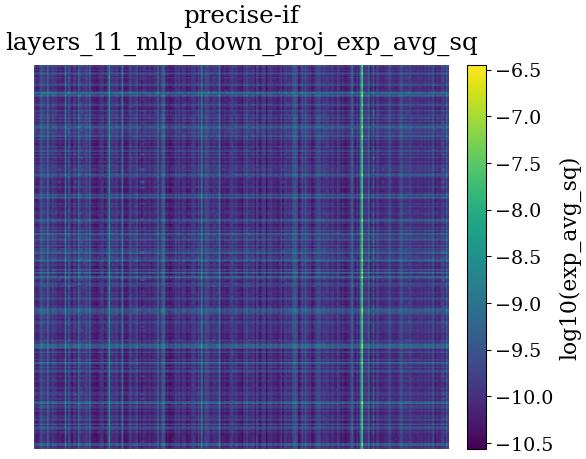}
  \end{subfigure}
  \begin{subfigure}[b]{0.32\textwidth}
    \includegraphics[width=\textwidth]{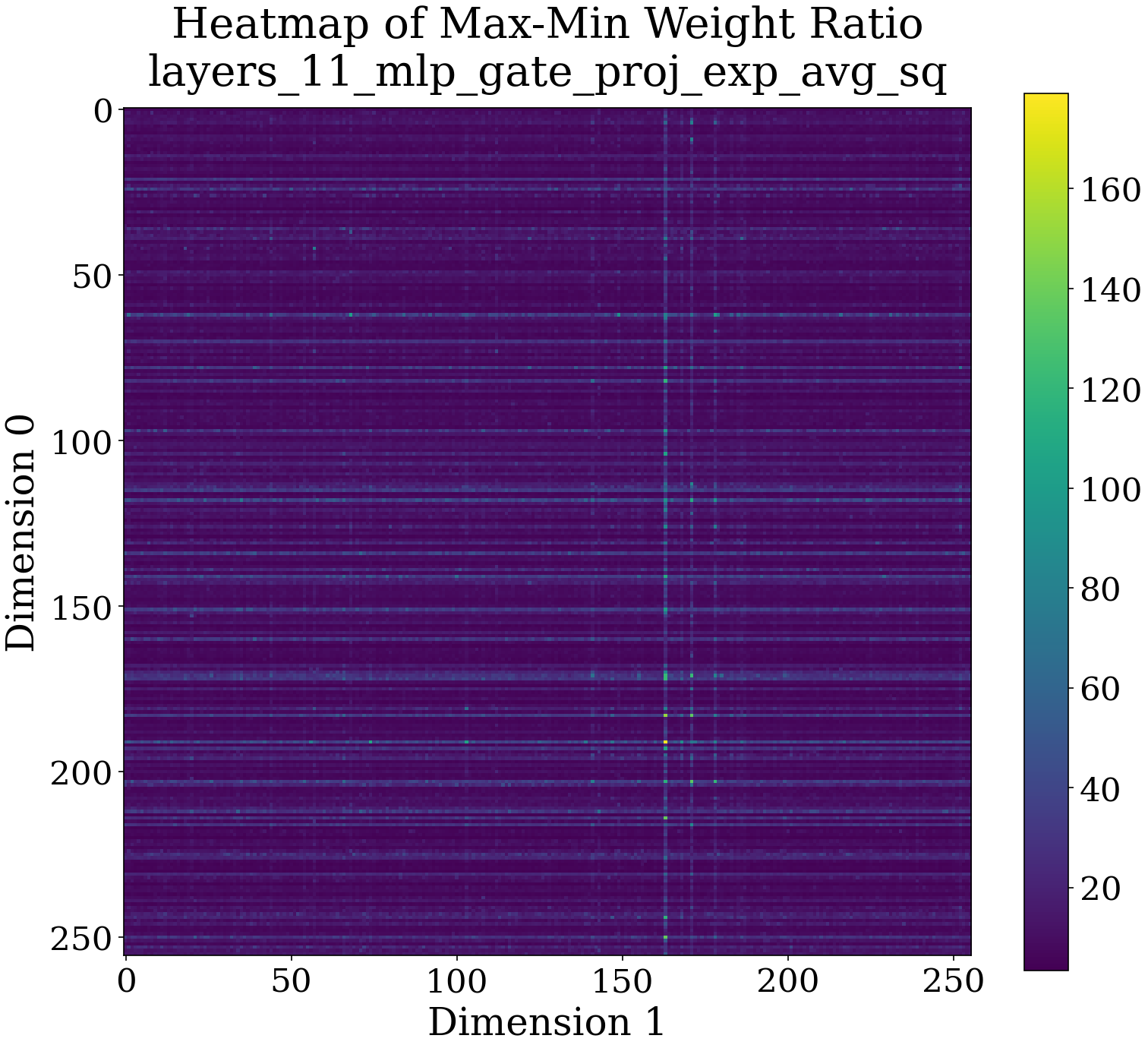}
  \end{subfigure}
  \hfill
  \begin{subfigure}[b]{0.32\textwidth}
    \includegraphics[width=\textwidth]{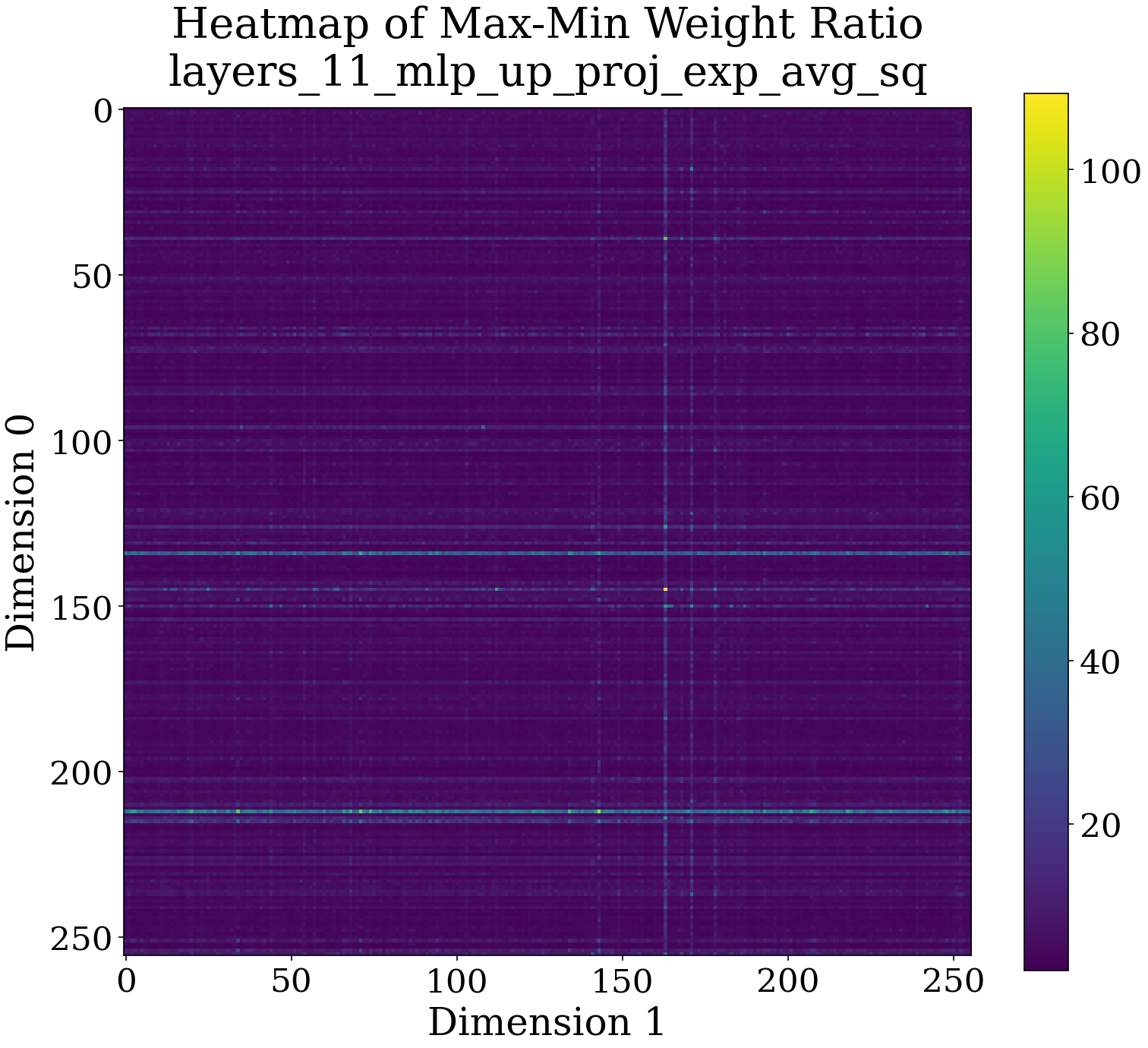}
  \end{subfigure}
  \hfill
  \begin{subfigure}[b]{0.32\textwidth}
    \includegraphics[width=\textwidth]{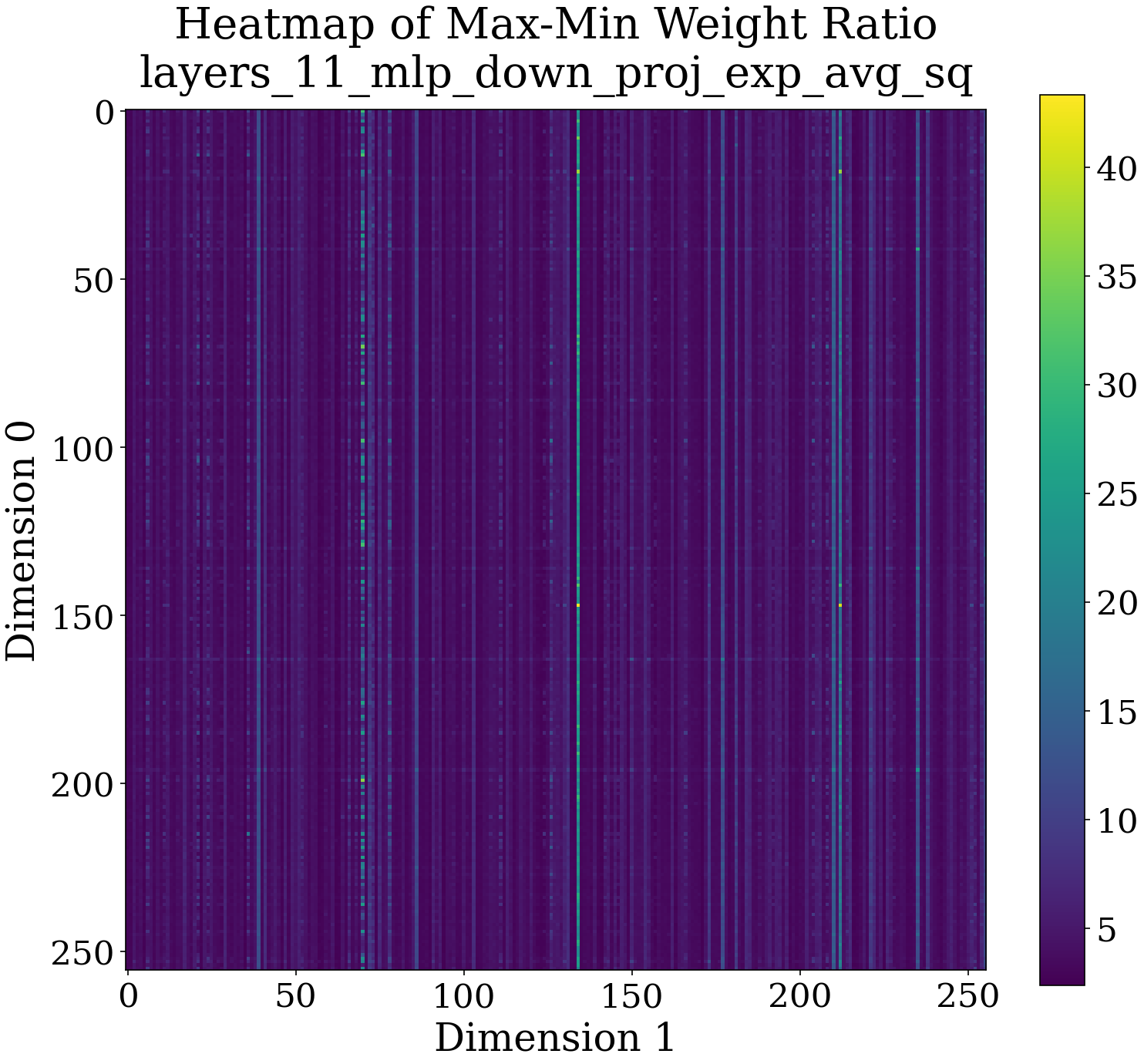}
  \end{subfigure}
  \caption{
    \textbf{Shared curvature geometry in FFN layers across specialist models.} 
    Log-scaled heatmaps of the square root of the second-moment Adam statistics for layer 11 feed-forward network projection weights. Rows represent: Math specialist, Code specialist, Precise IF specialist, and Max-Min ratio across all models (top to bottom). Columns show $W_{gate}$, $W_{up}$, and $W_{down}$ (left to right). The structural similarity persists even in FFN layers, reinforcing our finding that shared geometry is a model-wide phenomenon. The bottom row quantifies the variance across models, with darker regions indicating higher consensus in curvature patterns.
  }
  \label{fig:ffn_shared_curvature}
\end{figure}

\begin{figure}[ht]
  \centering
  \begin{subfigure}[b]{0.32\textwidth}
    \includegraphics[width=\textwidth]{Figures/curvature_heatmaps/layers_11_mlp_gate_proj_exp_avg_sq_model_1_code_weights_heatmap.jpg}
  \end{subfigure}
  \hfill
  \begin{subfigure}[b]{0.32\textwidth}
    \includegraphics[width=\textwidth]{Figures/curvature_heatmaps/layers_11_mlp_up_proj_exp_avg_sq_model_1_code_weights_heatmap.jpg}
  \end{subfigure}
  \hfill
  \begin{subfigure}[b]{0.32\textwidth}
    \includegraphics[width=\textwidth]{Figures/curvature_heatmaps/layers_11_mlp_down_proj_exp_avg_sq_model_1_code_weights_heatmap.jpg}
  \end{subfigure}
  \begin{subfigure}[b]{0.32\textwidth}
    \includegraphics[width=\textwidth]{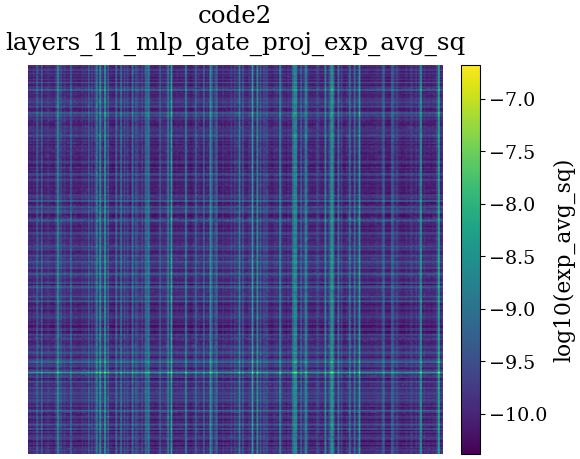}
  \end{subfigure}
  \hfill
  \begin{subfigure}[b]{0.32\textwidth}
    \includegraphics[width=\textwidth]{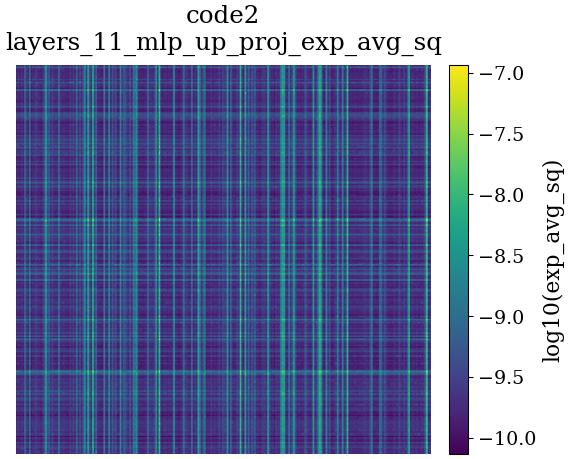}
  \end{subfigure}
  \hfill
  \begin{subfigure}[b]{0.32\textwidth}
    \includegraphics[width=\textwidth]{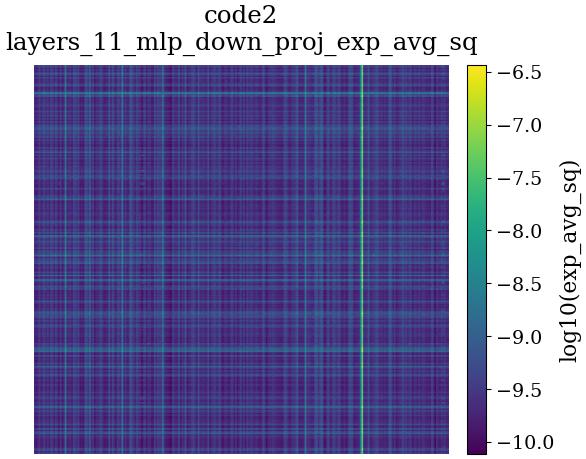}
  \end{subfigure}
  \begin{subfigure}[b]{0.32\textwidth}
    \includegraphics[width=\textwidth]{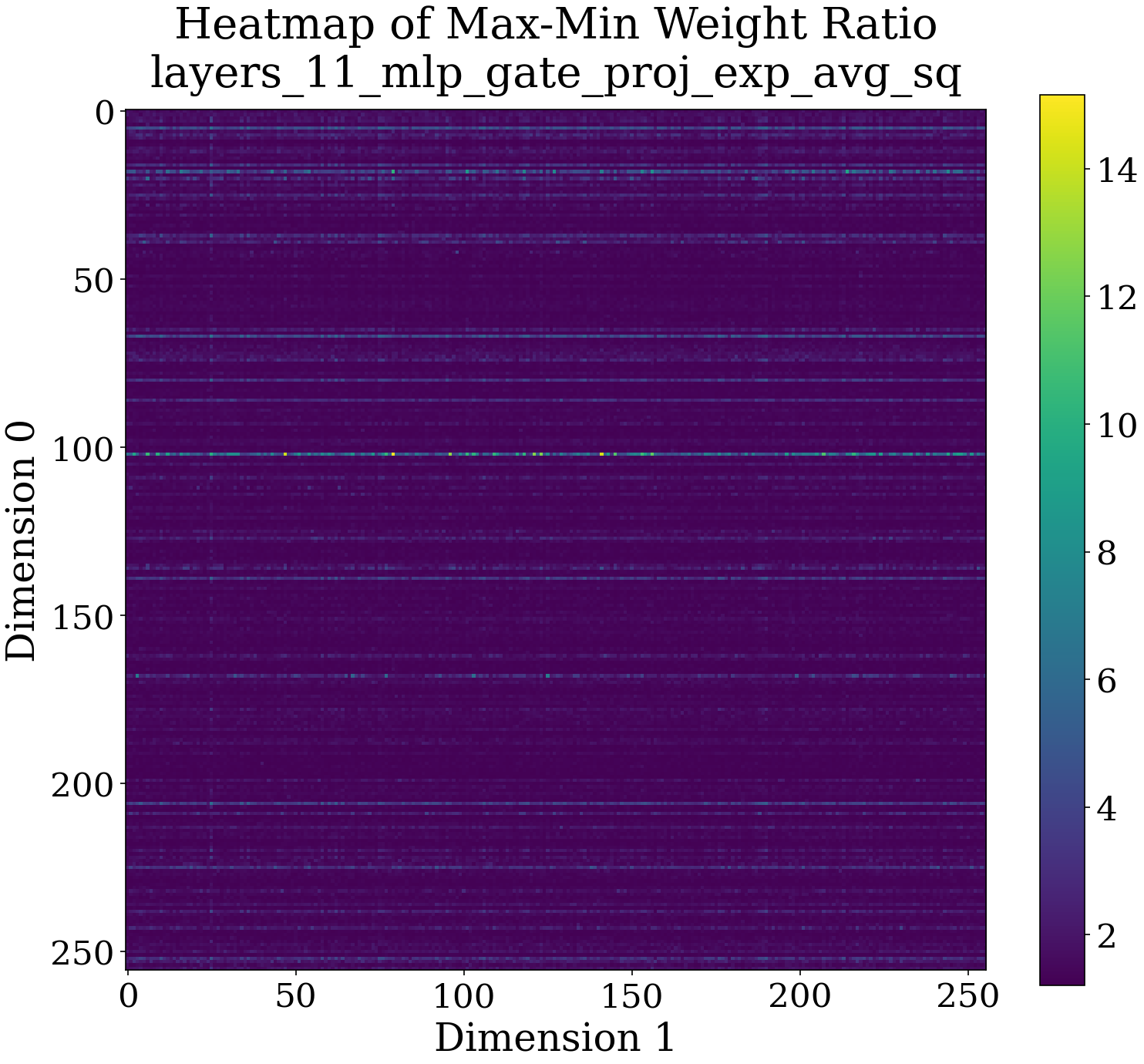}
  \end{subfigure}
  \hfill
  \begin{subfigure}[b]{0.32\textwidth}
    \includegraphics[width=\textwidth]{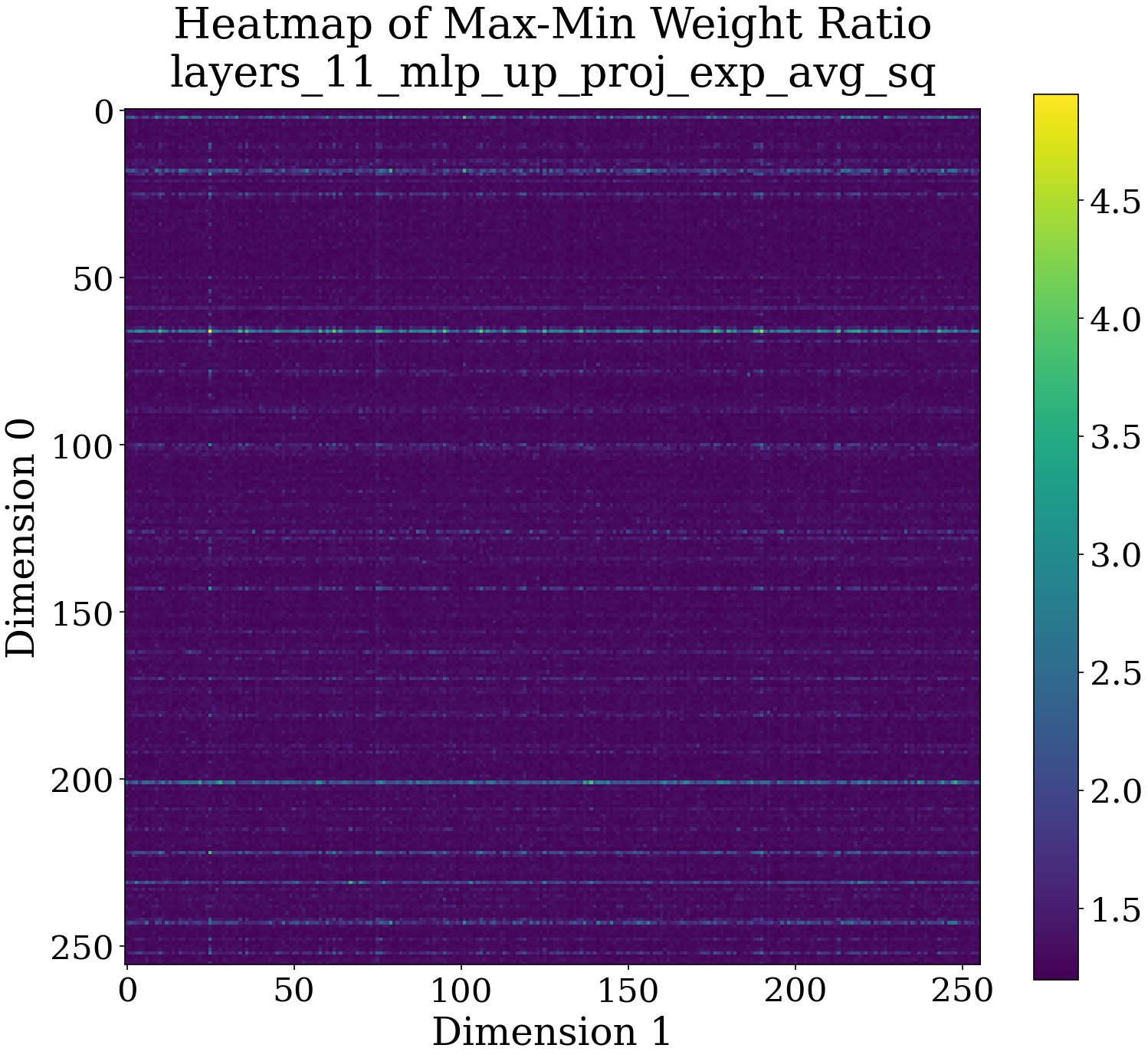}
  \end{subfigure}
  \hfill
  \begin{subfigure}[b]{0.32\textwidth}
    \includegraphics[width=\textwidth]{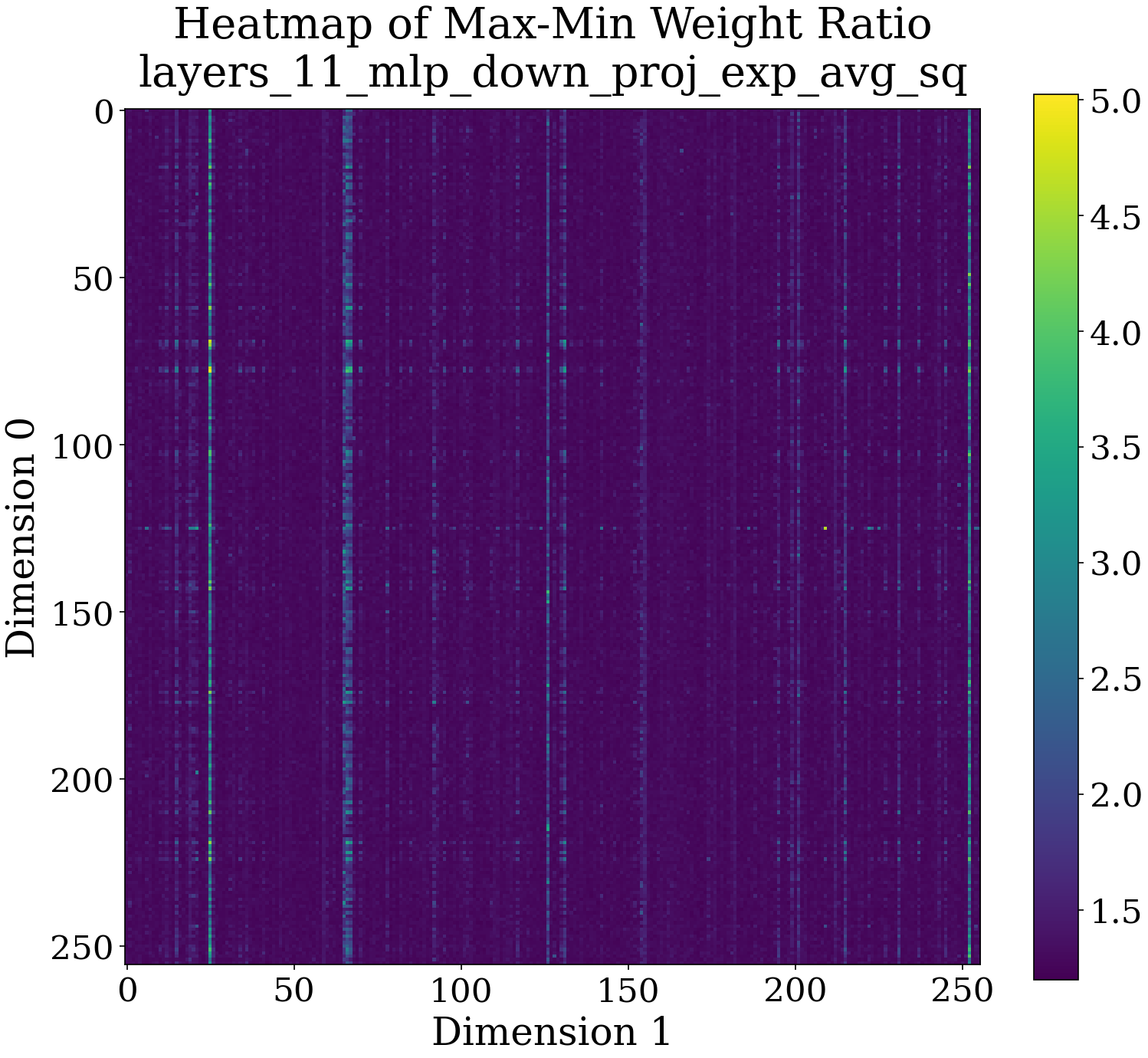}
  \end{subfigure}
  \caption{
    \textbf{Control Experiment: Shared curvature geometry in FFN layers for two Code models.} 
    Log-scaled heatmaps of the square root of the second-moment Adam statistics for layer 11 feed-forward network projection weights. Rows represent: Code specialist (Cosine LR), Code specialist (WSD LR), and Max-Min ratio across the two models (top to bottom). The near-perfect structural similarity and low max-min ratio provide a strong control for our main hypothesis.
  }
  \label{fig:ffn_shared_curvature_code_vs_code2}
\end{figure}

\section{Visualizing Structured Sparsity Masks}
\label{app:viz-structured-sparsity}

\paragraph{What the histograms show.}
For a weight matrix \(W\in\mathbb{R}^{d_{\text{out}}\times d_{\text{in}}}\) with binary mask \(M\in\{0,1\}^{d_{\text{out}}\times d_{\text{in}}}\),
we summarize mask structure via the row-wise and column-wise sparsities
\[
\rho^{\text{row}}_{i} \;=\; 1 - \frac{1}{d_{\text{in}}}\sum_{j=1}^{d_{\text{in}}} M_{ij},
\qquad
\rho^{\text{col}}_{j} \;=\; 1 - \frac{1}{d_{\text{out}}}\sum_{i=1}^{d_{\text{out}}} M_{ij},
\]
i.e., the fraction of zeros in each row/column (\(\text{sparsity}=1-\text{density}\)).
Each panel in Figs.~\ref{fig:ffg-q-hist} and \ref{fig:mag-q-hist} plots the histogram of
\(\{\rho^{\text{row}}_{i}\}_{i=1}^{d_{\text{out}}}\) (top) and \(\{\rho^{\text{col}}_{j}\}_{j=1}^{d_{\text{in}}}\) (bottom) for the self-attention \(q\)-projection of a single layer.
A spike near \(1.0\) indicates rows/columns that are almost entirely pruned.
All masks shown correspond to a global \(40\%\) density budget.

\paragraph{Key finding: FFG induces structured channel sparsity at the network edges.}
Under a single global density budget, FFG reallocates nonzeros across depth and weight types.
In the \(q\)-projection, early (layers 0–1) and late (layers 29–30) blocks display pronounced structure:
their histograms concentrate near \(\rho\simeq 1.0\), revealing many rows/columns that are nearly all zeros (Fig.~\ref{fig:ffg-q-hist}).
A similar pattern is observed for the \(k\)-projection (not shown), indicating that FFG often eliminates entire input/output channels in these attention blocks rather than scattering zeros uniformly.

\paragraph{Contrast with magnitude pruning.}
For the same global budget, magnitude pruning yields weaker row/column structure:
its histograms are centered around moderate sparsities with limited mass near~1.0 (Fig.~\ref{fig:mag-q-hist}).
Thus, FFG is not merely more sparse; it is selectively sparse at the level of entire channels.

\begin{figure}[t]
    \centering
    \includegraphics[width=.46\linewidth]{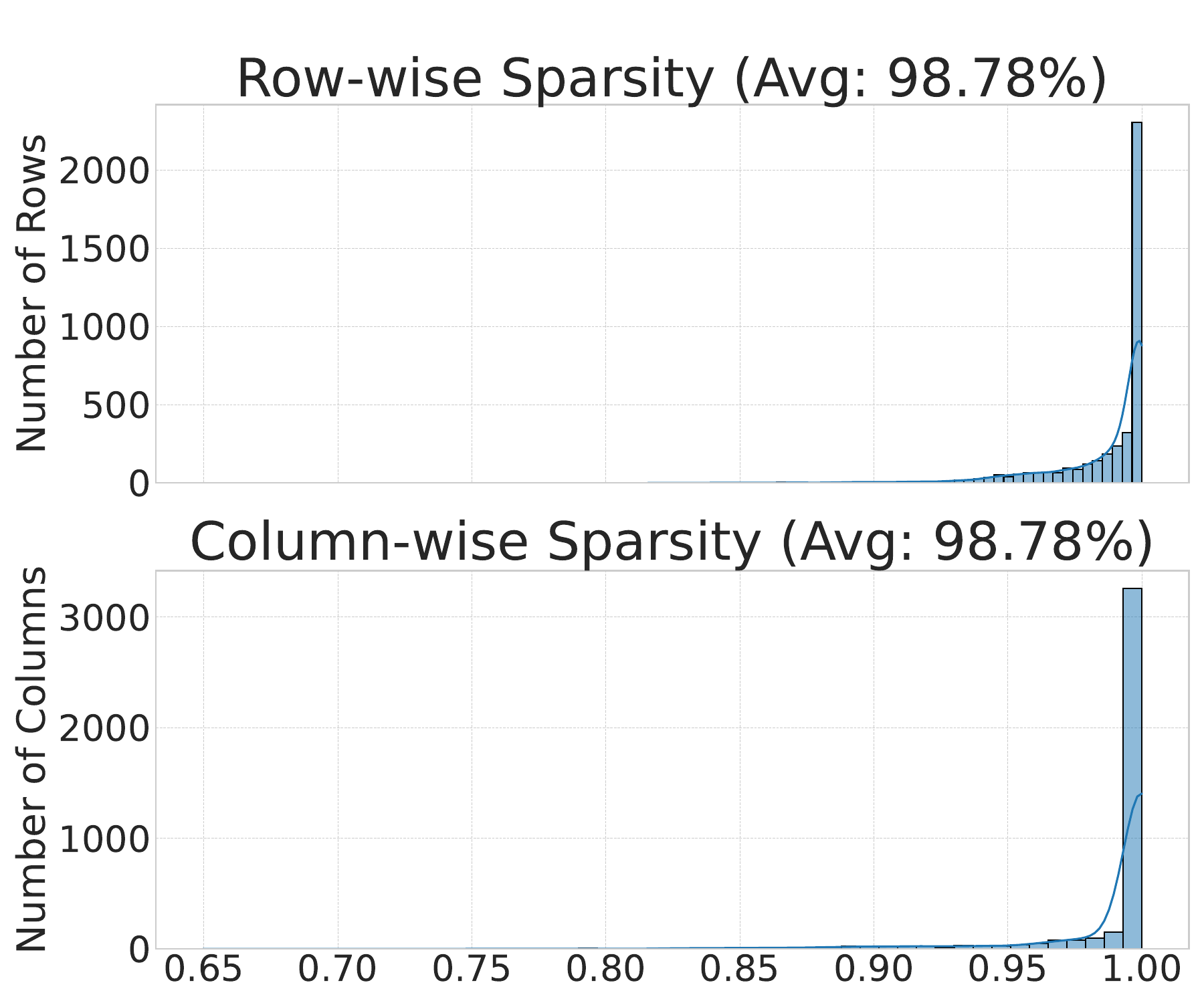}%
    \hfill
    \includegraphics[width=.46\linewidth]{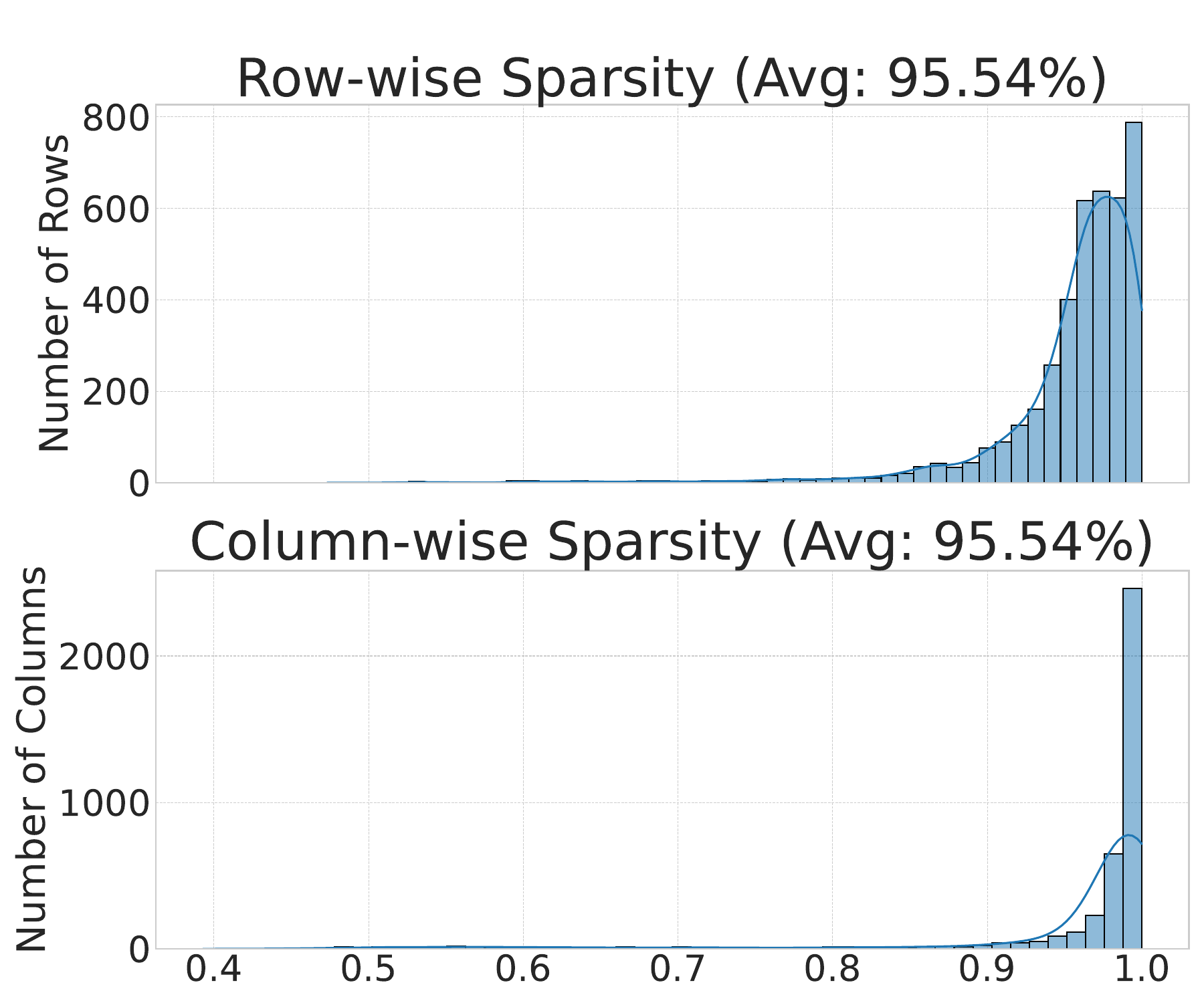}\\[0.25em]
    \includegraphics[width=.46\linewidth]{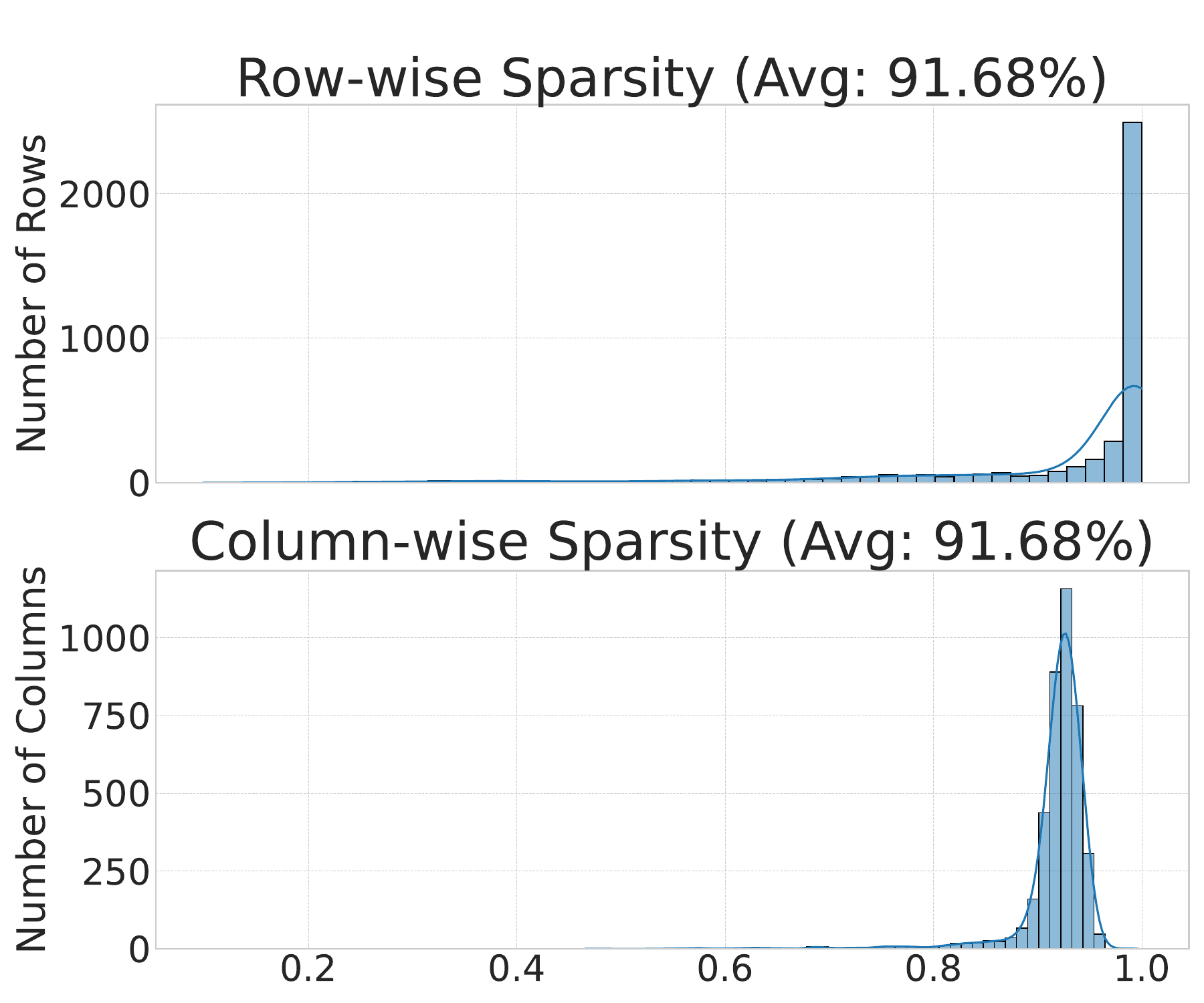}%
    \hfill
    \includegraphics[width=.46\linewidth]{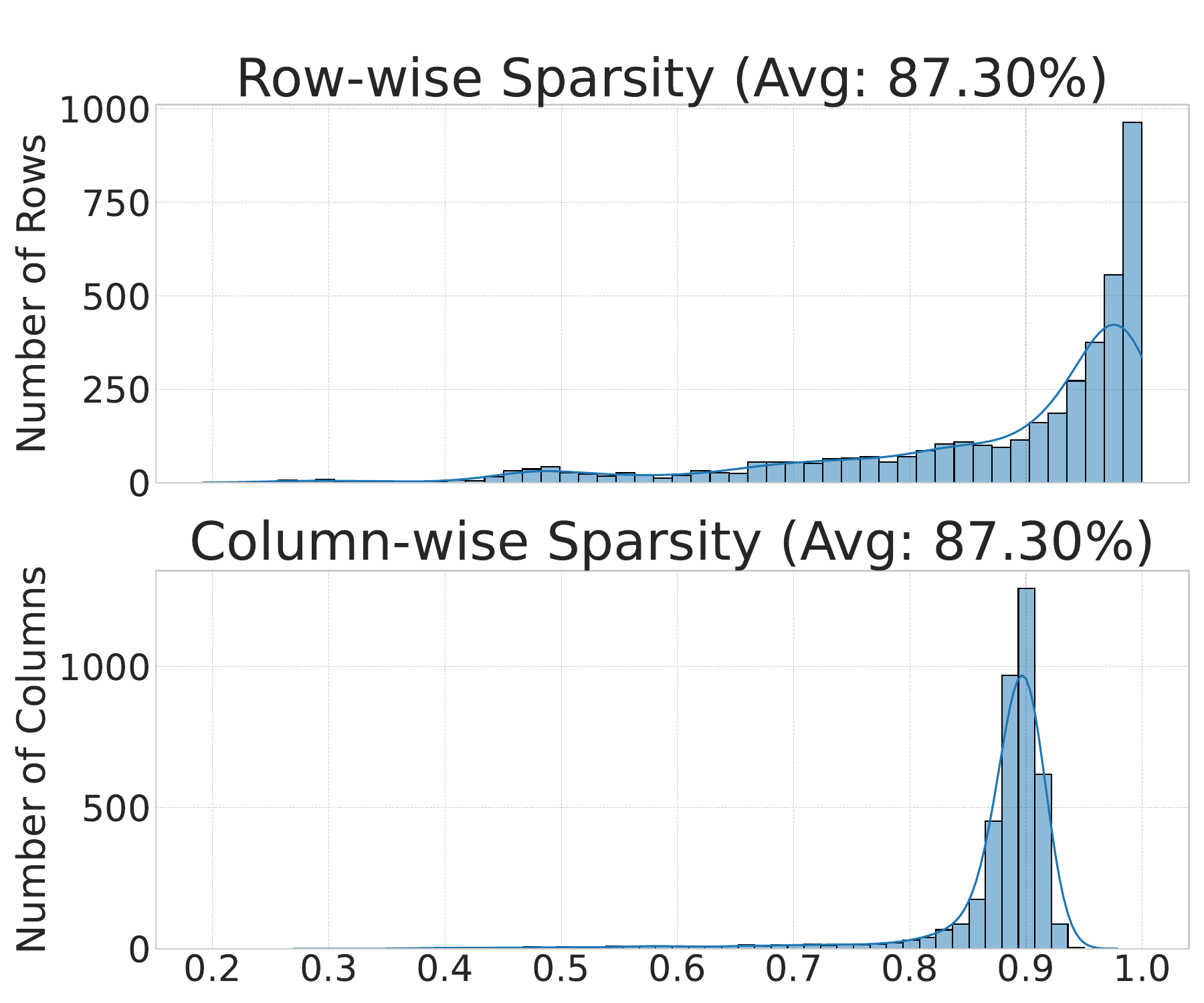}
    \caption{\textbf{FFG masks exhibit structured sparsity in the self-attention \(q\)-projection.}
    Row-wise (top) and column-wise (bottom) sparsity histograms for layers
    0, 1, 29, and 30 (left-to-right, top-to-bottom). Note the concentration near \(\rho\approx 1.0\),
    indicating that many rows/columns are almost entirely pruned.}
    \label{fig:ffg-q-hist}
\end{figure}

\begin{figure}[t]
    \centering
    \includegraphics[width=.46\linewidth]{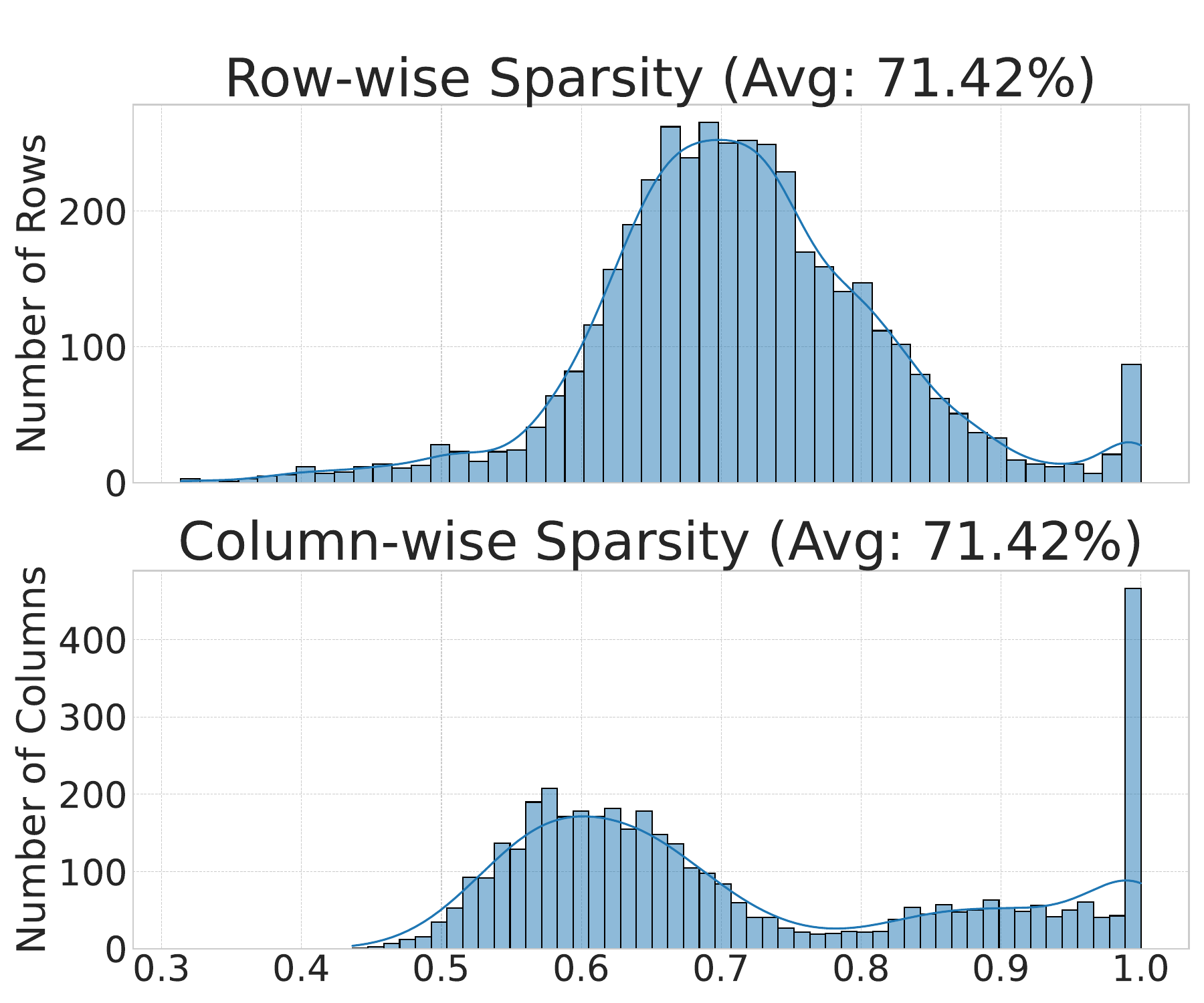}%
    \hfill
    \includegraphics[width=.46\linewidth]{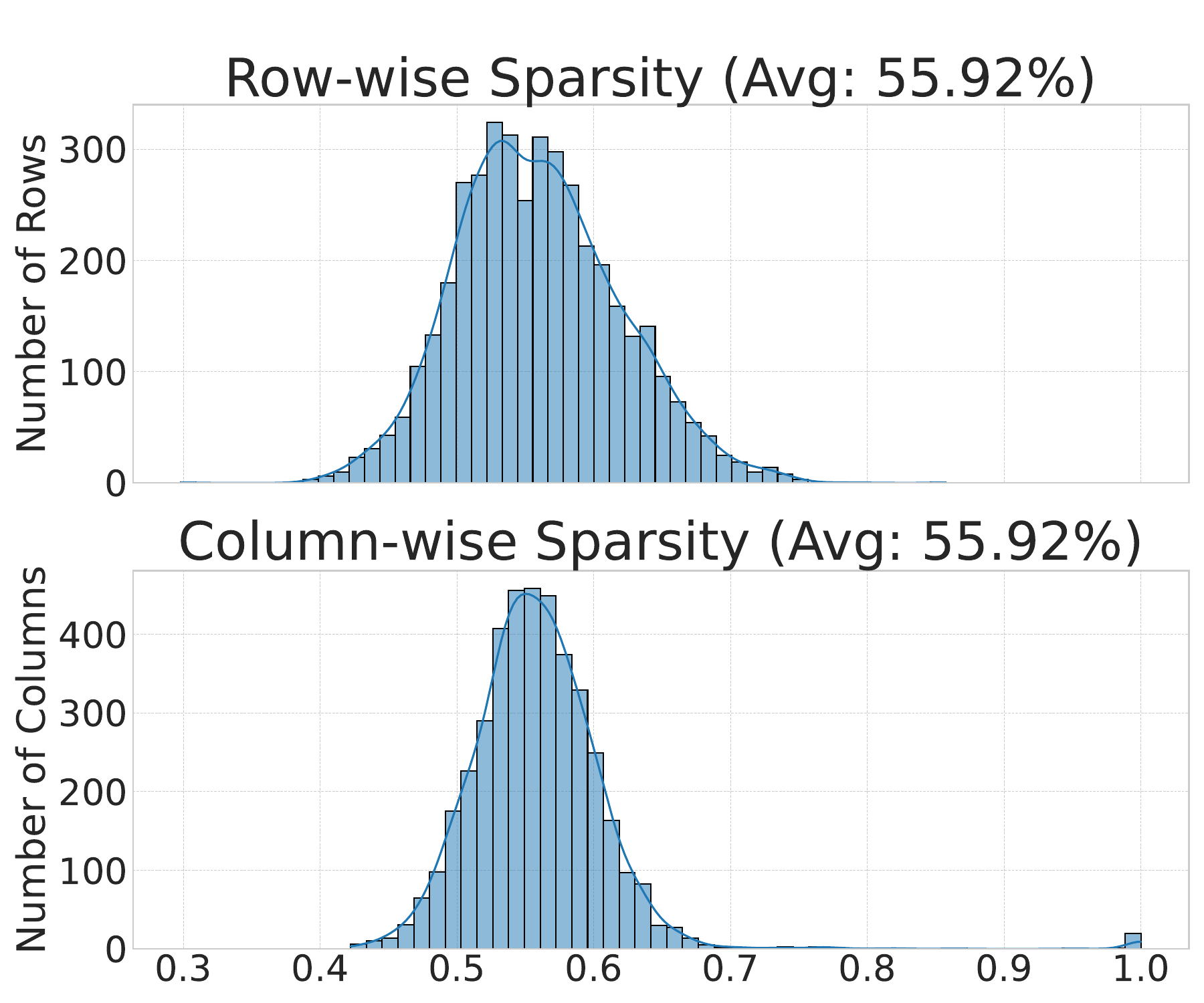}\\[0.25em]
    \includegraphics[width=.46\linewidth]{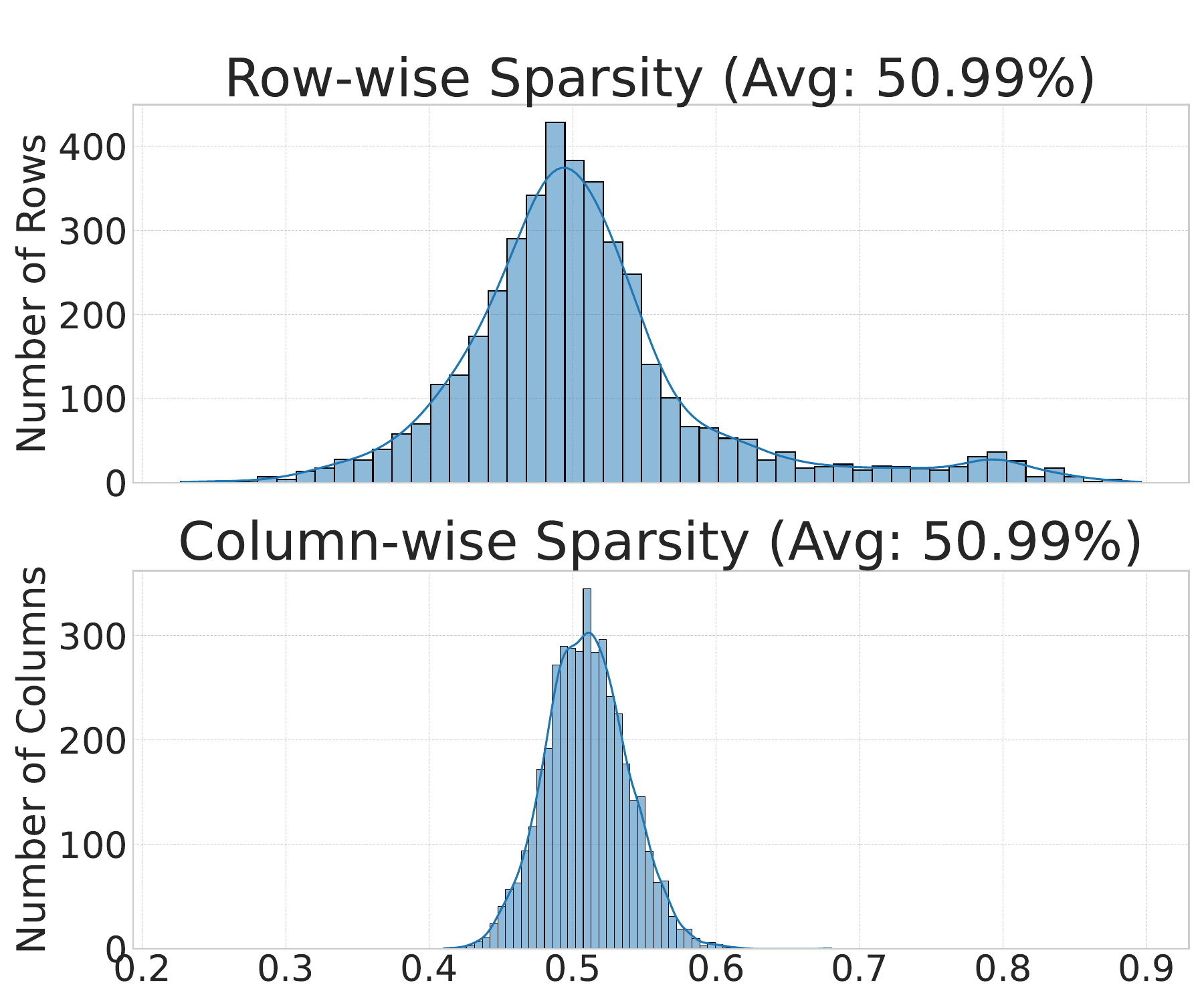}%
    \hfill
    \includegraphics[width=.46\linewidth]{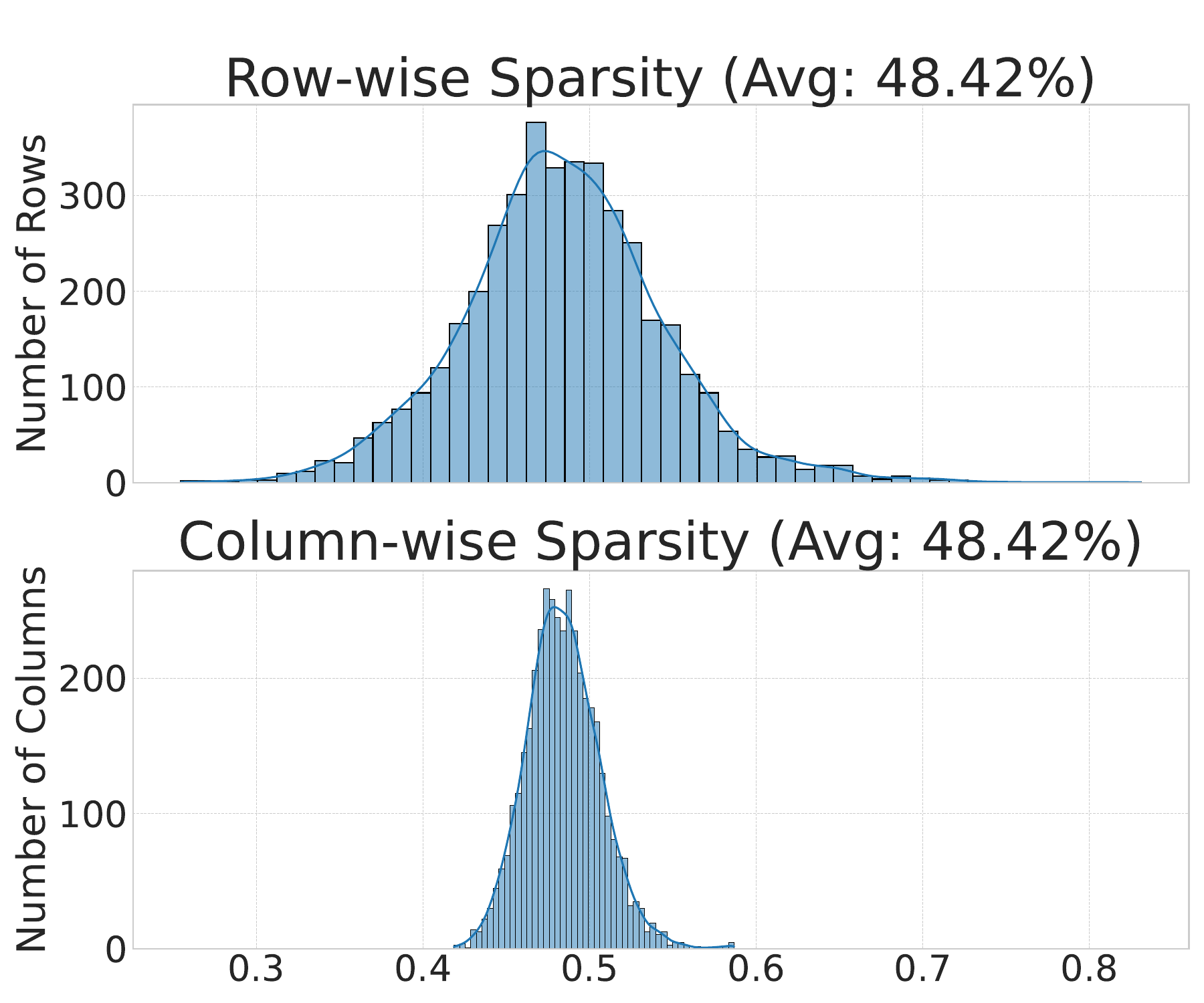}
    \caption{\textbf{Magnitude pruning produces weaker row/column structure.}
    Histograms for the same layers and weight type as Fig.~\ref{fig:ffg-q-hist} show mass centered at moderate sparsities and far less concentration near~1.0.}
    \label{fig:mag-q-hist}
\end{figure}

\end{document}